\renewcommand{\ALG@name}{\netsor program}
\crefname{thm}{\text{Theorem}}{\text{Theorems}}
\crefname{assm}{\text{Assumption}}{\text{Assumptions}}
\crefname{defn}{\text{Definition}}{\text{Definitions}}
\crefname{prop}{\text{Proposition}}{\text{Propositions}}
\crefname{cor}{\text{Corollary}}{\text{Corollaries}}
\crefname{lemma}{\text{Lemma}}{\text{Lemmas}}
\crefname{algorithm}{\text{Program}}{\text{Programs}}
\renewcommand{\vec}{\boldsymbol}
\newcommand{\bigtheta}{\Theta}
\newcommand{\bigvtheta}{\Theta}
\newcommand{\onev}{\mathbf{1}}
\newcommand{\relu}{\mathrm{ReLU}}
\newcommand{\Vt}[1]{\mathrm{V}_{#1}}
\newcommand{\SoftMax}{\mathrm{SoftMax}}
\newcommand{\Attention}{\mathrm{Attention}}
\newcommand{\MaskedAttention}{\mathrm{MaskedAttention}}
\newcommand{\erf}{\mathrm{erf}}
\newcommand{\trsp}{\top}
\newcommand{\distto}{\xrar{\mathrm{d}}}
\newcommand{\asto}{\xrar{\mathrm{a.s.}}}
\newcommand{\disteq}{\overset{\mathrm{d}}{=}}
\newcommand{\KK}{K}
\newcommand{\cdc}{\mathcal{C}}
\newcommand{\Ss}{\mathcal{S}}
\newcommand{\Gvars}{\mathcal{G}}
\newcommand{\defeq}{\mathbin{\overset{\mathrm{def}}{=}}}
\newcommand{\Atype}{\mathsf{A}}
\newcommand{\Gtype}{\mathsf{G}}
\newcommand{\Htype}{\mathsf{H}}
\renewcommand{\cite}{\citep}
\newcommand{\netsor}{{$\textsc{Netsor}$} }
\newcommand{\netsorplus}{{$\textsc{Netsor}^+$} }
\newcommand{\netsormin}{{$\textsc{Netsor}^-$} }
\newcommand{\netsoro}{{$\textsc{Netsor}\circ$} }
\newcommand{\netsoroplus}{{$\textsc{Netsor}\circ^+$} }
\newcommand{\Layernorm}{\mathrm{Layernorm}}
\newcommand{\repo}{\url{github.com/thegregyang/GP4A}}
\let\orgdescriptionlabel\descriptionlabel
\newcommand*{\@restrictlabeltext}[1]{#1\protected@edef\@currentlabel{#1}}
\newcommand*{\nolabel}[1]{#1}%
\renewcommand*{\descriptionlabel}[1]{%
  \let\orglabel\label
  \let\label\@gobble
  \let\orig@hfil\hfil
  \def\hfil{}%
  \let\nolabel\@gobble
  \let\restrictlabeltext\@firstofone
  \phantomsection
  \protected@edef\@currentlabel{#1}%
  \let\hfil\orig@hfil
  \let\label\orglabel
  \let\restrictlabeltext\@restrictlabeltext
  \orgdescriptionlabel{#1}%
}
\title{Tensor Programs I:\\
Wide Feedforward or Recurrent Neural Networks of Any Architecture are Gaussian Processes}
\author{%
  Greg Yang\thanks{Please see \url{https://arxiv.org/abs/1910.12478} for the full version of this paper.}\\
  Microsoft Research AI\\
  \texttt{gregyang@microsoft.com} \\
}
\begin{document}

\maketitle

\begin{abstract}
Wide neural networks with random weights and biases are Gaussian processes, as originally observed by Neal (1995) and more recently by Lee et al.~(2018) and Matthews et al.~(2018) for deep fully-connected networks, as well as by Novak et al.~(2019) and Garriga-Alonso et al.~(2019) for deep convolutional networks.
We show that this Neural Network-Gaussian Process correspondence surprisingly extends to all modern feedforward or recurrent neural networks composed of multilayer perceptron, RNNs (e.g. LSTMs, GRUs), ($n$D or graph) convolution, pooling, skip connection, attention, batch normalization, and/or layer normalization.
More generally, we introduce a language for expressing neural network computations, and our result encompasses all such expressible neural networks.
This work serves as a tutorial on the \emph{tensor programs} technique formulated in Yang (2019) and elucidates the Gaussian Process results obtained there.
We provide open-source implementations of the Gaussian Process kernels of simple RNN, GRU, transformer, and batchnorm+ReLU network at \repo.%
\end{abstract}

\section{Introduction}
\label{sec:Introduction}

Motivated to understand the Bayesian prior in neural networks (NNs), \citet{neal_bayesian_1995} theoretically showed that infinitely wide, shallow neural networks with random weights and biases are Gaussian processes (GPs).
He empirically explored this phenomenon over deep networks as well, but this was not proven rigorously until recently \cite{lee_deep_2018,matthews_gaussian_2018,novak_bayesian_2018,garriga-alonso_deep_2018}, with concrete progress made over the intervening years \cite{{williams_computing_1997,le_roux_continuous_2007,hazan_steps_2015,daniely_toward_2016}}.
This neural network-Gaussian process correspondence (NN-GP correspondence) has not only allowed one to transform the \emph{implicit prior} of NNs into \emph{explicit priors} that can be understood analytically \cite{poole_exponential_2016,schoenholz_deep_2016,yang_mean_2017,xiao_dynamical_2018,yang_mean_2019}, but has also created new state-of-the-art kernels by converting from deep neural networks \cite{lee_deep_2018,novak_bayesian_2018}.
Yet, so far the focus has dwelled entirely on multilayer perceptrons (MLPs) or simple convolutional neural networks (CNNs).
As new architectures are created with blistering speed, a question starts to emerge and reverberate:
\begin{equation*}
\text{\it 
Do all infinitely wide, randomly initialized neural networks correspond to Gaussian processes?}    
\end{equation*}
Even if the answer is yes, at the current rate where each new architecture warrants its own NN-GP correspondence paper, theory will never catch up to practice.
On a more basic level, what does this question even mean for recurrent neural networks?

\paragraph{Our Contributions}
In this paper, we formulate the notion of a Gaussian process with variable-dimensional output (see \cref{defn:GP}), and show that feedforward and recurrent neural networks of \emph{standard architectures} converge to Gaussian processes in this sense as their widths or number of channels go to infinity, when their weights and biases are randomized.
\emph{By \textbf{standard architecture} we mean any architecture that is some composition of multilayer perceptrons (MLPs), recurrent neural networks (RNNs) (e.g., Long-Short Term Memory (LSTM) \cite{hochreiter_long_1997} or Gated Recurrent Unit (GRU) \cite{cho_learning_2014}), skip connections \mbox{\cite{he_deep_2016,huang_densely_2016}},
convolutions
\cite{fukushima_cognitron:_1975,fukushima_neocognitron:_1982,rumelhart_learning_1985,lecun_gradient-based_1998,lecun_object_1999} or graph convolutions \cite{
brunaSpectralNetworksLocally2013arXiv.org,
henaffDeepConvolutionalNetworks2015arXiv.org,
duvenaudConvolutionalNetworksGraphs2015NeuralInformationProcessingSystems,
liGatedGraphSequence2015arXiv.org,
defferrardConvolutionalNeuralNetworks2016arXiv.org,
kipfSemiSupervisedClassificationGraph2016arXiv.org},
pooling \cite{lecun_gradient-based_1998,lecun_object_1999}, batch normalization (batchnorm) \cite{ioffe_batch_2015},
layer normalization \cite{ba_layer_2016} and/or attention \cite{bahdanau_neural_2014,vaswani_attention_2017}.}
Even more broadly, we design a new language, \netsor\!, for expressing neural network computations, and show the GP convergence for all such expressible networks.
By demonstrating that \netsor can implement any network of standard architectures, we obtain the aforementioned results as a corollary.
The results for RNNs, batchnorm, layernorm, attention, and their combination with other layers are new.
We open-source reference implementations\footnote{\label{fnt:repo}\repo} for the GP kernels of simple RNN, GRU, transformer, and feedforward batchnorm network; see \cref{fig:allkernels} for an illustration.

\newcommand{\repofnt}{$^{\text{\ref{fnt:repo}}}$}

\paragraph{Relation of This Paper with \cite{yangScalingLimitsWide2019arXiv.org}}
This paper serves several purposes.
1) Introduce the reader to the \emph{tensor programs} technique formulated in \cite{yangScalingLimitsWide2019arXiv.org}, using the Neural Network-Gaussian Process Correspondence as motivation.
2) Promote a redesigned set of notations for \emph{tensor programs} that hopefully makes the understanding and the application of this technique easier.
3) Prove a more general version of the Gaussian Process results first presented in \cite{yangScalingLimitsWide2019arXiv.org}.
4) Provide example calculations and reference implementations\repofnt{} of the GP kernels for several architectures like the vanilla RNN, GRU, batchnorm network, and transformers.

We assume the reader has not read \cite{yangScalingLimitsWide2019arXiv.org} and seek to explain all results in elementary terms.
However, we will provide commentary in footnotes throughout the paper on differences from \cite{yangScalingLimitsWide2019arXiv.org}.

Regarding 1), this paper will be the first in a series to explain the \emph{tensor programs} technique, each covering a more powerful type of tensor programs, and each motivated by specific theorems that can be proved or calculations made possible by these new tensor programs.
In particular, here we will only talk about tensor programs without matrix transposes.
Regarding 3), the results presented here will supersede all results in \cite{yangScalingLimitsWide2019arXiv.org} concerning Gaussian Processes, with one caveat that here we will not cover architectures using both a weight $W$ and its transpose $W^\trsp$ in its forward pass (but this result will come for free in a later paper in this series).

\section{Gaussian Process with Variable-Dimensional Output}
\label{sec:GPVarDim}

We first clarify the notion of a Gaussian process with variable dimension output.
\begin{defn}[Gaussian Process]\label{defn:GP}
We say a random function $f: X \to \R^m$ (with fixed dimensional output) is a Gaussian process if for any finite subset $\{x^1, \ldots, x^k\} \sbe X$, the random vector $(f(x^1), \ldots, f(x^k)) \in \R^{m \times k}$ is distributed as a $km$-dimensional Gaussian.
If $f$ has variable dimensional output (e.g. $f$ is an RNN), such as when $f(x) \in \R^{l(x)}$ for some length function $l: X \to \N$ \footnote{i.e. $f: \prod_{x \in X} \R^{l(x)}$ is a dependent function}, then we say $f$ is a Gaussian process if for any finite subset $\{x^1, \ldots, x^k\} \sbe X$, the random vector $(f(x^1), \ldots, f(x^k))$ is distributed as a $(\sum_i l(x^i))$-dimensional Gaussian.
\end{defn}

To illustrate a GP with variable-dimensional output, consider a simple RNN that runs on two input sequences given by the GloVe embeddings \cite{pennington_glove:_2014} \footnote{The embedding associates each word to a real vector of 100 dimensions such that semantically similar words are mapped to closer vectors} of the words of the two sentences
\begin{equation}
\begin{aligned}
    \text{sentence 1 (7 words):} &\quad \text{``The brown fox jumps over the dog.''}\\
    \text{sentence 2 (9 words):} &\quad \text{``The quick brown fox jumps over the lazy dog.''}
\end{aligned}
\tag{$\star$}\label{sentences}
\end{equation}
A pseudocode is given in \cref{tp:RNN} in \cref{sec:netsorprograms} (ignore the type annotations like $\Gtype(n), \Htype(n), \Atype(n)$ for now).
The RNN emits a single scalar after reading each token (in \cref{tp:RNN}, this is $v^\trsp s^{ia}/\sqrt n$, where $s^{ia}$ is the RNN state after reading the $i$th token of the $a$th sentence, and $v$ is the readout layer); this number takes into account all of the word embeddings read so far.
Thus, it will output a total of 7 scalars after reading sentence 1, and a total of 9 scalars after reading sentence 2.
To say that this RNN is a GP would imply that all $7+9=16$ scalars are jointly Gaussian-distributed (corresponding to a $16 \times 16$ kernel), over the randomness of the weights and biases imbued during initialization.
This is indeed the empirical phenomenon with a width-1000 RNN, and \cref{fig:randRNN}(E) visualizes the the joint distribution of the last scalars output by the RNN at the end of each sentence.
It clearly exhibits a Gaussian nature, and perfectly fits the theoretically predicted Gaussian distribution (dashed ovals), which we shall describe in \cref{cor:GPConv}.

\section{Recap: GP Behavior of a Multilayer Perceptron (MLP)}
Before explaining our main results, we first review the argument from prior works \citep{lee_deep_2018,matthews_gaussian_2018,novak_bayesian_2018} for the GP convergence of a wide MLP with randomly initialized weights and biases, and we also demonstrate why such an argument is inadequate for RNNs.
Consider an MLP with widths $\{n^l\}_l$, weight matrices $\{W^l \in \R^{n^l \times n^{l-1}}\}_l$, and biases $\{b^l \in \R^{n^l}\}_l$, where $l$ ranges among the layer numbers of the MLP.
Its computation is given recursively as
\begin{align*}
    h^1(x) = W^1 x + b^1 \hspace{2pc}\text{and}\hspace{2pc}  h^l(x) = W^l\phi( h^{l-1}(x)) + b^l \text{ for $l \ge 2$}.
    \numberthis\label{eqn:simpleMLP}
\end{align*}
At initialization time, suppose $W^l_{\alpha \beta} \sim \Gaus(0, \sigma_w^2/n^{l-1})$ for each $\alpha \in [n^l], \beta\in[n^{l-1}]$, and $b^l_\alpha \sim \Gaus(0, \sigma_b^2)$.
Consider two inputs $x, x'$.
Conditioned on $h^{l-1}(x)$ and $h^{l-1}(x')$, iid for each $\alpha$, $(h^l(x)_\alpha, h^l(x')_\alpha)$ is distributed as
\[\Gaus\left(0, \f{\sigma_w^2}{n^{l-1}} \begin{pmatrix}
            \|\phi(h^{l-1}(x))\|^2 &   \phi(h^{l-1}(x)) \cdot \phi(h^{l-1}(x'))\\
            \phi(h^{l-1}(x)) \cdot \phi(h^{l-1}(x'))  &   \|\phi(h^{l-1}(x'))\|^2
            \end{pmatrix}
            + \sigma_b^2.
            \right)\]
If $(h^{l-1}(x)_\alpha, h^{l-1}(x')_\alpha)$ is distributed as $\Gaus(0, \Sigma^{l-1})$, iid for each $\alpha$, then by a law of large number argument, the covariance matrix above converges to a deterministic limit 
\[\Sigma^l \defeq \sigma_w^2\EV_{(z,z') \sim \Gaus(0, \Sigma^{l-1})} \begin{pmatrix}
\phi(z)^2 &   \phi(z) \phi(z')\\
\phi(z)\phi(z') &   \phi(z')^2
\end{pmatrix}
+\sigma_b^2\]
as the width $n^{l-1} \to \infty$, making $(h^l(x)_\alpha, h^l(x')_\alpha)$ Gaussian distributed as $\Gaus(0, \Sigma^l)$.
Iteratively applying this argument for each $l$ yields the result for a deep MLP.
A similar logic works for feedforward CNNs.

Unfortunately, this argument breaks down if the weights $\{W^l\}_l$ are tied, i.e. all $W^l$ are equal to a common matrix $W$, as in the case of an RNN.
In this case, when we condition on the preactivations $h^{l-1}(x), h^{l-1}(x')$ of the previous layer, $W$ is no longer conditionally an iid random Gaussian matrix, and all subsequent reasoning breaks down.
We can repair this situation for RNNs in an ad hoc way via the Gaussian conditioning technique (\cref{lemma:condTrick}), but we prefer to set our sights wider, and deal with all standard architectures, and more, in one fell swoop.
To this end, we develop a framework based on our new \netsor language.

\section{\texorpdfstring{\netsor}{Netsor}: Language for Expressing Neural Network Computation}
\label{sec:netsorprograms}

To show that networks of all standard architectures converge to GPs, we first show that they can be expressed by the following very general \netsor language (see \cref{tp:MLP,tp:RNN} for examples)\footnote{\netsor is a specific kind of tensor program; for other variants, see \cref{sec:VersionTensorPrograms}.}, and then show that any computation expressed this way exhibits GP behavior when its dimensions are large.
\begin{defn}\label{defn:netsor}
\footnote{We keep the definition here informal in terms of programming language convention to be accessible to the general machine learning audience. For those with PL background, see \cref{sec:formalspec}.}
\textit{\netsor programs} are straight-line programs, where each variable follows one of three types, $\Gtype, \Htype$, or $\Atype$ (such variables are called \emph{G-vars}, \emph{H-vars}, and \emph{A-vars}), and after input variables, new variables can be introduced by one of the rules \ref{linetype:MatMul}, \ref{linetype:lincomb}, \ref{linetype:nonlin} to be discussed shortly.
$\Gtype$ and $\Htype$ are \emph{vector types} and $\Atype$ is a \emph{matrix type}; intuitively, G-vars should be thought of as vectors that are asymptotically Gaussian, H-vars are images of G-vars by coordinatewise nonlinearities, and A-vars are random matrices with iid Gaussian entries.
Each type is annotated by dimensionality information:
\begin{itemize}
    \item If $x$ is a (vector) variable of type $\Gtype$ (or $\Htype$) and has dimension $n$, we write $x: \Gtype(n)$ (or $x: \Htype(n)$).
    \item If $A$ is a (matrix) variable of type $\Atype$ and has size $n_1 \times n_2$, we write $A: \Atype(n_1, n_2)$.
\end{itemize}
$\Gtype$ is a \emph{subtype} of $\Htype$, so that $x: \Gtype(n)$ implies $x: \Htype(n)$.
A \netsor program consists of the following three parts.
\begin{description}
    \item[Input]
        A set of input G- or A-vars.
    \item[Body]
        New variables can be introduced and assigned via the following rules (with {\it intuition in italics})
    \begin{description}
        \item[\texttt{MatMul}\label{linetype:MatMul}] if $A: \Atype(n_1, n_2)$ and $x: \Htype(n_2)$, we can form a G-var via matrix-vector product:
        \[A x : \Gtype(n_1),\quad \text{\it ``random iid matrix times a vector is roughly a Gaussian vector.''}\footnote{Beware: in a later paper (and in \cite{yangScalingLimitsWide2019arXiv.org}, tensor program general case), we will introduce matrix transpose as a valid operation, and in that case, $Ax$ can be very far from a Gaussian, and this intuition is no longer correct.
        Thus, this intuition is more subtle than it might seem at face value.}
        \]
        \item[\texttt{LinComb}\label{linetype:lincomb}] Suppose $x^1, \ldots, x^k: \Gtype(n)$ are G-vars with the same dimension and $a_1, \ldots a_k \in \R$ are constants.
        Then we can form their linear combination as a G-var:
        \[\sum_{i=1}^n a_i x^i : \Gtype(n),\quad
        \text{\it ``linear combination of Gaussian vectors is Gaussian.''}\]
        \item[\texttt{Nonlin}\label{linetype:nonlin}] If $x^1, \ldots, x^k: \Gtype(n)$ are G-vars with the same dimension $n$ and $\phi: \R^k \to \R$, then
        \[\phi(x^1, \ldots, x^k): \Htype(n),\quad
        \text{\it ``image of Gaussian vector is not always Gaussian''}\]
        where $\phi$ acts coordinatewise.
    \end{description}
    \item[Output]
        For the purpose of this paper\footnote{In general, the output of a tensor program need not be defined, as most of the time we are concerned with how the H-vars produced over the course of the program interact with each other.}, the output of a \netsor program can be any tuple of scalars, $(v^1{}^\trsp y^1/\sqrt{n_1}, \ldots, v^k{}^\trsp y^k/\sqrt{n_k})$, where $v^1: \Gtype(n_1); \ldots; v^k: \Gtype(n_k)$ are some input G-vars not used elsewhere (and possibly with duplicates $v^i = v^j$), and $y^1: \Htype(n_1); \ldots; y^k: \Htype(n_k)$ are some H-vars (possibly with duplicates $y^i = y^j$).
\end{description}
\end{defn}

\begin{algorithm}[tb]
    \caption{MLP Computation on Network Input $x$}
    \label{tp:MLP}
    \begin{algorithmic}[1]
      \Require $W^1 x: \Gtype(n^1)$ \Comment{layer 1 embedding of input}
      \Require $b^1: \Gtype(n^1)$ \Comment{layer 1 bias}
      \Require $W^2: \Atype(n^2, n^1)$ \Comment{layer 2 weights}
      \Require $b^2: \Gtype(n^2)$ \Comment{layer 2 bias}
      \Require $v: \Gtype(n^2)$ \Comment{readout layer weights}
      \State $h^1 := W^1 x + b^1: \Gtype(n^1)$ \Comment{layer 1 preactivation; \ref{linetype:lincomb}}
      \State $x^1 := \phi(h^1): \Htype(n^1)$ \Comment{layer 1 activation; \ref{linetype:nonlin}}
      \State $\tilde h^2 := W^2 x^1: \Gtype(n^2)$ \Comment{\ref{linetype:MatMul}}
      \State $h^2 := \tilde h^2 + b^2: \Gtype(n^2)$ \Comment{layer 2 preactivation; \ref{linetype:lincomb}}
      \State $x^2 := \phi(h^2): \Htype(n^2)$ \Comment{layer 2 activation; \ref{linetype:nonlin}}
      \Ensure $v^\trsp x^2/\sqrt{n^2}$
    \end{algorithmic}
\end{algorithm}

\begin{algorithm}[t]
    \caption{Simple RNN Computation on Two Input Sequences}
    \label{tp:RNN}
    \begin{multicols}{2}
    \begin{algorithmic}
      \State {\it // Embeddings of two inputs sequences}
      \Require $U x^{11}, \ldots, U x^{t1}: \Gtype(n)$
      \Require $U x^{12}, \ldots, U x^{r2}: \Gtype(n)$
      \State {\it // Weight and bias}
      \Require $W: \Atype(n, n)$
      \Require $b: \Gtype(n)$
      \State {\it // Readout weights}
      \Require $v: \Gtype(n)$
      \State {\it // Computation on sequence 1}
      \State $h^{11} := U x^{11} + b: \Gtype(n)$
      \State $s^{11} := \phi(h^{11}): \Htype(n)$
      \State $\tilde h^{21} := W s^{11}: \Gtype(n)$
      \State $h^{21} := \tilde h^{21} + U x^{21} + b: \Gtype(n)$
      \State $s^{21} := \phi(h^{21}): \Htype(n)$
      \State $\vdots$
      \State $\tilde h^{t1} := W s^{t-1,1}: \Gtype(n)$
      \State $h^{t1} := \tilde h^{t1} + U x^{t1} + b: \Gtype(n)$
      \State $s^{t1} := \phi(h^{t1}): \Htype(n)$
      \State {\it // Computation on sequence 2}
      \State $h^{12} := U x^{12} + b: \Gtype(n)$
      \State $s^{12} := \phi(h^{12}): \Htype(n)$
      \State $\tilde h^{22} := W s^{12}: \Gtype(n)$
      \State $h^{22} := \tilde h^{22} + U x^{22} + b: \Gtype(n)$
      \State $s^{22} := \phi(h^{22}): \Htype(n)$
      \State $\vdots$
      \State $\tilde h^{r2} := W s^{r-1, 2}: \Gtype(n)$
      \State $h^{r2} := \tilde h^{r2} + U x^{r2} + b: \Gtype(n)$
      \State $s^{r2} := \phi(h^{r2}): \Htype(n)$
      \Ensure $(v^\trsp s^{11}/\sqrt{n}, \ldots, v^\trsp s^{t1}/\sqrt{n},$
      \State \qquad$v^\trsp s^{12}/\sqrt{n}, \ldots, v^\trsp s^{r2}/\sqrt{n})$
    \end{algorithmic}
    \end{multicols}
\end{algorithm}

\paragraph{Examples}
\cref{tp:MLP} gives an example of a \netsor program representing an MLP computation.
Note that \emph{we account for the input $x$ through its embedding $W^1 x$, not $x$ itself.}
This is because 1) our theorems concern the case where all input G-vars are random;
in the context of expressing neural network computation, $x$ is a deterministic input, while
$W^1x$ is a Gaussian vector when $W^1$ has iid Gaussian entries; 2) $x$ has a fixed dimension, while we intend all dimensions (like $n^1, n^2$) in the \netsor program to tend to infinity, as we'll describe shortly.
Similarly, \cref{tp:RNN} expresses in \netsor the computation of a simple RNN on two separate input sequences; computation on more input sequences follows the same pattern.
Note how weight-sharing is easily expressed in \netsor because we can re-use A-vars arbitrarily.
\cref{sec:MoreExamples} shows more examples of standard architectures in \netsor and \netsorplus.

More generally, we can allow the nonlinearities in \ref{linetype:nonlin} to depend on parameters; this will be necessary to express layernorm and attention (see \cref{sec:MoreExamples}).
We capture this idea in a new rule:
\begin{description}
\item[\texttt{Nonlin$^+$}\label{linetype:nonlin+}]
Suppose $x^1, \ldots, x^k: \Gtype(n)$ are G-vars with the same dimension $n$ and $\theta_1, \ldots, \theta_t \in \R$ possibly depend on G-vars already defined.
If $\phi(-; -): \R^k \times \R^t \to \R$, then
        \[\phi(x^1, \ldots, x^k; \theta_1, \ldots, \theta_t): \Htype(n),\]
        where $\phi$ acts coordinatewise.
\end{description}
\begin{defn}\label{defn:netsor+}
\netsorplus programs are \netsor programs allowing \ref{linetype:nonlin+} rules.
\end{defn}
\netsor and \netsorplus specify different kinds of \emph{tensor programs}; in \cref{sec:VersionTensorPrograms} we discuss other kinds that are semantically equivalent.
In a future paper, we shall study the effect of allowing matrix transposes as an operation on A-vars.

\begin{remk}
In this paper, in \ref{linetype:nonlin+}, we will only instantiate $\theta_j$ with continuous functions of ``empirical moments'' of the form $n^{-1} \sum_{i=1}^n \psi(y^1, \ldots, y^r)$ for some set of G-vars $\{y_i\}_i$.
A key consequence of our scaling limit result is that these ``empirical moments'' converge almost surely to a deterministic limit under very general conditions (\cref{thm:netsorMasterTheorem,thm:Netsor+MasterTheorem}), so that $\phi(-; \bigtheta)$ is, under suitable smoothness conditions (\cref{defn:parameterControlled}), approximately a fixed nonlinearity when $n$ is large.
Thus, we should intuitively treat \ref{linetype:nonlin+} as \ref{linetype:nonlin} but with the nonlinearity determined automatically by the \netsor program itself.
\end{remk}

\ref{linetype:nonlin+} expands the expressible computation quite broadly, but to keep the main text lean and focused on the key ideas behind tensor programs, we relegate a more thorough discussion of \ref{linetype:nonlin+} in the appendix (see \cref{sec:netsorplusMasterTheorem,sec:netsorplusKernelExample,sec:netsorplusMasterTheoremProof}).

\section{Computing the GP Kernel from a \texorpdfstring{\netsor\!}{Netsor} Encoding of a Neural Network}

\newcommand{\Sigmain}{\Sigma^{\mathrm{in}}}
\newcommand{\muin}{\mu^{\mathrm{in}}}
\newcommand{\tSigma}{\Sigma}
\newcommand{\tmu}{\mu}

For readers who wish to be convinced that \netsor (or \netsorplus) can express standard architectures, see \cref{sec:MoreExamples}.
In this section, we show that any architecture expressible in \netsor and satisfies some mild conditions will exhibit Gaussian Process behavior in the large width limit.

In this section, we make the following simplifying assumption on the dimensions of the program and the randomness of the variables.
\begin{assm}
\label{assm:equalDimNoNonlin+}
Fix a \netsor program.
For simplicity, assume all dimensions in the program are equal to $n$.
Suppose for each A-var $W: \Atype(n, n)$, we sample $W_{\alpha \beta} \sim \Gaus(0, \sigma_W^2/n)$ for some $\sigma_W^2 > 0$, and for each
$\alpha \in [n]$, we sample, i.i.d., $\{x_\alpha: x \text{ is input G-var}\} \sim \Gaus(\muin, \Sigmain)$ for some mean $\muin$ and (possibly singular) covariance $\Sigmain$ over input G-vars.
\end{assm}

The constraint on the dimensions can be removed easily; see \cref{sec:VariableDim}.
This sampling induces randomness in all variables created in the program, and we shall characterize this randomness shortly.
We first review some notation that will be used immediately.

\paragraph{Notation}
In this paper, a \emph{kernel $\Sigma$ on a set $X$} is a symmetric function $\Sigma: X \times X \to \R$ such that
\begin{align*}
\sum_{i=1}^m \sum_{j=1}^m c_i c_j \Sigma(x_i, x_j) \ge 0
\end{align*}
holds for any $m \in \N$, $x_1, \ldots, x_m \in X$, and $c_1, \ldots, c_m \in \R$.
Given a kernel $\Sigma$ on a set of G-vars, we will both treat it as matrix and as a function, depending on the context.
\begin{description*}
\item[Function Notation]
    As a function, $\Sigma(g, g')$ is the value of $\Sigma$ on the pair of G-vars $(g, g')$.
    If $G = \{g^1, \ldots, g^k\}$ is a set of G-vars, then we also denote by $\Sigma(g, G)$ the row vector $\{\Sigma(g, g^1), \ldots, \Sigma(g, g^k)\}$.
    Likewise $\Sigma(G, g)$ is the column vector with the same values.
    If $G' = \{g^1{}', \ldots, g^l{}'\}$ is another set of G-vars (possible with overlap with $G$), then
    $\Sigma(G, G')$ is the matrix $\{\Sigma(g^i, g^j{}'): i \in [k], j \in[l]\}$.
\item[Restriction Notation]
    We also use the ``restriction'' notation $\Sigma |_G$ to denote the square matrix $\Sigma(G, G)$ in a more concise way.
\item[Matrix Notation]
    When an association of indices to G-vars is clear from context, we will also write $\Sigma_{ij}$ for the corresponding value of $\Sigma$ on the pair of $i$th and $j$th G-vars.
    Juxtaposition implies matrix multiplication, e.g.\ $\Sigma \Omega$ means matrix product if $\Omega$ is a matrix of appropriate size.
\item[Indices Notation]
    We will both use superscripts and subscripts for indices.
    We will never multiply in subscript or superscript, so juxtaposition of indices like $W^{ib}_{\alpha \beta}$ is the same as $W^{i,b}_{\alpha,\beta}$.
\item[H-vars as Both Symbols and Vectors]
    An H-var will be considered both as a symbol (like in $\Sigma(g, g')$ above) as well as the corresponding length $n$ vector (like in \cref{thm:netsorMasterTheorem} below), depending on the context.
\end{description*}

\begin{defn}
In the setting of \cref{assm:equalDimNoNonlin+},
we extend $\muin$ and $\Sigmain$ to $\tmu$ and $\tSigma$ that resp.\ take a single and a pair of G-vars and both output to $\R$.
Intuitively, $\tmu$ specifies the mean coordinate of a G-var, and $\tSigma$ specifies the coordinatewise covariance of a pair of G-vars;
this is formalized in \cref{thm:netsorMasterTheorem} below.
Index all the G-vars in the program as $g^1, \ldots, g^M$ (including input G-vars), in the order of appearance in the program.
For any pair of G-vars $g, g'$ (among $g^1, \ldots, g^M$), we define recursively
\begin{align*}
    \tmu(g)
        &=
            \begin{cases}
            \muin(g)  &   \text{if $g$ is input}\\
            \sum_{i} a_i \tmu(y^i)    &   \text{if $g = \sum_{i} a_i y^i$, introduced by \ref{linetype:lincomb}}\\
            0   &   \text{otherwise}
            \end{cases},
            \\
    \tSigma(g, g')
        &=
            \begin{cases}
            \Sigmain(g, g')   &   \text{if $g, g'$ are inputs}\\
            \sum_{i} a_i \tSigma(y^i, g')    &   \text{if $g = \sum_{i} a_i y^i$, introduced by \ref{linetype:lincomb}}\\
            \sum_{i} a_i \tSigma(g, y^i)    &   \text{if $g' = \sum_{i} a_i y^i$, introduced by \ref{linetype:lincomb}}\\
            \sigma^2_W \EV_Z \phi(Z) \bar \phi(Z) &   \text{if $g = Wh, g'=Wh'$, introduced by \ref{linetype:MatMul} w/ same A-var $W$}\\
            0   &   \text{otherwise}
            \end{cases}
            \numberthis\label{eqn:extendedMuSigma}
\end{align*}
where
\begin{itemize}
    \item $y^i$ are G-vars for all $i$
    \item $(h: \Htype(n))$ was introduced by the \ref{linetype:nonlin} with $h := \phi(g^1, \ldots, g^M)$, $h'$ was introduced by \ref{linetype:nonlin} with $h' :=\bar\phi(g^1, \ldots, g^M)$ (where WLOG we have padded the input slots of     $\phi$ and $\bar\phi$ to account for all G-vars)
    \item $Z \sim \Gaus(\tmu, \tSigma)$ is a random Gaussian vector with 1 entry for each G-var in the program.
\end{itemize}
Note that since $\phi$ and $\bar\phi$ only depends on entries of $Z$ corresponding to previous G-vars, the expectation $\EV_Z \phi(Z) \bar\phi(Z)$ only depends on entries of $\tmu$ and $\tSigma$ already defined, so there is no circular logic in this recursive definition of $\tmu$ and $\tSigma$.
See \cref{sec:MLPsingleinput} for a simple, worked-out example of how to recursively compute $\tmu$ and $\tSigma$ for \cref{tp:MLP}.
\end{defn}

For our main theorems, we isolate a very general class of nonlinearities that we are concerned with.
\begin{defn}\label{defn:controlled}
We say a function $\phi: \R^k \to \R$ is \emph{controlled} if $|\phi(x)|$ is bounded by a function of the form $e^{C\|x\|^{2-\epsilon} + c}$ with $C, c, \epsilon > 0$
\end{defn}
Controlled functions can explode faster than exponential but are still $L^1$ and $L^2$-integrable against Gaussian measures.
Additionally, there is no constraint on the smoothness of $\phi$ here.
Thus this definition captures almost all functions we would care about in practice.

The metric structure of the final layer representations of inputs under a deep neural network often reveals semantical information about the inputs.
This structure is reflected in the inner products between pairs of such representations, e.g.\ $s^{t1}{}^\trsp s^{r2}/n$ for $s^{t1}$ and $s^{r2}$ in \cref{tp:RNN}.
The following Master Theorem allows one to compute such inner products, and much more, for a wide network at initialization time (take $\psi$ below to be $\psi(z^1, \ldots, z^M) \defeq z^{M-1} z^M$).

\begin{restatable}[\netsor Master Theorem]{thm}{momentConvergence}
\label{thm:netsorMasterTheorem}
\footnote{Difference with \cite[Thm 4.3]{yangScalingLimitsWide2019arXiv.org}: We have gotten rid of the ``rank convergence'' assumption by showing that it comes for free.
See \ref{IH:coreSet} and \cref{lemma:rankStability} in \cref{sec:proofMasterTheorems}.}
Fix any \netsor program satisfying \cref{assm:equalDimNoNonlin+} and with all nonlinearities controlled.
If $g^1, \ldots, g^M$ are all of the G-vars in the entire program, including all input G-vars, then for any controlled $\psi: \R^M \to \R$, as $n \to \infty$,
\begin{align*}
    \f 1 n \sum_{\alpha=1}^n \psi(g^1_\alpha, \ldots, g^M_\alpha) \asto 
    \EV_{Z \sim \Gaus(\tmu, \tSigma)}\psi(Z)
    =
    \EV_{Z \sim \Gaus(\tmu, \tSigma)}\psi(Z^{g^1}, \ldots, Z^{g^M}),
\end{align*}
where $\asto$ means almost sure convergence,
$Z = (Z^{g^1}, \ldots, Z^{g^M}) \in \R^M$, and $\tmu = \{\tmu(g^i)\}_{i=1}^M \in \R^M$ and $\tSigma = \{\tSigma(g^i, g^j)\}_{i,j=1}^M \in \R^{M \times M}$ are given in \cref{eqn:extendedMuSigma}.
See \cref{fig:mastertheoremIllustration} for an illustration.
\end{restatable}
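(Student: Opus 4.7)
The plan is to prove the statement by induction on the structure of the \netsor program, specifically on the index $j$ of the last G-var $g^j$ introduced, maintaining as the inductive hypothesis that for any controlled $\psi$ the empirical average $n^{-1}\sum_\alpha \psi(g^1_\alpha, \ldots, g^j_\alpha)$ converges almost surely to $\EV_{Z \sim \Gaus(\tmu, \tSigma)} \psi(Z)$. It is cleanest to absorb \ref{linetype:lincomb} lines directly into $\psi$ (they are deterministic linear combinations of previous G-vars) and to absorb \ref{linetype:nonlin} lines into the test function by composition, using that the composition of two controlled functions is controlled and hence remains Gaussian-integrable. The base case is a block of i.i.d.\ input G-vars sampled coordinatewise from $\Gaus(\muin, \Sigmain)$, where Kolmogorov's strong law applied to the controlled summands gives the claim immediately.

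The heart of the proof is the \ref{linetype:MatMul} case, in which a new G-var $g^{j+1} = Wh$ appears. Let $g^{i_1} = W h^{i_1}, \ldots, g^{i_k} = W h^{i_k}$ be all previous MatMuls that used the same A-var $W$, stacked into matrices $H = [h^{i_1} \mid \cdots \mid h^{i_k}]$ and $G = [g^{i_1} \mid \cdots \mid g^{i_k}]$. The Gaussian conditioning trick (\cref{lemma:condTrick}) decomposes $W$ conditionally on $\{W H = G\}$ together with everything from earlier lines as its conditional mean plus an independent fresh copy $\tilde W$ restricted to the orthogonal complement of $\mathrm{span}(H)$, giving
\[
    g^{j+1} = G (H^\trsp H)^+ H^\trsp h + \tilde W \bigl(h - H (H^\trsp H)^+ H^\trsp h\bigr).
\]
By the inductive hypothesis, $n^{-1} H^\trsp H$ and $n^{-1} H^\trsp h$ converge almost surely to the corresponding blocks of $\tSigma$, so the first summand is asymptotically a deterministic linear combination of the $g^{i_m}$'s with exactly the coefficients prescribed by $\tSigma$. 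The second summand is, conditionally on the past, Gaussian with i.i.d.\ coordinates whose variance is $\sigma_W^2$ times a residual converging to $\sigma_W^2 \EV_Z \phi(Z)^2$ minus the projection mass; a direct calculation then matches the induced coordinatewise covariances with the \ref{linetype:MatMul} clause of \cref{eqn:extendedMuSigma}. A conditional strong law applied to the fresh i.i.d.-in-$\alpha$ coordinates of $\tilde W(\Pi^\perp h)$, combined with the inductive hypothesis on $g^1, \ldots, g^j$, closes the induction.

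Two technical obstacles stand out. First, the pseudoinverse $(H^\trsp H/n)^+$ is only continuous at matrices of locally constant rank, so one needs a rank-stability lemma (\cref{lemma:rankStability}, promised in the footnote of the statement) guaranteeing that the empirical Gram matrix attains its limiting rank for all large $n$ almost surely; this is what replaces the earlier ``rank convergence'' assumption. Second, converting conditional Gaussianity of $\tilde W (\Pi^\perp h)$ into almost sure convergence of controlled empirical averages requires uniform integrability under the conditional measure, which is supplied by the subexponential tail $e^{C\|x\|^{2-\epsilon}+c}$ of \cref{defn:controlled}, together with a Borel--Cantelli step to promote convergence in probability (from the CLT-like behavior of the fresh Gaussian summand) to almost sure convergence across the successive conditioning events generated by the finitely many program lines.
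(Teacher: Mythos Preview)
Your outline matches the paper's approach---absorb \ref{linetype:lincomb}, induct on G-vars, and at each \ref{linetype:MatMul} apply \cref{lemma:condTrick} to write $g^{m}\disteq_\Bb\omega+\sigma y$ with $\omega=\hat G\hat\Lambda^+\hat\nu$---but two substantive pieces are understated, and filling them in is most of the actual work.

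First, \cref{lemma:rankStability} is not an external lemma you can cite; the paper proves it \emph{inside} the induction by carrying a second hypothesis \ref{IH:coreSet} alongside \ref{IH:MomConv}. One maintains a core set $\coreset$ of G-vars that are a.s.\ a basis and whose coordinate values a.s.\ avoid every fixed triangular array of Lebesgue-null sets (\ref{prop:nullAvoid}); this yields zero stability (\cref{lemma:zerofunStability}), hence $\rank\hat\Lambda=\rank\mathring{\hat\Lambda}$ eventually, hence $\hat\Lambda^+\asto\mathring{\hat\Lambda}^+$. Without interleaving this with the moment induction there is no mechanism to rule out rank collapse. Second, ``the first summand is asymptotically a deterministic linear combination'' hides the paper's $\probB$ term in the decomposition \cref{eqn:decompABC}. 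The coefficients $v=\hat\Lambda^+\hat\nu$ and $\sigma$ are $\Bb$-measurable but fluctuate with $n$, so after your conditional-LLN step ($\probA$) you hold $\tfrac1n\sum_\alpha\EV_z\psi(g^1_\alpha,\ldots,g^{m-1}_\alpha,\omega_\alpha+\sigma z)$, which is not yet a fixed controlled function of $g^1,\ldots,g^{m-1}$ to which the induction hypothesis ($\probC$) applies. Swapping the random $(v,\sigma)$ for $(\mathring v,\mathring\sigma)$ inside a possibly discontinuous $\psi$ requires the Gaussian-smoothing bound \cref{lemma:PsiAlphaSmoothness} (obtained via Stein's lemma), which in turn needs $\mathring\sigma>0$; the degenerate case $\mathring\sigma=0$ is disposed of separately by \cref{lemma:limitSigmaIsZero}, itself a consequence of \ref{IH:coreSet}.
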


\begin{figure}
    \centering
    \includegraphics[width=\textwidth]{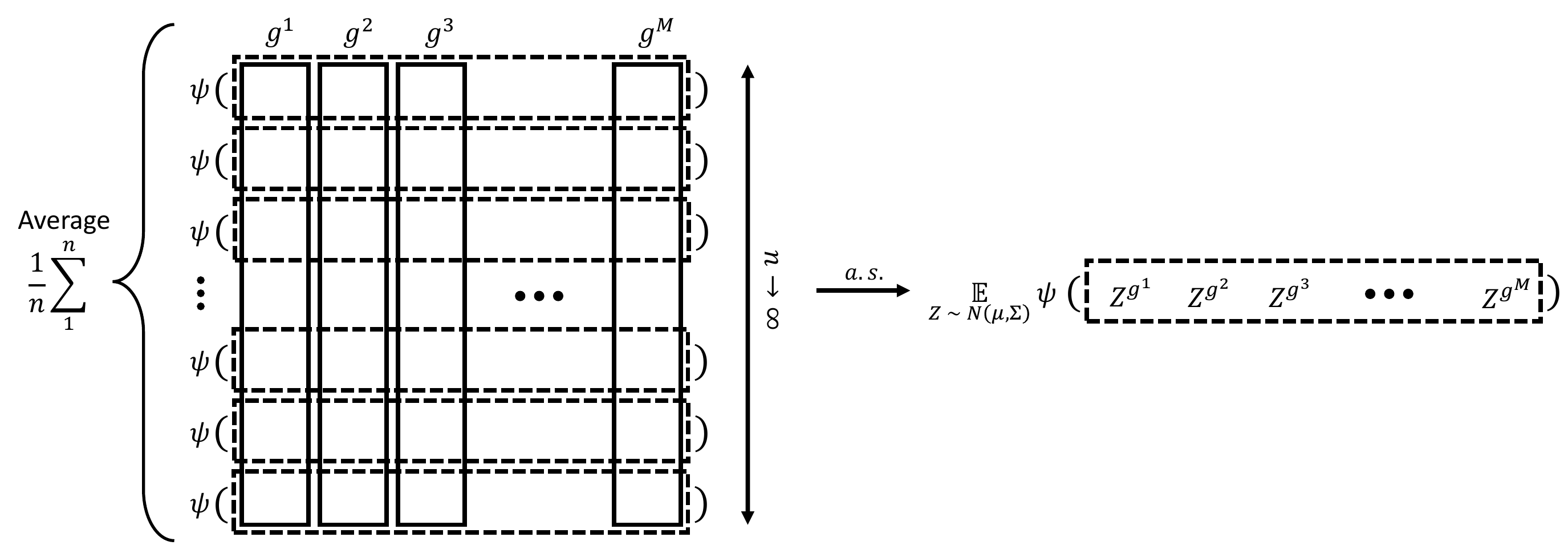}
    \caption{An illustration of the \netsor Master Theorem \cref{thm:netsorMasterTheorem}.}
    \label{fig:mastertheoremIllustration}
\end{figure}

Intuitively, \cref{thm:netsorMasterTheorem} says, for each $\alpha$, $(g^1_\alpha, \ldots, g^M_\alpha) \approx \Gaus(\tmu, \tSigma)$ in the large $n$ limit, and each $\alpha$-slice appears to be ``iid'' from the point of view of the empirical average by any controlled function $\psi$.
The proof of \cref{thm:netsorMasterTheorem} is given in \cref{sec:proofMasterTheorems}.

Combining \cref{thm:netsorMasterTheorem} with \cref{prop:gaussianDistConvFromMomentConv}, we can straightforwardly calculate the output distribution of a \netsor program.
\begin{cor}[Computing the GP Kernel]\label{cor:GPConv}
Adopt the same assumptions and notations as in \cref{thm:netsorMasterTheorem}.
If the program outputs $(v{}^\trsp x^1/\sqrt{n}, \ldots, v{}^\trsp x^k/\sqrt{n})$, where
\begin{itemize}
    \item $v: \Gtype(n), v_\alpha \sim \Gaus(0, \sigma_v^2),$ is an input G-var not used elsewhere in the program and is sampled independently from all other G-vars, and
    \item $x^i$ was introduced as $x^i := \phi^i(g^1, \ldots, g^M)$
\end{itemize}
then the output vector converges in distribution to $\Gaus(0, \KK)$ where
\begin{equation}
    \KK_{ij} = \sigma_v^2 \EV_{Z \sim \Gaus(\tmu, \tSigma)} \phi^i(Z) \phi^j(Z),
    \quad \text{with $\tmu, \tSigma$ defined in \cref{eqn:extendedMuSigma}.}
    \label{eqn:limitingCovarianceGP}
\end{equation}
\end{cor}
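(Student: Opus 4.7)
The natural approach is to peel off the readout weight $v$ by conditioning and then invoke the Master Theorem on the resulting inner products.

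The plan is as follows. First, I would observe that because $v$ is an input G-var that is not used anywhere else in the program, it is (by \cref{assm:equalDimNoNonlin+} and the independence hypothesis in the statement) independent of every G-var $g^1, \ldots, g^M$, and hence independent of every H-var $x^1, \ldots, x^k$. Condition on the $\sigma$-algebra generated by all these non-$v$ variables. Then each $x^i \in \R^n$ becomes a deterministic vector and the output $(v^\trsp x^1/\sqrt{n}, \ldots, v^\trsp x^k/\sqrt{n})$ is, coordinate by coordinate in $v$, a fixed linear combination of the i.i.d.\ $\Gaus(0, \sigma_v^2)$ entries of $v$. Therefore, conditionally the output is an exact centered Gaussian vector in $\R^k$ with covariance
\[
    \KK^{(n)}_{ij} \defeq \sigma_v^2 \cdot \frac{(x^i)^\trsp x^j}{n} = \sigma_v^2 \cdot \frac{1}{n}\sum_{\alpha=1}^n \phi^i(g^1_\alpha, \ldots, g^M_\alpha)\, \phi^j(g^1_\alpha, \ldots, g^M_\alpha).
\]

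Next, I would apply \cref{thm:netsorMasterTheorem} with the test function $\psi_{ij}(z^1, \ldots, z^M) \defeq \phi^i(z^1, \ldots, z^M)\phi^j(z^1, \ldots, z^M)$ for each pair $(i,j)$. A quick check shows that $\psi_{ij}$ is controlled in the sense of \cref{defn:controlled}: if $|\phi^i(z)| \le e^{C_i\|z\|^{2-\epsilon_i} + c_i}$ and similarly for $\phi^j$, then $|\psi_{ij}(z)|$ is bounded by $e^{(C_i+C_j)\|z\|^{2-\min(\epsilon_i,\epsilon_j)} + (c_i+c_j)}$, which is again of the controlled form. Thus the Master Theorem yields
\[
    \KK^{(n)}_{ij} \asto \sigma_v^2 \,\EV_{Z \sim \Gaus(\tmu, \tSigma)} \phi^i(Z)\phi^j(Z) = \KK_{ij}
\]
for every $i,j$, simultaneously on a single probability-one event.

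Finally, to pass from almost sure convergence of the conditional covariances to convergence in distribution of the outputs themselves, I would invoke \cref{prop:gaussianDistConvFromMomentConv}: since, conditionally on the non-$v$ variables, the output is exactly $\Gaus(0, \KK^{(n)})$, and since $\KK^{(n)} \asto \KK$ with $\KK$ deterministic, the unconditional law converges in distribution to $\Gaus(0, \KK)$ (for instance, via pointwise a.s.\ convergence of the conditional characteristic function $\exp(-\tfrac12 t^\trsp \KK^{(n)} t)$ together with bounded convergence). I expect the only genuine obstacle here to be the careful bookkeeping that $v$ is truly independent of the rest of the program's randomness (so the conditioning step is clean) and that the product $\phi^i\phi^j$ remains controlled so that the Master Theorem applies; everything else is mechanical.
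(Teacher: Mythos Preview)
Your proposal is correct and follows essentially the same approach as the paper: condition on everything but $v$ so the output is exactly Gaussian with covariance $\KK^{(n)}$, use \cref{thm:netsorMasterTheorem} on the controlled functions $\phi^i\phi^j$ to get $\KK^{(n)}\asto\KK$, and finish with \cref{prop:gaussianDistConvFromMomentConv}. The paper states exactly this (the sentence below the corollary and the pointer to \cref{prop:gaussianDistConvFromMomentConv}); your write-up merely spells out the controlled-ness check and the bounded-convergence step that the paper leaves implicit.
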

Intuitively, this corollary follows from the fact that, for any finite $n$, the output vector is some Gaussian $\Gaus(0, \tilde K)$ conditioned on $x^1, \ldots, x^k$, and the covariance $\tilde K$ converges to a deterministic covariance $\KK$, causing the output vector to converge in distribution to $\Gaus(0, \KK)$ as well.
The case when we have multiple distinct $v^i$ (allowed by \cref{defn:netsor}) can be obtained easily as well (see \cref{prop:gaussianDistConvFromMomentConv}).

Following \cref{cor:GPConv} and its extensions below, the convergence of standard architectures to Gaussian Processes becomes obvious:
Express the marginal of the distribution on every finite set of inputs as a \netsor (or \netsorplus) program, and then apply \cref{cor:GPConv}.
We summarize the result below.
\begin{cor}\label{cor:GPConvergence}
If its nonlinearities are controlled (\cref{defn:controlled}), then a (possibly recurrent) neural network of standard architecture converges to a Gaussian process in finite-dimensional distribution \footnote{Stronger convergence results, such as convergence in distribution with respect to some topology on functions on $\R^d$, would be available if one can show additionally the \emph{tightness} of the random neural networks under this topology.
However, here we are content with convergence of finite-dimensional marginals of the stochastic processes.
} in the sense of \cref{defn:GP} as its widths go to infinity and each of its weights $W$ and biases $b$ are randomized as $W_{\alpha \beta} \sim \Gaus(0, \sigma_W^2/n), b_\alpha \sim \Gaus(\mu_b, \sigma_b^2)$ for a collection of sampling hyperparameters $\{\sigma_W\}_W, \{\mu_b, \sigma_b\}_b$.
If its nonlinearities are more generally parametrized and are parameter-controlled (\cref{defn:parameterControlled}), such as in the case of attention models or where layernorm is involved, then the same result holds as long as \cref{assm:asRankStab} also holds.
\end{cor}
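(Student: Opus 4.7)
}

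The plan is to reduce the statement to a direct invocation of \cref{cor:GPConv} by constructing, for any finite set of network inputs $\{x^1,\ldots,x^k\}$, a single \netsor (or \netsorplus) program whose output tuple collects all the coordinates of $(f(x^1),\ldots,f(x^k))$. Once such a program is exhibited with controlled (resp.\ parameter-controlled) nonlinearities, \cref{cor:GPConv} immediately yields joint Gaussian convergence of the output with kernel determined by $\tmu,\tSigma$ from \cref{eqn:extendedMuSigma}, which is exactly the conclusion of \cref{defn:GP}.

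First I would fix a standard architecture $f$ together with inputs $x^1,\ldots,x^k$ and build one unified program. Each trainable weight matrix $W$ is declared once as an A-var and then re-used across all $k$ input branches (this is exactly how weight sharing across the two sentences is handled in \cref{tp:RNN}); every bias and every readout vector $v$ becomes an input G-var; each deterministic network input $x^i$ is folded into the program through its first-layer embedding $W^1 x^i$, which, being a fixed linear functional of an iid Gaussian matrix, is itself a Gaussian input G-var whose cross-input mean and covariance populate $\muin$ and $\Sigmain$. With this setup, each standard layer becomes a short sequence of \netsor lines: preactivations $Wh+b$ split into a \ref{linetype:MatMul} followed by a \ref{linetype:lincomb}; elementwise nonlinearities (ReLU, tanh, sigmoid, erf) are instances of \ref{linetype:nonlin}; skip connections, fixed-weight pooling, and ($n$D or graph) convolutions with iid random filters are either linear combinations of G-vars or matrix-vector products against random A-vars, so fit into \ref{linetype:lincomb} and \ref{linetype:MatMul}; RNN/LSTM/GRU recurrences simply re-use the same A-var across time steps, as in \cref{tp:RNN}.

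The genuinely harder layers are those that rescale by empirical moments of their own activations: layernorm divides by the coordinatewise standard deviation of an H-var, attention divides query-key inner products by $\sqrt{d}$ and pushes them through a softmax that couples all tokens in a sequence, and batchnorm rescales by a batch standard deviation. For each of these I would express the offending scalar (norm, batch standard deviation, softmax denominator) as a continuous function of an empirical moment $n^{-1}\sum_\alpha \psi(y^1_\alpha,\ldots,y^r_\alpha)$ of previously defined G-vars, and then use \ref{linetype:nonlin+} to apply the corresponding per-coordinate rescaling. This is precisely the pattern envisioned in the remark after \cref{defn:netsor+}: by \cref{thm:netsorMasterTheorem} these empirical moments converge almost surely to deterministic limits, so \ref{linetype:nonlin+} behaves like an ordinary \ref{linetype:nonlin} with a data-determined nonlinearity in the large-$n$ limit, producing a \netsorplus encoding whose nonlinearities are parameter-controlled in the sense of \cref{defn:parameterControlled}.

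The main obstacle is checking that these normalization-based layers really do fit the \netsorplus framework: one has to verify (i) that the rescaling scalars converge almost surely to deterministic, finite, nonzero limits, which is where \cref{assm:asRankStab} is invoked so that quantities like $n^{-1}\|h\|^2$ stabilize at a strictly positive value and the reciprocal is continuous there, and (ii) that the composed nonlinearity, once the empirical-moment parameters are plugged in, still satisfies the controlled growth bound of \cref{defn:controlled} (uniformly in the parameters, within a neighborhood of their limit). Modulo these bookkeeping checks, which are routine for the specific normalizations used in batchnorm, layernorm, and softmax attention, the corollary follows: the joint law of $(f(x^1),\ldots,f(x^k))$ is the output distribution of the constructed \netsorplus program and hence converges, by the \netsorplus analogue of \cref{cor:GPConv}, to a Gaussian whose kernel is read off from \cref{eqn:extendedMuSigma}; since $\{x^i\}$ was arbitrary, $f$ is a Gaussian process in the sense of \cref{defn:GP}.
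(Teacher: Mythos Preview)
Your approach is exactly the paper's: encode the finite-dimensional marginal on $\{x^1,\ldots,x^k\}$ as a single \netsor{} or \netsorplus{} program (the explicit encodings for each layer type are laid out in \cref{sec:MoreExamples}) and then invoke \cref{cor:GPConv} or its \netsorplus{} analogue \cref{cor:GPConvplus}.

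One small correction worth flagging: batchnorm does \emph{not} belong with layernorm and attention in the ``harder'' class requiring \ref{linetype:nonlin+}. Its normalization is over the fixed batch dimension $B$, not the width $n$, so the batch mean and standard deviation are ordinary coordinatewise functions of the $B$ G-vars $\{h^{1},\ldots,h^{B}\}$ and are expressed with a plain \ref{linetype:nonlin} (see \cref{eqn:BN,eqn:BNnetsor} and \cref{tp:batchnorm}). Your proposed encoding of the batch standard deviation as an empirical moment $n^{-1}\sum_\alpha\psi(\cdot)$ would not type-check, since the sum in batchnorm runs over the batch index, not over $\alpha\in[n]$.
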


\begin{figure}
    \centering
    \includegraphics[width=\textwidth]{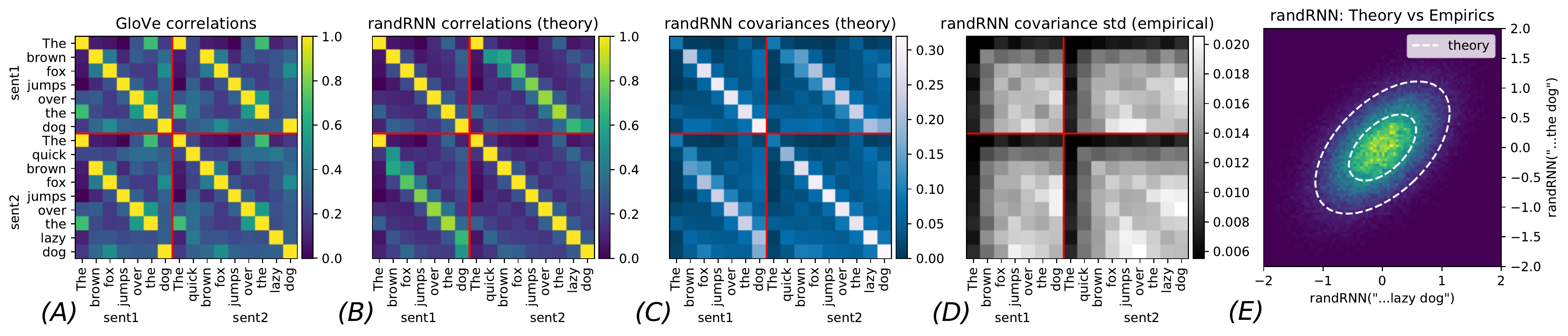}
    \caption{\emph{Infinite-width theory is highly predictive for simple RNN (\cref{tp:RNN}) with 1000 neurons and erf activation.}
    We pass two sentences (``The brown fox jumps over the dog'' and ``The quick brown fox jumps over the lazy dog'') by their word GloVe embeddings into randomly initialized simple RNNs.
    \textbf{(A)} Cosine distances between each pair of word GloVe embeddings.
    \textbf{(B)} Correlation matrix of the limiting Gaussian that \cref{tp:RNN} output vector converges to.
    Each row/column corresponds to an embedding of of the sentence up to that word.
    \textbf{(C)} \emph{Covariance} matrix of the same.
    See \cref{sec:ExactKernelRNN} for algorithm to compute this covariance.
    \textbf{(D)} Entrywise standard deviation of empirical covariance around (C), as measured from 100 random simple RNNs.
    Note the deviations are at least an order of magnitude smaller than the limiting values (C), for 1000 neurons.
    \textbf{(E)} Visualizing the joint distribution of the final outputs of the RNN at the end of each sentence, i.e. $(v^\trsp s^{t1}/\sqrt n, v^\trsp s^{r2}/\sqrt n)$ in \cref{tp:RNN}.
    We sampled 100,000 simple RNNs and plotted the 2d histogram as a heatmap.
    Simultaneously, we plot the limiting Gaussian level curves predicted by our theory, which fit the simulations perfectly.
    }
    \label{fig:randRNN}
\end{figure}

\paragraph{An Empirical Demonstration}
Despite being about infinite width, our theory is highly predictive for finite-width networks, as shown in \cref{fig:randRNN}.
As in \cref{sec:GPVarDim}, we randomly initialize a simple RNN (\cref{tp:RNN}) with 1000 neurons and erf activation (we choose erf instead of tanh because it simplifies kernel calculations; see \cref{sec:ExactKernelRNN} for the derivation of the algorithm to compute the kernel).
We pass the two sentences in (\ref{sentences}) to the random RNN by their GloVe embeddings.
After processing each token, the RNN outputs a scalar, as before, and over the two input sequences, the RNN outputs $7+9=16$ scalars in total.
Our result \cref{cor:GPConv} implies that, as the width of the RNN grows to infinity, these 16 scalars are distributed jointly as a Gaussian.
\cref{fig:randRNN}(E) illustrates this is indeed the case for the marginal on 2 scalars, as discussed in \cref{sec:GPVarDim}.
We also compare our theoretically derived, infinite-width covariance of the 16 scalars (\cref{fig:randRNN}(C)) with the empirical finite-width covariance obtained from multiple random initializations.
We find that the empirical covariance, as predicted, concentrates around the theoretical, and the entrywise standard deviation is typically at least an order of magnitude lower than the values themselves (\cref{fig:randRNN}(D)) (with width 1000 RNNs).
The random RNN is clearly doing nontrivial context embedding, as seen by comparing the \emph{correlation} matrix of the 16 scalars \cref{fig:randRNN}(B) (context-sensitive) with the matrix of cosine distances (i.e. correlations) between the GloVe embeddings \cref{fig:randRNN}(A) (context-insensitive).
A tell-tale sign is the entry corresponding to (``lazy'', ``dog''): even though as words, they are not semantically similar (so that the entry in \cref{fig:randRNN}(A) is small), the random RNN understands that the two sentences resp. up to ``lazy'' and ``dog'' have been very similar (so that the entry in \cref{fig:randRNN}(B) is large).
Given the precision of our theoretical predictions, we expect analyses of the equations derived here will lead to many nontrivial insights about recurrent (and other) neural network behavior in practice, which we leave for future work.

\paragraph{Examples and Extensions: A Brief Guide to the Appendix}
\cref{sec:GPKernelComputationExamples} contains a plethora of worked-out examples of the kernel computation for different architectures, starting from the known case of MLP to the new results of RNN (as shown in \cref{fig:randRNN}), GRU, batchnorm, and others.
At this point, we recommend the reader to follow along some of those examples to solidify the understanding of \cref{thm:netsorMasterTheorem}.

A Master Theorem for \netsorplus can be similarly proved.
This is stated in \cref{sec:netsorplusMasterTheorem} and can be proved easily given the proof of \cref{thm:netsorMasterTheorem}; see \cref{sec:netsorplusMasterTheoremProof}.
\cref{sec:netsorplusKernelExample} works out examples of kernel computations for layernorm and transformer, which can only be expressed through \netsorplus\!.
\cref{fig:allkernels} illustrates the kernels of simple RNN, GRU, transformer, and a batchnorm+ReLU network, and confirms that the finite width simulations tend to the infinite-width, theoretical kernels.

We also discuss different variants of \netsor and \netsorplus in \cref{sec:VersionTensorPrograms} which trade off syntactical simplicity with ease of use, but are semantically equivalent to \netsor or \netsorplus\!.
\cref{sec:VariableDim} discusses the case when the dimensions of a program need not be equal.
With the appropriate setup, a Master Theorem in this case can be proved similarly (\cref{thm:netsorMasterTheoremVarDim}).

\begin{figure}
\centering
\includegraphics[width=\textwidth]{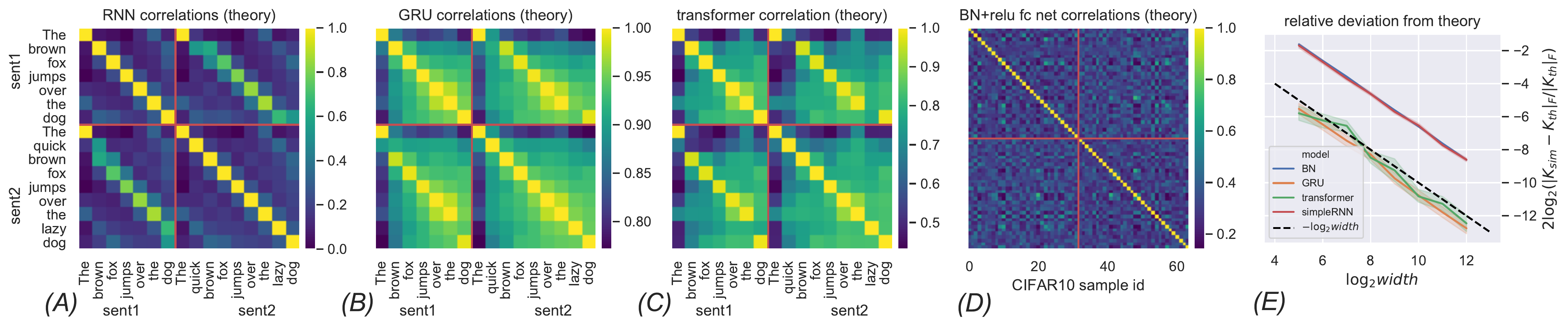}
\caption{\emph{Infinite-width GP kernels (more precisely, their correlation matrices) for which we provide reference implementations, and the deviation of finite-width simulations from the corresponding infinite-width limits.}
\textbf{(A) -- (C)}
The correlation matrices of the GP kernels for the simple RNN (same as in \cref{fig:randRNN}; see \cref{tp:RNN} for the architecture and \cref{sec:ExactKernelRNN} for derivation), GRU (\cref{tp:GRU}; \cref{sec:GRUkernel}), and transformer (\cref{tp:transformer}; \cref{sec:transformerKernel}), with input sequences given by the GloVe embeddings of (\ref{sentences}).
\textbf{(D)}
The correlation matrix of the GP kernel for a feedforward, fully-connected network with batchnorm+ReLU (batchnorm followed by ReLU) as nonlinearity (see \cref{sec:batchnormKernel} for derivation).
The inputs are the first 64 CIFAR10 images, split into two batches of 32 each.
The red lines indicate the batch split.
\textbf{(E)}
For each architecture above, we independently randomly initialize 100 networks for each width among $[2^5, 2^6, \ldots, 2^{13}]$.
We calculate the empirical kernel of the network outputs and plot its (relative) Frobenius distance to the infinite-width kernel.
This Frobenius distance drops like $1/\sqrt{width}$ as one would expect from a central limit intuition.
See our code\repofnt{} for \texttt{Python} implementations of these kernels and the code for generating this figure.
}
\label{fig:allkernels}
\end{figure}

\section{Related Works}

\paragraph{NN-GP Correspondence}
Many works have observed the neural network-Gaussian process correspondence (NN-GP correspondence) for subsets of standard architectures \citep{williams_computing_1997,le_roux_continuous_2007,hazan_steps_2015,daniely_toward_2016,lee_deep_2018,matthews_gaussian_2018,novak_bayesian_2018}.
    Others have exploited this NN-GP correspondence implicitly or explicitly to build new models \citep{cho_kernel_2009,lawrence_hierarchical_2007,damianou_deep_2013,wilson_stochastic_2016,wilson_deep_2016,bradshaw_adversarial_2017,van_der_wilk_convolutional_2017,kumar_deep_2018,blomqvist_deep_2018,borovykh_gaussian_2018,garriga-alonso_deep_2018,novak_bayesian_2018}.
    In particular,  by directly converting NN to GP using this correspondence,
    \citet{lee_deep_2018} constructed the state-of-the-art (SOTA) permutation-invariant GP on MNIST, and \citet{novak_bayesian_2018} was until recently SOTA on CIFAR10 for any GP with untrainable kernel.
    Additionally, the NN-GP correspondence has led to new understanding of neural network training and generalization \citep{novak_sensitivity_2018,valle-perezDeepLearningGeneralizes2018arXiv.org,yangFineGrainedSpectralPerspective2019arxiv.org}.

    In this paper, we generalized the NN-GP correspondence to \emph{standard architectures} and very general nonlinearities (controlled functions; see \cref{defn:controlled}).
    In contrast, \citet{matthews_gaussian_2018} requires $\phi$ to be linearly bounded in norm;
    \citet{daniely_toward_2016} requires $\phi$ be twice-differentiable with $|\phi|,|\phi'|, |\phi''|$ all bounded, or that $\phi=$ ReLU;
    and a sufficient condition given in \citet{novak_bayesian_2018} is that $\phi'$ exists and is bounded by $\exp(O(x^{2-\epsilon}))$, though it is unclear how the more general set of 3 conditions given there (in their section E.4) compares with ours.

    \paragraph{Signal Propagation in Neural Networks}
    A long line of work starting with \citet{glorot_understanding_2010} and \citet{he_delving_2015} studies the effect of initialization in deep neural networks \citep{poole_exponential_2016,schoenholz_deep_2017,yang_mean_2017,yang_deep_2018,hanin_how_2018,chen_dynamical_2018,yang_mean_2018,pennington_resurrecting_2017}, for example, what is the best initialization scheme to avoid gradient vanishing?
    These works apply the same calculations of covariances as we do for calculating $\tSigma$ here, though in a much more restricted set of architectures, and they are typically more concerned with the dynamics of such covariances with depth.

    \paragraph{Reservoir Computing}
    In reservoir computing \citep{jaegerEchoStateNetwork2007www.scholarpedia.org,maassRealTimeComputingStable2002DOI.orgCrossref,schrauwenOverviewReservoirComputing2007Zotero}, sequence processing is typically done by a randomly initialized recurrent neural network.
    A sequence of inputs is fed step by step into the network, and a final readout layer transforms the random RNN's state into an output.
    The only trainable parameters are the readout layer, but not the random RNN itself.
    Thus, in the infinite-width limit, reservoir computing corresponds exactly to GP inference with the RNN kernel computed in \cref{sec:ExactKernelRNN}.

\section{Conclusion}

We formulated the notion of Gaussian process with variable-dimensional outputs and showed that randomly initialized, wide feedforward and recurrent neural networks of standard architectures converge in distribution to Gaussian processes in such a sense.
This significantly generalizes prior work on the NN-GP correspondence.
We did so by introducing \netsor\!, a language for expressing computation common in deep learning, including neural networks of standard architecture, along with a theorem (\cref{thm:netsorMasterTheorem}) characterizing the behavior of a \netsor program as its tensors are randomized and their dimensions tend to infinity; many examples and extensions are exhibited in the appendix.
Finally, we empirically verified our theory for simple RNN, GRU, transformer, and batchnorm (\cref{fig:allkernels}) and open-sourced implementations of the corresponding infinite-width limit kernels at \repo.
In the next paper in this series, we will introduce a more powerful version of tensor program that allows matrix transposes, and use this tool to compute Neural Tangent Kernel \cite{jacot_neural_2018} for any architecture.

\newpage

\section*{Acknowlegements}

I'm very thankful for my buddy Hadi Salman who is always ready to help and who donated a lot of time helping me write the detailed examples for MLP and RNN kernels.
I'd also like to thank Mimee Xu who read the first versions of this paper and provided valuable feedback.
Finally, allow me to express my appreciation for myriads of friends and collaborators who have helped me improve this paper in one way or another:
Sam Schoenholz,
Yihe Dong,
Judy Shen,
Alessandro Sordoni,
Huishuai Zhang,
George Phillip,
Vinay Rao,
Sebastien Bubeck,
Zeyuan Allen-Zhu,
Kyle Aitkens,
Chunyuan Li,
Alex Polozov,
Ilya Razenshteyn,
Jianfeng Gao,
Pengchuan Zhang,
Jascha Sohl-Dickstein,
Jeffrey Pennington,
and others.

\bibliography{references,ref2}
\bibliographystyle{plainnat}
\newpage

\appendix

\section{Writing Standard Architectures in \texorpdfstring{\netsor{}}{Netsor}}
\label{sec:MoreExamples}

In this section, we showcase example programs for batchnorm, skip connection, convolution, pooling, GRU/LSTM, and (scaled) attention.
In most cases, we demonstrate the computation on a single input batch, image, or sequence.
Generalization to multiple inputs is obvious and follows the pattern of \cref{tp:RNN}.
It should also be apparent that any composition of these gadgets can be expressed in \netsor{}.

Additionally, observe that all nonlinearities used in these programs are controlled (\cref{defn:controlled}) or parameter-controlled (\cref{defn:parameterControlled}), so that \cref{thm:netsorMasterTheorem} or \cref{thm:Netsor+MasterTheorem} applies.
Also we remark that the GP convergence results for batchnorm, GRU/LSTM (and RNNs in general), and attention are new.

\paragraph{Batchnorm, Fully-Connected}
\label{subsec:exampleBatchnorm}
Let $\tilde\phi: \R^B \to \R^B,$
\begin{align*}
\tilde\phi(h) &\defeq \phi\lp\f{h - \nu(h)}{\sigma(h)}\rp,
    &
    \text{where}\quad
\nu(h) \defeq \f 1 B \sum_{i=1}^B h_i,
    \quad
\sigma(h) \defeq \sqrt{\f 1 B \sum_{i=1}^B (h_i - \nu(h))^2},
\numberthis\label{eqn:BN}
\end{align*}
be batchnorm followed by coordinatewise nonlinearity $\phi$, where $h \in \R^B$ should be interpreted as a single neuron across a batch, and $\nu$ and $\sigma$ are the \emph{batch} mean and standard deviations.
Here, $B$ should be thought of as fixed while $n \to \infty$.
Then, given a batch of G-vars $y^1, \ldots, y^B: \Gtype(n)$ (for example, they could be the preactivations after applying a linear layer), we can express the application of batchnorm via \ref{linetype:nonlin} as 
\begin{align*}
    x^1 &:= \tilde\phi_1(y^1, \ldots, y^B),\quad
    \ldots,\quad
    x^B := \tilde \phi_B(y^1, \ldots, y^B),
    \numberthis\label{eqn:BNnetsor}
\end{align*}
producing $B$ H-vars $x^1, \ldots, x^B: \Htype(n)$.
\cref{tp:batchnorm} expresses more generally the computation of a batchnorm network on multiple batches and over multiple layers.
Here, each ``for-loop'' is just shorthand for the corresponding unrolled program (since \cref{defn:netsor} doesn't allow for-loops).

\begin{algorithm}[tb]
    \caption{Batchnorm (with Fully Connected Layers) over Multiple Batches}
    \label{tp:batchnorm}
    \begin{multicols}{2}
    \begin{algorithmic}
      \State {\it // For $\phi: \R \to \R, h \in \R^B$,}
      \State {\it // let $\tilde\phi(h) = \phi((h - \nu(h)) / \sigma(h))$,}
      \State {\it // where $\nu(h) = \f 1 B \sum_{i=1}^B h_i$,}
      \State {\it // $\sigma(h) = \sqrt{\f 1 B \sum_{i=1}^B (h_i - \nu(h))^2}$.}
      \State {\it // Let $\tilde\phi_i, i \in [B],$ be the $i$th coordinate of $\tilde \phi$.}
      \State {\it // Embeddings of $k$ batches of inputs}
      \State {\it // with $a$th batch having size $B_a$}
      \Require $\{W^1 x^{ia}: \Gtype(n)\}_{a \in [k], i \in [B_a]}$
      \Require $W^2, \ldots, W^L: \Atype(n, n)$
      \State {\it // Readout layer}
      \Require $v: \Gtype(n)$
      \State
          \For{$a \in [k]$ and $i \in [B_a]$}
          \State $x^{1ia} := \tilde \phi_i(W^1 x^{1a},\ldots, W^1 x^{B_a a})$
          \State \quad\quad\quad\quad$: \Htype(n)$
          \EndFor
          \For{$l = 2, \ldots, L$}
            \For{$a \in [k]$ and $i \in [B_a]$}
              \State $h^{lia} := W^l x^{l-1,ia}: \Gtype(n)$
            \EndFor
            \For{$a \in [k]$ and $i \in [B_a]$}
              \State $x^{lia} := \tilde \phi_i(h^{l1a},\ldots, h^{lB_a a}): \Htype(n)$
            \EndFor
          \EndFor
      \Ensure $\{v^\trsp x^{Lia}/\sqrt n\}_{a \in [k], i \in [B_a]}$
    \end{algorithmic}
    \end{multicols}
\end{algorithm}

\begin{algorithm}[tb]
    \caption{2-Layer Convolutional Network with Global Average Pooling}
    \label{tp:conv}
    \begin{multicols}{2}
    \begin{algorithmic}
      \Require $\{W^1_j x_{i+j}: \Gtype(n^1)\}_{\substack{j \in ker^1, i \in pos^1\\ s.t.\ i+j \in pos^0}}$
      \Require $\{W^2_j: \Atype(n^2, n^1)\}_{j \in ker^2}$
      \State {\it // Readout weights}
      \Require $v: \Gtype(n^2)$
      \State {\it // Layer 1 Convolution}
      \For{$i \in pos^1$}
      \State {\it // Directly use input embeddings}
      \State {\it // \ref{linetype:lincomb}}
            \State {\it // Sum is over all $j \in ker^1$ such that}
            \State {\it // there is $i' \in pos^{0}$ with $i' = j + i$}
      \State $h^1_i := \sum_{j} W^1_j x_{i+j}: \Gtype(n^1)$
      \State $x^1_i := \phi(h^1_i): \Htype(n^1)$
      \EndFor
      \State {\it // Layer 2 Convolution}
      \For{$j \in ker^2, i \in pos^1\ s.t.\ i+j \in pos^2$}
      \State {\it // Convolution Weights Multiplication}
      \State {\it // \ref{linetype:MatMul}}
      \State $h^2_{i;j} := W^1_j x^1_{i+j}: \Gtype(n^2)$
      \EndFor
      \For{$i \in pos^2$}
            \State {\it // Sum is over all $j \in ker^2$ such that}
            \State {\it // there is $i' \in pos^1$ with $i' = j + i$}
      \State $h^2_i := \sum_{j} h^2_{i;j}: \Gtype(n^2)$
      \EndFor
      \State {\it // Nonlinearity \& Global Average Pooling}
      \State $\bar x^2 := \f 1 {|pos^2|} \sum_{i \in pos^2} \phi(h^2_i): \Htype(n^2)$
      \Ensure $v^\trsp \bar x^2 / \sqrt {n^2}$
    \end{algorithmic}
    \end{multicols}
\end{algorithm}

\paragraph{Skip Connection}

If $x: \Htype(n)$ is previous layer activation, and we have weights $W: \Atype(n, n)$ and bias $b: \Gtype(n)$, then we can express skip connection as
\begin{align*}
    \tilde h &:= W x    &   \text{\ref{linetype:MatMul}}\\
    h &:= \tilde h + b  &   \text{\ref{linetype:lincomb}}\\
    \bar x &:= x + \phi(h)&   \text{\ref{linetype:nonlin}}
\end{align*}

\paragraph{(Graph) Convolution}
Consider a convolution network with $L$ layers, with layer $l$ having $n^l$ channels, with kernel positions $ker^l$ (for example, for a $3\times 5$ kernel, $ker^l = [3] \times [5]$), and with feature map pixel positions $pos^l$ (for example, for a $128\times 256$ feature map, $pos^l = [128] \times [256]$).
Then we can describe the weights of a stride-1 convolution as $\{W^l_{i\alpha\beta}\}_{l \in [L], i \in ker^l, \alpha \in [n^l], \beta \in [n^{l-1}]}$, so that for each $i \in ker^l$, $W_i^l \in \R^{n^l \times n^{l-1}}$ is a dense matrix.
Further, suppose $x$ is an image input to the network with pixel coordinates $pos^0$ and $n^0$ channels, $\{x_{i \alpha}\}_{i \in pos^0, \alpha \in [n^0]}$, so that $x_i$ is a vector of dimension $n^0$.
Then the application of the convolution $W^1$ on $x$ is given by
\begin{align*}
    h^1_i :=
    (W^1*x)_i
        =
            \sum_{j} W^1_j x_{i+j}
            \in \R^{n^1}
            ,\quad
    h^1_{i\alpha} =
    (W^1*x)_{i\alpha}
        =
            \sum_{j\beta} W^1_{j \alpha \beta} x_{i+j, \beta}
            \in \R,
\end{align*}
where the sums are over $j\in ker^1, i+j \in pos^0$, and $\beta \in [n^0]$.
For higher layers we have similar formulas, with the only difference that we treat $W^1_j x_{i+j}$ as input G-vars but treat $W^{l+1}_j x^l_{i+j}, l \ge 2,$ as a \ref{linetype:MatMul} operation.
Thus, our framework can express CNN computations by expressing the individual matrix multiplications $W_j x_{i+j}$ and reusing the matrices $W_j$.
\cref{tp:conv} shows a simple example program, and \cref{tp:conv2} shows a full-fledged example over multiple inputs.
Again, each ``for-loop'' is just shorthand for the corresponding unrolled program.
Higher stride and general graph convolution can be expressed likewise.

\paragraph{Pooling}
We continue the notational scheme of the exposition on convolutions above.
Given the feature maps of layer $l$, $\{x_{i\alpha}\}_{i \in pos^l, \alpha \in [n^l]}$, global average pooling produces a single vector $\bar x = \{\bar x_\alpha\}_{\alpha \in [n^l]}$ given by 
\begin{align*}
    \bar x := \f 1 {|pos^l|} \sum_{i \in pos^l} x_i, \qquad \text{using \ref{linetype:lincomb}.}
\end{align*}
See \cref{tp:conv} for an example in combination with convolutional layers.
We can similarly express max pooling.
Suppose $pos^l = [2k] \times [2k]$.
Then max pooling with, for example, $2\times 2$ kernel and stride of 2 would produce $\{\hat x_{j \alpha}\}_{j\in [k] \times [k], \alpha \in [n^l]}$, with
\begin{align*}
    \hat x_{j\alpha} := \max(\{x_{2j+i,\alpha}\}_{i \in \{0,1\}\times\{0,1\}, 2j+i \in pos^l}), \qquad \text{using \ref{linetype:nonlin}.}
\end{align*}

\paragraph{GRU and LSTM}

We present a program expressing GRU computation in \cref{tp:GRU}; the program for LSTM is similar and is omitted.
The overall pattern is similar to the program for simple RNN (\cref{tp:RNN}), but with a crucial subtlety regarding the gating mechanism.
In \cref{tp:GRU}, $\tilde z^s, \tilde r^s, \tilde h^s$ are G-vars, but the gates $\sigma(\tilde z^s), \sigma(\tilde r^s)$ (where $\sigma$ is sigmoid) and the candidate update $\phi(\tilde h^s)$ are not G-vars.
As we can only apply \ref{linetype:nonlin} to G-vars, this requires us to unroll the definition of $h^s$ to be a function of only G-vars.
However, \cref{sec:VersionTensorPrograms} presents expanded, but semantically equivalent versions of \netsor which allow a more succinct representation of the same computation; see \cref{tp:GRUNetsoro}.
Finally, \cref{tp:GRU2} presents a full-fledged program with multiple input sequences.

\newcommand{\zz}{\mathsf{z}}
\newcommand{\rr}{\mathsf{r}}
\newcommand{\hh}{\mathsf{h}}
\begin{algorithm}[tb]
    \caption{GRU, with Gating Function $\sigma$ and Activation Function $\phi$}
    \label{tp:GRU}
    \begin{algorithmic}
      \State {\it // Embeddings of input sequence}
      \Require $U_\zz x^1, \ldots, U_\zz x^T: \Gtype(n)$
      \Require $U_\rr x^1, \ldots, U_\rr x^T: \Gtype(n)$
      \Require $U_\hh x^1, \ldots, U_\hh x^T: \Gtype(n)$
      \State {\it // Parameters}
      \Require $W_\zz, W_\rr, W_\hh: \Atype(n, n)$
      \Require $b_\zz, b_\rr, b_\hh: \Gtype(n)$
      \State {\it // Initial GRU state}
      \Require $h^0: \Gtype(n)$
      \State {\it // Readout layer}
      \Require $v: \Gtype(n)$
      \State {\it // Time step 1}
      \State $h_\zz^1 := W_\zz h^0: \Gtype(n)$
      \State $\tilde z^1 := h_\zz^1 + U_\zz x^1 + b_\zz: \Gtype(n)$
      \State $h_\rr^{1} := W_\rr h^0: \Gtype(n)$
      \State $\tilde r^1 := h_\rr^{1} + U_\rr x^1 + b_\rr: \Gtype(n)$
      \State {\it // $\sigma$ is gating function, typically sigmoid; applying \ref{linetype:nonlin}}
      \State $\hat h^0 := h^0 \odot \sigma(\tilde r^1): \Htype(n)$
      \State $h_\hh^{1} := W_\hh \hat h^0: \Gtype(n)$
      \State $\tilde h^1 := h_\hh^{1} + U_\hh x^1 + b_\hh: \Gtype(n)$
      \State {\it // Apply \ref{linetype:nonlin}}
      \State {\it // $\phi$ is activation function, typically $\tanh$}
      \State $h^1 := (1 - \sigma(\tilde z^1)) \odot h^0 + \sigma(\tilde z^1) \odot \phi(\tilde h^1): \Htype(n)$
      \State {\it // Time step 2}
      \State $h_\zz^{2} := W_\zz h^1: \Gtype(n)$
      \State $\tilde z^2 := h_\zz^{2} + U_\zz x^2 + b_\zz: \Gtype(n)$
      \State $h_\rr^{2} := W_\rr h^1: \Gtype(n)$
      \State $\tilde r^2 := h_\rr^{2} + U_\rr x^2 + b_\rr: \Gtype(n)$
      \State {\it // Morally, $\hat h^1 = \sigma(\tilde r^1) \odot h^1$, but we need to unroll $h^1$ to apply \ref{linetype:nonlin}}
      \State {\it // This can be expressed with more brevity using \netsoro; see \cref{remk:netsoro}}
      \State $\hat h^1 := \sigma(\tilde r^1) \odot ((1 - \sigma(\tilde z^1)) \odot h^0 + \sigma(\tilde z^1) \odot \phi(\tilde h^1)): \Htype(n)$
      \State $h_\hh^{2} := W_\hh \hat h^1: \Gtype(n)$
      \State $\tilde h^2 := h_\hh^{2} + U_\hh x^2 + b_\hh: \Gtype(n)$
      \State {\it // Unrolling $h^2$ to a coordinatewise function of G-vars}
      \State  $h^2 := (1 - \sigma(\tilde z^2)) \odot (1 - \sigma(\tilde z^1)) \odot h^0 + (1 - \sigma(\tilde z^2)) \odot \sigma(\tilde z^1) \odot \phi(\tilde h^1) + \sigma(\tilde z^2) \odot \phi(\tilde h^2): \Htype(n)$
      \State {\it // Time step 3}
      \State $\vdots$
      \State {\it // Time step $T$}
      \State {\it // Define $\tilde z^T, \tilde r^T, \tilde h^T$ just like above}
      \State $\vdots$
      \State
      {\it // Unrolling $h^T$ to a coordinatewise function of G-vars}
      \State  $h^T := h^0 \odot \bigodot_{i=1}^T (1 - \sigma(\tilde z^i))
      + \sum_{j=1}^T \phi(\tilde h^j) \odot \sigma(\tilde z^j) \odot \bigodot_{l=j+1}^T (1- \sigma(\tilde z^l)): \Htype(n)$
      
      \Ensure $(v^\trsp h^1/\sqrt{n}, \ldots, v^\trsp h^T/\sqrt{n})$
    \end{algorithmic}
\end{algorithm}

\paragraph{Layernorm}
Layernorm requires the extended rule \ref{linetype:nonlin+} to express.
Recall for $x \in \R^n$ (thought of as the vector of activations with one entry per neuron in a layer; contrast with batchnorm), $\Layernorm(x) = \f{x - \nu(x)}{\sigma(x)}$ where $\nu(x) = \f 1 n \sum_\alpha x_\alpha$ and $\sigma(x) = \sqrt{\f 1 n \sum_{\alpha=1}^n(x_\alpha - \nu(x))^2}$.
As we will see, $\nu(x)$ and $\sigma(x)$ will both converge a.s.\ to a deterministic limit.
Thus $\Layernorm(x)$ is just a linear combination of $x$ with the constant-1s vector (considered as a input G-var), with (roughly deterministic) coefficients $\mu(x)$ and $\sigma(x)$.
This is expressible using the \ref{linetype:nonlin+} rule:
\begin{align*}
\Layernorm(x) := \psi(x; \nu(x), \sigma(x)),\quad
\text{where}\quad
\psi(x; \theta_1, \theta_2) \defeq \f{x - \theta_1}{\theta_2}.
\end{align*}
Similarly, if layernorm is followed up by a nonlinearity $\phi$, then we can express
\begin{align*}
\phi(\Layernorm(x)) := \psi(x; \nu(x), \sigma(x)),\quad
\text{where}\quad
\psi(x; \theta_1, \theta_2) \defeq \phi\lp\f{x - \theta_1}{\theta_2}\rp.
\end{align*}
If layernorm is preceded by a nonlinearity, then likewise we can write
\begin{align*}
\Layernorm(\phi(x)) := \psi(x; \nu(\phi(x)), \sigma(\phi(x))),\quad
\text{where}\quad
\psi(x; \theta_1, \theta_2) \defeq \f{\phi(x) - \theta_1}{\theta_2}.
\end{align*}

\paragraph{Scaled Attention}
Scaled attention requires the extended rule \ref{linetype:nonlin+} to express.
Consider the following version of scaled attention:
Given a query vector $q \in \R^{n}$, keys $k^i \in \R^{n}$ for each $i \in [r]$, and corresponding values $v^i \in \R^{m}, i \in [r]$, we can define the following scaled attention
\[\Attention(q, \{k^i\}_i, \{v^i\}_i)
\defeq \sum_{i=1}^r a_i v^i, \qquad
a_i \defeq \SoftMax(q^\trsp k^1/n, \ldots, q^\trsp k^r/n)_i.\]

If $q, k^i$ are given as H-vars in a \netsor program, then \cref{thm:netsorMasterTheorem} will show that $q^\trsp k^i / n$ converges almost surely to a deterministic limit, so that each $a_i$ converges likewise.
If each $v^i = \psi(g^i)$ for some G-var $g^i$ and fixed nonlinearity $\psi$, then attention can be expressed as follows using \ref{linetype:nonlin+}:
\begin{align*}
\Attention(q, \{k^i\}_i, \{v^i\}_i) = \phi(g^1, \ldots, g^r; a_1, \ldots, a_r)
\end{align*}
where
\begin{align*}
\phi(x^1, \ldots, x^r; \theta_1, \ldots, \theta_r) \defeq \theta_1 \psi(x^1) + \cdots + \theta_r \psi(x^r).
\end{align*}

More complicated variants, such as allowing $\psi$ to take multiple G-vars as inputs, is likewise expressible.
When used as part of a decoder, a mask needs to be placed on the pre-softmax values so that no attention is paid to tokens \emph{in the future}.
This is aptly called \emph{masked attention} and is given by the following formula:
for $j \in [r]$,
\begin{align*}
&\MaskedAttention_j(q, \{k^i\}_{i=1}^r, \{v^i\}_{i=1}^r)
= \sum_{i=1}^r a_i^j v^i,\\
&\quad \text{where}\quad
a_i^j = \SoftMax(q^\trsp k^1/n, \ldots, q^\trsp k^j/n, -\infty, \ldots, -\infty)_i
.
\end{align*}
It can obviously be expressed in \netsor in the same fashion as (non-masked) attention.

Note that \citet{vaswani_attention_2017} scales the pre-softmax values by $1/\sqrt {n}$ instead of $1/n$:
\[a_i = \SoftMax(q^\trsp k^1/\sqrt{n}, \ldots, q^\trsp k^r/\sqrt{n})_i.\]
This is useful if $q_\alpha k_\alpha^i$ has mean zero (averaged over $\alpha$) for each $i$, so that $q^\trsp k^i/\sqrt{n}$ becomes roughly Gaussian, whereas $q^\trsp k^i/n$ converges to 0.
However, when the zero-mean condition doesn't hold, $q^\trsp k^i/\sqrt{n}$ would only blow up to infinity.

\section{Example GP Kernel Computation with \texorpdfstring{\netsor}{Netsor}}
\label{sec:GPKernelComputationExamples}

In this section, we show how to compute the GP kernel of different architecture, following the recursive construction of \cref{thm:netsorMasterTheorem}.

First, we review the \emph{V-transform} of a nonlinearity.
\begin{defn}\label{defn:Vtransform}
Given a multivariate nonlinearity $\Phi: \R^B \to \R^B$, its \emph{V-transform} $\Vt\Phi$ is a function taking $B \times B$ positive semidefinite matrices to $B \times B$ positive semidefinite matrices, and is given by the following formula
\begin{equation*}
\Vt\Phi(K) \defeq \EV_{z \sim \Gaus(0, K)} \Phi(z) \Phi(z)^\trsp.
\end{equation*}
When $\phi: \R \to \R$, we take $\Vt\phi$ to be V-transform of the $\R^B \to \R^B$ function that applies $\phi$ to each coordinate.
\end{defn}

We collect below some of the common V-transforms.
Here we describe the V-transforms using the function notation of kernels, but we shall freely switch between the function notation and the matrix notation in what follows.

\begin{fact}[\cite{cho_kernel_2009}]
\label{fact:Vrelu}
For any kernel $K$,
\begin{align*}
\Vt\relu(K)(x, x')
    &=
        \f 1 {2\pi} (\sqrt{1-c^2} + (\pi - \arccos c) c) \sqrt{K(x, x) K(x', x')}
        \\
\Vt{\relu'}(K)(x, x')
    &=
        \f 1 {2\pi} (\pi - \arccos c)
        \\
\end{align*}
where $c = K(x, x')/\sqrt{K(x, x)K(x', x')}$.
\end{fact}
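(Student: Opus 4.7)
My plan is to reduce the claim to a two-dimensional Gaussian expectation and evaluate it via polar coordinates. First, because both $\Vt\relu(K)(x,x')$ and $\Vt{\relu'}(K)(x,x')$ are defined as expectations of scalar functions of only the coordinates $z_x, z_{x'}$ of $z\sim\Gaus(0,K)$, they depend only on the $2\times 2$ principal submatrix of $K$ indexed by $\{x,x'\}$. Setting $a=\sqrt{K(x,x)}$, $b=\sqrt{K(x',x')}$, I write $(z_x,z_{x'})\disteq(aX,\,b(cX+\sqrt{1-c^2}\,Y))$ with $X,Y$ i.i.d.\ $\Gaus(0,1)$. Positive homogeneity $\relu(sz)=s\relu(z)$ for $s\ge 0$ then factors the constant $ab$ out of the ReLU expectation, while $0$-homogeneity of $\relu'=\mathbbm{1}\{\cdot\ge 0\}$ leaves no prefactor.

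Next I parametrize $c=\cos\theta$ with $\theta=\arccos c\in[0,\pi]$ and switch to polar coordinates $X=r\cos\phi$, $Y=r\sin\phi$. The condition $X\ge 0$ becomes $\phi\in[-\pi/2,\pi/2]$, and the condition $cX+\sqrt{1-c^2}\,Y\ge 0$ becomes $\cos(\phi-\theta)\ge 0$, i.e.\ $\phi\in[\theta-\pi/2,\theta+\pi/2]$; the intersection, for $\theta\in[0,\pi]$, is the wedge $\phi\in[\theta-\pi/2,\pi/2]$, of angular width $\pi-\theta$. Under this change of variables, $\relu(X)\relu(cX+\sqrt{1-c^2}\,Y)$ becomes $r^2\cos\phi\cos(\phi-\theta)$ against the density/Jacobian $\frac{r}{2\pi}e^{-r^2/2}$, while the $\relu'$ integrand is the indicator of the wedge. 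The double integral then factors into a radial and an angular piece.

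The radial factors are $\int_0^\infty r^3 e^{-r^2/2}\,dr=2$ for $\Vt\relu$ and $\int_0^\infty r\,e^{-r^2/2}\,dr=1$ for $\Vt{\relu'}$. For the angular part, the $\relu'$ case is just the wedge width $\pi-\theta$, immediately giving $\Vt{\relu'}(K)(x,x')=\frac{1}{2\pi}(\pi-\arccos c)$. For ReLU I use the product-to-sum identity $\cos\phi\cos(\phi-\theta)=\tfrac12[\cos(2\phi-\theta)+\cos\theta]$; the $\cos(2\phi-\theta)$ piece integrates to $\sin\theta$, and the constant piece to $(\pi-\theta)\cos\theta$, for a total of $\tfrac12[\sin\theta+(\pi-\theta)\cos\theta]$. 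Combining with the radial integral and the prefactor $ab$ yields $\Vt\relu(K)(x,x')=\frac{ab}{2\pi}\bigl[\sqrt{1-c^2}+(\pi-\arccos c)\,c\bigr]$, as claimed. The only step with any content is the trigonometric bookkeeping for the ReLU angular integral; everything else is immediate from homogeneity and the polar change of variables, matching the derivation in \cite{cho_kernel_2009} to which the statement is already attributed.
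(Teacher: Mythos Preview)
Your derivation is correct. The paper does not give its own proof of this fact; it simply cites \cite{cho_kernel_2009} and records the formulas. Your polar-coordinate computation (factor out $ab$ by homogeneity, intersect the two half-planes to get a wedge of angle $\pi-\theta$, separate radial and angular integrals, and use the product-to-sum identity) is exactly the standard derivation from that reference, so there is nothing to compare.
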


\begin{fact}[\cite{neal_bayesian_1995}]
\label{fact:Verf}
For any kernel $K$,
\begin{align*}
\Vt\erf(K)(x, x')
    &=
        \f 2 {\pi} \arcsin
        \f{K(x, x')}{\sqrt{(K(x, x)+ 0.5)(K(x', x')+0.5)}}
        \\
\Vt{\erf'}(K)(x, x')
    &=
        \f 4 { 
        \pi
        \sqrt{(1 + 2 K(x, x))(1 + 2 K(x', x')) - 4 K(x, x')^2}}
        .
\end{align*}
\end{fact}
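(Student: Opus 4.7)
The plan is to compute the two V-transforms directly from the definition $\Vt\phi(K)(x,x') = \EV_{(u,v) \sim \Gaus(0, \Sigma)} \phi(u) \phi(v)$, where $\Sigma$ is the $2\times 2$ covariance with $\Sigma_{11} = K(x,x)$, $\Sigma_{22} = K(x',x')$, $\Sigma_{12} = K(x,x')$. I would treat the two identities separately, since the $\erf'$ case reduces to a pure Gaussian integral while the $\erf$ case needs an auxiliary probabilistic representation.

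For $\Vt{\erf'}$, note $\erf'(u) = \tfrac{2}{\sqrt\pi} e^{-u^2}$, so the target is $\tfrac{4}{\pi}\EV\,e^{-u^2-v^2}$. Writing the density of $(u,v)$ out explicitly, this is $$\frac{4}{\pi}\cdot\frac{1}{2\pi\sqrt{\det\Sigma}}\int_{\R^2} e^{-\frac12 \vec x^{\trsp}(\Sigma^{-1}+2I)\vec x}\,d\vec x = \frac{4/\pi}{\sqrt{\det(I+2\Sigma)}},$$ after completing the square / evaluating the Gaussian normalization. Expanding $\det(I+2\Sigma) = (1+2\Sigma_{11})(1+2\Sigma_{22}) - 4\Sigma_{12}^2$ matches the claimed formula.

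For $\Vt{\erf}$, I would use the identity $\erf(u) = 2\Phi(u\sqrt 2) - 1$ (with $\Phi$ the standard normal CDF), expanding $$\erf(u)\erf(v) = 4\Phi(u\sqrt 2)\Phi(v\sqrt 2) - 2\Phi(u\sqrt 2) - 2\Phi(v\sqrt 2) + 1.$$ The probabilistic trick is to introduce independent $X_1, X_2 \sim \Gaus(0,1)$ independent of $(u,v)$ and write $\Phi(u\sqrt 2) = \Pr(X_1 < u\sqrt 2 \mid u)$. Then $\EV \Phi(u\sqrt 2) = \Pr(X_1 - u\sqrt 2 < 0) = \tfrac12$ because $X_1 - u\sqrt 2$ is centered Gaussian, and $$\EV[\Phi(u\sqrt 2)\Phi(v\sqrt 2)] = \Pr(X_1 - u\sqrt 2 < 0,\ X_2 - v\sqrt 2 < 0).$$ The pair $(X_1 - u\sqrt 2,\ X_2 - v\sqrt 2)$ is centered bivariate Gaussian with correlation $\rho' = 2\Sigma_{12}/\sqrt{(1+2\Sigma_{11})(1+2\Sigma_{22})}$. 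Invoking Sheppard's orthant formula $\Pr(Y_1<0, Y_2<0) = \tfrac14 + \tfrac{1}{2\pi}\arcsin\rho'$ for centered bivariate Gaussians, all the constants telescope to $\tfrac{2}{\pi}\arcsin\rho'$, and the identity $\tfrac{2\Sigma_{12}}{\sqrt{(1+2\Sigma_{11})(1+2\Sigma_{22})}} = \tfrac{\Sigma_{12}}{\sqrt{(\Sigma_{11}+1/2)(\Sigma_{22}+1/2)}}$ puts the answer in the stated form.

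There is no real obstacle in either computation; the $\erf'$ identity is purely a determinant evaluation, and the $\erf$ identity rests entirely on Sheppard's formula, which itself is a one-line calculation once one rotates to decorrelate $(Y_1, Y_2)$ and reads off the angle subtended by the negative quadrant. The mild subtlety worth flagging explicitly is that the representation $\erf(u) = 2\Phi(u\sqrt 2)-1$ introduces a factor of $\sqrt 2$ inside $\Phi$, so one must be careful to carry this factor through the variance computation of $X_1 - u\sqrt 2$ (yielding $1 + 2\Sigma_{11}$ rather than $1+\Sigma_{11}$); this is precisely the source of the $1/2$ shifts appearing in the statement of the fact.
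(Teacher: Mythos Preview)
Your proof is correct. The paper itself does not prove this statement directly---it is stated as a cited fact attributed to Neal. The closest the paper comes is \cref{lemma:erfTrick}, which establishes the more general identity $\EV_{x\sim\Gaus(\mu,\Sigma)}\prod_{i=1}^T \erf(x_i) = \EV_{x\sim\Gaus(\mu,\Sigma+\tfrac12 I)}\prod_{i=1}^T \sgn(x_i)$ via Fourier analysis on tempered distributions (writing $\erf$ as the Fourier transform of $\tfrac{-i\sqrt2}{\sqrt\pi}\,\mathrm{p.v.}\,\tfrac{e^{-t^2/4}}{t}$ and absorbing the Gaussian factor $e^{-t^2/4}$ into the covariance), and then remarks that $T=2$ recovers the $\erf$ formula here. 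Your route---the probabilistic representation $\erf(u)=2\Phi(u\sqrt2)-1$, auxiliary independent Gaussians, and Sheppard's orthant formula---is more elementary and more self-contained for the $2\times 2$ case; the paper's Fourier argument buys the $T$-variate generalization (needed for the GRU kernel computations) at the cost of invoking distribution theory. The $\erf'$ identity is not derived anywhere in the paper; your Gaussian-integral computation via $\det(I+2\Sigma)$ is the standard one.
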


\begin{fact}
\label{fact:Vexp}
Let $\phi(x) = \exp(x/\sigma)$ for some $\sigma > 0$.
For any kernel $K$,
\begin{align*}
\Vt\phi(K)(x, x')
    &=
        \exp\lp \f{K(x, x) + 2K(x, x') + K(x', x')}{2\sigma^2}\rp.
\end{align*}
\end{fact}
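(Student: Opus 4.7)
The plan is to prove this fact via a one-line calculation using the moment generating function (MGF) of a centered Gaussian. By \cref{defn:Vtransform}, the $(x, x')$ entry of $\Vt\phi(K)$ equals $\EV_{(z, z') \sim \Gaus(0, \Sigma)} \phi(z)\phi(z')$, where $\Sigma$ is the $2 \times 2$ principal submatrix of $K$ indexed by $\{x, x'\}$, with $\Sigma_{11} = K(x,x)$, $\Sigma_{22} = K(x',x')$, and $\Sigma_{12} = \Sigma_{21} = K(x,x')$.

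First I would exploit the multiplicativity of the exponential to combine the integrand: $\phi(z)\phi(z') = \exp(z/\sigma)\exp(z'/\sigma) = \exp((z+z')/\sigma)$. Next I would observe that $W \defeq z + z'$ is a linear combination of jointly Gaussian mean-zero variables, hence itself a centered Gaussian with variance
\begin{equation*}
\Sigma_{11} + 2\Sigma_{12} + \Sigma_{22} = K(x,x) + 2K(x,x') + K(x',x').
\end{equation*}

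Finally, the required expectation $\EV \exp(W/\sigma)$ is just the MGF of $W$ evaluated at $1/\sigma$. For $W \sim \Gaus(0, v^2)$ this standard identity gives $\exp(v^2/(2\sigma^2))$. Substituting $v^2 = K(x,x) + 2K(x,x') + K(x',x')$ yields the claimed formula. Equivalently, one can invoke the bivariate MGF $\EV \exp(t^\trsp (z,z')^\trsp) = \exp(\tfrac 1 2 t^\trsp \Sigma t)$ at $t = (1/\sigma, 1/\sigma)^\trsp$ and read off the answer in one line.

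There is essentially no obstacle: the only mild things to verify are that the expectation is finite (the Gaussian MGF exists for all real arguments, so this is automatic) and that the $2 \times 2$ submatrix is positive semidefinite (automatic from $K$ being a kernel). The calculation requires no appeal to \cref{thm:netsorMasterTheorem}; it is a pure Gaussian integral, which is why the formula comes out so clean compared to the ReLU and erf V-transforms.
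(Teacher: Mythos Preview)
Your argument is correct and is the standard computation: reduce to the MGF of the scalar Gaussian $z+z'$, or equivalently evaluate the bivariate MGF at $(1/\sigma,1/\sigma)$. The paper does not actually supply a proof of \cref{fact:Vexp}; it is simply recorded as a fact alongside the ReLU and erf V-transforms, so there is nothing to compare against beyond noting that your derivation is exactly what any reader would fill in.
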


In \cref{sec:batchnormKernel}, we will also discuss the V-transform of batchnorm (which has been derived in \cite{yang_mean_2019}).

\subsection{MLP}

The kernel computation of a multilayer perceptron is by now well-known \cite{poole_exponential_2016,schoenholz_deep_2017,lee_deep_2018}.
In this section, we work out an example of how to recover the usual kernel computation via tensor programs.

\subsubsection{MLP with Single Input \texorpdfstring{(\cref{tp:MLP})}{}}
\label{sec:MLPsingleinput}
The aim here is to illustrate step-by-step applications of \cref{eqn:extendedMuSigma} for \cref{tp:MLP}, but note that in practice, it is often more convenient to find some bulk recursive or compositional structure of $\tSigma$, and to leverage that structure for computing $\tSigma$ (see \cref{sec:ExactKernelRNN} for an example).

For simplicity, assume the nonlinearities $\phi$ are ReLUs and the widths $n^1 = n^2 = n$ to satisfy \cref{assm:equalDimNoNonlin+}.
By \cref{cor:GPConv}, we know that the output of the MLP, $v^Tx^2/\sqrt{n}$, is distributed as a Gaussian with mean $0$ and variance of $\sigma^2  \EV_z\phi^2(z)$, where $z \sim \Gaus(\tmu(h^2), \tSigma(h^2, h^2))$, and $h^2$ is the layer 2 preactivation (also a G-var) in \cref{tp:MLP}. 
Therefore, all we need is to compute $\tmu(h^2)$ and  $\tSigma(h^2, h^2)$ using \eqref{eqn:extendedMuSigma}, which requires the calculation of $\tmu$ and $\tSigma$ for possibly all the other G-vars in \cref{tp:MLP} due to the recursive nature of \eqref{eqn:extendedMuSigma}.
In this example, we shall compute all entries of $\tmu$ and $\tSigma$ explicitly as a demonstration.
We do so for each G-var in the order of appearance in the \netsor program.
Explicitly, the ordering is $\Gvars= \left( W^1x, b^1, b^2, v, h^1, \tilde h^2, h^2 \right)$.
The input G-vars are $W^1 x, b^1, b^2, $ and $v$, and the sole input A-var is $W^2$.

\paragraph{Setup}
In the fashion of typical Glorot initializations, we shall sample the parameters $W^1, W^2, b^1, b^2$ of the network as
\begin{equation*}
    W^1_{\alpha \beta} \sim \Gaus(0, 1 / \dim x),
    W^2_{\alpha \beta} \sim \Gaus(0, 1 / n),
    v_\alpha \sim \Gaus(0, 1),
    b^1_\alpha \sim \Gaus(0, 1),
    b^2_\alpha \sim \Gaus(0, 1)
    .
\end{equation*}
This corresponds, in the context of \cref{assm:equalDimNoNonlin+}, to setting $\sigma_{W^2} = 1$, $\muin = 0$ (in particular, $\tmu(W^1x) = \muin(W^1x) = 0$ due to the sampling of $W^1$), and $\Sigmain$ as follows:

\begin{align*}
&\Sigmain(W^1x, W^1x) = 1 \\
&\Sigmain(b^1, b^1) = 1 \\
&\Sigmain(b^2, b^2) = 1 \\
&\Sigmain(b^2, b^1) = 0 \\
&\Sigmain(v, v) = 1 \\
&\Sigmain(b^i, W^1x) = 0~~ \text{for}~~ i \in \{1,2\}\\
&\Sigmain(b^i, v) = 0 ~~ \text{for}~~ i \in \{1,2\}\\
&\Sigmain(v, W^1x) = 0
\end{align*}

\paragraph{Calculating $\tmu$ and $\tSigma$}
Now we will show a detailed calculation of $\tmu$ and $\tSigma$ for all of the G-vars $\Gvars$ appearing here.
By the input G-var case of \cref{eqn:extendedMuSigma},
\begin{align*}
&\tSigma(W^1x, W^1x) = \Sigmain(W^1x, W^1x) = 1 \\
&\tSigma(b^1, b^1) =  \Sigmain(b^1, b^1) = 1 \\
&\tSigma(b^2, b^2) =  \Sigmain(b^2, b^2) = 1 \\
&\tSigma(b^1, b^2) =  \Sigmain(b^2, b^1) = 0 \\
&\tSigma(v, v) =  \Sigmain(v, v) = 1 \\
&\tSigma(W^1x, b^i) = \tSigma(b^i, W^1x) = \Sigmain(b^i, W^1x) = 0~~ \text{for}~~ i \in \{1,2\}\\
&\tSigma(v, b^i) = \tSigma(b^i, v) = \Sigmain(b^i, v) = 0 ~~ \text{for}~~ i \in \{1,2\}\\
&\tSigma(W^1x, v) = \tSigma(v, W^1x) = \Sigmain(v, W^1x) = 0
\end{align*}

Next, we extend $\tmu$ and $\tSigma$ to $h^1$, introduced via \ref{linetype:lincomb} by $h^1 := W^1x + b^1$.
Note that $h^1$ is a G-var of the form $g = \sum_i a_i y^i$ where $a_1=a_2= 1$, $y^1 =W^1x $, and $y^2 =b^1 $.
Therefore, by the \ref{linetype:lincomb} case of \cref{eqn:extendedMuSigma},
\begin{align*}
&\tmu(h^1) = \tmu(W^1x) + \tmu(b^1) = 0\\
&\tSigma(h^1,W^1x) = \tSigma(W^1x, W^1x) + \tSigma(b^1, W^1x) = 1 + 0 = 1\\
&\tSigma(h^1,b^1) = \tSigma(W^1x, b^1) + \tSigma(b^1, b^1) = 0 + 1 = 1\\
&\tSigma(h^1,b^2) = \tSigma(W^1x, b^2) + \tSigma(b^1, b^2) = 0 + 0 = 0\\
&\tSigma(h^1,v) = \tSigma(W^1x, v) + \tSigma(b^1, v) = 0 + 0 = 0\\
&\tSigma(h^1,h^1) = \tSigma(h^1, W^1x) + \tSigma(h^1,b^1) = 1 + 1 = 2
.
\end{align*}
So $h^1$ is correlated with $W^1 x$ and $b^1$ in obvious ways, and is independent from $b^2$ and $v$.

Next, we extend $\tmu$ and $\tSigma$ to, introduced via \ref{linetype:MatMul} by $\tilde h^2 := W^2 x^1$.
Note that $\tilde h^2$ is a G-var of the form $g = Wh$ where $W = W^2$ is an A-var, and $h=x^1$ is an H-var introduced by $x^1 := \relu(h^1)$.
Therefore, by the ``otherwise'' case of \cref{eqn:extendedMuSigma},
\begin{align*}
&\tmu(\tilde h^2) =  0\\
&\tSigma(\tilde h^2,W^1x) = 0 \\
&\tSigma(\tilde h^2,b^1) = 0\\
&\tSigma(\tilde h^2,b^2) = 0\\
&\tSigma(\tilde h^2,v) = 0\\
&\tSigma(\tilde h^2,h^1) = 0
\end{align*}
and by the \ref{linetype:MatMul} case of \cref{eqn:extendedMuSigma} (setting $\phi$ and $\bar\phi$ there to both be ReLU),
\begin{align*}
&\tSigma(\tilde h^2,\tilde h^2) = \sigma^2_{W^2}\EV_z \phi(z)\bar\phi(z) = \EV_z \relu(z)^2 = 1\\\nonumber
&~~~~~\text{ where }
z\sim\Gaus(\tmu(h^1),\tSigma(h^1, h^1))  = \Gaus(0, 2)
.
\end{align*}
Thus, $\tilde h^2$ can be thought of as ``independent'' from all other G-vars.

Finally, we extend $\tmu$ and $\tSigma$ to $h^2$, introduced via \ref{linetype:lincomb} by $h^2 := \tilde h^2 + b^2$.
Note that $h^2$ is a G-var of the form $g = \sum_i a_i y^i$ where $a_1=a_2= 1$, $y^1 =\tilde h^2 $, and $y^2 =b^2$.
Then by the \ref{linetype:lincomb} case of \cref{eqn:extendedMuSigma},
\begin{align*}
&\tmu(h^2) = \tmu(\tilde h^2) + \tmu(b^2) = 0\\
&\tSigma(h^2, W^1x) = \tSigma(\tilde h^2, W^1x) + \tSigma(b^2, W^1x) = 0 + 0 = 0\\
&\tSigma(h^2, b^1) = \tSigma(\tilde h^2, b^1) + \tSigma(b^2, b^1) = 0 + 0 = 0\\
&\tSigma(h^2, b^2) = \tSigma(\tilde h^2, b^2) + \tSigma(b^2, b^2) = 0 + 1 = 1\\
&\tSigma(h^2,v) = \tSigma(\tilde h^2, v) + \tSigma(b^2, v) = 0 + 0 = 0\\
&\tSigma(h^2,h^1) = \tSigma(\tilde h^2, h^1) +  \tSigma(b^2, h^1) = 0\\
&\tSigma(h^2,\tilde h^2) = \tSigma(\tilde h^2, \tilde h^2) + \tSigma(b^2, \tilde h^2) = 1 + 0 = 1\\
&\tSigma(h^2,h^2) = \tSigma(\tilde h^2, h^2) + \tSigma(b^2, h^2) = \tSigma(h^2, \tilde h^2) + \tSigma(h^2, b^2) = 1 + 1 = 2
.
\end{align*}
Note that $h^2$ turns out to be ``independent'' from $h^1$, i.e.\ $\tSigma(h^2, h^1) = 0$, just as one might expect from the mean field or the NNGP literature.

\paragraph{Distribution of the Program Output}
We are now done with calculating $\tmu(g)$ and $\tSigma(g, g')$ for all $g, g' \in \Gvars$.
Recall the output of the program is $v^\trsp x^2 / \sqrt n$, where $x^2$ was introduced via \ref{linetype:nonlin} by $x^2 := \relu(h^2)$.
According to \cref{cor:GPConv}, the output variance is then given by
\begin{align*}
    &\sigma_v^2\EV_z\phi^2(z) = \EV_z \relu(z)^2 = 1,\\
    &\text{where }
    z\sim\Gaus(\tmu(h^2),\tSigma(h^2, h^{2}))
        = \Gaus(0, 2).
\end{align*}

\subsubsection{MLP with Multiple Inputs \texorpdfstring{(\cref{tp:MLP2})}{}}

\label{sec:MLPmulti}
\begin{algorithm}[tb]
    \caption{MLP Computation on a Set of Inputs}
    \label{tp:MLP2}

    \begin{multicols}{2}
    \begin{algorithmic}
      \State {\it // Embeddings of inputs}
      \Require $W^1 x^{1}, \ldots, W^1 x^{B}: \Gtype(n)$
      \State {\it // Biases across $L$ layers}
      \Require $b^1, \ldots, b^L: \Gtype(n)$
      \State {\it // Weights from layer 2 on}
      \Require $W^2, \ldots, W^L: \Atype(n, n)$
      \State {\it // Readout weights}
      \Require $v: \Gtype(n)$
      \State
      \State
      \For{$i = 1, \ldots, B$}
        \State $h^{1i} := W^1 x^{i} + b^1: \Gtype(n)$
        \State $x^{1i} := \phi(h^{1i}): \Htype(n)$
        \For{$l = 2, \ldots, L$}
          \State $\tilde h^{li} := W^l x^{l-1, i}: \Gtype(n)$
          \State $h^{li} := \tilde h^{li} + b^l: \Gtype(n)$
          \State $x^{li} := \phi(h^{li}): \Htype(n)$
        \EndFor
      \EndFor
      \Ensure $(v^\trsp x^{L1}/\sqrt n, \ldots, v^\trsp x^{LB}/\sqrt n)$
      
    \end{algorithmic}
    \end{multicols}
\end{algorithm}

Now suppose we have an $L$-layer MLP with $B$ inputs $x^1, \ldots, x^B$.
\cref{tp:MLP2} expresses its computation (again, the ``for'' loops are shorthands for the unrolled series of assignments).

Here we will avoid computing out all values of $\tSigma$ but only those that affect the infinite-width GP.
By \cref{cor:GPConv}, the output of \cref{tp:MLP2} is distributed as
\begin{align*}
\KK_{ij}
    \defeq
        \Cov\lp \f{v^\trsp x^{Li}}{\sqrt n}, \f{v^\trsp x^{Lj}}{\sqrt n} \rp
    =
        \sigma_v^2 \EV_Z \phi(Z^{h^{Li}})\phi(Z^{h^{Lj}})
        \numberthis\label{eqn:MLPGPkernel}
\end{align*}
where $\sigma_v^2$ is the coordinatewise variance of $v$, $Z \sim \Gaus(\tmu, \tSigma)$, and $Z^{h^{Li}}$ is the component of $Z$ corresponding to $h^{Li}$ (likewise for $Z^{h^{Lj}}$).
Therefore, we need to compute $\tmu$ and $\tSigma$ for the G-vars $\{h^{L1}, \ldots, h^{LB}\}$.

\paragraph{Setup}
Suppose the inputs $x^i$ have dimension $m$.
If we sample the neural network parameters in the usual Glorot fashion,
\begin{equation}
\begin{aligned}
W^1_{\alpha\beta} &\sim \Gaus(0, \sigma_w^2/m),&
W^l_{\alpha\beta} &\sim \Gaus(0, \sigma_w^2/n),\; \forall 2 \le l \le L,\\
v_\alpha &\sim \Gaus(0, \sigma_v^2),&
b^l_{\alpha} &\sim \Gaus(0, \sigma_b^2),\; \forall l \in [L],
\end{aligned}
\label{eqn:MLPparamsampling}
\end{equation}
then we have $\Sigmain$ defined by
\begin{align*}
\Sigmain(W^1 x^i, W^1 x^j) = \sigma_w^2 x^i{}^\trsp x^j/m,\quad
\Sigmain(b^l, b^l) = \sigma_b^2,\quad
\Sigmain(v, v) = \sigma_v^2
\end{align*}
for all $i,j \in [B]$ and $l \in [L]$,
and $\Sigmain(g, g') = 0$ for any other pairs of input G-vars $g, g'$.
On the other hand, $\muin(g) = 0$ for all input G-vars $g$.

\paragraph{Computing $\tmu$}
From \cref{eqn:extendedMuSigma}, it is clear that $\muin = 0$ implies $\tmu = 0$.

\paragraph{Computing $\tSigma$}
Again, our goal is to compute $\tSigma$ restricted to the G-vars $\{h^{L1}, \ldots, h^{LB}\}$.

\begin{lemma}\label{lemma:MLPrec}
For any $l = 2, \ldots, L$ and any $i,j \in [B]$,
\begin{align*}
\tSigma(h^{li}, h^{lj})
    &=
        \sigma_w^2 \EV_{z_1,z_2} \phi(z_1) \phi(z_2) + \sigma_b^2
        \numberthis\label{eqn:MLPKernelRecursion}
\end{align*}
where $(z_1, z_2) \sim \Gaus\lp 0, \tSigma|_{h^{l-1,i}, h^{l-1,j}}\rp$, and (unrolling the restriction notation)
\begin{align*}
\tSigma|_{h^{l-1,i}, h^{l-1,j}} = \begin{pmatrix}
\tSigma(h^{l-1,i}, h^{l-1,i}) & \tSigma(h^{l-1,i}, h^{l-1,j})\\
\tSigma(h^{l-1,j}, h^{l-1,i}) & \tSigma(h^{l-1,j}, h^{l-1,j})
\end{pmatrix}.
\end{align*}

\end{lemma}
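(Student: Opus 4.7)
The plan is to apply the recursive definition \cref{eqn:extendedMuSigma} directly to the decomposition of $h^{li}$ implied by \cref{tp:MLP2}, namely $h^{li} = \tilde h^{li} + b^l$, with $\tilde h^{li} = W^l x^{l-1,i}$ introduced by \ref{linetype:MatMul} and $x^{l-1,i} = \phi(h^{l-1,i})$ introduced by \ref{linetype:nonlin}. Since the claim is a purely formal identity about the recursive quantities $\tmu, \tSigma$, no probabilistic argument is needed beyond what is already packaged into \cref{eqn:extendedMuSigma}.

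First I would expand $\tSigma(h^{li}, h^{lj})$ using the \ref{linetype:lincomb} clause of \cref{eqn:extendedMuSigma} on both arguments (with coefficients $1,1$), producing the four-term decomposition
\begin{align*}
\tSigma(h^{li}, h^{lj}) = \tSigma(\tilde h^{li}, \tilde h^{lj}) + \tSigma(\tilde h^{li}, b^l) + \tSigma(b^l, \tilde h^{lj}) + \tSigma(b^l, b^l).
\end{align*}
For the two cross terms, I observe that $b^l$ is an input G-var, while $\tilde h^{li}$ and $\tilde h^{lj}$ are \ref{linetype:MatMul} outputs (not inputs, not \ref{linetype:lincomb} outputs, and not matrix-vector products with the same A-var as anything involving $b^l$), so the ``otherwise'' case of \cref{eqn:extendedMuSigma} applies and they vanish. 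The term $\tSigma(b^l, b^l)$ equals $\Sigmain(b^l, b^l) = \sigma_b^2$ by the sampling scheme in \cref{eqn:MLPparamsampling} and the input-G-var case of \cref{eqn:extendedMuSigma}.

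For the main term $\tSigma(\tilde h^{li}, \tilde h^{lj})$, both arguments are introduced by \ref{linetype:MatMul} with the \emph{same} A-var $W^l$, applied respectively to the H-vars $x^{l-1,i} = \phi(h^{l-1,i})$ and $x^{l-1,j} = \phi(h^{l-1,j})$. The \ref{linetype:MatMul} case of \cref{eqn:extendedMuSigma} (with the nonlinearities padded to depend on all prior G-vars, as remarked after \cref{eqn:extendedMuSigma}) gives
\begin{align*}
\tSigma(\tilde h^{li}, \tilde h^{lj}) = \sigma_w^2\, \EV_{Z \sim \Gaus(\tmu, \tSigma)} \phi(Z^{h^{l-1,i}}) \phi(Z^{h^{l-1,j}}).
\end{align*}
Since the integrand depends only on the two coordinates $Z^{h^{l-1,i}}, Z^{h^{l-1,j}}$, I reduce the high-dimensional Gaussian expectation to a two-dimensional one with covariance $\tSigma|_{h^{l-1,i}, h^{l-1,j}}$, and the mean is $0$ because $\muin \equiv 0$ forces $\tmu \equiv 0$ by an easy induction on the \ref{linetype:lincomb} and ``otherwise'' cases. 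Adding the $\sigma_b^2$ contribution yields \cref{eqn:MLPKernelRecursion}.

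No step is genuinely hard; the only point requiring minor care is the bookkeeping of which clause of \cref{eqn:extendedMuSigma} applies to each pair, in particular verifying that the $b^l$-cross terms fall into the ``otherwise'' case rather than being erroneously collapsed via \ref{linetype:lincomb} or \ref{linetype:MatMul}. Once that case analysis is done cleanly, the formula follows immediately.
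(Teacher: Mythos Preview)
Your proposal is correct and follows essentially the same approach as the paper: expand via the \ref{linetype:lincomb} clause into four terms, kill the cross terms by the ``otherwise'' case, identify $\tSigma(b^l,b^l)=\sigma_b^2$, and evaluate $\tSigma(\tilde h^{li},\tilde h^{lj})$ via the \ref{linetype:MatMul} clause before reducing to a two-dimensional Gaussian expectation. The only addition is your explicit remark that $\tmu\equiv 0$, which the paper establishes separately just before the lemma.
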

\begin{proof}
From \cref{tp:MLP2}, $h^{li}$ is introduced via \ref{linetype:lincomb} by
\begin{align*}
h^{li} := \tilde h^{li} + b^l.
\end{align*}
Thus by the \ref{linetype:lincomb} cases of \cref{eqn:extendedMuSigma}, we have
\begin{align*}
\tSigma(h^{li}, h^{lj})
    &=
        \tSigma(\tilde h^{li}, h^{lj})
        + \tSigma(b^l, h^{lj})
        \\
    &=
        \tSigma(\tilde h^{li}, \tilde h^{lj})
        + \tSigma(b^l, \tilde h^{lj})
        + \tSigma(\tilde h^{li}, b^l)
        + \tSigma(b^l, b^l)
        .
\end{align*}
Now, we cannot pattern match $\tSigma(b^l, \tilde h^{lj})$ with any of the cases of \cref{eqn:extendedMuSigma} other than the ``otherwise'' case, which means $\tSigma(b^l, \tilde h^{lj}) = 0$.
Likewise, $\tSigma(\tilde h^{li}, b^l) = 0$.
Therefore,
\begin{align*}
\tSigma(h^{li}, h^{lj})
    &=
        \tSigma(\tilde h^{li}, \tilde h^{lj})
        + \tSigma(b^l, b^l)
        \\
    &=
        \tSigma(\tilde h^{li}, \tilde h^{lj})
        + \sigma_b^2
        .
\end{align*}
Now let's analyze the $\tSigma(\tilde h^{li}, \tilde h^{lj})$ term.
The G-var $\tilde h^{li}$ is introduced via \ref{linetype:MatMul} by
\begin{align*}
\tilde h^{li} := W^l x^{l-1, i},\quad
\text{where}\quad
x^{l-1,i} := \phi(h^{l-1,i})
\end{align*}
and likewise for $h^{lj}$.
By the \ref{linetype:MatMul} case of \cref{eqn:extendedMuSigma}, we have
\begin{align*}
\tSigma(\tilde h^{li}, \tilde h^{lj})
    &=
        \sigma_w^2 \EV_Z \phi(Z^{h^{l-1,i}}) \phi(Z^{h^{l-1,j}})
\end{align*}
where $Z \sim \Gaus(\tmu, \tSigma).$
Since the integrand only depends two components of $Z$, we can simplify the expression as
\begin{align*}
\EV_Z \phi(Z^{h^{l-1,i}}) \phi(Z^{h^{l-1,j}})
    &=
        \EV_{z_1,z_2} \phi(z_1) \phi(z_2),\quad
        \text{where}\quad
        (z_1, z_2) \sim \Gaus\lp 0, \tSigma|_{h^{l-1,i}, h^{l-1,j}}\rp
        .
\end{align*}
Putting it all together, we recover the expression in the claim, as desired.
\end{proof}

\paragraph{Computing the GP Kernel $\KK$}
\cref{eqn:MLPKernelRecursion} along with \cref{eqn:MLPGPkernel} gives all we need to compute $\KK$ by recursion.
If $\phi$ has a nice V-transform $\Vt\phi$, then we can vectorize this equation and obtain the following algorithm (which, again, is by now well-known \cite{poole_exponential_2016,schoenholz_deep_2017,lee_deep_2018})
\begin{tcolorbox}[title=Computing MLP kernel on $B$ inputs]
Consider an $L$-layer MLP with nonlinearity $\phi$ at each layer.
Suppose we have $B$ network inputs $x^1, \ldots, x^B \in \R^m$, as in \cref{tp:MLP2}, and we sample the MLP parameters as in \cref{eqn:MLPparamsampling}.
Then the MLP converges in distribution to a GP on those $B$ inputs, with kernel $\KK$ computed as follows
\begin{enumerate}
    \item Initialize $\KK \in \R^{B \times B}$ by $\KK_{ij} \gets \sigma_w^2 x^i{}^\trsp x^j /m$
    \item For $l = 1, \ldots, L-1$, do $\KK \gets \sigma_w^2 \Vt\phi(\KK) + \sigma_b^2$
    \item Return $\KK \gets \sigma_v^2 \Vt\phi(\KK)$
\end{enumerate}
\end{tcolorbox}

\subsection{Simple RNN \texorpdfstring{(\cref{tp:RNN})}{}}

\label{sec:ExactKernelRNN}

By \cref{cor:GPConv}, we know that the output of \cref{tp:RNN},
\[\lp \f {v^\trsp s^{11}}{\sqrt{n}}, \ldots, \f{v^\trsp s^{t1}}{\sqrt{n}}, \f{v^\trsp s^{12}}{\sqrt{n}}, \ldots, \f{v^\trsp s^{r2}}{\sqrt{n}} \rp\]
is, in the large $n$ limit, distributed as a Gaussian with mean $0$ and the covariance $\KK$ where, for any $a, b \in \{1, 2\}$ (denoting sequence number),
\[
\KK_{ia,jb} \defeq
\lim_{n \to \infty} \Cov\lp \f{v^\trsp s^i{}^a}{\sqrt n}, \f{v^\trsp s^j{}^b}{\sqrt n}\rp = \sigma_v^2 \EV_Z \phi(Z^{h^{ia}})\phi(Z^{h^{jb}})
\]
where
$Z \sim \Gaus(\tmu, \tSigma)$, and $Z^{h^{ia}}$ is the component of $Z$ corresponding to $h^{ia}$ and likewise for $Z^{h^{jb}}.$
Therefore, we need to compute $\tmu$ and $\tSigma$ for the G-vars $\{h^{11}, \ldots, h^{s1}, h^{12}, \ldots, h^{t2}\}$ in \cref{tp:RNN}.

\paragraph{Setup}
Suppose the input tokens $x^{ia}$ to the RNN have dimension $m$.
We will obtain the $\tmu$ and $\tSigma$ for \cref{tp:RNN} with 
\begin{equation}
U_{\alpha \beta} \sim \Gaus(0, \sigma_U^2/m), W_{\alpha\beta} \sim \Gaus(0, \sigma_W^2/n), b_\alpha \sim \Gaus(0, \sigma_b^2), v_\alpha \sim \Gaus(0, \sigma_v^2).
\label{eqn:RNNinit}
\end{equation}
The randomization of $U$ induces the following covariance structure in the input token embeddings
\begin{align*}
    \Sigmain(Ux, Uy) = \sigma_U^2 x^\trsp y/m
    \numberthis\label{eqn:RNNinputcov}
\end{align*}
for any $x, y \in \{x^{i1}\}_{i=1}^t \cup \{x^{j2}\}_{j=1}^r$.
For any other pair $g, g'$ of input G-vars, the sampling implies $\Sigmain(g, g') = 0$.
Additionally, $\muin(g) = 0$ for all input G-var $g$.

\paragraph{Computing $\tmu$}
In fact, one can quickly see that $\tmu(g) = 0$ for all G-vars $g$, not just the input G-vars.

\paragraph{Computing $\tSigma$}
By \cref{eqn:extendedMuSigma}, all $\{h^{i1}, \tilde h^{i1}\}_{i=1}^t \cup \{h^{j2}, \tilde h^{j2}\}_{j=1}^r$ are possibly correlated with each other.
They satisfy the following recurrence
\begin{lemma}\label{lemma:tildehRec}
For any $a, b \in \{1, 2\}$, we have
\begin{align}
    \tSigma(h^{ia}, h^{jb})
        &=
            \tSigma(\tilde h^{ia}, \tilde h^{jb})
            +
            \Sigmain(Ux^{ia}, Ux^{jb})
            +
            \sigma_b^2,
        &\forall i, j \ge 1
            \label{eqn:hfromtildeh}
            \\
    \tSigma(\tilde h^{ia}, \tilde h^{jb})
        &=
            \sigma_W^2 \EV \phi(z_1)\phi(z_2),
        &\forall i, j \ge 2,
            \label{eqn:tildehRecurrence}
\end{align}
where
\[(z_1, z_2)
        \sim
            \Gaus(0, \tSigma|_{h^{i-1, a}, h^{j-1, b}}),\]
and the base case $\tSigma(\tilde h^{ia}, \tilde h^{jb}) = 0$ if $i \le 1$ or $j \le 1$.
Here, $\tSigma|_{set}$ means the submatrix of $\tSigma$ with rows and columns in $set$.
\end{lemma}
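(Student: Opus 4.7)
The plan is to apply the recursive definition in \cref{eqn:extendedMuSigma} directly, unrolling each \ref{linetype:lincomb}- and \ref{linetype:MatMul}-introduced G-var in the RNN program of \cref{tp:RNN}. For each sequence $a \in \{1,2\}$ and index $i \ge 1$, the G-var $h^{ia}$ is introduced by \ref{linetype:lincomb} as $h^{ia} := \tilde h^{ia} + U x^{ia} + b$, with the convention that $\tilde h^{1a} = 0$ for the base case (the line $h^{1a} := Ux^{1a} + b$). The G-var $\tilde h^{ia}$ for $i \ge 2$ is introduced by \ref{linetype:MatMul} as $\tilde h^{ia} := W s^{i-1,a}$ with $s^{i-1,a} := \phi(h^{i-1,a})$. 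Input G-vars are the embeddings $Ux^{ia}$, the bias $b$, and the readout $v$.

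\textbf{Step 1: Establish \cref{eqn:hfromtildeh}.} I would apply the \ref{linetype:lincomb} case of \cref{eqn:extendedMuSigma} twice (once in each argument of $\tSigma$) to expand
\[
\tSigma(h^{ia}, h^{jb}) = \tSigma(\tilde h^{ia}, \tilde h^{jb}) + \tSigma(\tilde h^{ia}, Ux^{jb}) + \tSigma(\tilde h^{ia}, b) + \tSigma(Ux^{ia}, \tilde h^{jb}) + \tSigma(Ux^{ia}, Ux^{jb}) + \tSigma(Ux^{ia}, b) + \tSigma(b, \tilde h^{jb}) + \tSigma(b, Ux^{jb}) + \tSigma(b, b).
\]
Then I would argue that each cross-term with $\tilde h$ falls into the ``otherwise'' branch of \cref{eqn:extendedMuSigma}: e.g., $\tSigma(\tilde h^{ia}, Ux^{jb})$ cannot match the input case (since $\tilde h^{ia}$ is not an input), cannot match either \ref{linetype:lincomb} case, and cannot match the \ref{linetype:MatMul} case (since $Ux^{jb}$ is not of the form $Wh$ with the same $W$ used to define $\tilde h^{ia}$); hence it equals $0$. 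The analogous reasoning kills $\tSigma(\tilde h^{ia}, b)$, $\tSigma(Ux^{ia}, \tilde h^{jb})$, $\tSigma(b, \tilde h^{jb})$. The pure-input cross-terms $\tSigma(Ux^{ia}, b), \tSigma(b, Ux^{jb})$ are $\Sigmain$-values equal to $0$ by the sampling, while $\tSigma(Ux^{ia}, Ux^{jb}) = \Sigmain(Ux^{ia}, Ux^{jb})$ and $\tSigma(b,b) = \sigma_b^2$. Collecting these yields \cref{eqn:hfromtildeh}.

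\textbf{Step 2: Establish \cref{eqn:tildehRecurrence}.} For $i, j \ge 2$ I would pattern-match to the \ref{linetype:MatMul} case directly: both $\tilde h^{ia} = W s^{i-1,a}$ and $\tilde h^{jb} = W s^{j-1,b}$ are introduced by \ref{linetype:MatMul} with the \emph{same} A-var $W$, so
\[
\tSigma(\tilde h^{ia}, \tilde h^{jb}) = \sigma_W^2 \,\EV_Z\, \phi(Z^{h^{i-1,a}})\phi(Z^{h^{j-1,b}}),
\]
where $Z \sim \Gaus(\tmu, \tSigma)$ ranges over all G-vars. Since $\phi$ applied to the components $Z^{h^{i-1,a}}, Z^{h^{j-1,b}}$ depends only on the two-dimensional marginal, the expectation collapses to $\EV \phi(z_1)\phi(z_2)$ with $(z_1,z_2) \sim \Gaus(0, \tSigma|_{h^{i-1,a}, h^{j-1,b}})$, using $\tmu \equiv 0$. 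For the base case $i=1$ (or $j=1$), the symbol $\tilde h^{1a}$ does not appear in the program, so I would formally set $\tSigma(\tilde h^{1a}, \cdot) = 0$; equivalently the corresponding $\tilde h^{ia}$ term is absent in the \ref{linetype:lincomb} expansion of Step 1, producing the same value as adopting this convention.

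\textbf{Main obstacle.} There is no real analytic difficulty; the only subtlety is bookkeeping, in particular being careful that (i) the ``otherwise'' case correctly handles every cross-term one might naively worry about, which rests on the fact that the sampling of $U, W, b, v$ makes all input G-vars independent (so $\Sigmain$ is block-diagonal except within the $\{Ux^{ia}\}$ block), and (ii) the \ref{linetype:MatMul} rule requires the \emph{same} A-var $W$ on both sides, which is precisely what weight-tying in the RNN supplies.
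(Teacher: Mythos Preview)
Your proposal is correct and follows essentially the same approach as the paper's proof: expand $\tSigma(h^{ia},h^{jb})$ bilinearly via the \ref{linetype:lincomb} case into nine terms, kill the four cross-terms involving a $\tilde h$ and an input G-var via the ``otherwise'' branch, evaluate the remaining input--input terms from $\Sigmain$, and then apply the \ref{linetype:MatMul} case (same A-var $W$) to obtain \cref{eqn:tildehRecurrence}, collapsing the Gaussian expectation to the two relevant coordinates. Your treatment of the base case via the convention $\tilde h^{1a}=0$ is exactly what the paper does implicitly by stating the base case in the lemma.
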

Note that the notation $b$ appears both as a sequence index and as the bias of the RNN.
Since the former will only appear as a superscript and the latter will not, there should be no confusion.

\begin{proof}

Note that for each $i \ge 2$, $h^{ia}$ is introduced via \ref{linetype:lincomb} by $h^{ia} := \tilde h^{ia} + Ux^{ia} + b$, where $\tilde h^{ia}$, $Ux^{ia}$, and $b$ are G-vars.
Then by the \ref{linetype:lincomb} case of \cref{eqn:extendedMuSigma},
\begin{align*}
    \tSigma(h^{ia}, h^{jb})
        &=
            \tSigma(\tilde h^{ia}, h^{jb})
            + \tSigma(Ux^{ia}, h^{jb})
            + \tSigma(b, h^{jb})
            \\
        &=
            \tSigma(\tilde h^{ia}, \tilde h^{jb})
            + \tSigma(Ux^{ia}, \tilde h^{jb})
            + \tSigma(b, \tilde h^{jb})\\
        &\phantomeq
            + \tSigma(\tilde h^{ia}, Ux^{jb})
            + \tSigma(Ux^{ia}, Ux^{jb})
            + \tSigma(b, Ux^{jb})\\
        &\phantomeq
            + \tSigma(\tilde h^{ia}, b)
            + \tSigma(Ux^{ia}, b)
            + \tSigma(b, b)
            .
            \\
\end{align*}
By the ``otherwise'' case of \cref{eqn:extendedMuSigma},
\begin{align*}
    \tSigma(Ux^{ia}, \tilde h^{jb})
    = \tSigma(b, \tilde h^{jb})
    = \tSigma(\tilde h^{ia}, Ux^{jb})
    = \tSigma(\tilde h^{ia}, b)
    = 0,
\end{align*}
and by the ``input G-var'' case of \cref{eqn:extendedMuSigma},
\begin{align*}
    \tSigma(b, Ux^{jb})
    = \tSigma(\tilde h^{ia}, b)
    = \Sigmain(b, Ux^{jb})
    = \Sigmain(\tilde h^{ia}, b)
    = 0
    .
\end{align*}
We thus have
\begin{align*}
    \tSigma(h^{ia}, h^{jb})
        &=
            \tSigma(\tilde h^{ia}, \tilde h^{jb})
            + \tSigma(Ux^{ia}, Ux^{jb})
            + \tSigma(b, b)
            \\
        &=
            \tSigma(\tilde h^{ia}, \tilde h^{jb})
            + \tSigma(Ux^{ia}, Ux^{jb})
            + \sigma_b^2
\end{align*}
which is \cref{eqn:hfromtildeh}.

Next, note that $\tilde h^{ia}$ is introduced via \ref{linetype:MatMul} by $\tilde h^{ia} := W s^{ia}$, and $s^{ia}$ is an H-var introduced by $s^{ia} := \phi(h^{ia})$.
Thus, by the \ref{linetype:MatMul} rule of \cref{eqn:extendedMuSigma},
\begin{align*}
    \tSigma(\tilde h^{ia}, \tilde h^{jb})
        &=
            \sigma_W^2 \EV_Z \phi(Z^{h^{ia}}) \phi(Z^{h^{jb}})
\end{align*}
where $Z \sim \Gaus(\tmu, \tSigma)$ and $Z^{h^{ia}}$ (resp.\ $Z^{h^{jb}}$) is its component corresponding to $h^{ia}$ (resp.\ $h^{jb}$).
Since the integrand only depends on the two components $Z^{h^{ia}}$ and $Z^{h^{jb}}$, we can rewrite
\begin{align*}
    \tSigma(\tilde h^{ia}, \tilde h^{jb})
        &=
            \sigma_W^2 \EV_{z_1, z_2} \phi(z_1) \phi(z_2)
\end{align*}
where $(z_1, z_2) \sim \Gaus(0, \tSigma|_{h^{i-1, a}, h^{j-1, b}})$.
This is \cref{eqn:tildehRecurrence}.
\end{proof}

Let us formulate the results above in a way more suggestive of the algorithm required to compute the kernel.
For any $2 \le p \le t, 2 \le q \le r$, define $\tilde H^{pq} \defeq \{\tilde h^{i1}\}_{i=2}^{p} \cup \{\tilde h^{j2}\}_{j=2}^{q}$ and $X^{pq} \defeq \{Ux^{i1}\}_{i=1}^p \cup \{Ux^{j2}\}_{j=1}^q.$
Denote by $\tSigma|_{\tilde H^{pq}}$ the restriction of $\tSigma$ to $\tilde H^{pq}$, and likewise $\Sigmain|_{X^{pq}}$ the restriction of $\Sigmain$ to $X^{pq}$.
We can visualize
\begin{align*}
\tSigma|_{\tilde H^{pq}}
  &=
    \begin{pmatrix}
    A^{pp}  & B^{pq}\\
    B^{pq}{}^\trsp &  C^{qq}
    \end{pmatrix}
    \in \R^{(p+q-2) \times (p+q-2)}
    &
\Sigmain|_{X^{pq}}
  &=
    \begin{pmatrix}
    P^{pp} & Q^{pq}\\
    Q^{pq}{}^\trsp & R^{qq}
    \end{pmatrix}
    \in \R^{(p+q) \times (p+q)}
    \numberthis\label{eqn:Sigmapq}
\end{align*}
where
\begin{align*}
A^{pp} &\defeq \{\tSigma(\tilde h^{i1}, \tilde h^{j1})\}_{i,j=2}^{p,p}&
C^{qq} &\defeq \{\tSigma(\tilde h^{i2}, \tilde h^{j2})\}_{i,j=2}^{q,q}&
B^{pq} &\defeq \{\tSigma(\tilde h^{i1}, \tilde h^{j2})\}_{i,j=2}^{p,q}\\
P^{pp} &\defeq \{\Sigmain(Ux^{i1}, Ux^{j1})\}_{i,j=1}^{p,p}&
R^{qq} &\defeq \{\Sigmain(Ux^{i2}, Ux^{j2})\}_{i,j=1}^{q,q}&
Q^{pq} &\defeq \{\Sigmain(Ux^{i1}, Ux^{j2})\}_{i,j=1}^{p,q}
.
\end{align*}
Let $\tSigma |^0_{\tilde H^{pq}}$ also denote $\tSigma|_{\tilde H^{pq}}$ padded with an additional column and an additional row of 0s on the left and top of each block $A^{pp}, B^{pq}, B^{pq}{}^\trsp, C^{qq}$:
\begin{align}
\tSigma|^0_{\tilde H^{pq}}
  &=
    \begin{pmatrix}
    0 & 0 & 0 & 0\\
    0 & A^{pp} & 0 & B^{pq}\\
    0 & 0 & 0 & 0\\
    0 & B^{pq}{}^\trsp & 0 & C^{qq}
    \end{pmatrix}
    \in \R^{(p+q) \times (p+q)}
    .
    \label{eqn:Sigma0pq}
\end{align}
Then \cref{eqn:hfromtildeh,eqn:tildehRecurrence} can be combined and vectorized as
\begin{align}
    \tSigma|_{\tilde H^{p+1,q+1}}
        =
            \Vt\phi \lp
                \tSigma |^0_{\tilde H^{pq}}
                + \Sigmain |_{X^{pq}}
                + \sigma_b^2
            \rp
        .
        \label{eqn:RNNkernelVectorized}
\end{align}

\cref{eqn:RNNkernelVectorized} quickly yields to a iterative, vectorized algorithm for computing $\tSigma|_{\tilde H^{tr}}$ (recall $t$ and $r$ are the lengths of the two input sequences), assuming that $\Vt\phi$ can be efficiently computed (such as those under \cref{defn:Vtransform}).
Then, with $H^{pq} \defeq \{h^{i1}\}_{i=1}^p \cup \{h^{j2}\}_{j=1}^q$, we have
\begin{align*}
    \tSigma|_{H^{tr}}
        =
            \tSigma|_{\tilde H^{tr}}^0
            + \Sigmain|_{X^{tr}}
            + \sigma_b^2
        .
\end{align*}

\paragraph{Computing the GP Kernel}
Finally, given $\tSigma|_{H^{tr}}$, by \cref{cor:GPConv}, the covariance of the output of \cref{tp:RNN} in the large $n$ limit is
\begin{align*}
    \KK = \sigma_v^2 \Vt\phi\lp \tSigma|_{H^{tr}} \rp
                    .
\end{align*}

\begin{tcolorbox}[title=Computing RNN kernel]
Consider a simple RNN with nonlinearity $\phi$, as in \cref{tp:RNN}.
Suppose we have 2 input sequences $x^{11},\ldots, x^{t1}$ and $x^{12}, \ldots, x^{r2} \in \R^m$.
Assume we sample the RNN parameters as in \cref{eqn:RNNinit}.
Then the outputs of the RNN converge jointly in distribution to a Gaussian with covariance computed as follows.
\begin{enumerate}
    \item Initialize $\Sigmain$ according to \cref{eqn:RNNinputcov}.
    \item Starting with $p = q = 0$, do
    \begin{enumerate}
        \item $\tSigma|_{\tilde H^{p+1,q+1}}
                    \gets
                        \sigma_w^2 \Vt\phi \lp
                            \tSigma |^0_{\tilde H^{pq}}
                            + \Sigmain |_{X^{pq}}
                            + \sigma_b^2
                        \rp$
                (see \cref{eqn:Sigmapq,eqn:Sigma0pq} for notations)
        \item Set $p \gets \min(p+1, t), q \gets \min(q+1, r)$.
        \item If $p$ and $q$ did not change, break.
    \end{enumerate}
    \item Compute
            $\tSigma|_{H^{tr}}
                \gets
                    \tSigma|_{\tilde H^{tr}}^0
                    + \Sigmain|_{X^{tr}}
                    + \sigma_b^2
                .
            $
    \item The output kernel is given by
        \begin{align*}
            \KK \gets \sigma_v^2 \Vt\phi\lp \tSigma|_{H^{tr}} \rp
            .
        \end{align*}
\end{enumerate}
\end{tcolorbox}

See our repository \repo{} for a reference implementation of this kernel.

\subsection{Batchnorm \texorpdfstring{(\cref{tp:batchnorm})}{}}
\label{sec:batchnormKernel}

As shown in \cref{sec:MoreExamples}, batchnorm (followed by a coordinatewise nonlinearity) can be just thought of a multivariate nonlinearity, and the computation of $\tSigma$ can largely follow the same pattern as for any other feedforward neural net (see \cref{tp:batchnorm}).
However, doing so efficiently is not so obvious, especially when the batch size is large.
In this section, we describe how to overcome this apparent difficulty.

\subsubsection{Batchnorm with Single Batch}
Let us compute the GP kernel for \cref{tp:batchnorm}, assuming there is only one batch.

\paragraph{Setup}
The network in \cref{tp:batchnorm} has parameters $W^1 \in \R^{n \times m}$, where $m$ is the input dimension, and $W^l \in \R^{n \times n}$ for $2 \le l \le L$.
Since batchnorm is scale-invariant, we will just assume that
\begin{align*}
W^1_{\alpha\beta} \sim \Gaus(0, 1/m),\quad
W^l_{\alpha\beta} \sim \Gaus(0, 1/n),\; \forall l \ge 2,
\end{align*}
and $v_\alpha \sim \Gaus(0, \sigma_v^2)$.
This means that the initial \netsor sampling data have values
\begin{align*}
\Sigmain(v, v) &= \sigma_v^2\\
\Sigmain(W^1 x^{ia}, W^1 x^{i'a'}) &= x^{ia}{}^\trsp x^{i'a'}/m,
\end{align*}
and $\muin = 0$ identically.

\paragraph{Computing $\tmu$}
As before, from \cref{eqn:extendedMuSigma}, it is easy to see that $\muin = 0$ implies $\tmu = 0$.

\paragraph{Computing $\tSigma$}
Applying \cref{eqn:extendedMuSigma} in the fashion of \cref{lemma:MLPrec}, we get
\begin{lemma}\label{lemma:batchnorm1batchrec}
For any $a \in [k]$ and $2 \le l \le L$, let $H^{la}$ denote the set $\{h^{lia}\}_{i \in [B_a]}$.
Also write $H^{1a} \defeq \{W^1 x^{ia}\}_{i \in [B_a]}$.
Recall that $\tSigma|_{set}$ denotes the square submatrix of $\tSigma$ with rows and columns in $set$.
Then for any $a \in [k]$ and $l = 1, \ldots, L-1$,
\begin{align*}
\tSigma|_{H^{l+1,a}} = \EV_{\zeta \sim \Gaus(0, \tSigma|_{H^{la}})} \tilde\phi(\zeta) \tilde \phi(\zeta)^\trsp
    = \Vt{\tilde \phi}\lp\tSigma|_{H^{la}}\rp
    \in \R^{B_a \times B_a},
    \numberthis\label{eqn:batchnorm1batchrec}
\end{align*}
where $\tilde \phi$ is batchnorm followed by coordinatewise nonlinearity $\phi$, as in \cref{sec:MoreExamples} and \cref{tp:batchnorm}.
\end{lemma}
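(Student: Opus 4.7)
The plan is to invoke the recursive rule \cref{eqn:extendedMuSigma} on each pair $(h^{l+1,ia}, h^{l+1,ja})$ with $i,j \in [B_a]$, in the same style as \cref{lemma:MLPrec}. In \cref{tp:batchnorm}, both of these G-vars are introduced by \ref{linetype:MatMul} through the \emph{same} A-var $W^{l+1}$: namely $h^{l+1,ia} := W^{l+1} x^{l,ia}$ and $h^{l+1,ja} := W^{l+1} x^{l,ja}$, where the H-vars $x^{l,ia} = \tilde\phi_i(h^{l1a},\ldots,h^{lB_a a})$ and $x^{l,ja} = \tilde\phi_j(h^{l1a},\ldots,h^{lB_a a})$ are coordinatewise applications of the batchnorm-plus-nonlinearity maps $\tilde\phi_i, \tilde\phi_j$ to the G-vars in $H^{la}$. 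Since by our sampling convention $\sigma_{W^{l+1}}^2 = 1$, the \ref{linetype:MatMul} clause of \cref{eqn:extendedMuSigma} immediately gives
\[
\tSigma(h^{l+1,ia}, h^{l+1,ja}) \;=\; \EV_{Z \sim \Gaus(\tmu, \tSigma)} \tilde\phi_i(Z)\, \tilde\phi_j(Z),
\]
where only the components of $Z$ indexed by $H^{la}$ are actually read by the integrand.

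The second step is to marginalize. Because $\tilde\phi_i$ and $\tilde\phi_j$ depend only on the $B_a$ components of $Z$ corresponding to $H^{la}$, the expectation collapses to one over the associated $B_a$-dimensional marginal, which is $\Gaus(0, \tSigma|_{H^{la}})$ since $\tmu \equiv 0$. Assembling these scalar identities across all $(i,j) \in [B_a] \times [B_a]$ into a matrix equality yields
\[
\tSigma|_{H^{l+1,a}} \;=\; \EV_{\zeta \sim \Gaus(0,\, \tSigma|_{H^{la}})} \tilde\phi(\zeta)\, \tilde\phi(\zeta)^\trsp \;=\; \Vt{\tilde\phi}(\tSigma|_{H^{la}}),
\]
as claimed. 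The base case $l = 1$ is handled identically: $H^{1a} = \{W^1 x^{ia}\}_{i \in [B_a]}$ is a set of input G-vars, $x^{1,ia} = \tilde\phi_i(W^1 x^{1a}, \ldots, W^1 x^{B_a a})$ is still a coordinatewise function of exactly those G-vars, and $h^{2,ia} := W^2 x^{1,ia}$ again invokes a shared A-var, so the same computation applies verbatim.

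There is no serious obstacle here; the argument is a direct application of \ref{linetype:MatMul}. The one subtlety worth pointing out, and the reason the lemma is cleaner than one might initially fear, is the \emph{cross-batch} decoupling: although a single matrix $W^{l+1}$ is shared across all $k$ batches and therefore induces correlations in $\tSigma$ between $H^{l+1,a}$ and $H^{l+1,a'}$ for $a \ne a'$, the within-batch submatrix $\tSigma|_{H^{l+1,a}}$ reads only those entries of $\tSigma$ that $\tilde\phi_i$ and $\tilde\phi_j$ depend on, and those nonlinearities are supported entirely on G-vars inside batch $a$. Hence the recursion for each block $\tSigma|_{H^{l+1,a}}$ depends only on the previous block $\tSigma|_{H^{la}}$, even though the full kernel across batches does not decouple.
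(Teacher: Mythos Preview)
Your proposal is correct and follows essentially the same approach as the paper, which simply states that the result follows by ``applying \cref{eqn:extendedMuSigma} in the fashion of \cref{lemma:MLPrec}.'' Your write-up is more detailed than the paper's one-line justification, but the underlying argument---invoking the \ref{linetype:MatMul} clause of \cref{eqn:extendedMuSigma} with $\sigma_{W^{l+1}}^2=1$, then marginalizing to the $B_a$ components of $H^{la}$ using $\tmu=0$---is exactly what the paper intends.
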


\textit{A priori}, evaluating this expectation requires a $B_a$-dimensional Gaussian integral, which becomes intractible when $B$ is large.
However, if $\phi = \relu$, then surprisingly one can reduce the $B$-dimensional integral this Gaussian expectation seems to require to a 1-dimensional integral:
By \cite{yang_mean_2019} we can express \cref{eqn:batchnorm1batchrec} as
\begin{equation}
\tSigma|_{H^{l+1,a}} = B_a \int_0^\infty
    \frac{\Vt\relu(\Sigma^G (I + 2s\Sigma^G)^{-1})}
    {\sqrt{\det(I + 2 s \Sigma^G)}} \dd s
    \label{eqn:batchnorm1D}
\end{equation}
where $\Vt\relu$ is as in \cref{fact:Vrelu}, and
\begin{align*}
    \Sigma^G &\defeq G \tSigma|_{H^{la}} G\\
    G &\defeq I_B - \frac 1 B \mathbf 1 \mathbf 1^\trsp.
\end{align*}

\subsubsection{Batchnorm with Multiple Batches}
Now let's consider the covariance of preactivations between different batches.
We maintain the same setup as above, and as usual $\tmu = 0$ identically.

\paragraph{Computing $\tSigma$}
By another straightforward application of \cref{eqn:extendedMuSigma}, we get
\begin{lemma}\label{lemma:batchnorm2batchrec}
With the same notation as in \cref{lemma:batchnorm1batchrec}, for two different batches $a \ne b$,
\begin{align*}
\tSigma(H^{l+1,a}, H^{l+1,b})
    &=
        \EV \tilde \phi(\zeta_1) \tilde \phi(\zeta_2)^\trsp
\end{align*}
where the expectation is taken over
\[
\begin{pmatrix}
\zeta_1\\
\zeta_2
\end{pmatrix}
\sim
\Gaus\lp
0,
\begin{pmatrix}
\tSigma(H^{la}, H^{la}) & \tSigma(H^{la}, H^{lb})\\
\tSigma(H^{lb}, H^{la}) & \tSigma(H^{lb}, H^{lb})
\end{pmatrix}
\rp.
\]
\end{lemma}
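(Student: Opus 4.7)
The plan is to mirror the proof of Lemma \ref{lemma:batchnorm1batchrec} almost verbatim, with the only essential change being the bookkeeping of the cross-batch block rather than a diagonal block. First, I would unroll how each $h^{l+1,ia}$ and $h^{l+1,jb}$ appears in \cref{tp:batchnorm}. For $l \ge 1$ the preactivation $h^{l+1,ia}$ is introduced via \ref{linetype:MatMul} as $h^{l+1,ia} := W^{l+1} x^{lia}$, where $x^{lia} := \tilde\phi_i(h^{l1a}, \ldots, h^{lB_a a})$ is the H-var obtained from coordinate $i$ of the batchnorm-plus-nonlinearity applied to the batch-$a$ preactivations, and analogously for batch $b$.

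Next, I would apply the \ref{linetype:MatMul} case of \cref{eqn:extendedMuSigma}. The critical observation is that both $h^{l+1,ia}$ and $h^{l+1,jb}$ are produced by the same A-var $W^{l+1}$, so even though $a \ne b$ we land in the ``same A-var'' branch of the recursion, not the ``otherwise'' branch. Under the setup $\sigma_{W^{l+1}}^2 = 1$, this gives
\[
\tSigma(h^{l+1,ia}, h^{l+1,jb})
= \EV_Z\, \tilde\phi_i\bigl(Z^{h^{l1a}},\ldots,Z^{h^{lB_a a}}\bigr)\, \tilde\phi_j\bigl(Z^{h^{l1b}},\ldots,Z^{h^{lB_b b}}\bigr),
\]
where $Z \sim \Gaus(\tmu,\tSigma)$ ranges over all G-vars in the program.

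Then I would observe that the integrand depends only on the components of $Z$ corresponding to $H^{la} \cup H^{lb}$, so the expectation collapses to one over a $(B_a+B_b)$-dimensional Gaussian whose covariance is the block matrix built from $\tSigma(H^{la},H^{la})$, $\tSigma(H^{lb},H^{lb})$, and $\tSigma(H^{la},H^{lb})$. Calling the two blocks of this Gaussian $\zeta_1$ and $\zeta_2$ and assembling the scalar equalities over all $i \in [B_a]$, $j \in [B_b]$ into one matrix identity gives precisely the claim.

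No step here is really an obstacle, since the lemma is essentially a direct application of the \ref{linetype:MatMul} recursion rule together with the observation made in \cref{sec:MoreExamples} that $\tilde\phi_i$ is a controlled multivariate nonlinearity (so that \cref{thm:netsorMasterTheorem} is applicable and \cref{eqn:extendedMuSigma} makes sense). The one subtlety worth emphasizing is that the cross-batch covariance is nonzero precisely because of weight sharing: both $h^{l+1,ia}$ and $h^{l+1,jb}$ flow through the same $W^{l+1}$. If distinct weights were used across batches, the \ref{linetype:MatMul} case would not apply and the ``otherwise'' case would instead yield zero, reducing to independent per-batch Gaussian processes.
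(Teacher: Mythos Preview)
Your proposal is correct and matches the paper's approach exactly: the paper simply states that the lemma follows ``by another straightforward application of \cref{eqn:extendedMuSigma}'' without spelling out details, and what you have written is precisely that straightforward application. Your added remark about weight sharing through $W^{l+1}$ being the reason the cross-batch block is nonzero is a nice clarification that the paper leaves implicit.
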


Again, this expectaion seems to require an integral in $(B_a + B_b)$ dimensions.
However, via some integral tricks, this expectation can be transformed to the following 2D integral:

\begin{equation}
\tSigma(H^{l+1,a}, H^{l+1,b}) = 
\sqrt{B_a B_b} \pi^{-1} \int_0^\infty \dd s \int_0^\infty \dd t\ 
(st)^{-1/2} \det(I_{B_a+B_b} + 2 \Omega)^{-1/2}
\Vt\relu(\Pi)_{12}
\label{eqn:batchnorm2D}
\end{equation}
where
\begin{align*}
    \Omega &=
        D^{1/2}\begin{pmatrix} G_a\tSigma(Y, Y) G_a & G_a \tSigma(Y, Y') G_b\\ G_b \tSigma(Y', Y) G_a & G_b \tSigma(Y', Y') G_b \end{pmatrix} D^{1/2}
            \\
    \Pi &=
        D^{-1/2} \Omega (I + 2 \Omega)^{-1} D^{-1/2}
            \\
    D &=
        s I_{B_a} \oplus t I_{B_b}
        = \begin{pmatrix} s I_{B_a} & 0 \\ 0 & t I_{B_b} \end{pmatrix}
            \\
    G_a &= I_{B_a} - B_a^{-1} \mathbf{1}\mathbf{1}^\top
        \\
    G_b &= I_{B_b} - B_b^{-1} \mathbf{1}\mathbf{1}^\top
\end{align*}
and $\Vt\relu(\Pi)_{12}$ is the block of $\Vt\relu(\Pi)$ on the first row, second column, of size $B_a \times B_b$.

\paragraph{Computing the GP Kernel $\KK$}
The computation is similar to \cref{lemma:batchnorm1batchrec,lemma:batchnorm2batchrec}, so let us directly summarize the entire algorithm below.

\begin{tcolorbox}[title=Computing Batchnorm+ReLU kernel]
We compute the kernel of a $L$-layer fully-connected network where each layer has a batchnorm followed by ReLU, as shown in \cref{tp:batchnorm}.
Let the input dimension be $m$ and the common width of hidden layers be $n$.
Sample the first layer weights $W^1$ as $W^1_{\alpha\beta} \sim \Gaus(0, \sigma_W^2/m)$ and each higher layer's weight matrix $W^l$ as $W^l_{\alpha \beta} \sim \Gaus(0, \sigma_W^2/n)$ with $\sigma_W = 1$ (since batchnorm is scale-invariant), and the readout layer as $v_\alpha \sim \Gaus(0, \sigma_v^2)$.
We omit biases since batchnorm is shift invariant.
Suppose we have $k$ batches of inputs $\{x^{ib}: i \in [B_b], b \in [k]\}$, with batch $b$ containing $B_b$ inputs.
Then the outputs of the network converge jointly in distribution to a Gaussian $\Gaus(0, \KK)$ with $\KK$ computed as follows.
\begin{enumerate}
    \item Initialize $\{\KK^0_{ab} \in \R^{B_a \times B_b}\}_{a,b=1}^k$ by $(\KK^0_{ab})_{ij} \gets x^{ia}{}^\trsp x^{jb}/m.$
    \item For $l = 1, \ldots, L$, do
    \begin{enumerate}
        \item For $a = 1, \ldots, k$, do
        \begin{enumerate}
            \item $\KK^l_{aa} \gets \Vt{\tilde \phi}(\KK^{l-1}_{aa})$ by evaluating a 1D integral according to \cref{eqn:batchnorm1D}.
        \end{enumerate}
        \item For $a, b \in [k]$, $a\ne b$, do
        \begin{enumerate}
            \item Compute $\KK^l_{ab}$ by using $\KK^{l-1}_{ab}, \KK^{l-1}_{aa}, \KK^{l-1}_{bb}$ and evaluating a 2D integral according to \cref{eqn:batchnorm2D}.
        \end{enumerate}
    \end{enumerate}
    \item Return
    \[
        \KK \gets
        \sigma_v^2
        \begin{pmatrix}
        \KK^L_{11} & \cdots & \KK^L_{1k}\\
        \vdots & \ddots & \vdots\\
        \KK^L_{k1} & \cdots & \KK^L_{kk}
        .
        \end{pmatrix}
        \in \R^{\sum_{a} B_a \times \sum_{a} B_a}
    \]
\end{enumerate}
\end{tcolorbox}

\paragraph{Vectorized Implementation}

In our repo \repo, we show how to implement single- and multi-batch BN using the \texttt{quadpy} \cite{quadpy} package for vectorized quadrature integration, and by using eigendecomposition to simplify the computation of the integrand in the integrals above.

\subsection{Convolution and Pooling \texorpdfstring{(\cref{tp:conv})}{}}

Convolution and pooling, in the context of neural network-Gaussian process correspondence, have already been treated in \cite{novak_bayesian_2018,garriga-alonso_deep_2018}.
In this section we revisit the same derivations from the perspective of tensor programs.

\subsubsection{CNN with Single Input}

Let us compute the GP kernel for \cref{tp:conv}, for the following setup:

\paragraph{Setup}
The CNN in \cref{tp:conv} has parameters $\{W^1_j\}_{j \in ker^1}, \{W^2_j\}_{j \in ker^2}$.
It has widths $n^1$ and $n^2$, but for simplicity, let's assume $n^1 = n^2 = n$.
Each input image is given as $\{x_i \in \R^m\}_{i \in pos^0}$ where $pos^0$ denotes ``pixel locations'' and $m$ denotes number of channels
(for example, $pos^0 = [32] \times [32]$ and $m = 3$ for the CIFAR10 dataset).
Suppose we sample the parameters as
\begin{align*}
(W^1_j)_{\alpha \beta} \sim \Gaus(0, \sigma_w^2/m),\; \forall j \in ker^1,\quad
(W^2_j)_{\alpha\beta} \sim \Gaus(0, \sigma_w^2/n^1),\; \forall j \in ker^2,\quad
v_\alpha \sim \Gaus(0, \sigma_v^2/n)
.
\end{align*}
This induces $\Sigmain$ as follows:
\begin{align*}
\Sigmain(v, v) &= \sigma_v^2\\
\Sigmain(W^1_j x_{i+j}, W^1_{j'} x_{i'+j'}) &= \sigma_w^2 x_{i+j}^\trsp x_{i'+j'}/ m
\end{align*}
for any $i,i' \in pos^1 ,j,j' \in ker^1$ such that $i + j, i'+j' \in pos^0$; and $\Sigmain(g, g') = 0$ for any other pairs of input G-vars.
In addition, $\muin = 0$ identically.

\paragraph{Computing $\tmu$}
As before, from \cref{eqn:extendedMuSigma}, it is easy to see that $\muin = 0$ implies $\tmu = 0$.

\paragraph{Computing $\tSigma$}
By straightforward applications of \cref{eqn:extendedMuSigma}, we obtain the following
\begin{lemma}
For any $i , i' \in pos^1$,
\begin{align*}
    \tSigma(h^1_i, h^1_{i'})
        &=
            \sum_{j, j' \in ker^1} \Sigmain(W^1_j x_{i+j}, W^1_{j'} x_{i'+j'})
            .
\end{align*}
In addition, for any $j, j' \in ker^2$ such that $i+j, i'+j' \in pos^1$,
\begin{align*}
    \tSigma(h^2_{i;j}, h^2_{i';j'})
        &=
            \sigma_w^2 \EV_{z_1,z_2}\phi(z_1)\phi(z_2)
            \ind(j=j')
            ,
\end{align*}
where $(z_1, z_2) \sim \Gaus(0, \tSigma|_{h^1_{i}, h^1_{i'}})$.
Finally, for any $i, i' \in pos^2$,
\begin{align*}
    \tSigma(h^2_i, h^2_{i'})
        &=
            \sum_{j, j' \in ker^2}
                \tSigma(h^2_{i;j}, h^2_{i';j'})
\end{align*}
where the sum is over all $j,j' \in ker^2$ such that $i+j, i'+j' \in pos^{1}$.
\end{lemma}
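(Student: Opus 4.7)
My plan is to derive each of the three identities as direct, essentially mechanical, applications of the recursive definition of $\tSigma$ in \cref{eqn:extendedMuSigma}, following the structure of \cref{lemma:MLPrec,lemma:tildehRec}. The key bookkeeping is to identify, for each newly-introduced G-var, which of the four cases of \cref{eqn:extendedMuSigma} applies, and in particular to recognize when two MatMul outputs share an A-var versus when they do not.

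For the first identity, I would note that each $h^1_i$ is introduced by \ref{linetype:lincomb} from the input G-vars $\{W^1_j x_{i+j}\}_{j \in ker^1}$, namely $h^1_i = \sum_{j \in ker^1} W^1_j x_{i+j}$ (with the sum restricted to $j$ with $i+j \in pos^0$). Applying the \ref{linetype:lincomb} case of \cref{eqn:extendedMuSigma} twice (once in each argument) reduces $\tSigma(h^1_i, h^1_{i'})$ to $\sum_{j,j'} \tSigma(W^1_j x_{i+j}, W^1_{j'} x_{i'+j'})$, and then the ``input G-var'' case replaces each summand by the corresponding $\Sigmain$ value.

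For the second identity, each $h^2_{i;j}$ is introduced by \ref{linetype:MatMul} as $W^2_j x^1_{i+j}$, with $x^1_{i+j} = \phi(h^1_{i+j})$ produced by \ref{linetype:nonlin}. When $j = j'$, the pair $(h^2_{i;j}, h^2_{i';j'})$ falls into the \ref{linetype:MatMul} case of \cref{eqn:extendedMuSigma} with common A-var $W^2_j$, yielding $\sigma_w^2\,\EV\,\phi(Z^{h^1_{i+j}})\phi(Z^{h^1_{i'+j'}})$; since the integrand depends only on two components of $Z$, this collapses to the two-dimensional Gaussian expectation against $\tSigma|_{h^1_{i+j}, h^1_{i'+j'}}$ as claimed. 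When $j \neq j'$, no case of \cref{eqn:extendedMuSigma} matches (the A-vars differ, and neither variable was introduced by \ref{linetype:lincomb} or as an input), so the ``otherwise'' case gives $0$; this is exactly the $\ind(j=j')$ factor. The third identity is then a second application of \ref{linetype:lincomb} in both arguments to $h^2_i = \sum_j h^2_{i;j}$.

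The only subtlety worth flagging is the MatMul/independence step: one must verify that $h^2_{i;j}$ and $h^2_{i';j'}$ with $j \neq j'$ really do fall into the ``otherwise'' bucket, which amounts to the observation that distinct $W^2_j, W^2_{j'}$ are independent A-vars in the program, so the same-A-var clause of \cref{eqn:extendedMuSigma} does not apply. Everything else is a rote unfolding of the recursion, and no new ideas beyond those already used in \cref{lemma:MLPrec,lemma:tildehRec} are needed; I would expect the written proof to be only a few lines per part.
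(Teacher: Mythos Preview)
Your proposal is correct and takes essentially the same approach as the paper: the paper's entire proof is the single sentence ``By straightforward applications of \cref{eqn:extendedMuSigma}, we obtain the following,'' and your three-part unfolding (\ref{linetype:lincomb} in both slots for $h^1_i$, the same-A-var vs.\ different-A-var distinction for $h^2_{i;j}$, and \ref{linetype:lincomb} again for $h^2_i$) is exactly that straightforward application, spelled out in the style of \cref{lemma:MLPrec,lemma:tildehRec}.

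One minor remark: your derivation of the second identity correctly lands on the restriction $\tSigma|_{h^1_{i+j},\,h^1_{i'+j'}}$ (since $h^2_{i;j} = W^2_j\,\phi(h^1_{i+j})$), whereas the printed statement writes $\tSigma|_{h^1_{i},\,h^1_{i'}}$. This is an indexing slip in the paper (the same slip recurs in the multi-input version), not an error in your reasoning.
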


\paragraph{Computing the GP kernel $\KK$}
By \cref{cor:GPConv}, the output of the CNN converges in distribution to $\Gaus(0, \KK)$ where $\KK$ is a scalar given by
\begin{align*}
\KK \defeq \lim_{n \to \infty} \Var\lp \f{v^\trsp \bar x^2} n \rp 
    = \sigma_v^2 \EV_Z \lp \f 1 {|pos^2|} \sum_i \phi(Z^{h^2_i}) \rp^2
    = \sigma_v^2 \EV_{i,i' \in pos^2} \EV_Z \phi(Z^{h^2_i}) \phi(Z^{h^2_{i'}})
\end{align*}
where $Z \sim \Gaus(\tmu, \tSigma)$ and where in the last expression, $i,i'$ are sampled independently and uniformly from $pos^2$.
If we let $H^2 \defeq \{h^2_i\}_{i \in pos^2}$, then $\KK$ can be computed as
\[\KK = \sigma_v^2 \EV_{i,i'\in pos^2} \Lambda_{ii'}
\quad\text{where}\quad
\Lambda \defeq \Vt\phi\lp \tSigma|_{H^2}\rp.\]

\subsubsection{CNN with Multiple Inputs}

\begin{algorithm}[tb]
    \caption{$L$-layer Convolutional Network with Global Average Pooling}
    \label{tp:conv2}
    \begin{algorithmic}
      \Require $\{W^1_j x_{i+j}^a: \Gtype(n^1)\}_{\substack{a\in[B],\\j \in ker^1, i \in pos^1\\ s.t.\ i+j \in pos^0}}$
      \Require $\{W^l_j: \Atype(n^l, n^{l-1})\}_{2\le l \le L, j \in ker^l}$
      \State {\it // Readout weights}
      \Require $v: \Gtype(n^L)$
      \For{$a \in [B]$}
          \State {\it // Layer 1 convolution}
          \For{$i \in pos^1$}
          \State {\it // Directly use input embeddings}
          \State {\it // \ref{linetype:lincomb}}
          \State {\it // Sum is over all $j \in ker^1$ such that}
          \State {\it // there is $i' \in pos^{0}$ with $i' = j + i$}
          \State $h^{1a}_i := \sum_{j} W^1_j x^a_{i+j}: \Gtype(n^1)$
          \EndFor
          \State {\it // Higher layer convolutions}
          \For{$l = 2, \ldots, L$}
              \For{$i \in pos^{l-1}$}
                \State $x^{l-1,a}_i := \phi(h^{l-1,a}_i): \Htype(n^{l-1})$
              \EndFor
              \For{$j \in ker^l, i \in pos^{l}\ s.t.\ i+j \in pos^{l-1}$}
                  \State {\it // \ref{linetype:MatMul}}
                  \State $h^{la}_{i;j} := W^{l}_j x^{l-1,a}_{i+j}: \Gtype(n^l)$
              \EndFor
              \For{$i \in pos^l$}
                \State {\it // Sum is over all $j \in ker^l$ such that}
                \State {\it // there is $i' \in pos^{l-1}$ with $i' = j + i$}
                \State $h^{la}_i := \sum_{j} h^{la}_{i;j}: \Gtype(n^l)$
              \EndFor
          \EndFor
          \State {\it // Nonlinearity \& Global Average Pooling}
          \State $\bar x^{La} := \f 1 {|pos^L|} \sum_{i \in ker^L} \phi(h^{La}_i): \Htype(n^L)$
      \EndFor
      \Ensure $v^\trsp \bar x^{L1} / \sqrt {n^L}, \ldots, v^\trsp \bar x^{LB} / \sqrt {n^L}$
    \end{algorithmic}
\end{algorithm}

Now consider the general case when we have multiple inputs $x^1, \ldots, x^B$ and have $L$ layers (but, for simplicity, still no bias), as in \cref{tp:conv2}.
The derivation is very similar to the single input case, but we will err on the side of completeness.

\paragraph{Setup}
The CNN in \cref{tp:conv2} has parameters $\{W^l\}_{l \in [L], j \in ker^l}$.
It has widths $n^1, \ldots, n^L$, but as before, we shall assume they are all equal to $n$ for simplicity.
Each input image $x^a$ is given as $\{x_i^a \in \R^m\}_{i \in pos^0}$ where $pos^0$ denotes ``pixel locations'' and $m$ denotes number of channels.
Suppose we sample the parameters as
\begin{align*}
(W^1_j)_{\alpha \beta} \sim \Gaus(0, \sigma_w^2/m),\; \forall j \in ker^1,\quad
(W^l_j)_{\alpha\beta} \sim \Gaus(0, \sigma_w^2/n^{l-1}),\; \forall j \in ker^l
,
\end{align*}
for all $l =2, \ldots, L$, and $v_\alpha \sim \Gaus(0, \sigma_v^2/n)$.
This induces $\Sigmain$ as follows:
\begin{align*}
\Sigmain(v, v) &= \sigma_v^2\\
\Sigmain(W^1_j x^a_{i+j}, W^1_{j'} x^{a'}_{i'+j'}) &= \sigma_w^2 x^a_{i+j}{}^\trsp x^{a'}_{i'+j'}/ m
\end{align*}
for any $a,a' \in [B]$ and any $i,i' \in pos^1 ,j,j' \in ker^1$ such that $i + j, i'+j' \in pos^0$; and $\Sigmain(g, g') = 0$ for any other pairs of input G-vars.
In addition, $\muin = 0$ identically.

\paragraph{Computing $\tmu$}
As before, from \cref{eqn:extendedMuSigma}, it is easy to see that $\muin = 0$ implies $\tmu = 0$.

\paragraph{Computing $\tSigma$}
By straightforward applications of \cref{eqn:extendedMuSigma}, we obtain the following
\begin{lemma}\label{eqn:CNNRec}
For any $a, a' \in [B]$ and any $i , i' \in pos^1$,
\begin{align*}
    \tSigma(h^{1a}_i, h^{1a'}_{i'})
        &=
            \sum_{j, j' \in ker^1} \Sigmain(W^1_j x^a_{i+j}, W^1_{j'} x^{a'}_{i'+j'})
            .
            \numberthis\label{eqn:CNNlayer1}
\end{align*}
In addition, for any $2 \le l \le L$ and any $j, j' \in ker^2$ such that $i+j, i'+j' \in pos^1$,
\begin{align*}
    \tSigma(h^{la}_{i;j}, h^{la'}_{i';j'})
        &=
            \sigma_w^2 \EV_{z_1,z_2}\phi(z_1)\phi(z_2)
            \ind(j=j')
            ,
            \numberthis\label{eqn:CNNprelayerl}
\end{align*}
where $(z_1, z_2) \sim \Gaus\lp 0, \tSigma|_{h^{l-1,a}_{i}, h^{l-1,a'}_{i'}}\rp$.
Finally, for any $i, i' \in pos^l$,
\begin{align*}
    \tSigma(h^{la}_i, h^{la'}_{i'})
        &=
            \sum_{j, j'}
                \tSigma(h^{la}_{i;j}, h^{la'}_{i';j'})
            \numberthis\label{eqn:CNNlayerl}
\end{align*}
where the sum is over all $j,j' \in ker^l$ such that $i+j, i' + j' \in pos^{l-1}$.
\end{lemma}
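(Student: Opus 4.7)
The plan is to prove each of the three identities by directly unfolding the recursive definition of $\tSigma$ from \cref{eqn:extendedMuSigma}, matching each variable involved to the rule by which it was introduced in \cref{tp:conv2}. This mirrors the argument used for \cref{lemma:MLPrec} and \cref{lemma:tildehRec} and requires no new idea; the work is in carefully identifying which case of \cref{eqn:extendedMuSigma} applies at each step.

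For \cref{eqn:CNNlayer1}, I would observe that $h^{1a}_i := \sum_{j \in ker^1} W^1_j x^a_{i+j}$ is introduced by \ref{linetype:lincomb} with the $W^1_j x^a_{i+j}$ being input G-vars. Applying the \ref{linetype:lincomb} case of \cref{eqn:extendedMuSigma} once along each argument decomposes $\tSigma(h^{1a}_i, h^{1a'}_{i'})$ as the double sum $\sum_{j,j'} \tSigma(W^1_j x^a_{i+j}, W^1_{j'} x^{a'}_{i'+j'})$, and each summand reduces to the corresponding entry of $\Sigmain$ by the input G-var case. For \cref{eqn:CNNlayerl}, $h^{la}_i := \sum_j h^{la}_{i;j}$ is likewise a \ref{linetype:lincomb}, so a double application of the \ref{linetype:lincomb} case yields the stated expansion directly.

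The only step with a genuine case split is \cref{eqn:CNNprelayerl}, where $h^{la}_{i;j} := W^l_j x^{l-1,a}_{i+j}$ is introduced by \ref{linetype:MatMul} with the H-var $x^{l-1,a}_{i+j} = \phi(h^{l-1,a}_{i+j})$. If $j = j'$, both sides of $\tSigma(h^{la}_{i;j}, h^{la'}_{i';j'})$ use the \emph{same} A-var $W^l_j$, so the \ref{linetype:MatMul} clause of \cref{eqn:extendedMuSigma} fires and produces $\sigma_w^2 \EV_Z \phi(Z^{h^{l-1,a}_{i+j}})\phi(Z^{h^{l-1,a'}_{i'+j}})$, which collapses to a 2D Gaussian expectation over $(z_1, z_2)$ drawn from $\tSigma$ restricted to the relevant pair of G-vars, since the integrand depends on only those two coordinates of $Z$. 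If $j \ne j'$, different A-vars are used and none of the \ref{linetype:lincomb}/\ref{linetype:MatMul}/input cases match, so the ``otherwise'' clause gives $0$. Packaging the two subcases together yields the indicator $\ind(j = j')$.

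The main (minor) obstacle is the A-var bookkeeping in the MatMul step: one must verify that the definition of \cref{eqn:extendedMuSigma} forces the shared-$W$ requirement strictly, so that distinct kernel-position weights $W^l_j, W^l_{j'}$ are treated as independent and fall through to the zero case. Once this is pinned down, the entire lemma is a direct rewriting via \cref{eqn:extendedMuSigma}, so no further machinery (e.g., \cref{thm:netsorMasterTheorem} itself) is needed --- the lemma is purely about the recursive formulas defining $\tSigma$.
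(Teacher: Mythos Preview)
Your proposal is correct and matches the paper's own treatment, which simply states that the lemma follows ``by straightforward applications of \cref{eqn:extendedMuSigma}'' without further detail. Your case analysis for \cref{eqn:CNNprelayerl}---splitting on whether $j=j'$ to decide between the \ref{linetype:MatMul} clause and the ``otherwise'' clause---is exactly the intended argument, and your handling of the \ref{linetype:lincomb} expansions for \cref{eqn:CNNlayer1} and \cref{eqn:CNNlayerl} is likewise on target.
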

These equations are all we need to compute the GP kernel $\KK$.

\paragraph{Computing the GP kernel $\KK$}
By \cref{cor:GPConv}, the output of the CNN converges in distribution to $\Gaus(0, \KK)$ where $\KK \in \R^{B \times B}$ is given by
\begin{align*}
\KK_{aa'} \defeq \lim_{n \to \infty} \Cov\lp \f{v^\trsp \bar x^{La}}{\sqrt n}, \f{v^\trsp \bar x^{La'}}{\sqrt{n}} \rp 
    &= \sigma_v^2 \EV_Z \lp \f 1 {|pos^L|} \sum_i \phi(Z^{h^{La}_i}) \rp
                        \lp \f 1 {|pos^L|} \sum_i \phi(Z^{h^{La'}_i}) \rp
                        \\
    &= \sigma_v^2 \EV_{i,i' \in pos^L} \EV_Z \phi(Z^{h^{La}_i}) \phi(Z^{h^{La'}_{i'}})
    \numberthis\label{eqn:CNNGPkernel}
\end{align*}
where $Z \sim \Gaus(\tmu, \tSigma)$ and where in the last expression, $i,i'$ are sampled independently and uniformly from $pos^L$.
Since $\tSigma(h^{La}_i, h^{La'}_{i'})$ can be obtained recursively via \cref{eqn:CNNRec}, one can compute $\KK$ easily via recursion.
But we can do better by vectorizing the whole computation.

\paragraph{Vectorized Implementation}

\newcommand{\vecSigma}{\mathbf{\Sigma}}

Let us define the 4-tensor
\[\vecSigma^l = \{\vecSigma^l_{aa'ii'}: a, a' \in [B],
i,i' \in pos^l\}\]
by
\begin{align*}
\vecSigma^l_{aa'ii'} \defeq \tSigma(h^{la}_i, h^{la'}_{i'})
.
\end{align*}
Then \cref{eqn:CNNlayer1} corresponds to
\begin{align*}
\vecSigma^1_{aa'ii'} = \sigma_w^2 \sum_{j,j'} x^a_{i+j}{}^\trsp x^{a'}_{i'+j'} / m
\end{align*}
where the sum is over all $j,j' \in ker^1$ such that $i+j, i'+j' \in pos^0$.
For $l = 2, \ldots, L-1$, \cref{eqn:CNNprelayerl,eqn:CNNlayerl} can be vectorized as
\begin{align*}
    \vecSigma^{l}_{aa'} &= \kappa^l * \hat \vecSigma^{l}_{aa'},\quad
    \text{where}\quad
    \hat \vecSigma^{l} = \sigma_w^2 \Vt\phi(\vecSigma^{l-1}),
\end{align*}
treating $\vecSigma^{l-1}$ as a $(B \cdot pos^{l-1}) \times (B \cdot pos^{l-1})$ matrix, and
$\kappa^l *$ is the ``convolution''
\begin{align*}
    (\kappa^l * \hat \vecSigma^{l}_{aa'})_{ii'}
        &=
            \sum_{
            \substack{j,j'\in ker^l\\i+j \in pos^{l-1}\\i'+j'\in pos^{l-1}
            }}
            (\hat \vecSigma^l_{aa'})_{i+j, i'+j'}
            .
            \numberthis\label{eqn:kernelconv}
\end{align*}
This $\kappa^l$ convolution can indeed be implemented as a (CUDA) convolutional operation, vectorized over all $a,a'$.

Finally, to obtain the infinite-width GP kernel $\KK$ of the CN output, we can vectorize \cref{eqn:CNNGPkernel} as
\begin{align*}
\KK_{aa'} = \sigma_v^2 E * \hat \vecSigma^L,\quad\text{where}\quad
\hat \vecSigma^L = \Vt\phi(\vecSigma^{L-1}),
\end{align*}
and $E*$ denotes the spacial averaging
\begin{align*}
(E * \hat \vecSigma^L)_{aa'} \defeq
    \EV_{i,i'\in pos^L} \hat \vecSigma^L_{aa'ii'}
.
            \numberthis\label{eqn:kernelGAP}
\end{align*}
Again, $E*$ can be implemented as a convolution operator vectorized over all $a,a'$.

In summary,
\begin{tcolorbox}[title=Computing CNN Kernel]
Suppose we have an $L$-layer convolutional neural network with coordinatewise nonlinearity $\phi$ but no bias, as in \cref{tp:conv2}, that takes images with $m$ channels and of size $pos^0 \times pos^0$.
Suppose we have a set of inputs $x^1, \ldots, x^B$ where each input $x^a$ is given as $x^a = \{x^a_i \in \R^m\}_{i \in pos^0}$.
Then the CNN outputs converge in distribution to a Gaussian $\Gaus(0, \KK)$ where $\KK \in \R^{B \times B}$ can be calculated as follows.
\begin{enumerate}
    \item Initialize $\vecSigma^1 \in \R^{B \times B \times pos^l \times pos^l}$ by 
        \begin{align*}
        \vecSigma^1_{aa'ii'} = \sigma_w^2 \sum_{j,j'} x^a_{i+j}{}^\trsp x^{a'}_{i'+j'} / m
        \end{align*}
        where the sum is over all $j,j' \in ker^1$ such that $i+j, i'+j' \in pos^0$.
    \item For $l = 2, \ldots, L-1$, do 
        \begin{enumerate}
        \item $\vecSigma^l \gets \sigma_w^2 \kappa^l * \Vt\phi (\vecSigma^{l-1})$ (see \cref{eqn:kernelconv} for $\kappa^l$'s definition)
        \end{enumerate}
    \item return $\KK \gets \sigma_v^2 E * \Vt\phi\lp\vecSigma^{L-1}\rp$
        (see \cref{eqn:kernelGAP} for $E$'s definition)
\end{enumerate}
\end{tcolorbox}

\subsection{GRU \texorpdfstring{(\cref{tp:GRU})}{}}
\label{sec:GRUkernel}

We demonstrate how to compute the GP kernel for GRU as encoded in \netsor by \cref{tp:GRU}.
A key distinguishing feature of this conversion is that we will need to compute high dimensional Gaussian expectations, where the dimension is as large as the number of timesteps unrolled, in contrast to the simple RNN case \cref{tp:RNN}.
These Gaussian expectations correspond to the expected values of multiplication of the gate values across time.

We first proceed with a single input sequence, as in \cref{tp:GRU}.
We then comment on the generalization to multiple sequences at the end.

\paragraph{Setup}
We will obtain the $\tmu$ and $\tSigma$ for \cref{tp:GRU} with 
\begin{equation}
\begin{aligned}
(U_\zz)_{\alpha \beta}, (U_\rr)_{\alpha \beta}, (U_\hh)_{\alpha \beta} &\sim \Gaus(0, \sigma_U^2/n),&
(W_\zz)_{\alpha\beta}, (W_\rr)_{\alpha\beta}, (W_\hh)_{\alpha\beta} &\sim \Gaus(0, \sigma_W^2/n),\\
(b_\zz)_\alpha, (b_\rr)_\alpha, (b_\hh)_\alpha &\sim \Gaus(0, \sigma_b^2),&
v_\alpha \sim \Gaus(0, \sigma_v^2),&\text{ and }h^0 = 0.
\end{aligned}
\label{eqn:GRUinit}
\end{equation}

Suppose the input tokens $x^i$ to the GRU have dimension $m$.
The randomization of $U$ induces the following covariance structure in the input token embeddings
\begin{align*}
    \Sigmain(Ux, Uy) = \sigma_U^2 x^\trsp y /m
\end{align*}
for any $x, y \in \{x^i\}_{i=1}^T$.
For any other pair $g, g'$ of input G-vars, $\Sigmain(g, g') = 0$.
Additionally, $\muin(g) = 0$ for all input G-vars $g$.

\paragraph{Computing $\tmu$}
In fact, one can quickly see that $\tmu(g) = 0$ for \emph{all} G-vars $g$.

\paragraph{Computing $\tSigma$}
Applying \cref{eqn:extendedMuSigma} to \cref{tp:GRU} and some simplification in the manner of \cref{lemma:tildehRec}'s proof yields, for any two times $t, s$,
\begin{align*}
\tSigma(\tilde z^t, \tilde z^s)
    &=
        \tSigma(h_\zz^{t}, h_\zz^{s}) + \sigma_{U}^2 x^t{}^\trsp x^s/m + \sigma_b^2
        \numberthis\label{eqn:GRUrecz}
        \\
\tSigma(\tilde r^t, \tilde r^s)
    &=
        \tSigma(h_\rr^{t}, h_\rr^{s}) + \sigma_U^2 x^t{}^\trsp x^s/m + \sigma_b^2
        \numberthis\label{eqn:GRUrecr}
        \\
\tSigma(\tilde h^t, \tilde h^s)
    &=
        \tSigma(h_\hh^{t}, h_\hh^{s}) + \sigma_U^2 x^t{}^\trsp x^s/m + \sigma_b^2
        \numberthis\label{eqn:GRUrech}
        \\
\tSigma(h_\zz^{t}, h_\zz^{s})
    &= \tSigma(h_\rr^{t}, h_\rr^{s})\\
    &=
        \sigma_W^2 \sum_{i=1}^t \sum_{j=1}^s \bigg\{
        \EV \phi(Z^{\tilde h^i})\phi(Z^{\tilde h^j})
        \\
    &\phantom{{}={}}\quad
        \times 
            \EV 
            \left[\sigma(Z^{\tilde z^i}) \prod_{p = i+1}^t (1 - \sigma(Z^{\tilde z^{p}}))\right]
            \times \left[
            \sigma(Z^{\tilde z^j}) \prod_{q = j+1}^s (1 - \sigma(Z^{\tilde z^{q}}))\right]
            \bigg\}
          \numberthis\label{eqn:GRUrechz}
          \\
\tSigma(h_\hh^{t}, h_\hh^{s})
    &=
        \sigma_W^2 \tSigma(h_\zz^{t}, h_\zz^{s}) \EV \sigma(Z^{\tilde r^t})\sigma(Z^{\tilde r^s})
        \numberthis\label{eqn:GRUrechh}
\end{align*}
where expectations are taken over $Z = \{Z^g\}_{g\text{ is G-var}} \sim \Gaus(\tmu, \tSigma)$, which has one component for each G-var in the program.
Then, applying \cref{cor:GPConv}, we see that the output of the GRU
\begin{equation*}
(v^\trsp h^1/\sqrt{n}, \ldots, v^\trsp h^T/\sqrt{n})
\end{equation*}
converges in distribution to a zero mean Gaussian distribution with covariance matrix $\KK = \{\KK_{ts}\}_{t, s=1}^T$,
\begin{align*}
\KK_{ts}
    &=
    \sigma_v^2 \sum_{i=1}^t \sum_{j=1}^s \bigg\{
        \EV \phi(Z^{\tilde h^i})\phi(Z^{\tilde h^j})
        \\
    &\phantom{{}={}}\quad
        \times 
            \EV 
            \left[\sigma(Z^{\tilde z^i}) \prod_{p = i+1}^t (1 - \sigma(Z^{\tilde z^{p}}))\right]
            \times \left[
            \sigma(Z^{\tilde z^j}) \prod_{q = j+1}^s (1 - \sigma(Z^{\tilde z^{q}}))\right]
            \bigg\}
            .
        \numberthis\label{eqn:GRUoutputcov}
\end{align*}

\cref{eqn:GRUrecr,eqn:GRUrecz,eqn:GRUrech,eqn:GRUrechz,eqn:GRUrechh,eqn:GRUoutputcov} yield the complete set of equations to compute the output covariance $\KK$, but to do so efficiently rests entirely on evaluating the the possibly $T$-dimensional integral behind
\begin{equation}
\EV 
    \left[\sigma(Z^{\tilde z^i}) \prod_{p = i+1}^t (1 - \sigma(Z^{\tilde z^{p}}))\right]
    \times \left[
    \sigma(Z^{\tilde z^j}) \prod_{q = j+1}^s (1 - \sigma(Z^{\tilde z^{q}}))\right]
    .
    \label{eqn:highdimSigmoidExpectation}
\end{equation}

For general $\sigma$ and $\phi$, this is hopeless.
However, when $\phi = \erf$ and $\sigma = (1 + \erf)/2$ --- which approximate $\phi = \tanh$ and $\sigma = $ sigmoid --- \cref{eqn:highdimSigmoidExpectation} can in fact be evaluated efficiently by reducing it to a Gaussian orthant probability, which can be evaluated efficiently \cite{mvtnorm}:

\begin{lemma}\label{lemma:erfTrick}
Let $\phi = \erf$ and $\sigma = (1 + \erf)/2$.
Then for any $\mu \in \R^T$ and any PSD $\Sigma \in \R^{T \times T}$,
\begin{align*}
    \EV_{x \sim \Gaus(\mu, \Sigma)} \prod_{i=1}^T \phi(x_i)
    &= \EV_{x \sim \Gaus(\mu, \Sigma + \f 1 2 I)} \prod_{i=1}^T \sgn(x_i)
        \\
    \EV_{x \sim \Gaus(\mu, \Sigma)} \prod_{i=1}^T \sigma(x_i)
    &= \EV_{x \sim \Gaus(\mu, \Sigma + \f 1 2 I)} \prod_{i=1}^T \ind(x_i \ge 0)\\
    &= \Pr_{x \sim \Gaus(\mu, \Sigma + \f 1 2 I)} [x \ge 0]
    .
\end{align*}
\end{lemma}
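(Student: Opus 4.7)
The plan is to express both $\erf$ and $\sigma$ as one-dimensional Gaussian expectations of $\sgn$ and $\ind(\cdot \ge 0)$ respectively, then use independence of auxiliary noise to absorb an extra $\f{1}{2} I$ into the original covariance, producing a single expectation over $\Gaus(\mu, \Sigma + \f{1}{2} I)$.

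First I would establish the one-variable identity $\erf(x) = \EV_{z \sim \Gaus(0, 1/2)}[\sgn(x+z)]$. Since the density of $\Gaus(0, 1/2)$ is $\pi^{-1/2} e^{-t^2}$, a direct computation of the cumulative distribution function gives $\Pr_z[z \le x] = \f{1}{2} + \f{1}{2}\erf(x)$, from which
\begin{align*}
\EV_z \sgn(x+z) = 1 - 2 \Pr_z[z < -x] = \erf(x).
\end{align*}
Dividing by two and adding $\f{1}{2}$ immediately yields the analogue $\sigma(x) = \EV_z \ind(x+z \ge 0)$ for the shifted-and-rescaled sigmoid.

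Next I would introduce iid auxiliary variables $z_1, \ldots, z_T \sim \Gaus(0, 1/2)$, independent of $x$. By independence and Fubini,
\begin{align*}
\prod_{i=1}^T \erf(x_i) = \prod_{i=1}^T \EV_{z_i}\sgn(x_i + z_i) = \EV_z \prod_{i=1}^T \sgn(x_i + z_i).
\end{align*}
Taking $\EV_{x \sim \Gaus(\mu, \Sigma)}$ on both sides and swapping the order of integration produces $\EV_{x,z}\prod_i \sgn(x_i + z_i)$. Setting $y \defeq x + z$ and using that $x$ and $z$ are independent Gaussians, so that $y \sim \Gaus(\mu, \Sigma + \f{1}{2} I)$, gives the first claim. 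The sigmoid statement follows identically with $\sgn$ replaced by $\ind(\cdot \ge 0)$; the final equality $\EV \prod_i \ind(y_i \ge 0) = \Pr[y \ge 0]$ is just the definition of the orthant probability as the expectation of its indicator.

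There is essentially no obstacle here --- the whole argument is bookkeeping on top of the elementary fact that a sum of independent Gaussians is Gaussian with summed covariances. The only point requiring care is that the auxiliary noise must have variance exactly $1/2$ so that the one-variable identity reproduces $\erf$ with its standard normalization $\f{2}{\sqrt\pi}\int_0^x e^{-t^2} \dd t$; any other choice would rescale the argument of $\erf$ and break the match with the $\Sigma + \f{1}{2} I$ appearing on the right-hand side.
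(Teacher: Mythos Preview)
Your proof is correct and takes a genuinely different route from the paper. The paper proceeds via Fourier analysis on tempered distributions: it writes $\erf$ as the Fourier transform of $\frac{-i\sqrt{2}}{\sqrt{\pi}}\,\mathrm{p.v.}\,\frac{e^{-t^2/4}}{t}$, pairs this against the Gaussian density, uses that the Fourier transform of $\gamma(\cdot;\Sigma)$ is $(2\pi)^{-T/2}e^{-\frac{1}{2}t^\trsp \Sigma t}$, absorbs the factor $e^{-\sum t_i^2/4}$ into the exponent to produce $\Sigma + \frac{1}{2}I$, and then inverts using $\Fourier\{\mathrm{p.v.}\,t^{-1}\} = i\sqrt{\pi/2}\,\sgn$. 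Your argument replaces all of this with the single probabilistic identity $\erf(x) = \EV_{z\sim\Gaus(0,1/2)}\sgn(x+z)$ and the closure of Gaussians under independent sums. This is strictly more elementary --- no distributions, no principal values --- and it makes the appearance of $\frac{1}{2}I$ transparent as the covariance of the auxiliary noise rather than as a byproduct of manipulating exponents in Fourier space. The paper's approach does have the minor virtue of being mechanical once the Fourier representations are in hand, but your route is shorter and requires less machinery.
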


\begin{remk}
Observe that if $T = 2$, then \cref{lemma:erfTrick} recovers the arccosine kernel of erf via \cref{fact:Verf}:
\begin{align*}
    \EV\left[\erf(x_1) \erf(x_2): x \sim \Gaus(0, \Sigma)\right]
    &=
        \f 2 \pi \arcsin \f{\Sigma_{12}}{\sqrt{(\Sigma_{11} + \f 1 2)(\Sigma_{22} + \f 1 2)}}.
\end{align*}
\end{remk}

\begin{algorithm}[t]
    \caption{GRU, Multiple Input Sequences}
    \label{tp:GRU2}
    \begin{algorithmic}
      \State {\it // Embeddings of $B$ input sequences}
      \State {\it // with $a$th sequence having length $T_a$}
      \Require $\{U_\zz x^{ta}: \Gtype(n)\}_{a \in [B], t \in [T_a]}$
      \Require $\{U_\rr x^{ta}: \Gtype(n)\}_{a \in [B], t \in [T_a]}$
      \Require $\{U_\hh x^{ta}: \Gtype(n)\}_{a \in [B], t \in [T_a]}$
      \State {\it // Parameters}
      \Require $W_\zz, W_\rr, W_\hh: \Atype(n, n)$
      \Require $b_\zz, b_\rr, b_\hh: \Gtype(n)$
      \State {\it // Initial GRU state}
      \Require $h^0: \Gtype(n)$
      \State {\it // Readout layer}
      \Require $v: \Gtype(n)$
      \For{$a \in [B]$}
        \For{$t \in [T_a]$}
          \State $h_\zz^{ta} := W_\zz h^{t-1,a}: \Gtype(n)$
          \State $\tilde z^{ta} := h_\zz^{ta} + U_\zz x^{ta} + b_\zz: \Gtype(n)$
          \State $h_\rr^{ta} := W_\rr h^{t-1,a}: \Gtype(n)$
          \State $\tilde r^{ta} := h_\rr^{ta} + U_\rr x^{ta} + b_\rr: \Gtype(n)$
          \State {\it // Morally, $\hat h^{t-1,a} = \sigma(\tilde r^{t-1,a}) \odot h^{t-1,a}$, but we need to unroll $h^{t-1,a}$ to apply \ref{linetype:nonlin}}
          \State $\hat h^{t-1, a} := \sigma(\tilde r^{t-1, a}) \odot$
          \State $\quad
            \bigg(
              h^0 \odot \bigodot_{i=1}^{t-1} (1 - \sigma(\tilde z^{ia}))
              + \sum_{j=1}^{t-1} \phi(\tilde h^{ja}) \odot \sigma(\tilde z^{ja}) \odot \bigodot_{l=j+1}^{t-1} (1- \sigma(\tilde z^{la}))
            \bigg): \Htype(n)$
          \State $h_\hh^{ta} := W_\hh \hat h^{t-1,a}: \Gtype(n)$
          \State $\tilde h^{ta} := h_\hh^{ta} + U_\hh x^{ta} + b_\hh: \Gtype(n)$
          \State
          {\it // Unrolling $h^t$ to a coordinatewise function of G-vars}
          \State  $h^t := h^0 \odot \bigodot_{i=1}^t (1 - \sigma(\tilde z^{ia}))
          + \sum_{j=1}^t \phi(\tilde h^{ja}) \odot \sigma(\tilde z^{ja}) \odot \bigodot_{l=j+1}^t (1- \sigma(\tilde z^{la})): \Htype(n)$
        \EndFor
      \EndFor  
      \Ensure $\{v^\trsp h^{ta}/\sqrt{n}\}_{a \in [B], t \in [T_a]}$
    \end{algorithmic}
\end{algorithm}

To apply \cref{lemma:erfTrick}, we can express \cref{eqn:highdimSigmoidExpectation} as
\begin{align*}
    &\EV 
    \left[\sigma(Z^{\tilde z^i}) \prod_{p = i+1}^t \sigma(-Z^{\tilde z^{p}})\right]
    \times \left[
    \sigma(Z^{\tilde z^j}) \prod_{q = j+1}^s \sigma(-Z^{\tilde z^{q}})\right]
        \\
    &=
        \EV 
        \left[\sigma(\hat Y^i ) \prod_{p = i+1}^s \sigma(\hat Y^{p})\right]
        \times \left[
        \sigma(\check Y^j) \prod_{q = j+1}^s \sigma(\check Y^{q})\right]
        \\
    &=
        \EV 
        \left[\prod_{p = i}^t \sigma(\hat Y^{p})\right]
        \times \left[
        \prod_{q = j}^s \sigma(\check Y^{q})\right]
\end{align*}
where $(\hat Y^i, \ldots, \hat Y^t, \check Y^j, \ldots, \check Y^s) \sim \Gaus(\nu, \Omega)$ with $b$ and $\Omega$ given as follows
\begin{align*}
    \nu(\hat Y^i) &= \tmu(\tilde z^i)&
    \nu(\check Y^j) &= \tmu(\tilde z^j)\\
    \nu(\hat Y^{p}) &= -\tmu(\tilde z^{p}), \forall p \ge i + 1&
    \nu(\check Y^{q}) &= -\tmu(\tilde z^{q}), \forall q \ge j + 1\\
\end{align*}
\begin{align*}
    \Omega(\hat Y^{p}, \check Y^{q}) &=
    \begin{cases}
        \tSigma(\tilde z^{p}, \tilde z^{q})
            &   \text{if $p \ge i + 1, q \ge j + 1$, or
                        $p = i, q = j$}\\
        -\tSigma(\tilde z^{p}, \tilde z^{q})
            &   \text{otherwise.}
    \end{cases}\\
    \Omega(\hat Y^{p}, \hat Y^{p'}) &=
    \begin{cases}
        \tSigma(\tilde z^{p}, \tilde z^{p'})
            &   \text{if $p, p' \ge i + 1$, or
                        $p = p' = i$}\\
        -\tSigma(\tilde z^{p}, \tilde z^{p'})
            &   \text{otherwise.}
    \end{cases}\\
    \Omega(\check Y^{q}, \check Y^{q'}) &=
    \begin{cases}
        \tSigma(\tilde z^{q}, \tilde z^{q'})
            &   \text{if $q, q' \ge j + 1$, or
                        $q = q' = j$}\\
        -\tSigma(\tilde z^{q}, \tilde z^{q'})
            &   \text{otherwise.}
    \end{cases}
\end{align*}

Using \cref{lemma:erfTrick}, one then has
\begin{align*}
    &\EV 
    \left[\sigma(Z^{\tilde z^i}) \prod_{p = i+1}^t \sigma(-Z^{\tilde z^{p}})\right]
    \times \left[
    \sigma(Z^{\tilde z^j}) \prod_{q = j+1}^s \sigma(-Z^{\tilde z^{q}})\right]
        \\
    &=
        \Pr\left[ X \ge 0 : X \sim \Gaus\lp \nu, \f 1 2 I + \Omega\rp\right]
        .
        \numberthis \label{eqn:GRUintegralSimplify}
\end{align*}

If we two input sequences, the equations for recursively computing $\tSigma$ and of $\KK$ are similar to the above, and we summarize them below
\begin{tcolorbox}[title=Computing the GRU kernel]
Consider a GRU processing $B$ sequences in the fashion of \cref{tp:GRU2}, with $\phi = \erf$ and $\sigma = (1 + \erf)/2$.
Sample the GRU's parameters as in \cref{eqn:GRUinit}.
Then for sequence numbers $a,b  \in [B]$ and time steps $2 \le t \le T_a, 2 \le s \le T_b,$
we have the following recurrence relations
\begin{align*}
\tSigma(\tilde z^{ta}, \tilde z^{sb})
    &=
        \tSigma(h_\zz^{ta}, h_\zz^{sb}) + \sigma_{U}^2 x^{ta}{}^\trsp x^{sb} /m + \sigma_b^2
        \\
\tSigma(\tilde r^{ta}, \tilde r^{sb})
    &=
        \tSigma(h_\rr^{ta}, h_\rr^{sb}) + \sigma_U^2 x^{ta}{}^\trsp x^{sb} /m + \sigma_b^2
        \\
\tSigma(\tilde h^{ta}, \tilde h^{sb})
    &=
        \tSigma(h_\hh^{ta}, h_\hh^{sb}) + \sigma_U^2 x^{ta}{}^\trsp x^{sb} /m + \sigma_b^2
        \\
\tSigma(h_\zz^{ta}, h_\zz^{sb})
    &= \tSigma(h_\rr^{ta}, h_\rr^{sb})\\
    &=
        \sigma_W^2 \sum_{i=1}^t \sum_{j=1}^s 
        \zeta_{i:t,j:s}^{ab}
        \EV \phi(Z^{\tilde h^{ia}})\phi(Z^{\tilde h^{jb}})
          \\
\tSigma(h_\hh^{ta}, h_\hh^{sb})
    &=
        \sigma_W^2 \tSigma(h_\zz^{ta}, h_\zz^{sb}) \EV \sigma(Z^{\tilde r^{ta}})\sigma(Z^{\tilde r^{sb}})
\end{align*}
with initial conditions
\begin{align*}
\tSigma(\tilde z^{ta}, \tilde z^{sb})
=\tSigma(\tilde r^{ta}, \tilde r^{sb})
=\tSigma(\tilde h^{ta}, \tilde h^{sb})
    &= 0
        &\text{if $t=1$ or $s=1$}
        ,
\end{align*}
and the output covariance $\KK$ of the GRU outputs in the large $n$ limit can be computed as
\begin{align*}
\KK_{ta, sb}
    &=
        \lim_{n \to \infty}
        \Cov\lp \f{v^\trsp x^{ta}}{\sqrt n},
                \f{v^\trsp x^{sb}}{\sqrt n}
            \rp
        \\
    &=
        \sigma_v^2 \sum_{i=1}^t \sum_{j=1}^s
\zeta_{i:t,j:s}^{ab}
        \EV \phi(Z^{\tilde h^{ia}})\phi(Z^{\tilde h^{jb}})
\end{align*}
where $Z \sim \Gaus(0, \tSigma)$ and
\begin{align*}
\zeta_{i:t,j:s}^{ab}
    &\defeq
        \EV_Z
        \left[\sigma(Z^{\tilde z^{ia}}) \prod_{p = i+1}^t (1 - \sigma(Z^{\tilde z^{p a}}))\right]
        \times \left[
        \sigma(Z^{\tilde z^{jb}}) \prod_{q = j+1}^s (1 - \sigma(Z^{\tilde z^{qb}}))\right]
        ,
\end{align*}
which can be reduced to a computation of orthant probability in the fashion of \cref{eqn:GRUintegralSimplify}.
\end{tcolorbox}

The above equations can be turned into a
(relatively) efficient algorithm for computing the GP kernel of a GRU.
Our repo \repo{} shows a reference implementation of it (allowing slightly more general initialization hyperparameters).
It leverages the \texttt{R} package \texttt{mvtnorm} \citep{mvtnorm} to evaluate the Gaussian orthant probability involved in \cref{eqn:GRUintegralSimplify}.

In the rest of the section, we prove \cref{lemma:erfTrick}.

\newcommand{\Fourier}{\mathcal{F}}
\newcommand{\pv}{\mathrm{p.v.}}

\paragraph{Review of (Tempered) Distributions}
Before we begin the proof of \cref{lemma:erfTrick}, we briefly recall the notion of a tempered distribution, which is a ``pseudo-function'' that is formally defined as an element of the dual of Schwartz space (intuitively, the space of functions with rapidly decreasing derivatives of all orders) \citep{stein_shakarchi_2011}.
Given a Schwartz function $f$ and a tempered distribution $\tau$, the value of $\tau$ on $f$ will be denoted here by
\begin{align*}
    \la \tau, f \ra.
\end{align*}
For example, if $\tau$ is a locally-integrable function, then $\tau$ is also a tempered distribution and $\la \tau, f\ra$ can be defined by
\begin{align*}
    \la \tau, f\ra = \int \tau(x) f(x) \dd x
    .
\end{align*}
As all Schwartz functions have Fourier transforms \citep{stein_shakarchi_2011}, any tempered distribution has Fourier transform defined by
\begin{align*}
    \la \Fourier\{\tau\}, f \ra \defeq \la \tau, \Fourier\{f\}\ra.
\end{align*}
In what follows, notationally, Fourier transform will convert functions or distributions in variable $t$ to functions to distributions in variable $x$, or vice versa.
See \cite{stein_shakarchi_2011} for more background on distributions.

\begin{proof}[Proof of \cref{lemma:erfTrick}]

As a tempered distribution, $\phi = \erf$ can be expressed as
\begin{align*}
\phi(x) 
    &= \Fourier\left\{\f{-i\sqrt 2}{\sqrt\pi} \pv \f{e^{-t^2/4}}t\right\}(x)
        \\
    &= \f 1 {\sqrt{2\pi}} \pv \int e^{ixt}\f{-i\sqrt 2}{\sqrt\pi} \f{e^{-t^2/4}}t \dd t
    ,
\end{align*}
where $\pv$ denotes principal value integration
\begin{align*}
    \pv \int e^{ixt}\f{-i\sqrt 2}{\sqrt\pi} \f{e^{-t^2/4}}t \dd t
    &\defeq
    \lim_{\varepsilon \to 0}
        \lp\int_{-\infty}^{-\varepsilon} + \int_{\varepsilon}^\infty\rp e^{ixt}\f{-i\sqrt 2}{\sqrt\pi} \f{e^{-t^2/4}}t \dd t
    .
\end{align*}

Over multiple variables, because Fourier transform over $\R^T$ is equivalent to applying 1D Fourier transform over each coordinate, we have
\begin{align*}
\prod_{i=1}^T \phi(x_i)
    = \Fourier\left\{\left(\f{-i\sqrt 2}{\sqrt\pi}\right)^T \pv \f{e^{-\sum_{i=1}^T t_i^2/4}}{\prod_{i=1}^T t_i}\right\}(x)
    = \f 1 {(2\pi)^{T/2}} \pv \int e^{ix \cdot t }\lp\f{-i\sqrt 2}{\sqrt\pi}\rp^T \f{e^{-\sum_{i=1}^T t_i^2/4}}{\prod_{i=1}^T t_i} \dd t
    .
\end{align*}

Let $\gamma(x; \Sigma) \defeq (\det 2\pi \Sigma)^{-T/2} e^{-\f 1 2 x^\trsp \inv\Sigma x}$ be the density of $\Gaus(0, \Sigma)$ for \emph{nonsingular} $\Sigma$.
Note that $\Fourier\{\gamma(x; \Sigma)\}(t) = (2\pi)^{-T/2} e^{-\f 1 2 t^\trsp \Sigma t}$.
We thus have
\begingroup
\allowdisplaybreaks
\begin{align*}
    &\EV\left[\prod_{i=1}^T \phi(x_i): x \sim \Gaus(0, \Sigma)\right]\\
    &=
        \left\la
        \Fourier\left\{\lp\f{-i\sqrt 2}{\sqrt\pi}\rp^T \pv \f{e^{-\sum_{i=1}^T t_i^2/4}}{\prod_{i=1}^T t_i}\right\},\ 
        \gamma(x; \Sigma)
        \right\ra\\
    &=
        \left\la
        \lp\f{-i\sqrt 2}{\sqrt\pi}\rp^T \pv \f{e^{-\sum_{i=1}^T t_i^2/4}}{\prod_{i=1}^T t_i},\ 
        \Fourier\{\gamma(x; \Sigma)\}
        \right\ra\\
    &=
        \left\la
        \lp \f{-i\sqrt 2}{\sqrt\pi}\rp^T \pv \f{e^{-\sum_{i=1}^T t_i^2/4}}{\prod_{i=1}^T t_i},\ 
        (2\pi)^{-T/2} e^{-\f 1 2 t^\trsp \Sigma t}
        \right\ra\\
    &=
        \left(\f{-i\sqrt 2}{\sqrt\pi}\right)^T 
        \pv\int
            \f{e^{-\sum_{i=1}^T t_i^2/4}}{\prod_{i=1}^T t_i}
            (2\pi)^{-T/2} e^{-\f 1 2 t^\trsp \Sigma t}
            \dd t\\
    &=
        \left(\f{-i\sqrt 2}{\sqrt\pi}\right)^T 
        \pv\int
            \f{1}{\prod_{i=1}^T t_i}
            (2\pi)^{-T/2} e^{-\f 1 2 t^\trsp (\Sigma + \f 1 2 I) t}
            \dd t\\
    &=
        \left\la
            \left(\f{-i\sqrt 2}{\sqrt\pi}\right)^T \pv \f{1}{\prod_{i=1}^T t_i},\ 
        (2\pi)^{-T/2} e^{-\f 1 2 t^\trsp (\Sigma + \f 1 2 I) t}
        \right\ra\\
    &=
        \left\la
            \left(\f{-i\sqrt 2}{\sqrt\pi}\right)^T \pv \f{1}{\prod_{i=1}^T t_i},\ 
        \Fourier\{\gamma(x; \Sigma + \f 1 2 I)\}
        \right\ra\\
    &=
        \left\la 
            \Fourier\left\{\left(\f{-i\sqrt 2}{\sqrt\pi}\right)^T \pv \f{1}{\prod_{i=1}^T t_i}\right\},\ 
        \gamma(x; \Sigma + \f 1 2 I)
        \right\ra\\
    &=
        \left\la 
        \left(\f{-i\sqrt 2}{\sqrt\pi}\right)^T \lp i \sqrt{\pi/2}\rp^T \prod_{i=1}^T \sgn(x_i),\ 
        \gamma(x; \Sigma + \f 1 2 I)
        \right\ra\\
    &=
        \EV\left[\prod_{i=1}^T \sgn(x_i): x \sim \Gaus(0, \Sigma + \f 1 2 I)\right]
\end{align*}
\endgroup
where we used $\Fourier\{\pv \inv t\}(x) = i \sqrt{\pi/2} \sgn(x)$.
Similar reasoning show that this formula also works when the mean is nonzero:
\begin{align*}
    \EV\left[\prod_{i=1}^T \phi(x_i): x \sim \Gaus(\mu, \Sigma)\right]
    =
        \EV\left[\prod_{i=1}^T \sgn(x_i): x \sim \Gaus(\mu, \Sigma + \f 1 2 I)\right].
\end{align*}
A standard continuity argument yields the same formula for singular $\Sigma$.
Some simple arithmetic reduces the $\sigma$ case to $\phi$.
\begin{align*}
    2^{-T}\EV\left[\prod_{i=1}^T (1+\phi(x_i)): x \sim \Gaus(\mu, \Sigma)\right]
    =
        \EV\left[\prod_{i=1}^T \ind(x_i \ge 0): x \sim \Gaus(\mu, \Sigma + \f 1 2 I)\right]
        .
\end{align*}

\end{proof}

\section{\texorpdfstring{\netsorplus}{Netsor+} Master Theorem}
\label{sec:netsorplusMasterTheorem}

In this section, we state the Master Theorem for \netsorplus.
Its proof can be found in \cref{sec:netsorplusMasterTheoremProof}.

We first need to extend the notion of \emph{controlled functions} (\cref{defn:controlled}) to functions with parameters, and additionally require a smoothness assumption.
\begin{defn}\label{defn:parameterControlled}
We say a parametrized function $\phi(-; -): \R^k \times \R^l \to \R$ is \emph{parameter-controlled} at $\mathring {\bigtheta} \in \R^l $ if
\begin{enumerate}
    \item $\phi(-; \mathring{\bigtheta})$ is controlled, and\label{item:parameterControlled1}
    \item there are some controlled $\bar \phi:\R^k \to \R$ and some function $f: \R^l \to \R^{\ge 0} \cup \{\infty\}$ that has $f(\mathring{\bigtheta}) = 0$ and that is continuous at $\mathring \bigtheta$, such that, for all $x^1, \ldots, x^k \in \R$ and $\bigtheta \in \R^l$,
\[
|\phi(x^1, \ldots, x^k; \bigtheta) - \phi(x^1, \ldots, x^k; \mathring{\bigtheta})|
\le
f(\bigtheta) \bar \phi(x^1, \ldots, x^k)
.
\]
\label{item:parameterControlled2}
\end{enumerate}
\end{defn}
Note that $f$ and $\bar\phi$ here can depend on $\mathring{\bigtheta}$.
\begin{exmp}\label{exmp:parameterControl}
Any function that is (pseudo-)Lipschitz\footnote{A pseudo-Lipschitz function $\phi: \R^r \to \R$ is one that satisfies
\[
|\phi(x) - \phi(y)| \le C\|x - y\|(\|x\|^p + \|y\|^q + 1)
\]
for some constants $C, p, q \ge 0$.
Roughly speaking, pseudo-Lipschitz functions are those that have polynomially bounded weak derivatives.
}
in $x^1, \ldots, x^k$ and $\bigtheta$ is parameter-controlled.
An example of a discontinuous function that is parameter-controlled is $\phi(x; \theta) = \mathrm{step}(\theta x)$.
Then for $\mathring \theta \ne 0$, 
\[
|\phi(x; \theta) - \phi(x; \mathring \theta)| \le
\f{|\mathring \theta - \theta|}{|\mathring \theta|}
,
\]
so we can set $f(\theta) = \f{|\mathring \theta - \theta|}{|\mathring \theta|}$ and $\bar \phi = 1$ in \cref{defn:parameterControlled}.
\end{exmp}
\begin{assm}[Rank Stability]\label{assm:asRankStab}
For any $W: \Atype(n, m)$ and any collection $\Ss \sbe \{(h: \Htype(m)) \mid \exists (g: \Gtype(n)), g := W h\}$,
let $H \in \R^{m \times |\Ss|}$ be the matrix whose columns are $h \in \Ss$.
If $\f 1 m H^\trsp H \in \R^{|\Ss| \times |\Ss|}$ converges almost surely to some $\mathring C$ as $n, m \to \infty$ with convergent ratio $n/m \to \alpha$, then almost surely $\rank H = \rank \mathring C$ for all large $n$ and $m$.
\end{assm}
Note that a common situation where rank stability holds is when all limit $\mathring C$ matrices are full rank.
By the lower semi-continuity of rank, $\rank H = \rank \mathring C$ must hold asymptotically.
\begin{thm}\label{thm:Netsor+MasterTheorem}
Fix any \netsorplus program satisfying \cref{assm:equalDimNoNonlin+} and \cref{assm:asRankStab}.
Suppose for each parametrized nonlinearity $\phi(-; \bigvtheta)$ in the program (appearing as part of \ref{linetype:nonlin+}), the parameters $\bigvtheta$ are instantiated with random variables that converge almost surely to some deterministic vector $\mathring{\bigvtheta}$ as $n \to \infty$, and assume $\phi$ is parameter-controlled at $\mathring{\bigvtheta}$.
If $g^1, \ldots, g^M$ are all of the G-vars (including all input G-vars), then for any $l$, for any random vector $\bigtheta \in \R^l$ that converges almost surely to a deterministic vector $\mathring{\bigtheta}$, as $n \to \infty$, and for any $\psi: \R^M\times \R^l \to \R$ parameter-controlled at $\mathring{\bigtheta}$,
\begin{align*}
    \f 1 n \sum_{\alpha=1}^n \psi(g^1_\alpha, \ldots, g^M_\alpha; \bigtheta) \asto \EV_{Z \sim \Gaus(\tmu, \tSigma)}\psi(Z; \mathring{\bigtheta}),
    \numberthis\label{eqn:netsor+momentconv}
\end{align*}
where $\asto$ means almost sure convergence,
$Z \in \R^M$, and $\tmu \in \R^M$ and $\tSigma \in \R^{M \times M}$ are given in \cref{eqn:extendedMuSigma}, calculated by replacing each parametrized $\phi(-; \bigvtheta)$ with parameterless nonlinearity $\phi(-; \mathring{\bigvtheta})$.
\end{thm}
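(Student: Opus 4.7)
The plan is to reduce \cref{thm:Netsor+MasterTheorem} to the already-proved \cref{thm:netsorMasterTheorem} by constructing, alongside the given \netsorplus program $\mathcal{P}$, an ``idealized'' \netsor program $\mathcal{P}^\circ$ in which every parametrized nonlinearity $\phi(-; \bigvtheta)$ has been frozen to $\phi(-; \mathring{\bigvtheta})$ (a legal \ref{linetype:nonlin} rule by part \ref{item:parameterControlled1} of \cref{defn:parameterControlled}). Write $g^{\circ i}$ for the $i$th G-var of $\mathcal{P}^\circ$ produced in the syntactic slot of $g^i$ in $\mathcal{P}$. Applying \cref{thm:netsorMasterTheorem} to $\mathcal{P}^\circ$ gives $n^{-1}\sum_\alpha \psi(g^{\circ 1}_\alpha,\ldots,g^{\circ M}_\alpha; \mathring{\bigtheta}) \asto \EV_{Z\sim\Gaus(\tmu,\tSigma)}\psi(Z;\mathring{\bigtheta})$. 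The task then becomes twofold: (a) show that the original and idealized empirical sums differ by $o(1)$ a.s., and (b) show that replacing $\bigtheta$ by $\mathring{\bigtheta}$ in the outer test function is also $o(1)$ a.s.

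\textbf{Handling (b), the outer test-function parameter.} By the parameter-controlled property of $\psi$ at $\mathring{\bigtheta}$ with witnesses $(\bar\psi, f_\psi)$,
\[
\left|\f 1 n\sum_\alpha \psi(g^1_\alpha,\ldots; \bigtheta) - \f 1 n\sum_\alpha \psi(g^1_\alpha,\ldots; \mathring{\bigtheta})\right| \le f_\psi(\bigtheta)\cdot\f 1 n\sum_\alpha \bar\psi(g^1_\alpha,\ldots,g^M_\alpha).
\]
Since $\bigtheta \asto \mathring{\bigtheta}$ and $f_\psi$ is continuous at $\mathring{\bigtheta}$ with $f_\psi(\mathring{\bigtheta})=0$, the prefactor vanishes a.s. Once (a) is established, the empirical average of the controlled function $\bar\psi$ converges to a finite Gaussian expectation, so the product is $o(1)$ a.s.

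\textbf{Handling (a), the coordinate shift.} I would prove by induction on program line $t$ that $n^{-1}\|g^i - g^{\circ i}\|^2 \asto 0$ for every G-var (and the analogue for H-vars) introduced by line $t$. The input case is trivial as the two programs share inputs; the \ref{linetype:lincomb} case follows from Minkowski. For \ref{linetype:nonlin+}, split
\[
\|\phi(g^1,\ldots; \bigvtheta) - \phi(g^{\circ 1},\ldots; \mathring{\bigvtheta})\|/\sqrt n
\]
via the triangle inequality into a \emph{parameter-shift} piece bounded by $f(\bigvtheta)\cdot(n^{-1}\sum_\alpha \bar\phi(g^1_\alpha,\ldots)^2)^{1/2}$, which vanishes a.s.\ because $f(\bigvtheta)\to 0$ and the empirical moments of $\bar\phi$ stabilize by \cref{thm:netsorMasterTheorem} applied to $\mathcal{P}^\circ$, plus a \emph{coordinate-shift} piece handled by an $L^2$-density/dominated-convergence argument that approximates the controlled $\phi(-;\mathring{\bigvtheta})$ by pseudo-Lipschitz functions and then invokes the inductive hypothesis on $n^{-1}\|g^i - g^{\circ i}\|^2$. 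Finally one transfers the $L^2$-closeness to empirical-average closeness against the outer controlled $\psi(-;\mathring{\bigtheta})$ by the same approximation scheme, giving (a).

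\textbf{The main obstacle: \ref{linetype:MatMul}.} The step I expect to be the hardest is \ref{linetype:MatMul}: given $n^{-1}\|h-h^\circ\|^2 \asto 0$, show $n^{-1}\|Wh - Wh^\circ\|^2 \asto 0$ when $W$ has already been used elsewhere in the program. The naive moment bound $\EV\|W(h-h^\circ)\|^2 = \sigma_W^2\|h-h^\circ\|^2$ requires independence of $W$ from $h,h^\circ$, which fails here since $h$ and $h^\circ$ may depend on earlier products $Wh^{(j)}$. To circumvent this I would reuse the Gaussian conditioning machinery from the proof of \cref{thm:netsorMasterTheorem} in \cref{sec:proofMasterTheorems}: condition on $W$'s action on the linear span of the previously used H-vars, decomposing $Wh$ into its ``conditional mean'' (an expression involving an inverse Gram matrix of those H-vars) plus an orthogonally independent Gaussian residue, and do the same for $Wh^\circ$. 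The conditional-mean pieces differ by a quantity that is continuous in the H-var Gram data, which converges by the inductive hypothesis, and the residue pieces are independent Gaussians to which the naive moment bound legitimately applies. The rank-stability assumption \cref{assm:asRankStab} is exactly what is needed to ensure that the inverse Gram matrices do not blow up and that the conditional-mean map depends continuously on its inputs even in the limit. With this step in hand the induction closes, (a) follows, and combined with (b) the theorem is proved.
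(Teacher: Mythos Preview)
Your approach is genuinely different from the paper's, and it has a real gap.

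\textbf{What the paper does.} The paper does \emph{not} build a parallel idealized program and compare. Instead it reruns the Gaussian-conditioning induction of \cref{sec:proofMasterTheorems} directly on the \netsorplus program, with two modifications: (i) the induction hypothesis \ref{IH:MomConv}$(m-1)$ is stated for \emph{parametrized} test functions $\psi(-;\bigtheta)$ with $\bigtheta\asto\mathring{\bigtheta}$, so that every ``inner'' quantity like $\hat\Lambda_{ij}=n^{-1}\sum_\alpha \hat\phi^i(g_\alpha;\bigvtheta^i)\hat\phi^j(g_\alpha;\bigvtheta^j)$ converges by the hypothesis itself (products of parameter-controlled functions are parameter-controlled); (ii) \cref{eqn:decompABC} acquires one extra term $\probD$ bounding $|\psi(\cdot;\bigtheta)-\psi(\cdot;\mathring{\bigtheta})|$ via the parameter-control witnesses, exactly your step (b). The \ref{IH:coreSet} induction is dropped and \cref{assm:asRankStab} is invoked directly wherever \cref{lemma:rankStability} was used (for $\hat\Lambda^+\asto\mathring{\hat\Lambda}^+$ and for the $\mathring\sigma>0$ dichotomy).

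\textbf{The gap in your route.} In both the \ref{linetype:nonlin+} coordinate/parameter-shift step and in your final transfer (a), you need to control quantities of the form $n^{-1}\sum_\alpha \bar\phi(g^1_\alpha,\ldots,g^{m}_\alpha)^2$ evaluated at the \emph{original} G-vars $g^i$, not at $g^{\circ i}$. You write that these ``stabilize by \cref{thm:netsorMasterTheorem} applied to $\mathcal{P}^\circ$,'' but that theorem only controls moments at $g^{\circ i}$. Your inductive hypothesis $n^{-1}\|g^i-g^{\circ i}\|^2\to 0$ does not bridge the gap: it gives only $\max_\alpha|g^i_\alpha-g^{\circ i}_\alpha|=o(\sqrt n)$, and since $\bar\phi$ is merely controlled (bounded by $e^{C\|x\|^{2-\epsilon}}$), the ratio $\bar\phi(g_\alpha)/\bar\phi(g^\circ_\alpha)$ can blow up like $\exp(o(n^{1-\epsilon/2}))$. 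So neither boundedness nor convergence of those empirical moments follows, and the $L^2$-density/approximation argument becomes circular: bounding the approximation error at $g$ already requires moment control at $g$. Separately, your proposed \ref{linetype:MatMul} step via Gaussian conditioning on the \emph{joint} history of $\mathcal{P}$ and $\mathcal{P}^\circ$ needs rank stability for the combined Gram matrix of $[\hat h_1,\hat h_1^\circ,\ldots]$; \cref{assm:asRankStab} is stated only for the H-vars of $\mathcal{P}$, and the combined Gram matrix generically \emph{fails} rank stability because $\hat h_j\to\hat h_j^\circ$ collapses the rank in the limit while the finite-$n$ columns remain independent. (An operator-norm bound $\|W\|_{\mathrm{op}}=O(1)$ a.s.\ would rescue the bare $L^2$ closeness $n^{-1}\|Wh-Wh^\circ\|^2\to 0$, but that is not how you proposed to argue, and it does not address the first gap.) Once you try to repair these issues you are driven back to carrying a parametrized moment hypothesis through the induction---which is precisely the paper's proof.
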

The proof of this theorem can be found in \cref{sec:netsorplusMasterTheoremProof}.

We will be instantiating the coordinates of $\bigtheta$ typically with ``empirical moments''
\begin{align*}
    \f 1 n \sum_{\alpha=1}^n \psi(g^1_\alpha, \ldots, g^M_\alpha)
        \numberthis\label{eqn:empiricalMomentParams}
\end{align*}
for some controlled $\psi$, since such ``moments'' should converge to a deterministic value by \cref{thm:Netsor+MasterTheorem};
or even, recursively, \begin{align*}
    \f 1 n \sum_{\alpha=1}^n \psi(g^1_\alpha, \ldots, g^M_\alpha; \bigvtheta')
    \numberthis\label{eqn:recursiveparams}
\end{align*}
for some sequence of random vectors $\bigvtheta'$ that converge a.s.\ to $\mathring{\bigvtheta}'$ and for $\psi$ parameter-controlled at $\mathring{\bigvtheta}'$.
One can keep recursing by replacing $\bigtheta'$ with further empirical moments.

However, there is a slight complication: we are using \cref{thm:Netsor+MasterTheorem} both for the convergence of the  parameters in \ref{linetype:nonlin+} rules in the program, as well as for the convergence \cref{eqn:netsor+momentconv}, in what seems like could be circular logic.
It turns out not hard to straighten out this reasoning, but it requires a bit more notation and setup to state the result.
We do this in the next section \cref{sec:selfParam}, with main theorem \cref{thm:selfParamNetsorplusMasterTheorem} that will be our primary tool concerning \netsorplus programs in practice.

To finish up this section, we make several remarks on the assumptions made in \cref{thm:Netsor+MasterTheorem}.

\begin{remk}[Necessity of parameter-control]
Suppose $\psi(x; \theta) = \ind(\theta x \ne 0)$.
For $\theta \ne 0$, $\psi$ is 1 everywhere except $\psi(0; \theta) = 0$.
For $\theta = 0$, $\psi$ is identically 0.
Thus it's easily seen that $\psi$ is not parameter-controlled at $\theta = 0$.

Now, if $g: \Gtype(n)$ is sampled like $g_\alpha \sim \Gaus(0,1)$, then
\[
\f 1 n \sum_{\alpha=1}^n \psi(g_\alpha; \theta) \asto 1
\]
if $\theta = 1/n$ so that $\theta \to \mathring \theta = 0$, but
\[
\EV_{Z \sim \Gaus(\tmu, \tSigma)} \psi(Z; \mathring \theta)
= \EV 0 = 0.
\]
So our Master Theorem can't hold in this case.
\end{remk}

\begin{remk}[Necessity of Rank Stability Assumption \cref{assm:asRankStab}]
\label{remk:necessityRankStab}
Suppose we have two input G-vars $g^1, g^2: \Gtype(n)$ which are sampled independently as $g^1_\alpha, g^2_\alpha \sim \Gaus(0, 1)$.
Let $W: \Atype(n, n)$ be sampled as $W_{\alpha \beta} \sim \Gaus(0, 1/n)$.
Then we can define $h^2 := \theta g^2 : \Htype(n)$ where $\theta = \exp(-n)$ as a function of $n$, using \ref{linetype:nonlin+}, so that $h^2_\alpha \asto 0$.
Additionally, let $\bar g^1 := W g^1: \Gtype(n)$ and $\bar g^2 := W h^2: \Gtype(n)$.
Again, $\bar g^2_\alpha \asto 0$ but for any finite $n$, $\bar g^2$ is linearly independent from $\bar g^1$.
Thus rank stability does not hold here.

Now consider the (parameterless) nonlinearity $\psi(x, y)$ that is 1 except on the line $y = 0$, where it is 0.
Then
\[
\f 1 n \sum_{\alpha=1}^n \psi(\bar g^1_\alpha, \bar g^2_\alpha) \asto 1
\]
but
\[
\EV_{Z \sim \Gaus(\tmu, \tSigma)} \psi(Z^{\bar g^1}, Z^{\bar g^2})
= \EV 0 = 0.
\]

\end{remk}

\begin{remk}[Rank Stability Already Holds for \netsor Programs]
\label{remk:rankStabilityNetsor}
It turns out that, as long as we only have parameterless nonlinearities, we get rank stability \cref{assm:asRankStab} for free.
This is formulated explicitly in \cref{lemma:rankStability}.
It is as a result of our proof of \cref{thm:netsorMasterTheorem} that interleaves an inductive proof of this rank stability (more generally, the inductive hypothesis \ref{IH:coreSet}) with an inductive proof of the ``empirical moment'' convergence (the inductive hypothesis \ref{IH:MomConv}).
\end{remk}

\subsection{Self-Parametrized \texorpdfstring{\netsorplus}{Netsor+} Programs and Their Master Theorem}
\label{sec:selfParam}

As stated below \cref{thm:Netsor+MasterTheorem}, there could be potentially circular logic when allowing \ref{linetype:nonlin+} rules to take parameters depending on previously defined variables in the program, such as in the form of \cref{eqn:empiricalMomentParams}.
In this section, we untangle this potentially circular logic into a sound reasoning 
\begin{itemize}
\item
by introducing a scalar type into \netsorplus programs to explicitly extract the \ref{linetype:nonlin+} parameters into their own variables (\cref{defn:selfParam}).
These scalar variables can recursively depend on previously defined scalar variables, making the ``recursive parameters'' discussed in \cref{eqn:recursiveparams} much more succinctly and clearly expressed.
\item
and by proving a Master Theorem for such \netsorplus programs (\cref{thm:selfParamNetsorplusMasterTheorem}).
This theorem will be the primary way through which we analyze \netsorplus programs in practice.
\end{itemize}

\newcommand{\Ctype}{\mathsf{C}}
\begin{defn}\label{defn:selfParam}
\footnote{We keep the definition here informal in terms of programming language convention to be accessible to the general machine learning audience. For those with PL background, see \cref{sec:formalspec}.}
A \emph{self-parametrized \netsorplus program} is a \netsor program where we have an additional scalar type, called $\Ctype$, which should intuitively be thought of as random variables that tend to a deterministic limit (i.e. a \emph{$\Ctype$onstant}) almost surely.
Colloquially, we will call variables of type $\Ctype$ ``C-vars.''
C-vars can be used as parameters of nonlinearities in \ref{linetype:nonlin+} rules, hence the \emph{``self-parametrized''} in the name.

For completeness, we specify a self-parametrized \netsorplus program as follows:
\begin{description}
    \item[Input]
        A set of input C-vars, in addition to the G- and A-vars allowed in \cref{defn:netsor}.
    \item[Body]
        New variables can be introduced and assigned via the following rules
    \begin{description}
        \item[\texttt{MatMul}] Same as in \cref{defn:netsor}.
        \item[\texttt{LinComb}] Same as in \cref{defn:netsor}.
        \item[\texttt{Nonlin$^+$}\label{linetype:nonlin+scalar}] If $x^1, \ldots, x^k: \Gtype(n)$ are G-vars with the same dimension $n$, $\theta_1, \ldots, \theta_l: \Ctype$ are C-vars, and $\phi(-; -): \R^k \times \R^l \to \R$ is a parametrized function, then we may create an H-var
        \[\phi(x^1, \ldots, x^k; \theta_1, \ldots, \theta_l): \Htype(n)\]
        where $\phi(- ; \theta_1, \ldots, \theta_l)$ acts coordinatewise.
        \item[\texttt{Moment}\label{linetype:moment}]
        If $x^1, \ldots, x^k: \Gtype(n)$ are G-vars with the same dimension $n$, $\theta_1, \ldots, \theta_l: \Ctype$ are C-vars, and $\phi(-; -): \R^k \times \R^l \to \R$ is a parametrized function, then we may create a C-var
        \[\f 1 n \sum_{\alpha=1}^n\phi(x^1_\alpha, \ldots, x^k_\alpha; \theta_1, \ldots, \theta_l): \Ctype.\]
    \end{description}
    \item[Output]
        Same as in \cref{defn:netsor}.
\end{description}
\end{defn}

The self-parametrized \netsorplus programs we are concerned will have all of its C-vars convergent to a deterministic constant.
We thus need this to be true for the input C-vars at the very least.
We encapsulate this requirement below.

\begin{assm}\label{assm:netsorplusScalarLimit}
Fix a self-parametrized \netsorplus program satisfying \cref{assm:equalDimNoNonlin+}.
Assume each input C-var $\theta$ is sampled in a way such that $\theta \asto \mathring \theta$ as $n \to \infty$ for some deterministic scalar $\mathring \theta \in \R$.
\end{assm}

Now, we shall define $\tmu$ and $\tSigma$ for self-parametrized \netsorplus programs just as in \netsorplus programs.
The only complication here is that we also need to keep track of the limit values of the C-vars in order to do so.
See \cref{defn:netsorplusMuSigma} below.

\begin{defn}\label{defn:netsorplusMuSigma}
Fix a \netsorplus program with scalar variables satisfying \cref{assm:netsorplusScalarLimit}.
For the purpose of this definition, write $g^1, \ldots, g^M$ for the entirety of the G-vars in the program, including input G-vars.

\emph{New Notations}\quad
For each H-var $h$ introduced by \ref{linetype:nonlin+}, \emph{we introduce the notations $\varphi^h, \bigtheta^h, \vartheta^h_i, \ell^h$ as follows}:
denote the associated parametrized nonlinearity by $\varphi^h(-; -): \R^M \times \R^{\ell^h} \to \R$ (implicitly padded so that it has as many input slots as G-vars in the program) and its parameters by $\bigvtheta^h = (\vartheta_1^h, \ldots, \vartheta_{\ell^h}^h) \in \R^{\ell^h}$ with length $\ell^h$.
For each G-var $g^i$, we also set $\varphi^{g^i}(x^1, \ldots, x^M) = x^i$ and $\bigvtheta^{g^i} = () \in \R^0$ to be the empty vector (so that $\ell^{g^i} = 0$).

Likewise, for each C-var $\theta$ introduced by \ref{linetype:moment}, \emph{we introduce the notations $\varphi^\theta, \bigtheta^\theta, \vartheta^\theta_i, \ell^\theta$ as follows}: denote the associated parametrized nonlinearity by $\varphi^\theta(-;-): \R^M \times \R^{\ell^\theta} \to \R$ (implicitly padded so that it has as many input slots as G-vars in the program) and its parameters by $\bigvtheta^\theta = (\vartheta_1^\theta, \ldots, \vartheta_{\ell^\theta}^\theta) \in \R^{\ell^\theta}$ with length $\ell^\theta$.

\emph{Extending the $\mathring{(\phantom{\theta})}$ notation from \cref{assm:netsorplusScalarLimit} and the
Recursive Definition of $\tmu$ and $\tSigma$}\quad
Given $\muin$ and $\Sigmain$ as in \cref{assm:equalDimNoNonlin+}, we define $\mu$ and $\Sigma$ on G-vars, along with ``limit scalars'' $\mathring{\theta}$ for each C-var $\theta$ (extending $\mathring \theta$ given by \cref{assm:netsorplusScalarLimit} for input $\theta$), as follows:
For any pair of G-vars $g, g'$ (among $g^1, \ldots, g^M$), we define recursively
\begin{align*}
    \tmu(g)
        &\defeq
            \begin{cases}
            \muin(g)  &   \text{if $g$ is input}\\
            \sum_{i} a_i \tmu(y^i)    &   \text{if $g = \sum_{i} a_i y^i$, (\ref{linetype:lincomb})}\\
            0   &   \text{otherwise}
            \end{cases}
            \\
    \tSigma(g, g')
        &\defeq
            \begin{cases}
            \Sigmain(g, g')   &   \text{if $g, g'$ are inputs}\\
            \sum_{i} a_i \tSigma(y^i, g')    &   \text{if $g = \sum_{i} a_i y^i$, (\ref{linetype:lincomb})}\\
            \sum_{i} a_i \tSigma(g, y^i)    &   \text{if $g' = \sum_{i} a_i y^i$, (\ref{linetype:lincomb})}\\
            \sigma^2_W \EV_Z \varphi^h(Z; \mathring{\bigvtheta}^h) \varphi^{h'}(Z; \mathring{\bigvtheta}^{h'}) &   \text{if $g = Wh, g'=Wh'$, (\ref{linetype:MatMul})}\\
            0   &   \text{otherwise}
            \end{cases}
            \numberthis\label{eqn:selfParamExtendedMuSigma}
\end{align*}
(this is the same as \cref{eqn:extendedMuSigma} except the \ref{linetype:MatMul} case)
and for each C-var $\theta$ introduced by \ref{linetype:moment},
\begin{align*}
\mathring \theta \defeq
    \EV_Z \varphi^\theta(Z; \mathring {\bigtheta}^\theta).
    \numberthis\label{eqn:mathringtheta}
\end{align*}
In all of the equations above, $Z \sim \Gaus(\tmu, \tSigma)$ is a random Gaussian vector with an entry for each G-var in the program, and $\mathring \Theta^u$ denotes $(\mathring \vartheta^u_1, \ldots, \mathring \vartheta^u_{\ell^u})$, for H-var or C-var $u$.

Note that since $\varphi^h$, $\varphi^{h'}$, and $\varphi^\theta$ only depend on entries of $Z$ corresponding to G-vars previous to $h, h'$, or $\theta$, the expectations involving $Z$ only depend on entries of $\tmu$ and $\tSigma$ already defined, so there is no circular logic in this recursive definition of $\tmu$ and $\tSigma$.
\end{defn}

Note that the notation $\varphi^h$ will be overloaded in a semantically consistent way in the context of \netsoro programs; see \cref{defn:unwindedNonlin}.

We are finally ready to formulate the Master Theorem for self-parametrized \netsorplus programs, which basically is just \cref{thm:Netsor+MasterTheorem} but explicitly allowing parameters of the form \cref{eqn:empiricalMomentParams} (``empirical moments'') in \ref{linetype:nonlin+}.

\begin{thm}[Self-Parameterized \netsorplus Master Theorem]
\label{thm:selfParamNetsorplusMasterTheorem}
Fix any self-parametrized \netsorplus program satisfying \cref{assm:netsorplusScalarLimit} and \cref{assm:asRankStab}.
For H-var or C-var $u$, adopt the notation $\varphi^u$, $\bigvtheta^u$, $\ell^u$ from \cref{defn:netsorplusMuSigma} and also let $\tmu, \tSigma, \mathring{\theta}$ be as computed in \cref{defn:netsorplusMuSigma}.
Let $g^1, \ldots, g^M$ be all of the G-vars in the program (including all input G-vars).

Suppose for every H-var or C-var $u$, $\varphi^u(-; \bigvtheta^u)$ is parameter-controlled at $\mathring{\bigvtheta}^u$.
\begin{enumerate}
\item 
Then for any $l$, for any random vector $\bigtheta \in \R^l$ that converges almost surely to a deterministic vector $\mathring{\bigtheta}$, as $n \to \infty$, and for any $\psi(-; -): \R^M\times \R^l \to \R$ parameter-controlled at $\mathring{\bigtheta}$,
\begin{align*}
    \f 1 n \sum_{\alpha=1}^n \psi(g^1_\alpha, \ldots, g^M_\alpha; \bigtheta) \asto \EV_{Z \sim \Gaus(\tmu, \tSigma)}\psi(Z; \mathring{\bigtheta}),
\end{align*}
where $\asto$ means almost sure convergence.

\item
In addition, for each C-var $\theta$ in the program,
\begin{align*}
\theta \asto \mathring \theta.
\end{align*}

\end{enumerate}

\end{thm}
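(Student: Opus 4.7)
The plan is to prove both conclusions simultaneously by induction on the instructions of the self-parametrized program in order of appearance, using the ordinary \netsorplus Master Theorem (\cref{thm:Netsor+MasterTheorem}) as the workhorse at each step. The central observation is that once every C-var created so far has been certified to converge almost surely to its deterministic limit, the sub-program through that point can be regarded as an ordinary \netsorplus program whose \ref{linetype:nonlin+} parameters happen to be random scalars with already-known a.s.\ limits, so that \cref{thm:Netsor+MasterTheorem} applies directly. This yields the desired empirical-moment convergence at that horizon, which in turn lets us dispatch the next \ref{linetype:moment} instruction by instantiating the generic $\psi$ with the defining nonlinearity $\varphi^\theta$.

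Concretely, I would index the program instructions by $k = 0, 1, \ldots, K$, let $M_k$ denote the number of G-vars created by step $k$, and let $\mathrm{IH}_k$ be the conjunction of: (a) every C-var $\theta$ introduced in the first $k$ steps satisfies $\theta \asto \mathring{\theta}$; and (b) for any random $\bigvtheta \in \R^l$ with $\bigvtheta \asto \mathring{\bigvtheta}$ and any $\psi$ parameter-controlled at $\mathring{\bigvtheta}$, the average $\f 1 n \sum_{\alpha=1}^n \psi(g^1_\alpha, \ldots, g^{M_k}_\alpha; \bigvtheta)$ converges a.s.\ to $\EV_{Z \sim \Gaus(\tmu, \tSigma)} \psi(Z; \mathring{\bigvtheta})$, where $\tmu, \tSigma$ are the restrictions built by \cref{eqn:selfParamExtendedMuSigma} through step $k$. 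The base case $\mathrm{IH}_0$ is immediate. For the inductive step I would split by instruction type: if step $k+1$ creates a new G-var (via \ref{linetype:MatMul} or \ref{linetype:lincomb}) or a new H-var (via \ref{linetype:nonlin+scalar}), (a) is inherited, and (b) follows by invoking \cref{thm:Netsor+MasterTheorem} on the sub-program through step $k+1$, whose \ref{linetype:nonlin+} parameters are all certified to converge a.s.\ by $\mathrm{IH}_k$(a) and are parameter-controlled at those limits by hypothesis; if step $k+1$ creates a new C-var $\theta$ via \ref{linetype:moment} applied to $\varphi^\theta(-;-)$ with parameter vector $\bigvtheta^\theta$, then $\mathrm{IH}_k$(a) gives $\bigvtheta^\theta \asto \mathring{\bigvtheta}^\theta$, and the hypothesis that $\varphi^\theta$ is parameter-controlled at $\mathring{\bigvtheta}^\theta$ lets me instantiate $\mathrm{IH}_k$(b) with $\psi = \varphi^\theta$ and $\bigvtheta = \bigvtheta^\theta$, yielding $\theta \asto \EV_Z \varphi^\theta(Z; \mathring{\bigvtheta}^\theta) = \mathring{\theta}$ and so $\mathrm{IH}_{k+1}$(a). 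Taking $k = K$ delivers both parts of the theorem.

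The main obstacle will be the bookkeeping around the repeated invocations of \cref{thm:Netsor+MasterTheorem}: at each step one must verify that \cref{assm:equalDimNoNonlin+} and \cref{assm:asRankStab} descend to the sub-program --- which they do, since both are conditions on subcollections of the ambient A-vars and H-vars and only become easier for smaller families --- and that the Gaussian $(\tmu, \tSigma)$ used when \cref{thm:Netsor+MasterTheorem} produces the expectation on the right-hand side agrees with the one built from \cref{eqn:selfParamExtendedMuSigma} for the full self-parametrized program, which is true because \cref{eqn:selfParamExtendedMuSigma} depends on the parametrized nonlinearities only through their limit values $\mathring{\bigvtheta}^h$, exactly the values that \cref{thm:Netsor+MasterTheorem} sees via $\mathrm{IH}_k$(a). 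A minor technicality is that each inductive step discards a null set of sample paths, but since the induction has only finitely many ($K$) steps the union of these nulls is still null, so almost-sure convergence is preserved end-to-end.
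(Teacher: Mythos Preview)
Your proposal is correct and follows essentially the same strategy as the paper's proof: a simultaneous induction establishing C-var convergence and moment convergence together, invoking \cref{thm:Netsor+MasterTheorem} at each step once the earlier C-vars have been certified to converge. The only cosmetic difference is that you index the induction by program instructions, whereas the paper indexes by G-vars and handles C-vars appearing between consecutive G-vars via a short inner induction; both schemes carry the same content.
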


This theorem almost trivially follows from \cref{thm:Netsor+MasterTheorem}, since the parameter vectors $\bigvtheta^h$ intuitively should converge to deterministic limits $\mathring{\bigvtheta}^h$.
The only slight complication is that this convergence intuitvely follows from \cref{thm:Netsor+MasterTheorem} itself in what may be a circular logic, so we need to be slightly careful to unwind this logic into a valid inductive argument.
We do so below, assuming \cref{thm:Netsor+MasterTheorem} (which is proved in \cref{sec:netsorplusMasterTheoremProof}).

\begin{proof}
Notice that the 2nd claim about $\theta \asto \mathring \theta$ follows immediately from the 1st claim, so we will prove the 1st claim here.

Assume that the G-vars $g^1, \ldots, g^M$ are in order of appearance in the program, and that $g^1, \ldots, g^{m_0}$ (with $m_0 \le M$) are all of the input G-vars.
We perform simultaneous induction on two claims \ref{IH:MomentsNetsorplus}$(m)$ and \ref{IH:CVarLimits}$(m)$ in $m$, defined below
\begin{description}
\item[Moments\label{IH:MomentsNetsorplus}]\!\!\!$(m)$\ \ \ {\it
For any $l$, for any random vector $\bigtheta \in \R^l$ that converges almost surely to a deterministic vector $\mathring{\bigtheta}$ as $n \to \infty$, and for any $\psi(-;-): \R^m\times \R^l \to \R$ parameter-controlled at $\mathring{\bigtheta}$,
\begin{align*}
    \f 1 n \sum_{\alpha=1}^n \psi(g^1_\alpha, \ldots, g^m_\alpha; \bigtheta) \asto \EV_{Z \sim \Gaus(\tmu|_m, \tSigma|_m)}\psi(Z; \mathring{\bigtheta})
\end{align*}
where $\tmu|_m$ and $\tSigma|_m$ are the restriction of $\tmu$ and $\tSigma$ to $g^1, \ldots, g^m$.
}
\item[CVarLimits\label{IH:CVarLimits}]\!\!\!$(m)$\ \ \ {\it
For each C-var $\theta$ introduced before $g^m$,
\begin{align*}
\theta \asto \mathring \theta
\end{align*}
as $n \to \infty$, where $\mathring \theta$ is as computed in \cref{defn:netsorplusMuSigma}.
}
\end{description}

When $m = M$, we would have \cref{thm:selfParamNetsorplusMasterTheorem} by \ref{IH:MomentsNetsorplus}$(M)$.
\paragraph{Base case: $m = m_0$ (input G-vars only).}
\ref{IH:MomentsNetsorplus}$(m_0)$ trivially follows from \cref{thm:Netsor+MasterTheorem}.
\ref{IH:CVarLimits}$(m_0)$ follows from \cref{assm:netsorplusScalarLimit}.

Now suppose \ref{IH:MomentsNetsorplus}$(m)$ and \ref{IH:CVarLimits}$(m)$ are true; we aim to show \ref{IH:MomentsNetsorplus}$(m+1)$ and \ref{IH:CVarLimits}$(m+1)$.

\paragraph{Inductive case: \ref{IH:CVarLimits}$(m+1)$}
By \ref{IH:CVarLimits}$(m)$, it suffices to show $\theta \asto \mathring \theta$ for all $\theta$ introduced after $g^m$ but before $g^{m+1}$.
We do so by another induction (an \emph{inner induction}) in order of C-var appearance.

The \textbf{inner base case} is the first C-var $\theta$ introduced after $g^m$.
Its parameters $\Theta^\theta$ are among those introduced before $g^m$, so by induction hypothesis \ref{IH:CVarLimits}$(m)$,
\begin{align*}
\Theta^\theta \asto \mathring \Theta^\theta.
\end{align*}
By the assumption of \cref{thm:selfParamNetsorplusMasterTheorem} that $\varphi^\theta$ is parameter-controlled at $\mathring \Theta^\theta$, we have
\begin{align*}
\theta = \f 1 n \sum_{\alpha=1}^n \varphi^\theta(g^1_\alpha, \ldots, g^m_\alpha; \Theta^\theta) \asto \EV_{Z \sim \Gaus(\tmu|_m, \tSigma|_m)} \varphi^\theta(Z; \mathring \Theta^\theta) = \mathring \theta
\end{align*}
by induction hypothesis \ref{IH:MomentsNetsorplus}$(m)$ (where we have explicitly truncated the input slots of $\varphi^\theta$ to reflect its dependence only on $g^1, \ldots, g^m$).
The \textbf{inner inductive case}, for a later $\theta$, follows the same logic, once we assume the inner inductive hypothesis that each $\theta'$ introduced before $\theta$ has $\theta' \asto \mathring \theta'.$

\paragraph{Inductive case: \ref{IH:MomentsNetsorplus}$(m+1)$}
The claim is trivially true by \ref{IH:MomentsNetsorplus}$(m)$ if $g^{m+1}$ is introduced via \ref{linetype:lincomb}, so consider the case when $g^{m+1}$ is introduced via \ref{linetype:MatMul}
\begin{align*}
g^{m+1} := W h
\end{align*}
where $h: \Htype(n)$ is an H-var with associated nonlinearity $\varphi^h$ and parameters $\bigvtheta^h$ as defined in \cref{defn:netsorplusMuSigma}.
By the claim \ref{IH:CVarLimits}$(m+1)$ we proved above, $\bigtheta^h \asto \mathring \bigtheta^h$.
By the assumption of \cref{thm:selfParamNetsorplusMasterTheorem}, $\varphi^{h}$ is parameter-controlled at the parameter limit $\mathring \bigtheta^{h}$.
Thus, the subprogram up to and including the introduction of $g^{m+1}$ satisfies the assumptions of \cref{thm:Netsor+MasterTheorem}.
Consequently, \ref{IH:MomentsNetsorplus}$(m+1)$ is true by \cref{thm:Netsor+MasterTheorem}.

This completes the simultaneous induction of \ref{IH:MomentsNetsorplus} and \ref{IH:CVarLimits} and thus the proof of \cref{thm:selfParamNetsorplusMasterTheorem}.
\end{proof}

\subsection{Gaussian Process Behavior of \texorpdfstring{\netsorplus}{Netsor+} Programs}

We can generalize the Gaussian process behavior (\cref{cor:GPConv}) to cases involving \ref{linetype:nonlin+}:
\begin{cor}[Computing the GP Kernel for \netsorplus programs]\label{cor:GPConvplus}
Adopt the same assumptions and notations as in \cref{thm:Netsor+MasterTheorem}.
Suppose the program outputs $(v{}^\trsp x^1/\sqrt{n}, \ldots, v{}^\trsp x^k/\sqrt{n})$, where
\begin{itemize}
    \item $v: \Gtype(n), v_\alpha \sim \Gaus(0, \sigma_v^2),$ is an input G-var not used elsewhere in the program and is sampled independently from all other G-vars, and
    \item $x^i$ was introduced as $x^i := \phi^i(g^1, \ldots, g^M; \bigtheta^i)$ for parametrized nonlinearity $\phi^i$ and parameter vector $\bigtheta^i$ that converges a.s.\ to a deterministic vector $\mathring{\bigtheta}^i$ as $n \to \infty$.
    Assume $\phi^i$ is parameter-controlled at $\mathring{\bigtheta}^i$.
\end{itemize}
Then the output vector converges in distribution to $\Gaus(0, \KK)$ where
\begin{equation}
    \KK_{ij} = \sigma_v^2 \EV_{Z \sim \Gaus(\tmu, \tSigma)} \phi^i(Z; \mathring{\bigtheta}^i) \phi^j(Z; \mathring{\bigtheta}^j)
    \label{eqn:limitingCovarianceGPNetsor+}
\end{equation}
with $\tmu, \tSigma$ computed by replacing each parametrized $\phi(-; \bigvtheta)$ with the parameterless $\phi(-; \mathring{\bigvtheta})$ in \cref{eqn:extendedMuSigma}, as in \cref{thm:Netsor+MasterTheorem}.
\end{cor}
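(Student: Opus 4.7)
}
The plan is to exploit the fact that $v$ is an independent Gaussian input G-var not used elsewhere in the program. Letting $X \in \R^{n \times k}$ denote the matrix with columns $x^1, \ldots, x^k$, the output vector equals $X^\trsp v / \sqrt{n}$. Conditioning on every variable in the program other than $v$, this is a linear image of $v \sim \Gaus(0, \sigma_v^2 I_n)$, and hence is exactly $\Gaus(0, \sigma_v^2 X^\trsp X/n)$ conditionally. Therefore the characteristic function of the (unconditional) output at any $t \in \R^k$ is
\[
\varphi_n(t) = \EV \exp\lp -\tfrac{\sigma_v^2}{2}\, t^\trsp (X^\trsp X / n)\, t \rp.
\]
If we can establish that $X^\trsp X / n$ converges almost surely to the deterministic matrix $\KK_0$ with $(\KK_0)_{ij} = \EV_{Z \sim \Gaus(\tmu, \tSigma)} \phi^i(Z; \mathring{\bigtheta}^i) \phi^j(Z; \mathring{\bigtheta}^j)$, then the integrand in $\varphi_n(t)$ lies in $(0, 1]$ and converges a.s.\ to $\exp(-\tfrac{\sigma_v^2}{2} t^\trsp \KK_0 t)$, so bounded convergence gives $\varphi_n(t) \to \exp(-\tfrac 1 2 t^\trsp \KK t)$ with $\KK = \sigma_v^2 \KK_0$, and L\'evy's continuity theorem delivers the claimed convergence in distribution.

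To obtain the almost-sure convergence of $X^\trsp X/n$, the plan is to invoke \cref{thm:Netsor+MasterTheorem} once per pair $(i, j)$ with the joint nonlinearity
\[
\psi(z; \bigtheta^i, \bigtheta^j) \defeq \phi^i(z; \bigtheta^i)\,\phi^j(z; \bigtheta^j),
\]
whose concatenated parameter vector $(\bigtheta^i, \bigtheta^j)$ converges a.s.\ to $(\mathring{\bigtheta}^i, \mathring{\bigtheta}^j)$ by hypothesis. Provided $\psi$ is parameter-controlled at $(\mathring{\bigtheta}^i, \mathring{\bigtheta}^j)$, the Master Theorem yields
\[
(X^\trsp X/n)_{ij} = \tfrac 1 n \sum_{\alpha=1}^n \psi(g^1_\alpha, \ldots, g^M_\alpha; \bigtheta^i, \bigtheta^j) \asto \EV_Z \psi(Z; \mathring{\bigtheta}^i, \mathring{\bigtheta}^j) = (\KK_0)_{ij},
\]
and since there are only finitely many pairs $(i,j)$, the joint a.s.\ convergence of the matrix follows from a finite union-of-null-sets argument.

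The only non-routine step is verifying that parameter-controlled functions are closed under pointwise products, so that $\psi$ is indeed parameter-controlled at $(\mathring{\bigtheta}^i, \mathring{\bigtheta}^j)$. Expanding by the triangle inequality
\[
|\psi(z;\bigtheta^i,\bigtheta^j) - \psi(z; \mathring{\bigtheta}^i, \mathring{\bigtheta}^j)| \le |\phi^i(z;\bigtheta^i)|\cdot|\phi^j(z;\bigtheta^j) - \phi^j(z; \mathring{\bigtheta}^j)| + |\phi^i(z;\bigtheta^i) - \phi^i(z; \mathring{\bigtheta}^i)|\cdot|\phi^j(z; \mathring{\bigtheta}^j)|,
\]
and feeding in the parameter-control bounds for $\phi^i$ and $\phi^j$ produces an upper bound of the form $F(\bigtheta^i, \bigtheta^j)\, \bar\psi(z)$, with $F \ge 0$ continuous at and vanishing at $(\mathring{\bigtheta}^i, \mathring{\bigtheta}^j)$, and $\bar\psi$ a sum of products of controlled functions. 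Since the envelope $|\phi(z)| \le e^{C\|z\|^{2-\epsilon}+c}$ from \cref{defn:controlled} is preserved under both sums and products (by adding the constants in the exponents and taking the smaller $\epsilon$), $\bar\psi$ is controlled, and likewise $\psi(-; \mathring{\bigtheta}^i, \mathring{\bigtheta}^j)$ is controlled. This product-closure bookkeeping is the mildly delicate part; the remainder is the Master Theorem plus the standard characteristic-function reduction above.
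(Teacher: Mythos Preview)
Your proposal is correct and follows essentially the same approach as the paper: the paper likewise reduces to the \netsorplus Master Theorem for the almost-sure convergence of the empirical second moments $X^\trsp X/n$, and then invokes \cref{prop:gaussianDistConvFromMomentConv}, whose proof is exactly your conditional-Gaussianity plus bounded-convergence argument (phrased there via bounded continuous test functions rather than characteristic functions). Your explicit verification that products of parameter-controlled functions remain parameter-controlled is a detail the paper leaves implicit, and your argument for it is sound.
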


The proof is a straightforward application of \cref{thm:Netsoro+MasterTheorem} and \cref{prop:gaussianDistConvFromMomentConv}.
Likewise, for self-parametrized programs, we have a similar result:
\begin{cor}[Computing the GP Kernel for self-parametrized \netsorplus programs]\label{cor:GPConvPlusSelfParam}
Adopt the same assumptions and notations as in \cref{thm:selfParamNetsorplusMasterTheorem}.
Suppose the program outputs $(v{}^\trsp x^1/\sqrt{n}, \ldots, v{}^\trsp x^k/\sqrt{n})$, where
\begin{itemize}
    \item $v: \Gtype(n), v_\alpha \sim \Gaus(0, \sigma_v^2),$ is an input G-var not used elsewhere in the program and is sampled independently from all other G-vars, and
    \item $x^i$ was introduced as $x^i := \varphi^{x^i}(g^1, \ldots, g^M; \bigvtheta^{x^i})$ for self-parametrized nonlinearity $\varphi^{x^i}$ and parameter vector $\bigvtheta^{x^i}$ (composed of C-vars) as defined in \cref{defn:netsorplusMuSigma}.
    Let $\mathring{\bigvtheta}^{x^i}$ be the limit parameter as in \cref{defn:netsorplusMuSigma}.
    Note that $\varphi^{x^i}$ is parameter-controlled at $\mathring{\bigvtheta}^{x^i}$ by assumption of \cref{thm:selfParamNetsorplusMasterTheorem}.
\end{itemize}
Then the output vector converges in distribution to $\Gaus(0, \KK)$ where
\begin{equation}
    \KK_{ij} = \sigma_v^2 \EV_{Z \sim \Gaus(\tmu, \tSigma)} \varphi^{x^i}(Z; \mathring{\bigvtheta}^{x^i}) \varphi^{x^j}(Z; \mathring{\bigvtheta}^{x^j}),
    \quad \text{with $\tmu, \tSigma$ defined in \cref{eqn:selfParamExtendedMuSigma}.}
    \label{eqn:limitingCovarianceGPNetsorSelfParam+}
\end{equation}
\end{cor}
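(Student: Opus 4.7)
The plan is to reduce \cref{cor:GPConvPlusSelfParam} to a conditional-Gaussian argument together with \cref{thm:selfParamNetsorplusMasterTheorem}, exactly in the spirit of the sketch already given for \cref{cor:GPConvplus}. The two ingredients are (i) a conditioning step that uses the independence of $v$ from every other input variable and the fact that $v$ is unused in the program body, and (ii) the almost sure convergence of an empirical-moment expression to its deterministic Gaussian limit, supplied by the self-parameterized Master Theorem.

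First, fix $n$ and condition on the $\sigma$-algebra $\mathcal{F}_n$ generated by all inputs other than $v$. Every $x^i$ is $\mathcal{F}_n$-measurable, since it is built by the program rules from the other input variables (and from C-vars, which in turn do not involve $v$), and $Y \defeq (v^\trsp x^1/\sqrt n,\ldots,v^\trsp x^k/\sqrt n)$ is a linear function of $v$. Since $v$ has i.i.d.\ $\Gaus(0,\sigma_v^2)$ entries independent of $\mathcal{F}_n$, conditionally $Y \sim \Gaus(0,\tilde\KK)$ with
\[
\tilde \KK_{ij} = \sigma_v^2 \cdot \f{1}{n}\sum_{\alpha=1}^n \varphi^{x^i}(g^1_\alpha,\ldots,g^M_\alpha;\bigvtheta^{x^i})\,\varphi^{x^j}(g^1_\alpha,\ldots,g^M_\alpha;\bigvtheta^{x^j}).
\]
Applying \cref{thm:selfParamNetsorplusMasterTheorem} with the random parameter vector $\bigvtheta \defeq (\bigvtheta^{x^i},\bigvtheta^{x^j})$ and test function $\psi(z;\eta_1,\eta_2) \defeq \sigma_v^2\,\varphi^{x^i}(z;\eta_1)\,\varphi^{x^j}(z;\eta_2)$, the second conclusion of the theorem gives $\bigvtheta \asto (\mathring{\bigvtheta}^{x^i},\mathring{\bigvtheta}^{x^j})$, and then the first conclusion yields $\tilde \KK_{ij} \asto \KK_{ij}$ with $\KK_{ij}$ as in \cref{eqn:limitingCovarianceGPNetsorSelfParam+}. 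A bounded-convergence argument on characteristic functions (essentially the content of \cref{prop:gaussianDistConvFromMomentConv}) lifts this to the unconditional convergence $Y \distto \Gaus(0,\KK)$, since $\EV e^{i \la t,Y\ra} = \EV e^{-\tfrac12 t^\trsp \tilde \KK t} \to e^{-\tfrac12 t^\trsp \KK t}$ pointwise in $t$ by dominated convergence.

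The only nontrivial step, and what I expect to be the main obstacle in a fully written proof, is verifying that the test function $\psi$ is parameter-controlled at $(\mathring{\bigvtheta}^{x^i},\mathring{\bigvtheta}^{x^j})$ so that \cref{thm:selfParamNetsorplusMasterTheorem} genuinely applies. Each factor $\varphi^{x^i}$ is parameter-controlled at its own limit by the hypothesis of that theorem, so the remaining task reduces to a closure lemma: the pointwise product of two parameter-controlled functions, paired with the concatenation of their parameter vectors, is parameter-controlled at the concatenated limit. This should follow by the triangle inequality applied to $|\varphi^{x^i}(z;\eta_1)\varphi^{x^j}(z;\eta_2) - \varphi^{x^i}(z;\mathring\eta_1)\varphi^{x^j}(z;\mathring\eta_2)|$, splitting it as a sum of two terms, each bounded by an ``$f(\eta)\bar\phi(z)$'' factor from \cref{defn:parameterControlled} times a controlled envelope $e^{C\|z\|^{2-\epsilon}+c}$ for the remaining factor; the product of two such envelopes retains the same exponential shape with larger constants, so both conditions of \cref{defn:parameterControlled} propagate through the product, and the scalar prefactor $\sigma_v^2$ is harmless.
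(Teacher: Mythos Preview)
Your proposal is correct and follows essentially the same route as the paper. The paper's proof is a one-line remark that the result follows from \cref{thm:selfParamNetsorplusMasterTheorem} together with \cref{prop:gaussianDistConvFromMomentConv}; you have simply unpacked both ingredients---the conditional-Gaussian step and the characteristic-function / dominated-convergence passage---and added the verification (left implicit in the paper) that the product of two parameter-controlled functions is again parameter-controlled, which indeed goes through by the triangle-inequality splitting you outline.
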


\section{Example GP Kernel Computation with \texorpdfstring{\netsorplus}{Netsor+}}
\label{sec:netsorplusKernelExample}

\subsection{Layernorm: Concrete Example \texorpdfstring{(\cref{tp:layernorm})}{}}
\label{sec:layernorm}
\newcommand{\mean}{\nu}
\newcommand{\var}{var}

{   
\makeatletter
\renewcommand{\ALG@name}{Self-parametrized \netsorplus program}
\makeatother
\begin{algorithm}[tb]
    \caption{Layernorm Network}
    \label{tp:layernorm}
    \begin{multicols}{2}
    \begin{algorithmic}
      \Require $W^1 x, W^1 x': \Gtype(n)$
      \Require $W^2: \Gtype(n)$
      \Require $v: \G(n)$
      \State {\it // Mean and variance of $W^1 x$}
      \State {\it // \ref{linetype:moment}}
      \State $\mean^1 := \f 1 n \sum_{\alpha=1}^n (W^1 x)_\alpha: \Ctype$
      \State $\var^1 := \f 1 n \sum_{\alpha=1}^n (W^1 x)_\alpha^2 - (\mean^1)^2: \Ctype$
      \State {\it // \ref{linetype:nonlin+}}
      \State $x^{1} := \relu\left(\f{(W^1 x) - \mean^1 \onev}{\sqrt{\var^1}}\right): \Htype(n)$
      \State $h^{2} := W^2 x^{1}: \Gtype(n)$
      \State {\it // Mean and variance of $h^2$}
      \State {\it // \ref{linetype:moment}}
      \State $\mean^2 := \f 1 n \sum_{\alpha=1}^n h^2_\alpha: \Ctype$
      \State $\var^2 := \f 1 n \sum_{\alpha=1}^n (h^2_\alpha)^2 - (\mean^2)^2: \Ctype$
      \State {\it // \ref{linetype:nonlin+}}
      \State $x^{2} := \relu\left(\f{h^2 - \mean^2 \onev}{\sqrt{\var^2}}\right): \Htype(n)$
      \State {\it // Same thing for $x'$}
      \State {\it // Mean and variance of $W^1 x'$}
      \State {\it // \ref{linetype:moment}}
      \State $\mean^1{}' := \f 1 n \sum_{\alpha=1}^n (W^1 x')_\alpha: \Ctype$
      \State $\var^1{}' := \f 1 n \sum_{\alpha=1}^n (W^1 x')_\alpha^2 - (\mean^1{}')^2: \Ctype$
      \State {\it // \ref{linetype:nonlin+}}
      \State $x^{1}{}' := \relu\left(\f{(W^1 x') - \mean^1{}' \onev}{\sqrt{\var^1{}'}}\right): \Htype(n)$
      \State $h^{2}{}' := W^2 x^{1}{}': \Gtype(n)$
      \State {\it // Mean and variance of $h^2{}'$}
      \State {\it // \ref{linetype:moment}}
      \State $\mean^2{}' := \f 1 n \sum_{\alpha=1}^n h^2_\alpha{}': \Ctype$
      \State $\var^2{}' := \f 1 n \sum_{\alpha=1}^n (h^2_\alpha{}')^2 - (\mean^2{}')^2: \Ctype$
      \State {\it // \ref{linetype:nonlin+}}
      \State $x^{2}{}' := \relu\left(\f{h^2{}' - \mean^2{}' \onev}{\sqrt{\var^2{}'}}\right): \Htype(n)$
      \Ensure $(v^\trsp x^{2}/\sqrt n, v^\trsp x^2{}' / \sqrt n)$
    \end{algorithmic}
    \end{multicols}
\end{algorithm}
}

Consider the example layernorm network in \cref{tp:layernorm}.
This is a self-parametrized \netsorplus program.

\paragraph{Setup}
Suppose the inputs $x, x' \ne 0$ are in $\R^m$.
The network has parameters $W^1 \in \R^{n \times m}, W^2 \in \R^{n \times n}$, and $v \in \R^n$.
Let us sample them as follows
\begin{align*}
W^1_{\alpha \beta} \sim \Gaus(0, \sigma_w^2/m),\quad
W^2_{\alpha \beta} \sim \Gaus(0, \sigma_w^2/n),\quad
v_\alpha \sim \Gaus(0, \sigma_v^2),
\end{align*}
for $\sigma_w, \sigma_v > 0$.
This corresponds to the \netsorplus sampling data $\muin = 0$ and $\Sigmain$ given as
\begin{align*}
\Sigmain(W^1 x, W^1 x) &= \sigma_w^2 \|x\|^2/m,&
\Sigmain(W^1 x, W^1 x') &= \sigma_w^2 x^\trsp x'/m,&
\Sigmain(W^1 x', W^1 x') &= \sigma_w^2 \|x'\|^2/m,
\end{align*}
$\Sigmain(v, v) = \sigma_v^2,$
and $\Sigmain(g, g') = 0$ for any other pairs of input G-vars $g, g'$.

\subsubsection{Computing \texorpdfstring{$\tmu$}{mu}, \texorpdfstring{$\tSigma$}{Sigma}, and Limit Parameters \texorpdfstring{$\mathring \theta$}{}}
Let's compute the values of $\tmu$, $\tSigma$, $\mathring \theta$ in order of the appearance of the variables, according to \cref{defn:netsorplusMuSigma}.
For each C-var or H-var $u$, we also show that $\varphi^u$ is parameter-controlled at $\mathring\vartheta^{u}$.

First, one can quickly notice that $\tmu(g) = 0$ for all G-vars $g$, so we shall focus on computing $\tSigma$ and $\mathring \theta$.

\paragraph{C-var $\mean^1$}
Here we have introduced $\mean^1$ via \ref{linetype:moment} by
\begin{align*}
\mean^1 := \f 1 n \sum_{\alpha=1}^n \varphi^{\mean^1}((W^1x)_\alpha),\quad\text{where
$\varphi^{\mean^1}(z) = z$,}
\end{align*}
and there are no parameters.
The function $\varphi^{\mean^1}$ is then obviously controlled and trivially parameter-controlled.
Finally, by \cref{eqn:mathringtheta}, we set
\begin{align*}
\mathring \mean^1 \defeq \EV_{z \sim \Gaus(0, \sigma^2_w\|x\|^2/m)} z = 0.
\end{align*}

\paragraph{C-var $\var^1$}
Note here
\begin{align*}
\var^1 := \f 1 n \sum_{\alpha=1}^n \varphi^{\var^1}((W^1x)_\alpha; \nu^1),\quad\text{where
$\varphi^{\var^1}(z; \theta) \defeq z^2 - \theta^2$.}
\end{align*}
(Here superscript-2 denotes square, not an index).
Since $\varphi^{\var^1}(-; -)$ is pseudo-Lipschitz in both its inputs and its parameter jointly, it is parameter-controlled at $\theta = \mathring \nu^1 = 0$ by \cref{exmp:parameterControl}.

In addition, $\mathring \var^1$ is computed by \cref{eqn:mathringtheta} as
\begin{align*}
\mathring \var^1 \defeq \EV_{z \sim \Gaus(0, \sigma^2_w\|x\|^2/m)} \varphi^{\var^1}(z; \mathring \mean^1)
    = \EV_{z \sim \Gaus(0, \sigma^2_w\|x\|^2/m)}
        z^2
    = 
        \f{\sigma_w^2}{m} \|x\|^2
        .
\end{align*}

\paragraph{H-var $x^1$}
The first H-var introduced in the program is $x^{1} := \relu\left(\f{(W^1 x) - \mean^1 \onev}{\sqrt{\var^1}}\right)$.
It can be written as a \ref{linetype:nonlin+} with
\begin{align*}
x^1 := \varphi^{x^1}(W^1 x; \mean^1, \var^1)
\end{align*}
where $\varphi^{x^1}(-; -): \R \times \R^2 \to \R$, and 
\begin{align*}
\varphi^{x^1}(z; \theta_1, \theta_2)
    &\defeq \relu\lp \f{z - \theta_1}{\sqrt{\theta_2}} \rp
        .
\end{align*}

Since $\sigma_w > 0$ and $x \ne 0$, we have $\mathring \var^1 \ne 0$, and we claim that $\varphi^{x^1}$ is parameter-controlled at $(\mathring \mean^1, \mathring \var^1) = \left(0, \f{\sigma_w^2}{m} \|x\|^2\right)$.

Indeed, $\varphi^{x^1}(-; \mathring \nu^1, \mathring \var^1)$ is obviously controlled, so that condition \ref{item:parameterControlled1} of \cref{defn:parameterControlled} is satisfied.
In addition, for any $z \in \R$,
\begin{align*}
\left|\varphi^{x^1}(z; \theta_1, \theta_2) - \varphi^{x^1}(z; \mathring \nu^1, \mathring\var^1)\right|
    &=
        \left|\relu\lp \f{z - \theta_1}{\sqrt{\theta_2}} \rp
            - \relu\lp \f{z - \mathring \nu^1}{\sqrt{\mathring\var^1}} \rp
        \right|
        \\
    &\le
        \left| \f{z - \theta_1}{\sqrt{\theta_2}}
            - \f{z - \mathring \nu^1}{\sqrt{\mathring\var^1}}
        \right|
        \\
    &=
        \left| z \lp \f 1 {\sqrt{\theta_2}} - \f {1}{\sqrt{\mathring\var^1}} \rp
        - \lp
            \f{\theta_1}{\sqrt{\theta_2}} 
            - \f{\mathring \nu^1}{\sqrt{\mathring\var^1}}
          \rp
        \right|
        \\
    &\le
        \left| z \lp \f 1 {\sqrt{\theta_2}} - \f {1}{\sqrt{\mathring\var^1}} \rp\right|
        + \left|
            \f{\theta_1}{\sqrt{\theta_2}} 
            - \f{\mathring \nu^1}{\sqrt{\mathring\var^1}}
        \right|
        \\
    &\le
        \sqrt{
                \lp \f 1 {\sqrt{\theta_2}} - \f {1}{\sqrt{\mathring\var^1}} \rp^2
                + \lp \f{\theta_1}{\sqrt{\theta_2}} 
                    - \f{\mathring \nu^1}{\sqrt{\mathring\var^1}}\rp^2
        }\sqrt{
            z^2 + 1
        }
\end{align*}
by Cauchy-Schwarz.
Note that $
\sqrt{
        \lp \f 1 {\sqrt{\theta_2}} - \f {1}{\sqrt{\mathring\var^1}} \rp^2
        + \lp \f{\theta_1}{\sqrt{\theta_2}} 
            - \f{\mathring \nu^1}{\sqrt{\mathring\var^1}}\rp^2
}$ equals 0 and is continuous at $(\theta_1, \theta_2) = (\mathring \nu^1, {\mathring\var^1})$ because ${\mathring\var^1} \ne 0$.
Then since $\sqrt{z^2 + 1}$ is controlled in $z$, $\varphi^{x^1}$ satisfies property \ref{item:parameterControlled2} of \cref{defn:parameterControlled}.
Altogther, we have shown that $\varphi^{x^1}$ is indeed parameter-controlled at $(\mathring \nu^1, {\mathring\var^1})$.

\paragraph{G-var $h^2$}
By the \ref{linetype:MatMul} case of \cref{eqn:selfParamExtendedMuSigma},
\begin{align*}
\Sigma(h^2, h^2) = \sigma_w^2 \EV_z \phi(z; \mathring \mean^1, \mathring \var^1)^2,\qquad
\Sigma(h^2, W^1 x) = \Sigma(h^2, W^1 x') = 0, 
\end{align*}
where $z \sim \Gaus(\tmu(W^1 x), \tSigma(W^1 x, W^1 x)) = \Gaus(0, \f{\sigma_w^2}{m} \|x\|^2)$, and
\begin{align*}
\phi(z; \theta_1, \theta_2) &\defeq \varphi^{x^1}(z; \theta_1, \theta_2) = \relu\lp \f{z - \theta_1}{\sqrt{\theta_2}} \rp.
\end{align*}
We can then simplify
\begin{align*}
\Sigma(h^2, h^2) &= \sigma_w^2 \EV_z \relu\lp \f{z}{\sqrt{\f{\sigma_w^2}{m} \|x\|^2}} \rp^2
    =
        \sigma_w^2 \EV_{\zeta\sim\Gaus(0, 1)} \relu(\zeta)^2
    =
        \f 1 2 \sigma_w^2
        .
\end{align*}

\paragraph{C-var $\mean^2$}
Similar to the case of $\mean^1$, we can express $\mean^2$ via \ref{linetype:moment} by
\begin{align*}
\mean^2 := \f 1 n \sum_{\alpha=1}^n \varphi^{\mean^2}(h^2_\alpha),\quad\text{where
$\varphi^{\mean^2}(z) = z$,}
\end{align*}
and there are no parameters.
The function $\varphi^{\mean^2}$ is then obviously controlled and trivially parameter-controlled.
Finally, by \cref{eqn:mathringtheta}, we set
\begin{align*}
\mathring \mean^2 \defeq \EV_{z \sim \Gaus(\tmu(h^2), \tSigma(h^2, h^2))} z = \EV_{z \sim \Gaus(0, \f 1 2 \sigma_w^2)} z = 0.
\end{align*}

\paragraph{C-var $\var^2$}
Similar to the case of $\var^1$, we can express $\var^2$ via \ref{linetype:moment} by
\begin{align*}
\var^2 := \f 1 n \sum_{\alpha=1}^n \varphi^{\var^2}(h^2_\alpha; \nu^2),\quad\text{where
$\varphi^{\var^2}(z; \theta) \defeq z^2 - \theta^2$.}
\end{align*}
(Here $z^2$ and $\theta^2$ are the squares of $z$ and $\theta$).
Since $\varphi^{\var^2}(-; -)$ is pseudo-Lipschitz in both its inputs and its parameter jointly, it is parameter-controlled at $\theta = \mathring \nu^2 = 0$ by \cref{exmp:parameterControl}.

In addition, $\mathring \var^2$ is computed by \cref{eqn:mathringtheta} as
\begin{align*}
\mathring \var^2
    \defeq
        \EV_{z \sim \Gaus(\tmu(h^2), \tSigma(h^2, h^2))} \varphi^{\var^2}(z; \mathring \mean^2)
    = \EV_{z \sim \Gaus(0, \f 1 2 \sigma_w^2)}
           z^2
    = 
        \f 1 2 \sigma_w^2
        .
\end{align*}

\paragraph{H-var $x^2$}
Similar to the case of $x^1$, we can express $x^2$ via \ref{linetype:nonlin+} by
\begin{align*}
x^2 := \varphi^{x^2}(h^2; \mean^2, \var^2)
\end{align*}
where $\varphi^{x^2}(-; -): \R \times \R^2 \to \R$, and 
\begin{align*}
\varphi^{x^2}(z; \theta_1, \theta_2)
    &\defeq \relu\lp \f{z - \theta_1}{\sqrt{\theta_2}} \rp
        .
\end{align*}

Since $\sigma_w > 0$, we also have $\mathring \var^2 > 0$.
Then by the same reasoning as in the case of $x^1$, $\varphi^{x^2}$ is parameter-controlled at $\mathring{\bigvtheta}^{x^2} = (\mathring \mean^2, \mathring \var^2)$.

\paragraph{C-vars $\mean^1{}', \var^1{}'$ and H-var $x^1{}'$}
These calculations proceed similarly to those for $\mean^1, \var^1$ and $x^1$.
We end up with
\begin{align*}
\mathring \mean^1{}' = 0,\quad
\mathring \var^1{}' = \f{\sigma_w^2}{m} \|x'\|^2,
\end{align*}
and, for each $u \in \{\mean^1{}', \var^1{}', x^1{}'\}$, the associated nonlinearity $\varphi^u$ is parameter-controlled at limit parameter $\bigtheta^{u}$.

\paragraph{G-var $h^2{}'$}
By the \ref{linetype:MatMul} case of \cref{eqn:selfParamExtendedMuSigma},
\begin{align*}
\Sigma(h^2, h^2{}') = \sigma_w^2 \EV_{z, z'} \phi(z; \mathring \mean^1, \mathring \var^1) \phi(z; \mathring \mean^1{}', \mathring \var^1{}'),\quad
\Sigma(h^2{}', h^2{}') = \sigma_w^2 \EV_{z'} \phi(z'; \mathring \mean^1{}', \mathring \var^1{}')^2
\end{align*}
and $\Sigma(h^2{}', g) = 0$ for all other G-var $g$ (by the ``otherwise'' case of \cref{eqn:selfParamExtendedMuSigma}),
where
\[(z, z') \sim \Gaus(\tmu|_{W^1 x, W^1 x'}, \tSigma|_{W^1 x, W^1 x'}) = \Gaus\lp 0,
    \f{\sigma_w^2}m
    \begin{pmatrix}
    \|x\|^2 & x^\trsp x'\\
    x^\trsp x' & \|x'\|^2
    \end{pmatrix}\rp\]
and
\begin{align*}
\phi(z; \theta_1, \theta_2) &\defeq \varphi^{x^1}(z; \theta_1, \theta_2) = \varphi^{x^1{}'}(z; \theta_1, \theta_2) = \relu\lp \f{z - \theta_1}{\sqrt{\theta_2}} \rp.
\end{align*}

We can simplify
\begin{align*}
\Sigma|_{h^2, h^2{}'}
    &=
        \sigma_w^2 \EV_{\tilde z,\tilde z'}
        \relu(\tilde z)\relu(\tilde z'),
        \quad
        (\tilde z, \tilde z') \sim \Gaus\lp 0,
            \begin{pmatrix}
            1 & \f{x^\trsp x'}{\|x\|\|x'\|}\\
            \f{x^\trsp x'}{\|x\|\|x'\|} & 1
            \end{pmatrix}\rp
        \\
    &=
        \sigma_w^2 \Vt{\relu}\begin{pmatrix}
            1 & \f{x^\trsp x'}{\|x\|\|x'\|}\\
            \f{x^\trsp x'}{\|x\|\|x'\|} & 1
            \end{pmatrix}
        ,
\end{align*}
where $\Vt\relu$ is as given in \cref{fact:Vrelu}.
In particular, with $c \defeq \f{x^\trsp x'}{\|x\|\|x'\|}$, this yields
\begin{align*}
\Sigma(h^2{}', h^2{}') = \Sigma(h^2{}, h^2{}) = \f 1 2 \sigma_w^2,\quad
\Sigma(h^2, h^2{}') =  \f {\sigma_w^2} {2\pi} (\sqrt{1-c^2} + (\pi - \arccos c) c)
.
\numberthis\label{eqn:SigmaRestricth2}
\end{align*}

\paragraph{C-vars $\mean^2{}', \var^2{}'$ and H-var $x^2{}'$}
These calculations proceed similarly to those for $\mean^2, \var^2$ and $x^2$.
We end up with
\begin{align*}
\mathring \mean^2{}' = 0,\quad
\mathring \var^2{}' = \f 1 2 \sigma_w^2,
\end{align*}
and, for each $u \in \{\mean^2{}', \var^2{}', x^2{}'\}$, the associated nonlinearity $\varphi^u$ is parameter-controlled at limit parameter $\bigtheta^{u}$.

\subsubsection{Computing the GP Kernel}
It is easy to see that the set of H-vars are all linearly independent almost surely.
Therefore we may apply \cref{cor:GPConvPlusSelfParam}.
By \cref{cor:GPConvPlusSelfParam}, $(v^\trsp x^{2}/\sqrt n, v^\trsp x^2{}' / \sqrt n)$ converges in distribution to $\Gaus(0, K)$ where
\begin{align*}
K =
\sigma_v^2 \EV_{z,z'}
\begin{pmatrix}
\phi(z; \mathring{\bigvtheta}^{x^2})^2
    & \phi(z; \mathring{\bigvtheta}^{x^2})\phi(z; \mathring{\bigvtheta}^{x^2{}'})
        \\
\phi(z; \mathring{\bigvtheta}^{x^2})\phi(z; \mathring{\bigvtheta}^{x^2{}'})
    & \phi(z'; \mathring{\bigvtheta}^{x^2{}'})^2
\end{pmatrix}
\end{align*}
where $\phi(z; \theta_1, \theta_2)
    \defeq \relu\lp \f{z - \theta_1}{\sqrt{\theta_2}} \rp$ and
 $(z, z') \sim \Gaus(\tmu|_{h^2, h^2{}'}, \tSigma|_{h^2, h^2{}'})$ with $\tmu|_{h^2, h^2{}'} = 0$ and $\tSigma|_{h^2, h^2{}'}$ given in \cref{eqn:SigmaRestricth2}.
Since $\mathring{\vartheta}^{x^2}_1 = \mathring \nu^2 = \mathring{\vartheta}^{x^2{}'}_1 = \mathring \nu^2{}' = 0$ and $\mathring{\vartheta}^{x^2}_2 = \mathring{\var}^2 = \mathring{\vartheta}^{x^2{}'}_2 = \mathring{\var}^2{}' = \f 1 2 \sigma_w^2$, we can simplify
\begin{align*}
K
    =
        {\sigma_v^2} \Vt\relu\lp
            (\sigma_w^2/2)^{-1}\Sigma|_{h^2, h^2{}'}\rp
    =
        \f{2\sigma_v^2}{\sigma_w^2} \Vt\relu\lp \Sigma|_{h^2, h^2{}'}\rp.
\end{align*}
\subsection{Layernorm: General Case}

As mentioned in \cref{sec:MoreExamples}, layernorm in general can be implemented with \ref{linetype:nonlin+}.

Suppose $y^1, \ldots, y^k : \Htype(n)$ are H-vars defined by $y^i := \phi^i(g^1, \ldots, g^M; \bigtheta^i)$ for (possibly self-)parametrized nonlinearities $\phi^i(-; - ): \R^m \to \R, i \in [k]$ and parameters $\bigtheta^i$ (possibly dependent on previous G-vars).
Suppose that each $\bigtheta^i$ converges almost surely to a deterministic vector $\mathring{\bigtheta}^i$,
and suppose each $\phi^i$ is parameter-controlled at $\mathring{\bigtheta}^i$.
Each of $y^i$ has mean
\begin{align*}
    \mean(y^i) \defeq
    \f 1 n \sum_{\alpha=1}^n y^i_\alpha
    = \f 1 n \sum_{\alpha=1}^n \phi^i(g^1_\alpha, \ldots, g^M_\alpha; \bigtheta^i)
\end{align*}
and variance
\begin{align*}
    \sigma^2(y^i) \defeq
    \f 1 n \sum_{\alpha=1}^n (y^i_\alpha)^2 - \nu(y^i)^2
    = \f 1 n \sum_{\alpha=1}^n \phi^i(g^1_\alpha, \ldots, g^M_\alpha; \bigtheta^i)^2 - \mean(y^i)^2
    .
\end{align*}
Under generic conditions (i.e.\ \cref{assm:asRankStab} and parameter-control), \cref{thm:Netsor+MasterTheorem} or \cref{thm:selfParamNetsorplusMasterTheorem} applies, so that
\[
\mean(y^i) \asto \mathring \mean(y^i) \defeq \EV_Z \phi^i\lp Z; \mathring{\bigtheta}^i\rp, \text{ and }
\sigma^2(y^i) \asto \mathring \sigma^2(y^i) \defeq \EV_Z \phi^i\lp Z; \mathring{\bigtheta}^i\rp^2 - \left[\EV_Z \phi^i\lp Z; \mathring{\bigtheta}^i \rp\right]^2
\]
where $Z \sim \Gaus(\tmu, \tSigma)$.
$\Layernorm(y^i)$ can then be expressed via a self-parametrized (\cref{defn:selfParam}) \ref{linetype:nonlin+} rule like so
\[
\Layernorm(y^i) = \psi(y^i; \mean(y^i), \sigma^2(y^i)),
\quad\text{where}\quad
\psi(z; a, b) \defeq (z - a) / \sqrt b.
\]

It's easy to check that $\psi(z; a,b)$ is parameter-controlled at $\mathring a, \mathring b$ as long as $\mathring b \ne 0$.
Assuming rank stability (\cref{assm:asRankStab}) is not violated by the new variables, \cref{thm:Netsor+MasterTheorem} holds, so that, intuitively, this application of \ref{linetype:nonlin+} can be replaced with a straightforward application of \ref{linetype:nonlin}:
\[
``\Layernorm(y^i) = \psi(y^i; \mean(y^i), \sigma^2(y^i))"
\to
``\Layernorm(y^i) = \psi(y^i; \mathring \mean(y^i), \mathring \sigma^2(y^i))"
.
\]
Therefore, if we define the kernel matrices
\begin{align*}
\Omega_{ij}
    &= \lim_{n \to \infty} y^i{}^\trsp y^j / n
    \\
\bar\Omega_{ij}
    &= \lim_{n \to \infty} \Layernorm(y^i)^\trsp \Layernorm(y^j) / n
    ,
\end{align*}
then
\begin{align*}
    \bar \Omega_{ij}
        &=
            \lim_{n \to \infty} \f 1 n \f{(y_i - \nu(y^i))^\trsp (y_j - \nu(y^j))}{\sqrt{\sigma^2(y^i)\sigma^2(y^j)}} \\
        &=
            \lim_{n \to \infty} \f{y_i^\trsp y_j/n - \nu(y^i) \nu(y^j)}{\sqrt{\sigma^2(y^i)\sigma^2(y^j)}}
            \\
        &=
            \lim_{n \to \infty} \f{y_i^\trsp y_j/n - \mathring \nu(y^i) \mathring \nu(y^j)}{\sqrt{\mathring \sigma^2(y^i)\mathring \sigma^2(y^j)}}
            \\
    \bar \Omega
        &=
            D^{-1/2}(\Omega - \mathring\nu \mathring\nu^\trsp)D^{-1/2},
\end{align*}
where $\mathring \nu$ is the column vector $(\mathring \nu(y^1), \ldots, \mathring \nu(y^k))^\trsp$ and $D = \Diag(\Omega - \mathring\nu \mathring\nu^\trsp)$.

In summary,
\begin{tcolorbox}[title=Computing Layernorm Kernel]
Suppose $y^1, \ldots, y^k : \Htype(n)$ are H-vars defined by $y^i := \phi^i(g^1, \ldots, g^M; \bigtheta^i)$ for (possibly self-)parametrized nonlinearities $\phi^i(-; -): \R^m \to \R, i \in [k]$ and parameters $\bigtheta^i$.
Assume that each $\bigtheta^i$ converges almost surely to a deterministic vector $\mathring{\bigtheta}^i$,
and that each $\phi^i$ is parameter-controlled at $\mathring{\bigtheta}^i$.
If we define the kernel matrices
\begin{align*}
\Omega_{ij}
    &= \lim_{n \to \infty} y^i{}^\trsp y^j / n
    \\
\bar\Omega_{ij}
    &= \lim_{n \to \infty} \Layernorm(y^i)^\trsp \Layernorm(y^j) / n
    ,
\end{align*}
then, assuming generic conditions (see main text above),
\begin{align*}
\bar \Omega = D^{-1/2}(\Omega - \mathring\nu \mathring\nu^\trsp)D^{-1/2},
\end{align*}
where $D = \Diag(\Omega - \mathring\nu \mathring\nu^\trsp)$ and $\mathring \nu$ is the vector given by $\mathring \nu_i = \EV \phi^i(Z; \mathring{\bigtheta}^i), Z \sim \Gaus(\tmu, \tSigma).$
\end{tcolorbox}

\subsection{Transformer \texorpdfstring{(\cref{tp:transformer})}{}}
\label{sec:transformerKernel}

\paragraph{The Transformer Variant, in Mathematical Terms}
We'll work with the following transformer model.
Let $x^0_1, \ldots, x^0_t$ be a sequence of inputs (the superscript will be layer index, and the subscript will be token index).
Then each layer $l$ of our transformer works like the following
\begin{align*}
k^l_i &= U^l x_i^{l-1} \in \R^n\\
h^l_i &= \Layernorm(k^l_i + \MaskedAttention_i(k^l_i, \{k^l_j\}_{j=1}^t, \{k^l_j\}_{j=1}^t))\\
    \numberthis\label{eqn:trsfmrAttnLayernorm}
x^l_i &= \Layernorm(W^{l2}\mathrm{relu}(W^{l1}h^l_i + b^{l1})+ b^{l2} + W^{l1} h^l_i)
\end{align*}
where $U^l$, $W^{l1}, W^{l2}$ are weights and $b^{l1}, b^{l2}$ are the biases, and
\begin{align*}
&\MaskedAttention_j(q, \{k^i\}_{i=1}^r, \{v^i\}_{i=1}^r)
= \sum_{i=1}^r a_i^j v^i,\\
&\quad \text{where}\quad
a_i^j = \SoftMax(q^\trsp k^1/n, \ldots, q^\trsp k^j/n, -\infty, \ldots, -\infty)_i
\numberthis\label{eqn:maskedattention}
\end{align*}
as described in \cref{sec:MoreExamples}.

Note that we make the following simplifications for ease of presentation, but all of them can be removed at the expense of more complex \netsor programs.
\begin{enumerate}
    \item We are forgoing positional embeddings
    \item The keys, values, and queries here are the same, compared to the standard version, where they are different linear projections of $x^{l-1}_i$
    \item There is only 1 head, compared to the standard multi-head attention
    \item The skip connection has base $W^{l2} h^l_i$ instead of just $h^l_i$
\end{enumerate}

\paragraph{Setup} assume for all $\alpha, \beta \in [n]$,
\begin{itemize}
    \item $W^{l1}_{\alpha\beta}, W^{l2}_{\alpha\beta} \sim \Gaus(0, \sigma_w^2/n)$ for all $l \ge 1$
    \item $U^l_{\alpha\beta} \sim \Gaus(0, \sigma_u^2/n)$ for all $l \ge 2$ and $U^1_{\alpha\beta} \sim \Gaus(0, \sigma_u^2/m)$
    \item $b^{l1}_\alpha, b^{l2}_\alpha \sim \Gaus(0, \sigma_b^2)$ for all $l$.
    \item $v_\alpha \sim \Gaus(0, \sigma_v^2)$
\end{itemize}

{   
\makeatletter
\renewcommand{\ALG@name}{Self-parametrized \netsorplus program}
\makeatother
\begin{algorithm}[tb]
    \caption{Transformer}
    \label{tp:transformer}
    \begin{algorithmic}[1]
      \Require $U^1 x^0_1, \ldots, U^1 x^0_t: \Gtype(n)$
      \Require $\forall l = 1, \ldots, L: W^{l1}, W^{l2}: \Atype(n, n)$
      \Require $\forall l = 2, \ldots, L: U^l: \Atype(n, n)$
      \Require $\forall l = 1, \ldots, L: b^{l1}, b^{l2}: \Gtype(n)$
      \Require $v: \G(n)$
      \For{$l = 1, \ldots, L$}
          \For{$i = 1, \ldots, t$}
          \State {\it // if $l=1$, apply \ref{linetype:lincomb}}
          \State {\it // if $l\ge 2$, apply \ref{linetype:MatMul}}
          \State $k^l_i := U^l x^{l-1}_i: \Gtype(n)$
          \EndFor
          \For{$i = 1, \ldots, t$}
            \For{$j = 1, \ldots, t$}
              \State {\it // \ref{linetype:moment}}
              \State $c_{ij} := k^l_i{}^\trsp k^l_j / n: \Ctype$
                \label{line:cij}
            \EndFor
          \State {\it // With $a^i_j$ being shorthand for}
          \State {\it // $\SoftMax(c_{i1}, \ldots, c_{ii}, -\infty, \ldots, -\infty)_j$}
          \State {\it // Mean, post attention}
          \State $\mean_i := \f 1 n \sum_{\alpha=1}^n (k^l_i + \sum_{j=1}^t a^i_j k^l_j)_\alpha: \Ctype$
            \label{line:meani}
          \State {\it // Variance, post attention}
          \State $\var_i = \f 1 n \sum_{\alpha=1}^n (k^l_i + \sum_{j=1}^t a^i_j k^l_j)_\alpha^2 - \nu_i^2: \Ctype$
            \label{line:vari}
          \State {\it // applying \ref{linetype:nonlin+} to express attention+layernorm}
          \State $h^l_i := (k^l_i + \sum_{j=1}^t a^i_j k^l_j - \mean_i \onev)/\sqrt{\var_i}: \Htype(n)$
            \label{line:hli}
          \EndFor
          \For{$i = 1, \ldots, t$}
          \State $y^{l1}_i := W^{l1} h^l_i: \Gtype(n)$
          \State $\hat y^{l1}_i := y^{l1}_i + b^{l1}: \Gtype(n)$
          \State $\hat x^{l1}_i := \relu(\hat  y^{l1}_i): \Htype(n)$
          \State $y^{l2}_i := W^{l2} \hat x^{l1}_i: \Gtype(n)$
          \State $\hat y^{l2}_i := y^{l2}_i + b^{l2}: \Gtype(n)$
          \State {\it // Layernorm mean and variance}
          \State $\mean_i' := \f 1 n \sum_{\alpha=1}^n (\hat y^{l2}_i)_\alpha + (y^{l1}_i)_\alpha: \Ctype$
            \label{line:ffmeani}
          \State $\var_i' := \f 1 n \sum_{\alpha=1}^n ((\hat y^{l2}_i)_\alpha + (y^{l1}_i)_\alpha)^2 - (\mean_i')^2: \Ctype$
            \label{line:ffvari}
          \State {\it // Layernorm}
          \State $x^l_i := (\hat y^{l2}_i + y^{l1}_i - \mean_i'\onev) / \sqrt{\var_i'}: \Htype(n)$
          \EndFor
      \EndFor
      \Ensure $(v^\trsp x^L_1/ \sqrt n, \ldots, v^\trsp x^L_t / \sqrt n)$
    \end{algorithmic}
\end{algorithm}
}

\subsubsection{Expressing the Composition of Attention, Skip Connection, and Layernorm via \texorpdfstring{\ref{linetype:nonlin+}}{Nonlin+} and \texorpdfstring{\ref{linetype:moment}}{Moment}}
\cref{tp:transformer} captures the computation of this transformer on an input sequence.
Let us explain how \cref{eqn:trsfmrAttnLayernorm} is expressed in \cref{tp:transformer}.
Throughout the below, we will use the easy observation that $\mu(g) = 0$ for all G-vars $g$.
For any layer $l$, we proceed as follows.

\paragraph{Attention Weights}
First, $c_{ij}$ in \cref{line:cij} represents a pre-SoftMax logit for the attention weights.
They are introduced via \ref{linetype:moment} by
\begin{align*}
c_{ij} := \f 1 n \sum_{\alpha=1}^n \varphi^{c_{ij}}((k^l_i)_\alpha, (k^l_j)_\alpha),\quad
\text{where $\varphi^{c_{ij}}(z_1, z_2) = z_1 z_2.$}
\end{align*}
This implies
\begin{align*}
\mathring c_{ij} = \EV_{Z \sim \Gaus(\tmu, \tSigma)} Z^{k^l_i} Z^{k^l_j} = \tSigma(k^l_i, k^l_j),
\numberthis\label{eqn:mathringcij}
\end{align*}
where we used $\tmu(g) = 0$ for all G-vars $g$.

\paragraph{Layernorm Mean and Variance}
Next, $\mean_i$ in \cref{line:meani} and $\var_i$ in \cref{line:vari} represent the mean and variance of the post-attention embedding of the $i$th token.
They are introduced via \ref{linetype:moment} by
\begin{align*}
\mean_i
    &:=
        \f 1 n \sum_{\alpha=1}^n \varphi^{\mean_i}((k^l_1)_\alpha, \ldots, (k^l_t)_\alpha; c_{i1}, \ldots, c_{ii})
        \\
\var_i
    &:=
        \f 1 n \sum_{\alpha=1}^n \varphi^{\var_i}((k^l_1)_\alpha, \ldots, (k^l_t)_\alpha; c_{i1}, \ldots, c_{ii}, \mean_i)
\end{align*}
where
\begin{align*}
\varphi^{\mean_i}(z_1, \ldots, z_t; \theta_1, \ldots, \theta_i)
    &\defeq
        z_i + \sum_{j=1}^t a_j z_j,
        \\
    &\text{where
        $(a_1, \ldots, a_t) = \SoftMax(\theta_1, \ldots, \theta_i, -\infty, \ldots, -\infty),$}
        \numberthis\label{eqn:aSoftMaxShortHand}
\end{align*}
and similarly,
\begin{align*}
\varphi^{\var_i}(z_1, \ldots, z_t; \theta_1, \ldots, \theta_i, \mean)
    &\defeq
        (z_i + \sum_{j=1}^t a_j z_j)^2 - \mean^2,
        \\
    &\text{where
        $(a_1, \ldots, a_t)$ are as in \cref{eqn:aSoftMaxShortHand}.}
\end{align*}
Note that both $\varphi^{\mean_i}$ and $\varphi^{\var_i}$ are pseudo-Lipschitz in both their inputs and parameters jointly, so that they are parameter-controlled by \cref{exmp:parameterControl}.

Their limit parameters can be computed as
\begin{align*}
\mathring\mean_i
    &=
        \tmu(k^l_i) + \sum_{j=1}^t \mathring a_j \tmu(k^l_j)
    = 0
        \\
    &\text{where
        $(\mathring a_1, \ldots, \mathring a_t) = \SoftMax(\mathring \theta_1, \ldots, \mathring\theta_i, -\infty, \ldots, -\infty),$}
        \numberthis\label{eqn:aSoftMaxShortHandLimit}
\end{align*}
since $\tmu = 0$ identically, and
\begin{align*}
\mathring \var_i
    &=
        \tSigma(k^l_i, k^l_i)
        + 2 \sum_{j} \mathring a_j \tSigma(k^l_i, k^l_j)
        +
        \sum_{j, j'} \mathring a_j \mathring a_{j'} \tSigma(k^l_j, k^l_{j'})
        \numberthis\label{eqn:varLimitTrsfmr}
\end{align*}
with $\mathring a_j$ same as in \cref{eqn:aSoftMaxShortHandLimit}.

\paragraph{Putting Them All Together}
Finally, $h^l_i$ in \cref{line:hli} represents the post-layernorm activations and is introduced via \ref{linetype:nonlin+} by
\begin{align*}
h^l_i
    &:=
        \varphi^{h^l_i}(k^l_1, \ldots, k^l_t; c_{i1}, \ldots, c_{ii}, \mean_i, \var_i)
\end{align*}
where
\begin{align*}
\varphi^{h^l_i}(z_1, \ldots, z_t; \theta_1, \ldots, \theta_i, \mean, \var)
    &:=
        (z_i + \sum_{j=1}^t a_j z_j - \mean) / \sqrt{\var}
        \numberthis\label{eqn:trsfmrVarphiH}
        \\
    &\text{where
        $(a_1, \ldots, a_t)$ are as in \cref{eqn:aSoftMaxShortHand}.}
\end{align*}
If $\mathring \var_i > 0$, then one can show that $\varphi^{h^l_i}$ is parameter-controlled at $(\mathring c_{i1}, \ldots, \mathring c_{ii}, \mathring \nu_i, \mathring \var_i)$ via the same reasoning as in \cref{sec:netsorplusKernelExample}.
When is $\mathring \var_i > 0$?
From \cref{eqn:varLimitTrsfmr}, because the $a_i$ are all nonnegative, $\mathring \var_i = 0$ implies that $\tSigma(k^l_i, k^l_i) = 0$.
This is impossible if all of the input tokens $x_i$ are nonzero and the weight variances satisfy $\sigma_w, \sigma_u > 0$, as one can easily see.

\subsubsection{Computing the GP Kernel}

By \cref{cor:GPConvPlusSelfParam}, the output vector converges in distribution to $\Gaus(0, K)$, where $K \in \R^{t \times t}$, and
\begin{align*}
K_{ij} = \sigma_v^2 \EV_{Z \sim \Gaus(\mu, \Sigma)} \varphi^{x^L_i}(Z; \mathring \Theta^{x^L_i}) \varphi^{x^L_j}(Z; \mathring \Theta^{x^L_j})
.
\end{align*}
Here $\Theta^{x^L_i} = \{ \mean'_i, \var'_i\}$ as given in \cref{line:ffmeani,line:ffvari}, and
\begin{align*}
\varphi^{x^L_i}(Z; \mean, \var)
    &=
        (Z^{\hat y^{L2}_i}  + Z^{y^{l1}_i}- \mean)/\sqrt{\var}
        .
\end{align*}
Simultaneously,
\begin{align*}
\mathring \mean'_i
    &=
        \EV_Z Z^{\hat y^{L2}_i} + Z^{y^{L1}_i},
        \quad
\mathring \var'_i
    =
        \EV_Z (Z^{\hat y^{L2}_i} + Z^{y^{L1}_i})^2 - (\mathring \mean'_i)^2
        .
\end{align*}
Thus, to compute $K$, it suffices to compute the restriction $\tSigma|_{y^{L1}_1, \ldots, y^{L1}_t, \hat y^{L2}_1, \ldots, \hat y^{L2}_t}$, from which $K$ can be computed by the equations above.

However, notice that by the ``otherwise'' case \cref{eqn:selfParamExtendedMuSigma}, $\Sigma({h^{L2}_i}, {h^{L1}_i}) = 0$ because $\hat h^{L2}_i$ and $h^{L1}_i$ are introduced by \ref{linetype:MatMul} with different A-vars, and consequently $\Sigma({\hat h^{L2}_i}, {h^{L1}_i}) = \Sigma({h^{L2}_i}, {h^{L1}_i}) + \Sigma(b^{l2}, {h^{L1}_i})= 0$.
Therefore, we only need to compute $\tSigma|_{y^{L1}_1, \ldots, y^{L1}_t}$ and $\tSigma|_{\hat y^{L2}_1, \ldots, \hat y^{L2}_t}$ separately.
Then $K$ is given by
\begin{align}
K = \sigma_v^2 D^{-1/2}(\tSigma|_{y^{L1}_1, \ldots, y^{L1}_t}
                    + \tSigma|_{\hat y^{L2}_1, \ldots, \hat y^{L2}_t})
                D^{-1/2},
    \label{eqn:GPkernelTransformer}
\end{align}
where $D$ is the diagonal matrix with diagonal equal to the diagonal of $\tSigma|_{y^{L1}_1, \ldots, y^{L1}_t}
                    + \tSigma|_{\hat y^{L2}_1, \ldots, \hat y^{L2}_t}$.

\subsubsection{Computing \texorpdfstring{$\tSigma$}{Sigma}}

Let
\[\tSigma^{\hat y^{l2}} \defeq \tSigma|_{\hat y^{l2}_1, \ldots, \hat y^{l2}_t} ,\quad
\tSigma^{y^{l1}} \defeq \tSigma|_{y^{l1}_1, \ldots, y^{l1}_t},\quad
\tSigma^{k^l} \defeq \tSigma|_{k^l_1, \ldots, k^l_t}\]
resp.\ be the restriction of $\tSigma$ to $\{\hat y^{l2}_i\}_i$, $\{y^{l1}_i\}_i$, and $\{\hat k^l_i\}_i$.
As explained above, the kernel of the Gaussian process underlying the output vector $(v^\trsp x^L_1/ \sqrt n, \ldots, v^\trsp x^L_t / \sqrt n)$ can be computed from $\tSigma^{\hat y^{L2}}$.

In this section, we shall describe equations tying together $\tSigma^{\hat y^{l2}}, \tSigma^{y^{l1}}, \tSigma^{k^l}$ that will allow us to compute $\tSigma^{\hat y^{L2}}$ recursively.

\paragraph{Computing $\tSigma^{y^{l1}}$ from $\tSigma^{k^l}$.}

The G-var $y^{l1}_i$ is introduced as $y^{l1}_i := W^{l1} h^l_i$.
Then given \cref{eqn:trsfmrVarphiH}, we have, for any $i, i' \in [t]$,
\begin{align}
    \tSigma(y^{l1}_i, y^{l1}_{i'}) = \f {\sigma_w^2} {\sqrt{\mathring \var_i \mathring \var_{i'}}}\lp
        \tSigma(k^l_i, k^l_{i'})
        + \sum_j \mathring a^i_j \tSigma(k^l_j, k^l_{i'})
        + \sum_{j'} \mathring a^{i'}_{j'} \tSigma(k^l_i, k^l_{j'})
        + \sum_{j, j'} \mathring a^i_j \mathring a^{i'}_{j'}\tSigma(k^l_j, k^l_{j'})\rp
        ,
    \label{eqn:transformerYl1}
\end{align}
where
\begin{align*}
(\mathring a^i_1, \ldots, \mathring a^i_t)
    &=
        \SoftMax(\mathring c_{i1}, \ldots, \mathring c_{ii}, -\infty, \ldots, -\infty)
        \\
    &=
        \SoftMax(\Sigma(k^l_i, k^l_1), \ldots, \Sigma(k^l_i, k^l_i), -\infty, \ldots, -\infty)
\end{align*}
by \cref{eqn:mathringcij}, 
and likewise for $i'$.
This reduces computing $\tSigma^{y^{l1}}$ to computing $\tSigma^{k^l}$.

\paragraph{Computing $\tSigma^{\hat y^{l2}}$ from $\tSigma^{y^{l1}}$.}

By some simple calculations in the vein of \cref{sec:MLPmulti}, we can also see
\begin{align}
\tSigma^{\hat y^{l2}}
    =
        \sigma_w ^2 \Vt\relu\lp \tSigma^{y^{l1}} + \sigma_b^2 \rp + \sigma_b^2
        .
    \label{eqn:transformerYhatl2}
\end{align}

\paragraph{Computing $\tSigma^{k^{l+1}}$ from $\tSigma^{\hat y^{l2}}$.}

Finally, following the same reasoning as in \cref{sec:layernorm}, we get
\begin{align*}
\mathring \mean_i' = 0,\quad
\mathring \var_i' = \tSigma(\hat y^{l2}_i, \hat y^{l2}_i) + \tSigma(y^{l1}_i, y^{l1}_i),
\end{align*}
$\varphi^{x^l_i}$ is parameter-controlled at $\mathring \bigtheta^{x^l_i}$ as long as $\mathring \var_i' > 0$, and
\begin{align}
\tSigma^{k^{l+1}}
    = 
        \sigma_u^2 D^{-1/2} (\tSigma^{\hat y^{l2}} + \tSigma^{y^{l1}}) D^{-1/2}
    \label{eqn:transformerKl}
\end{align}
where $D = \Diag(\tSigma^{\hat y^{l2}} + \tSigma^{y^{l1}})$.

Putting them all together, \cref{eqn:transformerYl1,eqn:transformerYhatl2,eqn:transformerKl} along with \cref{eqn:GPkernelTransformer} yield the complete set of equations to compute the GP kernel of a transformer.

\subsubsection{Vectorized Implementation: Single Sequence}
\cref{eqn:transformerYhatl2,eqn:transformerKl,eqn:GPkernelTransformer} are already in vectorized forms.
The following equation expresses \cref{eqn:transformerYl1} in a vectorized form as well:
\begin{align*}
\Sigma^{y^l} = \sigma_w^2 D^{-1/2}(I + \Delta) \Sigma^{k^{l}} (I + \Delta)^\trsp D^{-1/2}
\end{align*}
where 
\begin{itemize}
    \item $\Delta = \SoftMax(\mathrm{Mask}(\Sigma^{k^l}))$, with SoftMax applied to each row, and $\mathrm{Mask}(\Sigma^{k^l})$ is the same as $\Sigma^{k^l}$, except that its upper triangular portion (above the diagonal) is all set to $-\infty$, and
    \item $D$ is the diagonal matrix with diagonal equal to the diagonal of $(I + \Delta) \Sigma^{k^l} (I + \Delta)^\trsp$.
\end{itemize}

Here, $\Delta$ is the attention weights, masked so that a token's embedding cannot depend on those of future tokens.
The identity matrix $I$ appears due to the skip connection.
And the multiplication by $D^{-1/2}$ is as result of layernorm.

\subsubsection{Vectorized Implementation: Double Sequence}
\cref{tp:transformer} only expresses the computation of a transformer on a single sequence.
In general, the GP kernel will also have covariances between the embeddings of tokens of one sequence and those of tokens of another sequence.
One can derive the computation of these covariances just as we did above for a single sequence.
Below, we will just summarize the vectorized implementation for computing the joint kernel over multiple input sequences.
One should think of $\vecSigma^l$ below as the tensor of $\Sigma^{k^{l}}$ over every pair of sequences, and one should think of $\hat \vecSigma^l$ as the same for $\Sigma^{y^l}$.

\begin{tcolorbox}[title=Computing Transformer Kernel]
Suppose we have $p$ input sequences $\{(x_{1a}, \ldots, x_{ta})\}_{a=1}^p$, each with $t$ tokens.
Suppose each sequence is processed by a transformer as in \cref{tp:transformer}, and the transformer's parameters are sampled with nonzero variances as follows.
\begin{itemize}
    \item $W^{l1}_{\alpha\beta}, W^{l2}_{\alpha\beta} \sim \Gaus(0, \sigma_w^2/n)$ for all $l \ge 1$
    \item $U^l_{\alpha\beta} \sim \Gaus(0, \sigma_u^2/n)$ for all $l \ge 2$ and $U^1_{\alpha\beta} \sim \Gaus(0, \sigma_u^2/m)$
    \item $b^{l1}_\alpha, b^{l2}_\alpha \sim \Gaus(0, \sigma_b^2)$ for all $l$.
    \item $v_\alpha \sim \Gaus(0, \sigma_v^2)$
\end{itemize}

Then the transformer's outputs, one scalar for each input token, converge in distribution to a Gaussian $\Gaus(0, \KK)$ where $\KK \in \R^{pt \times pt}$ can be computed as follows:
\begin{enumerate}
    \item Initialize $\vecSigma^0 \in \R^{t \times p \times t \times p}$ by
     $\vecSigma^0_{iajb} \gets \sigma_u^2 x_{ia}^\trsp x_{jb} / m$ for all $a, b \in[p]$ and $i,j \in [t]$.
    \item For $l = 1, \ldots, L$, do
    \begin{enumerate}
        \item For $a = 1, \ldots, p$, do
        \begin{enumerate}
            \item $\Sigma^{l-1,a} \gets \vecSigma^{l-1}_{\bullet a\bullet a}$ be the $a$th ``diagonal block''
            \item $\Delta^{la} \gets \SoftMax(\mathrm{Mask}(\Sigma^{l-1, a}))$, where $\mathrm{Mask}$ replaces the upper triangular portion (above the diagonal) with $-\infty$, and $\SoftMax$ is applied row-wise.
        \end{enumerate}
        \item $\mathbf \Delta^{l} \gets $ block diagonal matrix with $\Delta^{l1}, \ldots, \Delta^{lp}$ as blocks.
        \item {\it // below, we treat each tensor as a $(pt \times pt)$ matrix.}
        \item $\hat \vecSigma^{l} \gets \mathbf (I + \mathbf \Delta^{l}) \vecSigma^{l-1} (I + \mathbf \Delta^{l})^\trsp$
        \item $\hat \vecSigma^{l} \gets \sigma_w^2 D^{-1/2} \hat \vecSigma^{l} D^{-1/2}$, where $D = \Diag(\vecSigma^{l})$
        \item $\vecSigma^{l} \gets
            \sigma_w^2 \Vt{\mathrm{ReLU}}(
                \hat \vecSigma^l + \sigma_b^2)
                    + \sigma_b^2$
        \item $\vecSigma^{l} \gets \sigma_u^2 D^{-1/2} (\vecSigma^{l} + \hat\vecSigma^{l})D^{-1/2}$, where $D = \Diag(\vecSigma^{l} + \hat \vecSigma^{l})$
    \end{enumerate}
    \item Return $\f{\sigma_v^2}{\sigma_u^2} \vecSigma^L$
\end{enumerate}
\end{tcolorbox}

See our repo \repo{} for an implementation of this algorithm.

\section{Different Versions of Tensor Programs}
\label{sec:VersionTensorPrograms}
\begin{defn}\label{defn:netsormin}
A \netsormin program is a \netsor program without the \ref{linetype:lincomb} rule.
\end{defn}

\begin{remk}\label{remk:netsormin}
Any \netsor program is semantically identical to a \netsormin program, by absorbing any usage of \ref{linetype:lincomb} into a downstream nonlinearity (e.g., if $g := g^1 + g^2$, and $h := \phi(g)$, write $h := \phi(g^1 + g^2)$ directly as an application of \ref{linetype:nonlin}), or if there is no downstream nonlinearity, treat it as an application of \ref{linetype:nonlin}.
Because \ref{linetype:lincomb} allows one to express certain gadgets such as skip connection and convolutions more easily, we chose to present \netsor as the canonical version of Tensor Program here.
See \cref{sec:formalspec} for a formal specification of \netsormin.
\end{remk}

By the remark above, the following \netsormin Master Theorem is equivalent to \cref{thm:netsorMasterTheorem}.

\begin{restatable}[\netsormin Master Theorem]{thm}{netsorminMasterTheorem}
\label{thm:netsorminMasterTheorem}
Fix any \netsormin program satisfying \cref{assm:equalDimNoNonlin+} and with all nonlinearities controlled.
If $g^1, \ldots, g^M$ are all of the G-vars (including all input G-vars), then for any controlled $\psi: \R^M \to \R$, as $n \to \infty$,
\begin{align*}
    \f 1 n \sum_{\alpha=1}^n \psi(g^1_\alpha, \ldots, g^M_\alpha) \asto 
    \EV_{Z \sim \Gaus(\tmu, \tSigma)}\psi(Z)
    =
    \EV_{Z \sim \Gaus(\tmu, \tSigma)}\psi(Z^{g^1}, \ldots, Z^{g^M}),
\end{align*}
where $\asto$ means almost sure convergence,
$Z = (Z^{g^1}, \ldots, Z^{g^M}) \in \R^M$, and $\tmu = \{\tmu(g^i)\}_{i=1}^M \in \R^M$ and $\tSigma = \{\tSigma(g^i, g^j)\}_{i,j=1}^M \in \R^{M \times M}$ are given in \cref{eqn:extendedMuSigma} (note that the cases involving \ref{linetype:lincomb} in \cref{eqn:extendedMuSigma} are now vacuous in this setting with \netsormin program).
See \cref{fig:mastertheoremIllustration} for an illustration.
\end{restatable}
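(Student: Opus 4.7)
The plan is to prove the theorem by strong induction on the position $m$ of a G-var in its order of appearance in the program, strengthening the inductive hypothesis so that two claims are carried simultaneously up to step $m$: (i) for every controlled test function $\psi$, the empirical moment $n^{-1}\sum_\alpha \psi(g^1_\alpha, \ldots, g^m_\alpha)$ converges almost surely to $\EV_{Z\sim\Gaus(\tmu|_m, \tSigma|_m)} \psi(Z)$, and (ii) a rank-stability claim: for every A-var $W$ and every collection $h^1, \ldots, h^k$ of H-vars that have been multiplied by $W$ so far, the finite-$n$ matrix $H = [h^1 \mid \cdots \mid h^k] \in \R^{n \times k}$ almost surely satisfies $\rank H = \rank \mathring C$ for all sufficiently large $n$, where $\mathring C = \lim_n n^{-1} H^\trsp H$ exists by (i). Carrying (ii) alongside (i) is exactly what lets this theorem dispense with the rank-convergence assumption of \cite{yangScalingLimitsWide2019arXiv.org}, as flagged by \cref{lemma:rankStability}.

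The base case, in which only input G-vars have been introduced, follows from the strong law of large numbers applied to the coordinatewise i.i.d.\ Gaussian samples posited by \cref{assm:equalDimNoNonlin+}, with integrability supplied by \cref{defn:controlled}; (ii) is vacuous at this stage. The only way a new G-var can appear in a \netsormin program is via MatMul, $g^{m+1} := W h$, where $h$ is an H-var built from earlier G-vars by \ref{linetype:nonlin}. I would handle this step via the Gaussian conditioning trick (\cref{lemma:condTrick}): letting $H \in \R^{n \times k}$ be the matrix of H-vars previously multiplied by $W$ and $Y = W H$ the corresponding previous outputs, the conditional law of $W$ given the entire prior computation coincides with the unconditional law of $Y H^+ + \tilde W \Pi_{H^\perp}$, where $H^+$ is the Moore--Penrose pseudo-inverse, $\Pi_{H^\perp} = I - H H^+$, and $\tilde W$ is an independent i.i.d.\ copy of $W$. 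Consequently $W h = Y H^+ h + \tilde W \Pi_{H^\perp} h$: the first summand is a linear combination of previously computed G-vars whose coefficients are continuous functions of the Gram data $n^{-1} H^\trsp H$ and $n^{-1} H^\trsp h$, while the second, conditional on the past, is a fresh vector of i.i.d.\ Gaussians whose variance $\sigma_W^2 n^{-1} \|\Pi_{H^\perp} h\|^2$ is itself an empirical moment. Both converge deterministically by invoking (i) at step $m$; plugging the decomposition into any controlled $\psi(g^1,\ldots,g^{m+1})$, integrating over $\tilde W$ conditionally, and then applying (i) at step $m$ to the remaining empirical average produces exactly the recursion in \cref{eqn:extendedMuSigma} for $\tSigma(g^{m+1}, \cdot)$, yielding (i) at step $m+1$.

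The main obstacle, and the sole reason (ii) must be carried along, is that $H^+$ is discontinuous across rank changes: if $\rank H$ failed to asymptotically match $\rank \mathring C$, then the coefficients $Y H^+ h$ and the variance $\sigma_W^2 \|\Pi_{H^\perp} h\|^2 / n$ could fail to converge to the deterministic values predicted by \cref{eqn:extendedMuSigma}, breaking the match with the Gaussian target. To close the induction I would then prove (ii) at step $m+1$ via \cref{lemma:rankStability}: the easy inequality $\rank H \ge \rank \mathring C$ is immediate from lower semicontinuity of rank applied to $n^{-1} H^\trsp H \to \mathring C$, while the harder inequality must exploit the specific way H-vars are built in \netsormin, promoting any null vector $v$ of $\mathring C$ to an exact finite-$n$ null relation $H v = 0$ by applying (i) at step $m+1$ to a controlled test function that witnesses the dependence in $L^2$. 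A minor technical item is the justification of conditional Gaussian limits under controlled test functions; this is supplied by the $e^{C\|x\|^{2-\epsilon} + c}$ bound in \cref{defn:controlled}, which gives more than enough uniform integrability against Gaussian measures to run dominated convergence wherever it is needed.
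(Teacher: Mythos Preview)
Your overall architecture---simultaneous induction on moment convergence and a rank-type claim, with the inductive step driven by the Gaussian conditioning trick \cref{lemma:condTrick}---matches the paper's. However, your second hypothesis (ii) is strictly weaker than what the paper actually carries, and your proposed proof of (ii) contains the real gap. Applying (i) to the obvious test function $(\sum_j v_j\psi_j)^2$ gives only $n^{-1}\|Hv\|^2\to 0$; it does \emph{not} give the exact finite-$n$ relation $Hv=0$ you need. Passing from ``zero in $L^2$ under the limiting Gaussian'' to ``zero at every coordinate $\alpha$ for large $n$'' requires knowing that the finite-$n$ tuple $(g^1_\alpha,\ldots,g^{m}_\alpha)$ never lands in the Lebesgue-null set where the $L^2$ identity fails. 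The paper secures this by inducting on a stronger hypothesis, \ref{IH:coreSet}, which has two parts: \ref{prop:basis} (a fixed ``core set'' of G-vars spans all others with constant-in-$n$ coefficients) and, crucially, \ref{prop:nullAvoid} (the core-set coordinates avoid every triangular array of Lebesgue-null sets). It is \ref{prop:nullAvoid}---not moment convergence---that yields Zero Stability (\cref{lemma:zerofunStability}) and hence the rank claim \cref{lemma:rankStability}; your (ii) is a corollary of these, not a substitute.

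The same gap bites in your moment step. After integrating out $\tilde W$ you still have a test function that depends on the \emph{random} coefficients $v=\hat\Lambda^+\hat\nu$ and the random variance $\sigma^2$, so you cannot yet apply (i) at step $m$; you must first replace $v,\sigma$ by their deterministic limits $\mathring v,\mathring\sigma$. This swap is the paper's $\probB$ term, controlled via the smoothness that Gaussian averaging induces (\cref{lemma:PsiAlphaSmoothness}). That bound carries a factor $\sigma^{-1}$ and is useless when $\mathring\sigma=0$. The paper dispatches that degenerate case via \cref{lemma:limitSigmaIsZero}, which shows $g^{m}$ is \emph{exactly} a fixed linear combination of earlier $\hat g^i$ for large $n$; but the proof of \cref{lemma:limitSigmaIsZero} is again an application of \ref{prop:nullAvoid} to upgrade the $L^2$ membership $\phi\in\mathrm{span}\{\hat\phi^i\}$ to a pointwise identity at the finite-$n$ data. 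So without carrying something equivalent to \ref{prop:nullAvoid} through the induction, neither your rank-stability step nor the $\mathring\sigma=0$ branch of the moment step can be closed.
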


To prove \cref{thm:netsorMasterTheorem}, we will in fact prove \cref{thm:netsorminMasterTheorem}; see \cref{sec:proofMasterTheorems}.

\begin{defn}\label{defn:netsoro}
A \netsoro program (pronounced ``Net-Sor-O'') is a \netsor program but where \ref{linetype:nonlin} rules allow nonlinearities $\phi$ to take H-vars.
\netsoro programs are thus a superset of \netsor programs.
Similarly, a \netsoroplus (pronounced ``Net-Sor-O-Plus'') program is a \netsorplus program but where \ref{linetype:nonlin+} rules allow nonlinearities $\phi$ to take H-vars.
\end{defn}

\begin{remk}\label{remk:netsoro}

Any \netsoro program is semantically identical to a \netsor program:
If $g := W h$ is any application of \ref{linetype:MatMul}, we can rewrite $h$ as a function of G-vars only by
unwinding its definition recursively (e.g., if $h := \phi(h^1, g)$ and $h^1 := \psi(g^1, g^2)$, then we can write directly $h := \phi(\psi(g^1, g^2), g)$ using a single application of \ref{linetype:nonlin} in G-vars).
Likewise, any \netsoroplus program can be rewritten as a \netsorplus program without losing any information.

\netsoro programs can be more concise than \netsor programs by reusing H-vars more efficiently; see \cref{tp:GRUNetsoro} for GRU expressed in \netsoro, and compare to \cref{tp:GRU}.
However, the Master Theorem is more complicated to state, and the task of unwinding the nonlinearity just shifts from the program to the scaling limit computation stage; see \cref{eqn:extendedMuSigma2} below.
This is why we did not present \netsoro as the canonical version of Tensor Programs.
\end{remk}

{
    
\makeatletter
\renewcommand{\ALG@name}{\netsoro program}
\makeatother
\begin{algorithm}[tb]
    \caption{GRU, with Gating Function $\sigma$ and Activation Function $\phi$}
    \label{tp:GRUNetsoro}
    \begin{algorithmic}
      \State {\it // Embeddings of input sequence}
      \Require $U_\zz x^1, \ldots, U_\zz x^t: \Gtype(n)$
      \Require $U_\rr x^1, \ldots, U_\rr x^t: \Gtype(n)$
      \Require $U_\hh x^1, \ldots, U_\hh x^t: \Gtype(n)$
      \State {\it // Parameters}
      \Require $W_\zz, W_\rr, W_\hh: \Atype(n, n)$
      \Require $b_\zz, b_\rr, b_\hh: \Gtype(n)$
      \State {\it // Initial GRU state}
      \Require $h^0: \Gtype(n)$
      \State {\it // Readout layer}
      \Require $v: \Gtype(n)$
      \State {\it // Time step 1}
      \State $h_\zz^{1} := W_\zz h^0: \Gtype(n)$
      \State $\tilde z^1 := h_\zz^{1} + U_\zz x^1 + b_\zz: \Gtype(n)$
      \State $h_\rr^{1} := W_\rr h^0: \Gtype(n)$
      \State $\tilde r^1 := h_\rr^{1} + U_\rr x^1 + b_\rr: \Gtype(n)$
      \State {\it // $\sigma$ is gating function, typically sigmoid; applying \ref{linetype:nonlin}}
      \State $\hat h^0 := h^0 \odot \sigma(\tilde r^1): \Htype(n)$
      \State $h_\hh^{1} := W_\hh \hat h^0: \Gtype(n)$
      \State $\tilde h^1 := h_\hh^{1} + U_\hh x^1 + b_\hh: \Gtype(n)$
      \State {\it // Apply \ref{linetype:nonlin}}
      \State {\it // $\phi$ is activation function, typically $\tanh$}
      \State $h^1 := (1 - \sigma(\tilde z^1)) \odot h^0 + \sigma(\tilde z^1) \odot \phi(\tilde h^1): \Htype(n)$
      \State {\it // Time step 2}
      \State $h_\zz^{2} := W_\zz h^1: \Gtype(n)$
      \State $\tilde z^2 := h_\zz^{2} + U_\zz x^2 + b_\zz: \Gtype(n)$
      \State $h_\rr^{2} := W_\rr h^1: \Gtype(n)$
      \State $\tilde r^2 := h_\rr^{2} + U_\rr x^2 + b_\rr: \Gtype(n)$
      \State {\it // No longer need to unwind $h^1$ as in \cref{tp:GRU}}
      \State $\hat h^1 = \sigma(\tilde r^1) \odot h^1: \Htype(n)$
      \State $h_\hh^{2} := W_\hh \hat h^1: \Gtype(n)$
      \State $\tilde h^2 := h_\hh^{2} + U_\hh x^2 + b_\hh: \Gtype(n)$
      \State {\it // No longer need to unwind $h^1$ as in \cref{tp:GRU}}
      \State $h^2 := (1 - \sigma(\tilde z^2)) \odot h^1 + \sigma(\tilde z^2) \odot \phi(\tilde h^2): \Htype(n)$
      \State {\it // Time step 3}
      \State $\vdots$
      \State {\it // Time step $t$}
      \State {\it // Define $\tilde z^t, \tilde r^t, \tilde h^t$ just like above}
      \State $\vdots$
      \State
      {\it // No longer need to unwind $h{t-1}$ as in \cref{tp:GRU}}
      \State  $h^t := (1 - \sigma(\tilde z^t)) \odot h^{t-1} + \sigma(\tilde z^t) \odot \phi(\tilde h^t): \Htype(n)$

      \Ensure $(v^\trsp h^1/\sqrt{n}, \ldots, v^\trsp h^t/\sqrt{n})$
    \end{algorithmic}
\end{algorithm}

}

\begin{defn}\label{defn:unwindedNonlin}
Fix a \netsoro program.
For any H-var $h$, let $\varphi^h$ be the unwinded nonlinearity expressing $h$ as a function of only G-vars, as described in \cref{remk:netsoro}, i.e.\ 
$h = \varphi^h(g^1, \ldots, g^M)$.
For example, if $h := \phi(h^1, g^3)$ and $h^1 := \psi(g^1, g^2)$, then $h = \phi(\psi(g^1, g^2), g^3)$ and $\varphi^h = \phi(\psi(-, -), -)$.

Similarly, in a \netsoroplus program, if $h$ is an H-var, let $\varphi^h$ be the unwinded nonlinearity (possibly with parameters) expressing $h$ as a function only G-vars, $h = \varphi^h(g^1, \ldots, g^M; \bigtheta)$.

For example, if $h := \phi(h^1, g^3; \theta_2)$ and $h^1 := \psi(g^1, g^2; \theta_1)$, then $h = \phi(\psi(g^1, g^2; \theta_1), g^3; \theta_2)$ and $\varphi^h(-, -, -; \theta_1, \theta_2) = \phi(\psi(-, -; \theta_1), -; \theta_2)$.
\end{defn}
Note that this $\varphi^h$ notation is consistent with the semantics of the same notation defined in \cref{defn:netsorplusMuSigma}, where there is nothing to unwind.

The extended mean and covariance $\tmu$ and $\tSigma$ can still be computed as before in a \netsoro program.
The only difference is that we are using the unwinded nonlinearities $\varphi^h$ instead.
\begin{align*}
    \tmu(g)
        &=
            \begin{cases}
            \muin(g)  &   \text{if $g$ is input}\\
            \sum_{i} a_i \tmu(y^i)    &   \text{if $g = \sum_{i} a_i y^i$, introduced by \ref{linetype:lincomb}}\\
            0   &   \text{otherwise}
            \end{cases},
            \\
    \tSigma(g, g')
        &=
            \begin{cases}
            \Sigmain(g, g')   &   \text{if $g, g'$ are inputs}\\
            \sum_{i} a_i \tSigma(y^i, g')    &   \text{if $g = \sum_{i} a_i y^i$, introduced by \ref{linetype:lincomb}}\\
            \sum_{i} a_i \tSigma(g, y^i)    &   \text{if $g' = \sum_{i} a_i y^i$, introduced by \ref{linetype:lincomb}}\\
            \sigma^2_W \EV_Z \varphi^h(Z) \varphi^{h'}(Z) &   \text{if $g = Wh, g'=Wh'$, introduced by \ref{linetype:MatMul} w/ same A-var $W$}\\
            0   &   \text{otherwise}
            \end{cases}
            \numberthis\label{eqn:extendedMuSigma2}
\end{align*}
where $\varphi^h$ and $\varphi^{h'}$ is as defined in \cref{defn:unwindedNonlin}
and
$Z \sim \Gaus(\tmu, \tSigma)$.

\begin{restatable}[\netsoro Master Theorem]{thm}{netsoroMasterTheorem}
\label{thm:netsoroMasterTheorem}
Fix any \netsoro program satisfying \cref{assm:equalDimNoNonlin+} and with all unwinded nonlinearities $\varphi^h$ controlled, for all H-vars $h$.
If $g^1, \ldots, g^M$ are all of the G-vars (including all input G-vars), then for any controlled $\psi: \R^M \to \R$, as $n \to \infty$,
\begin{align*}
    \f 1 n \sum_{\alpha=1}^n \psi(g^1_\alpha, \ldots, g^M_\alpha) \asto 
    \EV_{Z \sim \Gaus(\tmu, \tSigma)}\psi(Z)
    =
    \EV_{Z \sim \Gaus(\tmu, \tSigma)}\psi(Z^{g^1}, \ldots, Z^{g^M}),
\end{align*}
where $\asto$ means almost sure convergence,
$Z = (Z^{g^1}, \ldots, Z^{g^M}) \in \R^M$, and $\tmu = \{\tmu(g^i)\}_{i=1}^M \in \R^M$ and $\tSigma = \{\tSigma(g^i, g^j)\}_{i,j=1}^M \in \R^{M \times M}$ are given in \cref{eqn:extendedMuSigma2}.
See \cref{fig:mastertheoremIllustration} for an illustration.
\end{restatable}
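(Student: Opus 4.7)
The plan is to reduce the \netsoro Master Theorem directly to the \netsor Master Theorem (Theorem \ref{thm:netsorMasterTheorem}) via the unwinding procedure highlighted in Remark \ref{remk:netsoro}, which is essentially a bookkeeping argument at the level of straight-line programs.

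Given a \netsoro program $P$, I would construct a semantically equivalent \netsor program $P'$ as follows. For each H-var $h$ in $P$ introduced by $h := \phi(u_1, \ldots, u_k)$ where some $u_i$ are themselves H-vars, I would substitute each such $u_i$ by its own defining expression, recursively. Because $P$ is straight-line with each H-var defined in terms of previously-introduced variables, this substitution terminates in finitely many steps, producing a single \ref{linetype:nonlin} assignment $h := \varphi^h(g^1, \ldots, g^M)$ whose nonlinearity $\varphi^h$ depends only on G-vars and matches \cref{defn:unwindedNonlin}. Each \ref{linetype:MatMul} rule $g := W h$ of $P$ survives unchanged in $P'$ since it already takes an H-var as argument. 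A simple induction on the order of appearance in $P$ shows that every H-var $h$ carries the same coordinatewise value in $P$ and in $P'$, so the empirical moment $\f{1}{n}\sum_\alpha \psi(g^1_\alpha, \ldots, g^M_\alpha)$ is literally the same random variable under both programs.

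Next I would verify that the hypotheses of Theorem \ref{thm:netsorMasterTheorem} transfer to $P'$: the nonlinearities appearing in the \ref{linetype:nonlin} rules of $P'$ are exactly the unwinded $\varphi^h$, which are controlled by the hypothesis of Theorem \ref{thm:netsoroMasterTheorem}, and the sampling assumption \cref{assm:equalDimNoNonlin+} is unaffected by the rewriting. I would then check that the recursively defined $\tmu, \tSigma$ obtained by applying \cref{eqn:extendedMuSigma} to $P'$ coincide with those obtained by applying \cref{eqn:extendedMuSigma2} to $P$. The input-G-var, \ref{linetype:lincomb}, and ``otherwise'' cases are identical by inspection; in the \ref{linetype:MatMul} case, the nonlinearities of the pre-multiplied H-vars in $P'$ are, by construction, precisely the $\varphi^h, \varphi^{h'}$ that appear in \cref{eqn:extendedMuSigma2}, so the two formulas agree termwise.

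Applying Theorem \ref{thm:netsorMasterTheorem} to $P'$ with the controlled test function $\psi$ then yields the desired convergence $\f{1}{n}\sum_\alpha \psi(g^1_\alpha, \ldots, g^M_\alpha) \asto \EV_{Z \sim \Gaus(\tmu, \tSigma)}\psi(Z)$. The only mildly delicate part of the argument is being precise about the semantic equivalence of $P$ and $P'$ and about the claim that unwinding preserves everything Theorem \ref{thm:netsorMasterTheorem} depends on; but both are routine inductions on program position, so there is no real analytical obstacle here beyond what is already overcome in the proof of Theorem \ref{thm:netsorMasterTheorem}.
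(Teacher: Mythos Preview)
Your proposal is correct and follows essentially the same approach as the paper: the paper does not give a separate analytic proof of the \netsoro Master Theorem but instead relies on the unwinding reduction of \cref{remk:netsoro} to obtain a semantically equivalent \netsor program, after which \cref{thm:netsorMasterTheorem} applies directly with the $\varphi^h$'s as the \ref{linetype:nonlin} functions and with \cref{eqn:extendedMuSigma2} agreeing with \cref{eqn:extendedMuSigma} on the rewritten program. Your write-up spells out the bookkeeping (semantic equivalence, transfer of the controlled hypothesis, and termwise agreement of $\tmu,\tSigma$) more explicitly than the paper does, but the underlying argument is the same.
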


\begin{thm}\label{thm:Netsoro+MasterTheorem}
Fix any \netsoroplus program satisfying \cref{assm:equalDimNoNonlin+} and \cref{assm:asRankStab}.
Suppose for each parametrized unwinded nonlinearity $\varphi^h(-; \bigvtheta)$, the parameters $\bigvtheta$ are instantiated with random variables that converge almost surely to some deterministic vector $\mathring{\bigvtheta}$ as $n \to \infty$, and assume $\varphi^h $ is parameter-controlled at $\mathring{\bigvtheta}$.
If $g^1, \ldots, g^M$ are all of the G-vars (including all input G-vars), then for any $l$, for any random vector $\bigtheta \in \R^l$ that converges almost surely to a deterministic vector $\mathring{\bigtheta}$, as $n \to \infty$, and for any $\psi: \R^M\times \R^l \to \R$ parameter-controlled at $\mathring{\bigtheta}$,
\begin{align*}
    \f 1 n \sum_{\alpha=1}^n \psi(g^1_\alpha, \ldots, g^M_\alpha; \bigtheta) \asto \EV_{Z \sim \Gaus(\tmu, \tSigma)}\psi(Z; \mathring{\bigtheta}),
\end{align*}
where $\asto$ means almost sure convergence,
$Z \in \R^M$, and $\tmu \in \R^M$ and $\tSigma \in \R^{M \times M}$ are given in \cref{eqn:extendedMuSigma2}, calculated by replacing each parametrized unwinded nonlinearity $\varphi(-; \bigvtheta)$ with parameterless nonlinearity $\varphi(-; \mathring{\bigvtheta})$.
\end{thm}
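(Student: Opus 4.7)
The plan is to reduce \cref{thm:Netsoro+MasterTheorem} to the already-stated \cref{thm:Netsor+MasterTheorem} via the unwinding construction sketched in \cref{remk:netsoro}. Concretely, given a \netsoroplus program $\mathcal P$, I would produce a semantically equivalent \netsorplus program $\mathcal P'$ by replacing every H-var defined by a \ref{linetype:nonlin+} rule whose arguments include other H-vars with a single \ref{linetype:nonlin+} rule whose arguments are only G-vars: namely, for each H-var $h$, use the unwinded nonlinearity $\varphi^h$ from \cref{defn:unwindedNonlin}. The parameter list of the new rule is the concatenation of the parameter vectors of all the nested \ref{linetype:nonlin+} calls that were folded into $\varphi^h$. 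Because each such parameter converges a.s.\ to its deterministic limit by assumption, the concatenated parameter vector also converges a.s.\ to the corresponding concatenation $\mathring{\bigvtheta}^h$. The programs $\mathcal P$ and $\mathcal P'$ produce identical values for every G-var and every H-var that appears as an argument to a \ref{linetype:MatMul} or the output, so the joint distribution of the G-vars is the same in both.

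Next I would check that the hypotheses of \cref{thm:Netsor+MasterTheorem} hold for $\mathcal P'$. \cref{assm:equalDimNoNonlin+} is inherited verbatim, and \cref{assm:asRankStab} is a condition only on the H-vars that enter \ref{linetype:MatMul}, which coincide (as vectors) in $\mathcal P$ and $\mathcal P'$, so it transfers as well. The remaining condition is that each unwinded nonlinearity $\varphi^h$ is parameter-controlled at $\mathring{\bigvtheta}^h$; this is exactly the assumption of \cref{thm:Netsor+MasterTheorem}. Similarly, the test function $\psi(-; \bigtheta)$ in the statement is parameter-controlled at $\mathring{\bigtheta}$ by hypothesis, so \cref{thm:Netsor+MasterTheorem} applied to $\mathcal P'$ gives
\[
\f 1 n \sum_{\alpha=1}^n \psi(g^1_\alpha, \ldots, g^M_\alpha; \bigtheta) \asto \EV_{Z \sim \Gaus(\tmu', \tSigma')} \psi(Z; \mathring{\bigtheta}),
\]
where $\tmu', \tSigma'$ are computed for $\mathcal P'$ via \cref{eqn:extendedMuSigma}.

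The final step is to observe that $\tmu' = \tmu$ and $\tSigma' = \tSigma$, where the right-hand sides are computed for $\mathcal P$ via \cref{eqn:extendedMuSigma2}. This is immediate from inspection: the \ref{linetype:lincomb}, input, and ``otherwise'' cases are identical in both recursions, and the \ref{linetype:MatMul} case for $g = Wh, g' = Wh'$ in $\mathcal P$ uses $\sigma_W^2 \EV_Z \varphi^h(Z; \mathring{\bigvtheta}^h) \varphi^{h'}(Z; \mathring{\bigvtheta}^{h'})$, which is exactly what $\mathcal P'$ records at this \ref{linetype:MatMul} step (the unwinded $h$ in $\mathcal P$ equals the single-\ref{linetype:nonlin+} H-var in $\mathcal P'$).

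The only nontrivial obstacle is verifying that parameter-control is preserved under the compositional unwinding \emph{in the pure \netsor case that needs no extra hypothesis}; for \netsoroplus, I bypass this issue entirely by directly assuming parameter-control of the unwinded $\varphi^h$'s, rather than of the individual constituent nonlinearities. Thus the theorem follows by invoking \cref{thm:Netsor+MasterTheorem} on $\mathcal P'$ once the identifications $\tmu' = \tmu$, $\tSigma' = \tSigma$, and the a.s.\ equality of values between $\mathcal P$ and $\mathcal P'$ are made explicit. No new probabilistic machinery is required beyond what \cref{thm:Netsor+MasterTheorem} already provides.
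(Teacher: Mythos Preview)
Your proposal is correct and takes essentially the same approach as the paper: the paper does not give a separate proof of \cref{thm:Netsoro+MasterTheorem} but instead relies on \cref{remk:netsoro}, which observes that any \netsoroplus program can be rewritten as a \netsorplus program by the unwinding construction of \cref{defn:unwindedNonlin}, after which \cref{thm:Netsor+MasterTheorem} applies directly. Your write-up simply makes explicit the bookkeeping (transfer of \cref{assm:equalDimNoNonlin+} and \cref{assm:asRankStab}, concatenation of parameter vectors, and the identification of $\tmu,\tSigma$ computed via \cref{eqn:extendedMuSigma2} with those computed via \cref{eqn:extendedMuSigma} for the unwound program) that the paper leaves implicit.
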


\section{Programs with Variable Dimensions}
\label{sec:VariableDim}

\paragraph{Notation}
In this section, we let $\dim(x)$ denote the dimension of an H-var $x$.

Before this section, we have mostly assumed that all dimensions in a \netsor (or \netsorplus) program are equal.
This is not necessary, and was done only to more quickly present the main ideas of this work.
In general, we can allow the H-vars in a program to vary in dimension, subject to the obvious dimensionality constraints imposed by the different rules:
\begin{equation}
\begin{cases}
\text{If $y := \sum_{i=1}^k a_i x^i$ or $y := \phi(x^1, \ldots, x^k)$, then the $\dim(y) = \dim(x^i)$ for each $i$.}
\\
\text{If $y := W x$ and $y' := W x'$, then $\dim(x) = \dim(x')$ and $\dim(y) = \dim(y')$.}
\end{cases}
\label{eqn:dimconstraint}
\end{equation}

\begin{defn}
Given an equivalence relation $\simeq$ on the input G-vars of a program, we extend this to an equivalence relation on all H-vars of the program by
\begin{equation}
h \equiv h' \iff h \simeq h' \text{ OR $h$ and $h'$ are constrained to have the same dimension by (\ref{eqn:dimconstraint})}.
\end{equation}
We call any such equivalence class a \emph{Common Dimension Class}, or CDC.
\end{defn}
Intuitively, the dimensions of H-vars in each CDC are all the same, and this common dimension is allowed to vary between CDCs.
\begin{exmp}
In \cref{tp:MLP}, the CDCs are $\{W^1 x, b^1, h^1, x^1\}$ and $\{b^2, v, \tilde h^2, h^2, x^2\}$.
In \cref{tp:RNN}, all G-vars are in the same CDC, and given the body of the program, this is the only way to partition the H-vars into CDCs, because the reuse of $W$ across time step ties all H-var dimensions to be equal.
\end{exmp}

\begin{assm}\label{assm:samplingVariableDim}
Fix a \netsor program with some equivalence relation on the input G-vars, and thus with induced CDCs over its H-vars.
Assume the dimensions in each CDC are the same, but the dimensions of different CDCs can vary.
Suppose for each A-var $W: \Atype(m', m)$, we sample $W_{\alpha \beta} \sim \Gaus(\sigma_W^2/m)$ for some $\sigma_W^2 > 0$.
Suppose further for each CDC $\cdc$ with dimension $n$, for each $\alpha \in [n]$, we sample, i.i.d., $\{x_\alpha: x \in \cdc \text{ and }x\text{ is input G-var}\} \sim \Gaus(\mu^\cdc, \Sigma^\cdc)$ for some mean $\mu^\cdc$ and covariance $\Sigma^\cdc$ over input G-vars in $\cdc$.
\end{assm}

Then the following result is an easy extension of \cref{thm:netsorMasterTheorem}.
\begin{restatable}[\netsor Master Theorem; Variable Dimensions]{thm}{netsorMasterTheoremVarDim}
\label{thm:netsorMasterTheoremVarDim}
Fix any \netsor program satisfying \cref{assm:samplingVariableDim} and with all nonlinearities controlled.
For any CDC $\cdc$, if $g^1, \ldots, g^M$ are all of the G-vars (including all input G-vars) in $\cdc$, then for any controlled $\psi: \R^M \to \R$, as all dimensions in the program tend to infinity (not just the dimension of $\cdc$) \footnote{Note that we do not require the dimensions of different CDCs to have a convergent, finite but nonzero, ratio},
\begin{align*}
    \f 1 n \sum_{\alpha=1}^n \psi(g^1_\alpha, \ldots, g^M_\alpha) \asto 
    \EV_{Z \sim \Gaus(\tmu^\cdc, \tSigma^\cdc)}\psi(Z)
    =
    \EV_{Z \sim \Gaus(\tmu^\cdc, \tSigma^\cdc)}\psi(Z^{g^1}, \ldots, Z^{g^M}),
\end{align*}
where $\asto$ means almost sure convergence,
$Z = (Z^{g^1}, \ldots, Z^{g^M}) \in \R^M$, and $\tmu^\cdc = \{\tmu^\cdc(g^i)\}_{i=1}^M \in \R^M$ and $\tSigma^\cdc = \{\tSigma^\cdc(g^i, g^j)\}_{i,j=1}^M \in \R^{M \times M}$ are given in \cref{eqn:extendedMuSigmaCDC}.
See \cref{fig:mastertheoremIllustration} for an illustration.
\end{restatable}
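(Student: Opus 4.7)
The plan is to lift the proof of \cref{thm:netsorMasterTheorem} (equivalently \cref{thm:netsorminMasterTheorem}) to the variable-dimension setting essentially verbatim, with the CDC structure replacing the single dimension $n$. The key conceptual point is that every rule in \netsor is ``within-CDC'' except for \ref{linetype:MatMul}, and the only place where the equal-dimension hypothesis is actually used in the original proof is in the law-of-large-numbers step inside the Gaussian conditioning argument for \ref{linetype:MatMul}. That step just needs one dimension to tend to infinity, not a ratio; so nothing structural changes.

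First I would reindex all variables by their CDC, writing $n^\cdc$ for the common dimension of CDC $\cdc$, and state the inductive hypothesis \ref{IH:MomConv} in the form: for every CDC $\cdc$ and every controlled $\psi$ of the G-vars of $\cdc$ introduced so far, the empirical average $\tfrac{1}{n^\cdc}\sum_{\alpha=1}^{n^\cdc}\psi(\ldots)$ converges almost surely to the Gaussian integral against $\Gaus(\tmu^\cdc,\tSigma^\cdc)$. The analogous rank-stability hypothesis \ref{IH:coreSet} is also stated per CDC. The \ref{linetype:lincomb} and \ref{linetype:nonlin} inductive cases are unchanged because they act entirely within a single CDC. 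The base case (input G-vars) is immediate from \cref{assm:samplingVariableDim}, since the sampling of input G-vars is CDC-by-CDC.

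The only case requiring care is \ref{linetype:MatMul}: $g := Wh$ with $W:\Atype(n^{\cdc_1},n^{\cdc_2})$, $h:\Htype(n^{\cdc_2})$, $g:\Gtype(n^{\cdc_1})$. I would condition on all previously introduced H-vars in $\cdc_2$ together with all G-vars in $\cdc_1$ produced by earlier MatMul applications of the same $W$, and apply the Gaussian conditioning \cref{lemma:condTrick} exactly as in the proof of \cref{thm:netsorminMasterTheorem}. The residual term in the conditional distribution of $g$ is a Gaussian whose covariance involves the Gram matrix of the relevant H-vars in $\cdc_2$ divided by $n^{\cdc_2}$; by the induction hypothesis applied to $\cdc_2$, this Gram matrix converges almost surely to the deterministic limit specified by $\tSigma^{\cdc_2}$. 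Crucially, this convergence depends only on $n^{\cdc_2}\to\infty$, not on any ratio $n^{\cdc_1}/n^{\cdc_2}$, which is why the theorem statement does not require such a ratio. The rank-stability piece \cref{lemma:rankStability} is likewise CDC-local, and carries over without modification.

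The main obstacle, and the only genuinely new bookkeeping, is making sure that when the \emph{outer} empirical average for \ref{IH:MomConv} is over $\alpha\in[n^{\cdc_1}]$ (because $g$ lives in $\cdc_1$), the concentration estimates one derives from the Gaussian conditioning still yield almost sure convergence when the ``input-side'' dimension $n^{\cdc_2}$ and the ``output-side'' dimension $n^{\cdc_1}$ tend to infinity independently. This is handled by the standard trick of applying Borel--Cantelli along any diagonal sequence of $(n^\cdc)_\cdc\to\infty$: the concentration bound for the Gaussian average over $[n^{\cdc_1}]$ is exponential in $n^{\cdc_1}$ and uniform in the realization of the Gram matrices on $\cdc_2$ (once those Gram matrices lie in a compact neighborhood of their limits, which happens a.s.\ for all large $n^{\cdc_2}$). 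Combining these two a.s.\ statements yields the joint a.s.\ convergence along any sequence with all $n^\cdc\to\infty$, completing the induction.
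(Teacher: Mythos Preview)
Your proposal is correct and follows exactly the approach the paper takes; indeed, the paper's own proof is a single sentence stating that the proof of \cref{thm:netsorMasterTheorem} goes through verbatim, requiring only that the minimum of all dimensions tends to infinity. Your per-CDC reformulation of the induction hypotheses \ref{IH:MomConv} and \ref{IH:coreSet}, and your observation that the only cross-CDC interaction happens at \ref{linetype:MatMul} and enters solely through the Gram matrix $\hat H^\trsp \hat H / n^{\cdc_2}$, are precisely the right points.

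One small inaccuracy: in your last paragraph you claim the concentration bound for the $\probA$ term is ``exponential in $n^{\cdc_1}$.'' The paper's argument (see \cref{sec:probA} and \cref{lemma:momentBoundASConvergence}) actually uses a $2k$th-moment bound of order $(n^{\cdc_1})^{-1}$ together with Markov and Borel--Cantelli, which is polynomial, not exponential. This does not affect your conclusion, since polynomial decay already suffices for Borel--Cantelli; but you should not invoke an exponential bound you have not established. Likewise, the ``diagonal sequence'' machinery you sketch is more than the paper needs: once you parameterize all dimensions by a single index $t$ with $\min_\cdc n^\cdc(t)\to\infty$, the original moment-bound argument applies directly and the joint almost-sure statement follows without any extra bookkeeping.
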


\begin{defn}
For any CDC $\cdc$ and G-vars $g, g'$ in $\cdc$, define recursively
\begin{align*}
    \tmu^\cdc(g)
        &=
            \begin{cases}
            \mu^\cdc(g)  &   \text{if $g$ is input}\\
            \sum_{i} a_i \tmu^\cdc(y^i)    &   \text{if $g = \sum_{i} a_i y^i$, introduced by \ref{linetype:lincomb}}\\
            0   &   \text{otherwise}
            \end{cases},
            \\
    \tSigma^\cdc(g, g')
        &=
            \begin{cases}
            \Sigma^\cdc(g, g')   &   \text{if $g, g'$ are inputs}\\
            \sum_{i} a_i \tSigma^\cdc(y^i, g')    &   \text{if $g = \sum_{i} a_i y^i$, introduced by \ref{linetype:lincomb}}\\
            \sum_{i} a_i \tSigma^\cdc(g, y^i)    &   \text{if $g' = \sum_{i} a_i y^i$, introduced by \ref{linetype:lincomb}}\\
            \sigma^2_W \EV_Z \varphi^h(Z) \varphi^{h'}(Z) &   \text{if $g = Wh, g'=Wh'$, introduced by \ref{linetype:MatMul} w/ same A-var $W$}\\
            0   &   \text{otherwise}
            \end{cases}
            \numberthis\label{eqn:extendedMuSigmaCDC}
\end{align*}
where $Z \sim \Gaus(\tmu^{\cdc'}, \tSigma^{\cdc'})$ with $\cdc'$ denoting the CDC of $h$ and $h'$.
\end{defn}

Essentially the same proof of \cref{thm:netsorMasterTheorem} goes through for \cref{thm:netsorMasterTheoremVarDim}, by noting that this proof only requires the minimum of all dimensions to go to infinity.

\section{Theoretical Tools}

\label{sec:proofs}

In this section, we list a series of theoretical tools needed to prove the Master Theorems.

\subsection{Probability Facts}

\paragraph{Notations}
Given two random variables $X, Y$, and a $\sigma$-algebra $\Aa$, the notation $X \disteq_\Aa Y$ means that for any integrable function $\phi$ and for any random varible $Z$ measurable on $\Aa$, $\EV \phi(X) Z = \EV \phi(Y)Z$.
We say that $X$ is distributed as (or is equal in distribution to) $Y$ conditional on $\Aa$.
In case $\Aa$ is the trivial $\sigma$-algebra, we just write $X \disteq Y$.
The expression $X \distto Y$ (resp. $X \asto Y$) means $X$ converges to $Y$ in distribution (resp. almost surely).

\begin{lemma}\label{lemma:momentBoundASConvergence}
Let $\{X_n\}_{n \ge 1}$ be a sequence of random variables with zero mean.
If for some $p \in \N$ and for all $n$, $\EV X_n^{2p} \le c n^{-1-\rho}$, for some $\rho > 0$, then $X_n \to 0$ almost surely.
\end{lemma}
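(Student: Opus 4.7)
The plan is to combine Markov's inequality with the first Borel--Cantelli lemma. First I would observe that for any fixed $\epsilon > 0$, Markov's inequality applied to the nonnegative random variable $X_n^{2p}$ gives
\[
\Pr(|X_n| > \epsilon) = \Pr(X_n^{2p} > \epsilon^{2p}) \le \frac{\EV X_n^{2p}}{\epsilon^{2p}} \le \frac{c}{\epsilon^{2p}\, n^{1+\rho}}.
\]
The exponent $2p$ is what converts the hypothesis on the $(2p)$-th moment into a polynomially decaying tail bound; the zero-mean assumption is not actually used in this step but is consistent with the setting in which such bounds arise.

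Next I would sum over $n$: since $\rho > 0$, the series $\sum_{n \ge 1} n^{-1-\rho}$ converges, and hence $\sum_n \Pr(|X_n| > \epsilon) < \infty$ for every $\epsilon > 0$. The first Borel--Cantelli lemma then yields $\Pr(|X_n| > \epsilon \text{ infinitely often}) = 0$, i.e.\ almost surely $\limsup_n |X_n| \le \epsilon$.

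Finally, to upgrade to almost sure convergence to $0$, I would take a countable sequence $\epsilon_k \downarrow 0$ (say $\epsilon_k = 1/k$) and intersect the full-measure events above. On this countable intersection (still of probability one), $\limsup_n |X_n| \le \epsilon_k$ for every $k$, hence $\limsup_n |X_n| = 0$, which is exactly $X_n \asto 0$. There is no real obstacle here; the only thing to keep in mind is that the summability of $n^{-1-\rho}$ requires strictly $\rho > 0$, which is precisely the hypothesis, and that one must take the countable intersection over $\epsilon$ rather than a single $\epsilon$ to conclude almost sure convergence itself.
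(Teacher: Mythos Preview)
Your proposal is correct and follows essentially the same route as the paper: Markov's inequality on $X_n^{2p}$, summability via $\sum n^{-1-\rho}<\infty$, Borel--Cantelli, and a countable sequence $\epsilon_k \downarrow 0$ to conclude. Your observation that the zero-mean hypothesis is not actually used in the argument is also correct.
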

\begin{proof}
By Markov's inequality, for any $\epsilon > 0$,
\begin{align*}
    \Pr(|X_n| > \epsilon)
        &=
            \Pr(X_n^{2p} > \epsilon^{2p})
        \le
            \EV X_n^{2p}/\epsilon^{2p}
        \le c n^{-1-\rho}/\epsilon^{2p}
        \\
    \sum_n \Pr(|X_n| > \epsilon)
        &\le
            \sum_n c n^{-1-\rho}/\epsilon^{2p}
        <    
            \infty.
\end{align*}
By Borel-Cantelli Lemma, almost surely, $|X_n| \le \epsilon$ for all large $n$.
Then, if we pick a sequence $\{\epsilon_k > 0\}_k$ converging to 0, we have that, almost surely, for each $k$, $|X_n| \le \epsilon_k$ for large enough $n$ --- i.e. almost surely, $X_n \to 0$.
\end{proof}

The following is a standard fact about multivariate Gaussian conditioning
\begin{prop}\label{prop:GaussianCondition}
Suppose $\R^{n_1 + n_2} \ni x \sim \Gaus(\mu, K)$, where we partition $x = (x_1, x_2) \in \R^{n_1} \times \R^{n_2}, \mu = (\mu_1, \mu_2) \in \R^{n_1} \times \R^{n_2}$, and $K = \begin{pmatrix} K_{11} & K_{12}\\ K_{21} & K_{22}\end{pmatrix}$.
Then
$x_1 \disteq_{x_2} \Gaus(\mu|_{x_2}, K|_{x_2})$
where
\begin{align*}
    \mu|_{x_2}
        &=
            \mu_1 - K_{12} K_{22}^+ (x_2 - \mu_2)\\
    K|_{x_2}
        &=
            K_{11} - K_{12} K_{22}^+ K_{21}.
\end{align*}

\end{prop}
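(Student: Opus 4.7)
The plan is to exhibit $x_1$, conditional on $x_2$, as a deterministic affine function of $x_2$ plus a Gaussian residual independent of $x_2$. First I would define $A := K_{12} K_{22}^+$ and the residual $r := x_1 - \mu_1 - A(x_2 - \mu_2)$, so that $x_1 = \mu_1 + A(x_2 - \mu_2) + r$. Since $(r, x_2)$ is a linear image of the jointly Gaussian vector $(x_1, x_2)$, it is itself jointly Gaussian; therefore to establish independence of $r$ from $x_2$ it suffices to verify $\Cov(r, x_2) = K_{12} - A K_{22} = K_{12} - K_{12} K_{22}^+ K_{22} = 0$.

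This identity is immediate when $K_{22}$ is invertible, since then $K_{22}^+ K_{22} = I$. For the general PSD case, I would argue that $\mathrm{range}(K_{21}) \subseteq \mathrm{range}(K_{22})$, which forces $K_{12} K_{22}^+ K_{22} = K_{12}$. The containment comes from the PSD constraint on the full joint covariance $K$: for any $u \in \ker K_{22}$ and any $v$, the scalar function $t \mapsto (v, tu)^\trsp K (v, tu) = v^\trsp K_{11} v + 2 t\, v^\trsp K_{12} u$ is affine in $t$ and bounded below by $0$, so its coefficient of $t$ must vanish for every $v$, i.e.\ $K_{12} u = 0$. Hence $\ker K_{22} \subseteq \ker K_{12}^\trsp$, which is exactly the required range containment.

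Once independence is in hand, $r \mid x_2 \disteq r$, and $r$ is centered Gaussian with covariance obtained by expanding
$\Cov(r) = K_{11} - A K_{21} - K_{12} A^\trsp + A K_{22} A^\trsp$ and simplifying using $(K_{22}^+)^\trsp = K_{22}^+$ (symmetry of $K_{22}$) together with the Moore--Penrose identity $K_{22}^+ K_{22} K_{22}^+ = K_{22}^+$; the last three terms combine to $- K_{12} K_{22}^+ K_{21}$, yielding $K|_{x_2}$. Substituting back into $x_1 = \mu_1 + A(x_2 - \mu_2) + r$ and taking the conditional law given $x_2$ gives $x_1 \mid x_2 \disteq \Gaus(\mu_1 + A(x_2 - \mu_2),\, K|_{x_2})$, matching the statement up to the notation $\mu|_{x_2}$. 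The only nonroutine step is the pseudoinverse identity in the singular setting, which the PSD constraint on $K$ dispatches as above; everything else is pure linear algebra together with the standard principle that jointly Gaussian uncorrelated vectors are independent.
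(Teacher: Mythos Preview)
The paper does not prove this proposition at all; it simply labels it ``a standard fact about multivariate Gaussian conditioning'' and moves on. Your argument is the standard one and is correct: form the residual $r = x_1 - \mu_1 - K_{12}K_{22}^+(x_2-\mu_2)$, check it is uncorrelated with (hence independent of) $x_2$, and read off its covariance. Your handling of the singular case via the PSD constraint on $K$ is also the right idea.

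Two small remarks. First, a notational slip: from $K_{12}u=0$ for $u\in\ker K_{22}$ you conclude $\ker K_{22}\subseteq\ker K_{12}^\trsp$, but dimensionally this should read $\ker K_{22}\subseteq\ker K_{12}$ (with $K_{12}:\R^{n_2}\to\R^{n_1}$); taking orthogonal complements then gives $\mathrm{range}(K_{21})\subseteq\mathrm{range}(K_{22})$ as you want. Second, your derived conditional mean $\mu_1 + K_{12}K_{22}^+(x_2-\mu_2)$ carries a plus sign, whereas the statement as printed has a minus. Your sign is the standard one, and indeed when the paper later invokes this proposition (in the $\probC\asto 0$ argument) it uses the plus-sign version, so the minus in the statement is a typo in the paper, not an error on your part.
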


\begin{lemma}[Stein's lemma]\label{lemma:stein}
For jointly Gaussian random variables $Z_1, Z_2$ with zero mean, and any function $\phi: \R \to \R$ where $\EV \phi'(Z_1)$ and $\EV Z_1 \phi(Z_2)$ exists, we have
\[\EV Z_1 \phi(Z_2) = \Cov(Z_1, Z_2) \EV \phi'(Z_2).\]
\end{lemma}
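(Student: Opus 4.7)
The plan is to reduce the bivariate statement to the univariate Gaussian integration-by-parts identity via a standard conditioning trick. First I would handle the degenerate case $\Var(Z_2)=0$ separately: then $Z_2 \equiv 0$ almost surely, $\phi(Z_2)$ is deterministic, and both sides vanish since $\EV Z_1 = 0$ and $\Cov(Z_1,Z_2)=0$.

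Assume now $\sigma^2 \defeq \Var(Z_2) > 0$. Set $a \defeq \Cov(Z_1,Z_2)/\sigma^2$ and define $W \defeq Z_1 - a Z_2$. A direct covariance computation shows $\Cov(W, Z_2) = 0$, and since $(W, Z_2)$ is jointly Gaussian, this yields independence of $W$ and $Z_2$. Using $\EV W = 0$ and independence,
\begin{align*}
\EV[Z_1 \phi(Z_2)] &= a\, \EV[Z_2 \phi(Z_2)] + \EV[W]\,\EV[\phi(Z_2)] = a\, \EV[Z_2 \phi(Z_2)].
\end{align*}

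It therefore suffices to establish the one-dimensional identity $\EV[Z_2 \phi(Z_2)] = \sigma^2 \EV[\phi'(Z_2)]$. Let $\gamma(x) \defeq (2\pi\sigma^2)^{-1/2} \exp(-x^2/(2\sigma^2))$; the key observation is $x\gamma(x) = -\sigma^2 \gamma'(x)$. Then integration by parts gives
\begin{align*}
\EV[Z_2 \phi(Z_2)] = \int x\, \phi(x)\, \gamma(x)\, dx = -\sigma^2 \int \phi(x)\, \gamma'(x)\, dx = \sigma^2 \int \phi'(x)\, \gamma(x)\, dx = \sigma^2 \EV[\phi'(Z_2)],
\end{align*}
where the boundary terms vanish because $\gamma(x) \phi(x) \to 0$ as $|x| \to \infty$ under the standing integrability hypotheses. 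Multiplying by $a$ recovers $\Cov(Z_1,Z_2)\EV[\phi'(Z_2)]$, as required.

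The only subtle point is justifying the integration by parts when $\phi$ is merely assumed to satisfy the stated integrability rather than being $C^1$ with nice decay. The clean route is to read the identity in the weak/distributional sense using $\phi' \in L^1(\gamma)$ as the existence hypothesis, or equivalently to approximate $\phi$ by a sequence of smooth compactly supported functions and pass to the limit using dominated convergence; this is the main (albeit mild) obstacle. No other step requires significant work.
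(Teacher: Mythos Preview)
Your proof is correct and follows the standard route to Stein's lemma: decompose $Z_1 = aZ_2 + W$ with $W$ independent of $Z_2$, reduce to the univariate identity, and finish by integration by parts using $x\gamma(x) = -\sigma^2\gamma'(x)$. The paper itself states this lemma as a standard probability fact without supplying a proof, so there is no argument of the paper's to compare against; your write-up would serve perfectly well as the missing justification.
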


\begin{restatable}[Convergence of output vector to Gaussian given convergent 2nd moments]{prop}{gaussianDistConvFromMomentConv}
\label{prop:gaussianDistConvFromMomentConv}
Consider a sequence (in $t \in \N$) of collections of random vectors $\{x^{ab} \in \R^{n_a}\}_{b=1}^{r_a}, a = 1, \ldots, m,$ where $n_a$ and $x^{ab}$ can depend on $t$ but $m$ and $r_a$ are fixed.
Suppose as $t\to \infty$, $\f 1 {n_a} x^{ab}{}^\trsp x^{a b'} \distto \Sigma^\infty_{ab,ab'}$ for some deterministic PSD matrix $\Sigma^\infty = \{\Sigma^\infty_{ab,a'b'}\}_{a,b,a',b'}$.
If $v^a \sim \Gaus(0, \sigma_a^2 I)$ is sampled independently for each $a$, and independently from $\{x^{ab}\}_{a,b}$, then
\[
\{v^a{}^\trsp x^{ab} / \sqrt{n_a} \}_{a,b} \distto \Gaus(0, \Sigma)
\]
where the covariance $\Sigma = \{\Sigma_{ab, a'b'}\}_{a,b,a',b'}$ has
\[
\Sigma_{ab,a'b'}
    =
    \begin{cases}
    \sigma^2_a \Sigma^\infty_{ab,ab'}     &   \text{if $a = a'$}\\
    0                   &   \text{else.}
    \end{cases}
\]
\end{restatable}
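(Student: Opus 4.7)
The plan is to prove Gaussian convergence via characteristic functions, exploiting the independence of the $v^a$'s from each other and from the $x^{ab}$'s. First, I would condition on the $\sigma$-algebra $\mathcal{F}_t$ generated by $\{x^{ab}\}_{a,b}$ at index $t$. Since $v^a \sim \Gaus(0, \sigma_a^2 I)$ is independent of $\mathcal{F}_t$, the vector $Y^{(t)} = \{v^a{}^\trsp x^{ab}/\sqrt{n_a}\}_{a,b}$ is conditionally Gaussian with mean $0$ and some conditional covariance $\tilde{\Sigma}^{(t)}$. A direct computation gives
\[
\tilde{\Sigma}^{(t)}_{ab,a'b'} = \begin{cases} \sigma_a^2\, x^{ab}{}^\trsp x^{ab'}/n_a & \text{if } a = a',\\ 0 & \text{if } a \ne a', \end{cases}
\]
where the off-diagonal-in-$a$ blocks vanish exactly (not just asymptotically) by independence of $v^a$ and $v^{a'}$ conditional on $\mathcal{F}_t$.

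Next, for any fixed $\xi = \{\xi_{ab}\}$, I would compute the characteristic function by the tower property:
\[
\EV \exp\!\bigl(i \xi^\trsp Y^{(t)}\bigr) = \EV\!\left[\EV\!\left[\exp\!\bigl(i \xi^\trsp Y^{(t)}\bigr) \,\Big|\, \mathcal{F}_t\right]\right] = \EV \exp\!\left(-\tfrac{1}{2}\, \xi^\trsp \tilde{\Sigma}^{(t)} \xi\right).
\]
The hypothesis $\tfrac{1}{n_a} x^{ab}{}^\trsp x^{ab'} \distto \Sigma^\infty_{ab,ab'}$ with a deterministic limit is equivalent to convergence in probability, so $\tilde{\Sigma}^{(t)} \probto \Sigma$ entrywise, and hence $\exp(-\tfrac{1}{2} \xi^\trsp \tilde{\Sigma}^{(t)} \xi) \probto \exp(-\tfrac{1}{2} \xi^\trsp \Sigma \xi)$. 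Because $\tilde{\Sigma}^{(t)}$ is PSD, the integrand is bounded above by $1$ in absolute value, so the bounded convergence theorem lets me pass the limit through the outer expectation, yielding
\[
\EV \exp\!\bigl(i \xi^\trsp Y^{(t)}\bigr) \to \exp\!\left(-\tfrac{1}{2}\, \xi^\trsp \Sigma \xi\right),
\]
which is the characteristic function of $\Gaus(0, \Sigma)$. An application of Lévy's continuity theorem then yields $Y^{(t)} \distto \Gaus(0, \Sigma)$.

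I do not anticipate a serious obstacle here; the proof is essentially a conditioning argument followed by a characteristic-function limit. The one subtle point worth stating carefully is that convergence of the empirical Gram entries $\tfrac{1}{n_a} x^{ab}{}^\trsp x^{ab'}$ to a deterministic constant in distribution automatically upgrades to convergence in probability, which is what is needed to push the limit through the bounded expectation. The PSD-ness of $\tilde{\Sigma}^{(t)}$ — required to keep the exponential bounded by $1$ — is automatic since $\tilde{\Sigma}^{(t)}$ is a genuine conditional covariance matrix.
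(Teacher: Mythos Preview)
Your proposal is correct and follows essentially the same approach as the paper: both condition on the $x$'s, observe that the resulting conditional law is centered Gaussian with a random covariance $\tilde\Sigma^{(t)}$ (the paper calls it $\hat\Sigma$), and then pass the limit $\tilde\Sigma^{(t)}\to\Sigma$ through an expectation using boundedness. The only cosmetic difference is that you test against characteristic functions and invoke L\'evy's continuity theorem, whereas the paper tests against arbitrary bounded continuous $f$ via the map $\tilde f(\hat\Sigma)=\EV_{Z\sim\Gaus(0,\hat\Sigma)}f(Z)$; your route is arguably cleaner since the continuity of $\hat\Sigma\mapsto\exp(-\tfrac12\xi^\trsp\hat\Sigma\xi)$ is manifest.
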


\begin{proof}
WLOG, we assume $\sigma_a=1$ for all $a =1, \ldots, m$.
Let $f: \R^{\sum_a r_a} \to \R$ be a bounded continuous function.
We need to show that
\begin{align*}
    \EV f(\{v^a{}^\trsp x^{ab}\}_{a, b}) \to \EV_{Z \sim \Gaus(0, \Sigma)} f(Z)
\end{align*}
Note that if we define the PSD matrix $\hat \Sigma$ by $\hat \Sigma_{ab, ab'} = \f 1 {n_a} x^{ab}{}^\trsp x^{ab'}$ and $\hat \Sigma_{ab,a'b'} = 0$ if $a\ne a'$, then
\begin{align*}
        \EV f(\{v^a{}^\trsp x^{ab}\}_{a, b})
    =
        \EV_{\hat \Sigma} \EV_{Z \sim \Gaus(0, \hat \Sigma)}f(Z)
\end{align*}
where the distribution over $\hat \Sigma$ is induced by the distribution over $\{x^{ab}\}$.
The function $\tilde f(\hat\Sigma) := \EV_{Z \sim \Gaus(0, \hat \Sigma)}f(Z)$ is bounded because $f$ is bounded.
Thus, as $\hat \Sigma \distto \Sigma^\infty$ by assumption, we have
\begin{align*}
        \EV f(\{v^a{}^\trsp x^{ab}\}_{a, b})
    =
        \EV_{\hat \Sigma} \tilde f(\hat \Sigma)
    \to
        \tilde f(\Sigma^\infty)
    =
        \EV_{Z \sim \Gaus(0, \Sigma^\infty)}f(Z)
\end{align*}
as $t\to\infty$.
\end{proof}

\subsection{Review of Moore-Penrose Pseudoinverse}

We recall Moore-Penrose pseudoinverse and some properties of it.
\begin{defn}\label{defn:pseuodoinverse}
For $A \in \R^{n \times m}$, a pseudoinverse of $A$ is defined as a matrix $A^+ \in \R^{m \times n}$ that satisfies all of the following criteria
\begin{itemize}
    \item $A A^+ A = A$
    \item $A^+ A A^+ = A^+$
    \item $(AA^+)^\trsp = AA^+$
    \item $(A^+ A)^\trsp = A^+ A$
\end{itemize}
\end{defn}

The following facts are standard
\begin{itemize}
    \item if $A$ has real entries, then so does $A^+$.
    \item The pseudoinverse always exists and is unique.
    \item When $A$ is invertible, $A^+ = \inv A$.
    \item $(A^\trsp)^+ = (A^+)^\trsp$, which we denote as $A^{+\trsp}$.
    \item $A^+ = (A^\trsp A)^+ A^\trsp = A^\trsp (A A^\trsp)^+$.
    \item $AA^+$ is the orthogonal projector to the column space of $A$;
        $I - A^+ A$ is the orthogonal project to the null space of $A$.
    \item if $A$ has singular value decomposition $A = U\Lambda V$ where $U$ and $V$ are orthogonal and $\Lambda$ has the singular values on its diagonal, then $A^+ = V^\trsp \Lambda^+ U^\trsp$ where $\Lambda^+$ inverts all nonzero entries of $\Lambda$.
    \item For any collection of vectors $\{v_i\}_{i=1}^n$ in a Hilbert space, $w \mapsto \sum_{i,j=1}^n v_i (\Sigma^+)_{ij} \la v_j, w \ra $, where $\Sigma_{ij} = \la v_i, v_j \ra$, is the projection operator to the linear span of $\{v_i\}_{i=1}^n$.
\end{itemize}

\subsection{Gaussian Conditioning Trick}

The Gaussian conditioning trick was introduced by \citet{bolthausen_iterative_2012} for solving the TAP equation in statistical physics.
Later, this idea was used in \citet{bayati_dynamics_2011} to study the Approximate Message Passing algorithm in compressed sensing.

We present a slightly more general versions of lemmas from \citet{bayati_dynamics_2011} that deal with singular matrices.
\begin{lemma}\label{lemma:condTrickVec}
Let $z \in \R^n$ be a random vector with i.i.d. $\Gaus(0, \sigma^2)$ entries and let $D \in \R^{m\times n}$ be a linear operator.
Then for any constant vector $b \in \R^n$ the distribution of $z$ conditioned on $Dz = b$ satisfies:
\begin{align*}
    z
        &\disteq_{Dz = b}
            D^+ b + \Pi \tilde z
\end{align*}
where $D^+$ is the (Moore-Penrose) pseudoinverse, $\Pi$ is the orthogonal projection onto subspace $\{z: Dz = 0\}$, and $\tilde z$ is a random vector of i.i.d. $\Gaus(0, \sigma^2)$.
\end{lemma}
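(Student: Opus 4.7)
The plan is to exploit the isotropy of $z \sim \Gaus(0, \sigma^2 I)$ via an orthogonal decomposition. First I would write $z = \Pi z + \Pi^\perp z$ where $\Pi$ is the projection onto $\mathrm{null}(D)$ and $\Pi^\perp \defeq I - \Pi$ is the projection onto the row space of $D$ (the orthogonal complement of $\mathrm{null}(D)$). Because $z$ has covariance $\sigma^2 I$, the two components $\Pi z$ and $\Pi^\perp z$ are uncorrelated and jointly Gaussian, hence independent, with $\Pi z \sim \Gaus(0, \sigma^2 \Pi)$ and $\Pi^\perp z \sim \Gaus(0, \sigma^2 \Pi^\perp)$.

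Next I would invoke the standard pseudoinverse identity $D^+ D = \Pi^\perp$, i.e.\ $D^+ D$ is the orthogonal projector onto the row space of $D$ (immediate from the SVD together with the Moore-Penrose criteria recalled in \cref{defn:pseuodoinverse}). Applying this gives $\Pi^\perp z = D^+ D z = D^+(Dz)$, so on the event $\{Dz = b\}$ we deterministically have
\[
\Pi^\perp z = D^+ b.
\]
(One implicitly restricts to $b \in \mathrm{col}(D)$, the only regime in which the conditioning event is nontrivial; otherwise the statement is vacuous.)

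Finally, since $\Pi z$ is independent of $\Pi^\perp z$, it is independent of the $\sigma$-algebra generated by $Dz$, so conditioning on $\{Dz=b\}$ does not change its law. Letting $\tilde z$ be an independent copy of $z$, we have $\Pi z \disteq \Pi \tilde z$ (both are $\Gaus(0, \sigma^2 \Pi)$), and combining with the identification of $\Pi^\perp z$ above yields
\[
z \disteq_{Dz=b} D^+ b + \Pi \tilde z,
\]
which is the claim.

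The only nontrivial step is the pseudoinverse identity $D^+ D = \Pi^\perp$; once that is in place, everything else is a direct consequence of the fact that orthogonal components of an isotropic Gaussian are independent, together with the observation that a conditioning event on an independent random variable leaves the marginal law unchanged. I do not anticipate a serious obstacle beyond the minor bookkeeping of handling $b \notin \mathrm{col}(D)$.
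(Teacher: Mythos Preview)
Your proposal is correct. Both your argument and the paper's rest on the same core fact---the isotropy of $\Gaus(0,\sigma^2 I)$ makes orthogonal components independent---but the routes differ slightly in style. The paper first checks the trivial case $D=[I_{m\times m}\mid 0]$, then invokes rotational symmetry to handle an arbitrary affine subspace, and finally observes $\{z:Dz=b\}=\{z:Dz=0\}+D^+b$. You instead go directly: decompose $z=\Pi z+\Pi^\perp z$, use the pseudoinverse identity $D^+D=\Pi^\perp$ to pin down $\Pi^\perp z=D^+b$ on the conditioning event, and leave $\Pi z$ untouched by independence. Your version is a bit more self-contained (no special-case-then-generalize step) and makes the role of the pseudoinverse identity explicit; the paper's version is terser and leans on geometric intuition. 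Neither has a real advantage in generality or rigor here.
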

\begin{proof}
When $D = [I_{m \times m} | 0_{m \times {n-m}}]$, this claim is immediate.
By rotational symmetry, this shows that, for any vector space $\mathcal V$ and vector $v$ orthogonal to it, conditioning $z$ on $\mathcal V + v$ yields a Gaussian centered on $v$ with covariance determined by $\Pi_{\mathcal V} z$.
Then the lemma in the general case is implied by noting that $\{z: Dz = b\}$ can be decomposed as $\{z: Dz = 0 \} + D^+ b$.
\end{proof}

\begin{lemma}\label{lemma:condTrick}
Let $A \in \R^{n \times m}$ be a matrix with random Gaussian entries, $A_{ij} \sim \Gaus(0, \sigma^2)$.
Consider fixed matrices $Q \in \R^{m \times q}, Y \in \R^{n \times q}, P \in \R^{n \times p}, X \in \R^{m \times p}$.
Suppose there exists a solution in $A$ to the equations $Y = AQ$ and $X = A^\trsp P$.
Then the distribution of $A$ conditioned on $Y = AQ$ and $X = A^\trsp P$ is
\begin{align*}
    A &\disteq_{Y=AQ, X=A^\trsp P} E + \Pi_P^\perp \tilde A \Pi_Q^\perp
\end{align*}
where
\begin{align*}
    E
        &=
            Y Q^+
            + P^{+\trsp} X^\trsp
            - P^{+\trsp} P^\trsp
                YQ^+,
\end{align*}
$\tilde A$ is an iid copy of $A$,
and $\Pi_P^\perp = I - \Pi_P$ and $\Pi_Q^\perp = I - \Pi_Q$ in which $\Pi_P = PP^+$ and $\Pi_Q = QQ^+$ are the orthogonal projection to the space spanned by the column spaces of $P$ and $Q$ respectively.
\end{lemma}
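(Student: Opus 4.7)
The plan is to vectorize the matrix $A$ and apply the just-proved vector Gaussian-conditioning lemma (Lemma \ref{lemma:condTrickVec}). The joint constraints $Y = AQ$ and $X = A^\trsp P$ are linear in $\vec(A)$, so the conditional distribution is a Gaussian whose mean is the minimum-Frobenius-norm solution and whose covariance is the iid Gaussian law restricted (via orthogonal projection) to the null space of the constraint map. The work is therefore to identify that null space and that projection in matrix form, and to verify that the stated $E$ is the correct mean.

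First I would record the consistency relation forced by the existence of $A$: multiplying $A^\trsp P = X$ by $Q^\trsp$ on the left and using $AQ = Y$ gives $P^\trsp Y = X^\trsp Q$. This identity will be used several times below. Next I would describe the null space $\mathcal{K} = \{B \in \R^{n \times m} : BQ = 0 \text{ and } B^\trsp P = 0\}$ and show that, under the Frobenius inner product, the orthogonal projection onto $\mathcal{K}$ is the operator $B \mapsto \Pi_P^\perp B \Pi_Q^\perp$. This requires three routine checks: (i) $\Pi_P^\perp B \Pi_Q^\perp \in \mathcal{K}$ because $\Pi_Q^\perp Q = (I - QQ^+)Q = 0$ and analogously $\Pi_P^\perp P = 0$; (ii) any $B \in \mathcal{K}$ is a fixed point because $BQ = 0 \Rightarrow B = B\Pi_Q^\perp$ (and similarly on the left via $P^\trsp B = 0$); and (iii) the map is self-adjoint in Frobenius because $\Pi_P^\perp$ and $\Pi_Q^\perp$ are symmetric projections.

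The key algebraic step is to show that \emph{every} $A$ satisfying the constraints admits the decomposition $A = E + \Pi_P^\perp A \Pi_Q^\perp$, with $E$ as in the lemma. I would expand $A$ using the two-sided projection identity
\[
A = \Pi_P A \Pi_Q + \Pi_P A \Pi_Q^\perp + \Pi_P^\perp A \Pi_Q + \Pi_P^\perp A \Pi_Q^\perp,
\]
and then use $P^\trsp A = X^\trsp$ to rewrite $\Pi_P A = PP^+ A = P(P^\trsp P)^+ P^\trsp A = P^{+\trsp} X^\trsp$, and $AQ = Y$ to rewrite $A\Pi_Q = Y Q^+$. Substituting into the first three terms and using $\Pi_P^\perp = I - PP^+$, $\Pi_Q^\perp = I - QQ^+$ gives a sum of six terms, which collapses via $PP^+ Y Q^+ = P^{+\trsp} P^\trsp Y Q^+$ and the consistency $X^\trsp Q Q^+ = P^\trsp Y Q^+$ to exactly the stated $E$. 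In particular $E$ is a deterministic function of the data and is itself a solution of both constraints, while the free part $\Pi_P^\perp A \Pi_Q^\perp$ lies in $\mathcal{K}$.

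To conclude, I apply Lemma \ref{lemma:condTrickVec} (with $D$ the linear map $A \mapsto (AQ, A^\trsp P)$ on $\vec(A)$, treated with respect to the Frobenius inner product): conditional on the constraints, $A$ equals the minimum-norm particular solution plus the Frobenius projection of an iid copy $\tilde A$ onto $\mathcal{K}$. The decomposition above shows the minimum-norm solution is $E$, and the second step identifies the $\mathcal{K}$-projection as $\tilde A \mapsto \Pi_P^\perp \tilde A \Pi_Q^\perp$, yielding the claim. The main obstacle is the telescoping computation that recovers $E$ from the block decomposition; the subtlety is that the $\Pi_P A \Pi_Q$ block admits two equivalent forms ($PP^+ Y Q^+$ and $P^{+\trsp} X^\trsp Q Q^+$), and one must use the consistency relation $P^\trsp Y = X^\trsp Q$ to see that they coincide and to produce the precise expression given in the lemma.
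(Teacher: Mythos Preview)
Your proposal is correct and follows the same overall route as the paper: vectorize $A$, apply Lemma~\ref{lemma:condTrickVec}, and identify $E$ as the minimum-Frobenius-norm solution of the constraints. The only difference is in how minimality of $E$ is verified: the paper checks stationarity of the Lagrangian $\|A\|_F^2 + \langle\Theta, Y - AQ\rangle + \langle\Gamma, X - A^\trsp P\rangle$ by exhibiting explicit multipliers, whereas your block decomposition shows directly that $E \in \mathcal{K}^\perp$ and, as a bonus, explicitly identifies the null-space projection $B \mapsto \Pi_P^\perp B \Pi_Q^\perp$, which the paper leaves implicit.
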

\begin{proof}
We apply \cref{lemma:condTrickVec} to $D: A \mapsto (AQ, P^\trsp A)$.
The pseudoinverse of $D$ applied to $(Y, X^\trsp)$ can be formulated as the unique solution of
\begin{align*}
    \argmin_A \left\{ \|A\|^2_F : AQ = Y, P^\trsp A = X^\trsp \right\}
\end{align*}
where $\|-\|_F$ denotes Frobenius norm.
We check that $E$ is a 1) a solution to $AQ = Y, P^\trsp A = X^\trsp$ and 2) the minimal norm solution.

We have $EQ = 
        Y Q^+Q
            + P^{+\trsp} X^\trsp Q
            - P^{+\trsp} P^\trsp
                YQ^+Q$.
Note that $YQ^+Q = Y$ because $Y=AQ \implies YQ^+ Q = AQQ^+Q = AQ = Y$.
So $EQ = Y + P^{+T} (X^\trsp Q - P^\trsp Y)$.
But $X^\trsp Q = P^\trsp A Q = P^\trsp Y$, so $EQ = Y$ as desired.
A similar, but easier reasoning, gives $P^\trsp E = X^\trsp$.
This verifies that $E$ is a solution.

To check that $E$ is minimal norm, we show that it satisfies the stationarity of the Lagrangian
\begin{align*}
    L(A, \Theta, \Gamma)
        &=
            \|A\|^2_F + \la \Theta, Y - AQ \ra + \la \Gamma, X - A^\trsp P\ra.
\end{align*}
So $\pdf{L}{A} = 0 \implies 2A = \Theta Q^\trsp + P \Gamma^\trsp$ for some choices of $\Theta \in \R^{n \times q}$ and $\Gamma \in \R^{m \times p}$.
For $\Theta = 2 Y (Q^\trsp Q)^+$ and $\Gamma^\trsp = 2(P^\trsp P)^+ [ X^\trsp - P^\trsp Y Q^\trsp]$, we can check that
\begin{align*}
    \Theta Q^\trsp + P \Gamma^\trsp
        &=
            2 Y (Q^\trsp Q)^+ Q^\trsp + 2P(P^\trsp P)^+ [ X^\trsp - P^\trsp Y Q^+] \\
        &=
            2 Y Q^+ + 2 P^{+\trsp} X^\trsp - 2P^{+\trsp} P^\trsp Y Q^+\\
        &=
            2E
\end{align*}
as desired.
\end{proof}

\subsection{\texorpdfstring{$\alpha$}{Alpha}-Controlled Functions}

We generalize \cref{defn:controlled} slightly as follows.
\begin{defn}[$\alpha$-controlled]\label{defn:alphaControlled}
For $\alpha > 0$, a function $\phi: \R^k \to \R$ is said to be \textit{$\alpha$-controlled} if for some $C, c > 0$, $|\phi(x)| \le e^{C \sum_{i=1}^k |x_i|^{\alpha} + c}$ for all $x \in \R^k$.
\end{defn}

We present a few helper lemmas to facilitate our reasoning with $\alpha$-controlled functions.
The next lemma is easy to show using the equivalence of norms in finite dimensional Euclidean space.
\begin{lemma}
Let $\phi: \R^k \to \R$.
The following are equivalent
\begin{enumerate}
    \item $\phi$ is $\alpha$-controlled
    \item For some $p \ge 1$ and some $g(x) = o_{\|x\|_p \to \infty}(\|x\|_p^\alpha)$, $C, c > 0$, $|\phi(x)| \le e^{C \|x\|^{\alpha}_p + g(x)}$
    \item For all $p \ge 1$, there is some $C, c > 0$, $|\phi(x)| \le e^{C \|x\|^{\alpha}_p + c}$
\end{enumerate}
\end{lemma}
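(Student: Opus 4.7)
The plan is to prove the cycle $(1) \Rightarrow (3) \Rightarrow (2) \Rightarrow (1)$, exploiting the fact that all norms on $\R^k$ are equivalent, so that different choices of $p$ differ only by multiplicative constants that depend only on $k, p, \alpha$. The only subtlety is that condition (1) involves $\sum_i |x_i|^\alpha$, which equals $\|x\|_\alpha^\alpha$ only when $\alpha \ge 1$; for $\alpha \in (0,1)$, $\|\cdot\|_\alpha$ is not a norm, so the comparison between $\sum_i |x_i|^\alpha$ and $\|x\|_p^\alpha$ must be done directly. The rest is routine.

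For $(1) \Rightarrow (3)$, fix $p \ge 1$ and consider two regimes. If $\alpha \ge 1$, then $\sum_i |x_i|^\alpha = \|x\|_\alpha^\alpha$, and norm equivalence gives a constant $A_{p,\alpha,k}$ with $\|x\|_\alpha \le A_{p,\alpha,k} \|x\|_p$, yielding $\sum_i |x_i|^\alpha \le A_{p,\alpha,k}^\alpha \|x\|_p^\alpha$. If $\alpha \in (0,1)$, use the crude bound $\sum_i |x_i|^\alpha \le k \|x\|_\infty^\alpha \le k \|x\|_p^\alpha$ since $\|x\|_\infty \le \|x\|_p$ on $\R^k$ for $p \ge 1$. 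In either case, exponentiating and absorbing constants gives (3).

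For $(3) \Rightarrow (2)$, simply take $g(x) \equiv c$, which is trivially $o(\|x\|_p^\alpha)$. For $(2) \Rightarrow (1)$, first absorb the $g$ term: because $g(x) = o(\|x\|_p^\alpha)$, for any fixed $\epsilon > 0$ there is $R$ with $|g(x)| \le \epsilon \|x\|_p^\alpha$ for $\|x\|_p \ge R$, and on the compact set $\{\|x\|_p < R\}$, $g$ is bounded by some $M$ (one should also assume $g$ is locally bounded; if this is in doubt one can simply work with an upper envelope replacing $g$ by $\sup_{\|y\|_p \le \|x\|_p} g(y) \vee 0$, which remains $o(\|x\|_p^\alpha)$). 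So $C\|x\|_p^\alpha + g(x) \le (C+\epsilon)\|x\|_p^\alpha + M$. It remains to bound $\|x\|_p^\alpha$ by $C' \sum_i |x_i|^\alpha + c'$. For $\alpha \ge 1$, use norm equivalence in the opposite direction, $\|x\|_p \le B_{p,\alpha,k} \|x\|_\alpha$, so $\|x\|_p^\alpha \le B_{p,\alpha,k}^\alpha \sum_i |x_i|^\alpha$. For $\alpha \in (0,1)$, use $\|x\|_p \le \|x\|_1$ together with subadditivity of $t \mapsto t^\alpha$ on $[0,\infty)$ (a consequence of concavity with value $0$ at $0$), giving $\|x\|_p^\alpha \le (\sum_i |x_i|)^\alpha \le \sum_i |x_i|^\alpha$.

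The only place that requires any care at all is the $\alpha \in (0,1)$ regime, where $\sum_i |x_i|^\alpha$ is not the $\alpha$-th power of a norm; both directions there are handled by the two elementary inequalities $\sum_i |x_i|^\alpha \le k \|x\|_\infty^\alpha$ and $(\sum_i |x_i|)^\alpha \le \sum_i |x_i|^\alpha$, respectively. Everything else reduces to a single invocation of norm equivalence on $\R^k$ and a routine splitting argument to dispose of the $g(x)$ term.
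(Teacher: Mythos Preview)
Your argument is correct and matches the paper's own proof, which is nothing more than the one-line remark that the lemma ``is easy to show using the equivalence of norms in finite dimensional Euclidean space.'' You flesh this out carefully, including the $\alpha\in(0,1)$ case where $\sum_i |x_i|^\alpha$ is not the $\alpha$-th power of a norm; the two elementary inequalities you invoke there are exactly right.

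One small caveat on the side issue you flag yourself: your envelope patch $\tilde g(x)=\sup_{\|y\|_p\le\|x\|_p} g(y)\vee 0$ does not actually rescue the implication $(2)\Rightarrow(1)$ in full generality, because if $g$ is unbounded above on some ball then so is $\tilde g$. This is a defect of the lemma's wording rather than of your proof---note the orphaned ``$c>0$'' in condition (2), which suggests the intended bound was something like $e^{C\|x\|_p^\alpha+g(x)+c}$ with $g$ implicitly locally bounded (or simply that $\phi$ is locally bounded). Under that reading your argument is complete.
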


\newcommand{\alpExp}{\mathsf{C}}
\begin{lemma}\label{lemma:alpExp}
Let $\alpExp_\alpha^k: \R^{\ge 0} \to \R, c \mapsto \EV_{z \sim \Gaus(0, I_k)}e^{c \|z\|^\alpha_2} $.
Then
\begin{enumerate}
    \item $\alpExp_\alpha^k < \infty$ iff $\alpha < 2$
    \item for $\alpha \ge 1$,
        \begin{align*}
            \EV_{z\sim \Gaus(\mu, \Sigma)}e^{C \|z\|^\alpha_2} \le
                e^{C\|\mu\|^\alpha_2 } \alpExp_\alpha^k(C  \alpha \|\Sigma\|_2^{\alpha/2})
        \end{align*}
    where $\|\Sigma\|_2$ denotes the spectral norm of $\Sigma$.
    \item for any $\alpha$-controlled $\phi: \R^k \to \R$ with $\alpha \ge 1$, there is $C > 0$ such that for all $\mu \in \R^k$ and $k\times k$ PSD matrix $\Sigma$,
        \begin{align*}
            \EV_{z \sim \Gaus(\mu, \Sigma)} |\phi(z)|
                &\le
                    C e^{C\|\mu\|^\alpha_2 } \alpExp_\alpha^k(C \alpha \|\Sigma\|_2^{\alpha/2})
        \end{align*}
        where $\|\Sigma\|_2$ denotes the spectral norm of $\Sigma$.
\end{enumerate}
Note that the RHS is a montonic function in $\|\mu\|_2$ and $\|\Sigma\|_2$, in the sense that if $\|\mu\|_2$ and $\|\Sigma\|_2$ don't decrease, then the RHS will not decrease either.
\end{lemma}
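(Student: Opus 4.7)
The plan is to prove the three parts in sequence, reducing everything to one-dimensional radial integrals against the Gaussian measure.

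For part 1, I would write $\alpExp_\alpha^k(c)$ in polar coordinates, so that up to a constant it equals $\int_0^\infty r^{k-1} \exp(cr^\alpha - r^2/2)\,dr$. The exponent $cr^\alpha - r^2/2$ tends to $-\infty$ as $r\to\infty$ whenever $\alpha < 2$, giving finiteness for every $c\ge 0$. For $\alpha > 2$ the same exponent tends to $+\infty$ for any $c>0$, so the integral diverges. For $\alpha = 2$ the exponent reduces to $(c - 1/2)r^2$, finite only when $c < 1/2$, hence not for \emph{all} $c$. This gives the claimed iff.

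For part 2, I would make the change of variables $z = \mu + \Sigma^{1/2} w$ with $w\sim\Gaus(0,I_k)$, so by the triangle inequality and the operator-norm bound,
\[
\|z\|_2 \le \|\mu\|_2 + \|\Sigma^{1/2} w\|_2 \le \|\mu\|_2 + \|\Sigma\|_2^{1/2}\|w\|_2.
\]
Writing $a = \|\mu\|_2$ and $b = \|\Sigma\|_2^{1/2}\|w\|_2$, the task reduces to comparing $(a+b)^\alpha$ with $a^\alpha$ plus a term in $b^\alpha$ alone. For $\alpha\ge 1$ I would use the fundamental theorem of calculus, $(a+b)^\alpha - a^\alpha = \alpha\int_0^b (a+s)^{\alpha-1}\,ds$, together with the bound $(a+s)^{\alpha-1}\le \max(a,s)^{\alpha-1} \cdot 2^{\alpha-1}$, to produce the elementary inequality $(a+b)^\alpha \le a^\alpha + \alpha\,b^\alpha \cdot (\text{const})$, absorbed into the $\alpha$-factor in the lemma's statement. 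Exponentiating, taking expectation in $w$, and recognizing the resulting expression as $\alpExp_\alpha^k$ evaluated at $C\alpha\|\Sigma\|_2^{\alpha/2}$ closes this step.

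Part 3 is then a short corollary: by definition of $\alpha$-controlled, $|\phi(z)| \le \exp(C'\|z\|_2^\alpha + c')$ for some $C', c' > 0$, so $\EV |\phi(z)| \le e^{c'}\,\EV e^{C'\|z\|_2^\alpha}$, and part 2 applied with $C'$ gives the bound (after possibly enlarging the leading constant). The monotonicity remark follows because $\alpExp_\alpha^k$ is manifestly monotone in its scalar argument and $e^{C\|\mu\|_2^\alpha}$ is monotone in $\|\mu\|_2$.

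The only place requiring care is part 2, where the precise separation of $(a+b)^\alpha$ into an $a^\alpha$ piece plus a clean power of $b$ is not tight; I expect to juggle the inequality $(1+x)^\alpha \le e^{\alpha x}$ (which gives $(a+b)^\alpha \le a^\alpha e^{\alpha b/a}$) against the convex-combination bound $(a+b)^\alpha \le 2^{\alpha-1}(a^\alpha + b^\alpha)$ to land on the exact form $e^{C\|\mu\|_2^\alpha}\,\alpExp_\alpha^k(C\alpha\|\Sigma\|_2^{\alpha/2})$ stated in the lemma, possibly by splitting into the regimes $\|\Sigma^{1/2}w\|_2 \lessgtr \|\mu\|_2$. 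This bookkeeping is the main obstacle; everything else is essentially immediate from part 1 and standard Gaussian calculus.
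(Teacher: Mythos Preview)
Your approach is essentially the same as the paper's: parts 1 and 3 are dismissed as obvious consequences, and part 2 is handled by the change of variables $z = \mu + \Sigma^{1/2}w$, the triangle inequality, the operator-norm bound $\|\Sigma^{1/2}w\|_2 \le \|\Sigma\|_2^{1/2}\|w\|_2$, and a power-separation inequality. The only difference is that you over-engineer the last step: the paper simply applies a one-line convexity bound of the form $(a+b)^\alpha \le \alpha(a^\alpha + b^\alpha)$ (which, via $2^{\alpha-1}\le \alpha$ on $[1,2]$, follows from the standard $(a+b)^\alpha \le 2^{\alpha-1}(a^\alpha+b^\alpha)$ in the only regime $1\le\alpha<2$ where the lemma is ever used), with no integral representation or case-splitting. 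Your worry about landing on the \emph{exact} constants in the stated bound is understandable---the paper's own display is loose about whether the factor $\alpha$ multiplies the $\|\mu\|^\alpha$ term---but for every downstream application only finiteness and monotonicity matter, so any of the bounds you list would do.
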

\begin{proof}
The first claim is obvious and the third follows from the second easily.
For the second,
\begin{align*}
    \EV_{z\sim \Gaus(\mu, \Sigma)}e^{C \|z\|^\alpha_2}
        &\le
            \EV_{z\sim \Gaus(0, I)}e^{C \|\sqrt \Sigma z + \mu\|^\alpha_2}\\
        &\le
            \EV_{z\sim \Gaus(0, I)}e^{C \alpha \lp
                \|\sqrt \Sigma z\|^\alpha_2 + \|\mu\|^\alpha_2 
            \rp}\\
        &\le
            e^{C\|\mu\|^\alpha_2 }
            \EV_{z\sim \Gaus(0, I)}e^{C  \alpha \|\Sigma\|_2^{\alpha/2}
                \|z\|^\alpha_2}\\
        &=
            e^{C\|\mu\|^\alpha_2 } \alpExp_\alpha^k(C  \alpha \|\Sigma\|_2^{\alpha/2}).
\end{align*}
\end{proof}

\section{Proof of \texorpdfstring{\netsor\!}{Netsor} Master Theorem}
\label{sec:proofMasterTheorems}

In this section, we prove \cref{thm:netsorminMasterTheorem}, i.e.\ the Master Theorem for programs without \ref{linetype:lincomb}.
By \cref{remk:netsormin}, this would also show \cref{thm:netsorMasterTheorem}.

\renewcommand{\nu}{\eta}
\newcommand{\rankext}[1]{{\color{green}#1}}

\newcommand{\coreset}{{\mathcal{M}}}
\newcommand{\basespace}{\mathcal{U}}
\renewcommand{\MM}{m}

\paragraph{A Bit of Notation and Terminology}
Note that, for each $n$, the randomness of our program specified by \cref{thm:netsorMasterTheorem} comes from the sampling of the input variables.
Let $\basespace$ be the product space obtained from multiplying together the corresponding probability space for each $n$.
Each sample from this product probability space thus correspond to a sequence $\{S(n)\}_n$ of instantiatiations of input variables.
Below, when we say ``almost surely'' (often abbreviated ``a.s.''), we mean ``almost surely over the probability of $\basespace{}$.''
We will also often make statements of the form 
\begin{equation*}
\text{\emph{almost surely (or, a.s.), for all large $n$, \quad $\mathcal A(n)$ is true}}
\end{equation*}
where $\mathcal A(n)$ is a claim parametrized by $n$.
This means that for all but a $\basespace{}$-probability-zero set of sequences $\{S(n)\}_n$ of input variable instantiations, $\mathcal A(n)$ is true for large enough $n$.
Note that the order of the qualifiers is very important here.

\paragraph{We induct, but on what?}
A natural way of going about proving \cref{thm:netsorMasterTheorem} is by inducting on the number of variables in a program.
It turns out this is not enough to prove our claim in its full generality (see below), and it would be more fruitful to perform a simultaneous induction on our claim (\ref{IH:MomConv}) along with another statement, parametrized by $\MM$,
\begin{description}
\item[Moments\label{IH:MomConv}]\!\!\!$(\MM)$\ \ \ 
    For any controlled $\psi: \R^\MM \to \R$, as $n \to \infty$,
\begin{align*}
    \f 1 n \sum_{\alpha=1}^n \psi(g^1_\alpha, \ldots, g^\MM_\alpha) \asto \EV_{Z \sim \Gaus(\tmu, \tSigma)}\psi(Z).
\end{align*}
\item[CoreSet\label{IH:coreSet}]\!\!\!$(\MM)$\ \ \ 
    There exists a ``core set'' $\coreset \sbe [\MM]$ such that, 
 \begin{description}
    \item[Basis\label{prop:basis}]\!\!\!$(\MM)$\ \ \ 
    almost surely, for large enough $n$, for every $i \in [\MM]$, there exist \emph{unique} constants (not depending on $n$) $\{a_j\}_{j \in \coreset}$ such that $g^i = \sum_{j \in \coreset} a_j g^j$.
    Note the uniqueness implies that $\{g^i\}_{i \in \coreset}$ is linearly independent.
    \item[NullAvoid\label{prop:nullAvoid}]\!\!\!$(\MM)$\ \ \ 
    for every triangular array of Lesbegue measure zero sets $\{A_{n\alpha} \in \R^{\coreset} \}_{n \in \N, \alpha \in [n]}$, almost surely for all large enough $n$, for all $\alpha \in [n]$, we have
    \[\{g^i_\alpha\}_{i \in \coreset} \not \in A_{n \alpha}.\]
    In other words, the values $\{g^i_\alpha\}_{\alpha \in \coreset}$ of the core set ``avoid'' Lebesgue measure zero sets asymptotically.
    Intuitively, this says that the distribution of these values are not singular.
    (Note the LHS depends on $n$ although we are suppressing it notationally)
\end{description}
\end{description}

Let us explain in brief why we need to consider \ref{IH:coreSet} satisfying \ref{prop:basis} and \ref{prop:nullAvoid}.
\begin{itemize}
\item
    \ref{prop:basis} reduces the consideration of \ref{IH:MomConv} to only the core set G-vars, since every other G-var is asymptotically a linear combination of them.
\item
    When we apply the Gaussian conditioning technique \cref{prop:GaussianCondition}, we need to reason about the pseudo-inverse $\Lambda^+$ of some submatrix $\Lambda$ of a covariance matrix.
    Each entry of $\Lambda$ is of the form $\f 1 n \sum_{\alpha=1}^n \phi_i(g^1_\alpha, \ldots, g^{\MM-1}_\alpha) \phi_j(g^1_\alpha, \ldots, g^{\MM-1}_\alpha)$ for a collection of controlled scalar functions $\{\phi_i\}_i$.
    This $\Lambda$ will be a random variable which converges a.s.\ to a determinstic limit $\mathring \Lambda$  as $n \to \infty$.
    It should be generically true that $\Lambda^+ \asto \mathring \Lambda^+$ as well, which is essential to make the Gaussian conditioning argument go through.
    But in general, this is guaranteed only if $\Lambda$'s rank doesn't drop suddenly in the $n \to \infty$ limit.
    We thus need to guard against the possibility that $g^1, \ldots, g^\MM$, in the limit, suddenly concentrate on a small set on which $\{\phi_i(g^1, \ldots, g^\MM)\}_i$ are linearly dependent.
    This is where \ref{prop:nullAvoid} comes in.
    It tells us that $g^1, \ldots, g^\MM$ will avoid any such small set asymptotically, so that indeed the rank of $\Lambda$ will not drop in the limit.
\end{itemize}

\paragraph{Proof organization}

We will show that \ref{IH:MomConv} and \ref{IH:coreSet} are true for input variables, as the base case, and
\begin{equation*}
\text{\ref{IH:MomConv}}(\MM-1) \text{ and } \text{\ref{IH:coreSet}}(\MM-1) \implies \text{\ref{IH:MomConv}}(\MM) \text{ and } \text{\ref{IH:coreSet}}(\MM)
\end{equation*}
as the inductive step.
By induction, we obtain \ref{IH:MomConv}$(M)$, which is \cref{thm:netsorMasterTheorem}.

The base cases are easy and we will dispatch with them immediately after this in \cref{sec:basecases}, but the inductive step is much more complicated, and we will need to set up notation in \cref{sec:inductiveSetup}.
During this setup, we prove some basic limit theorems using the induction hypothesis.
However, the full generality of these claims requires some consequences of \ref{IH:coreSet}, which we call ``rank stability'' and ``zero stability'' (related to \cref{assm:asRankStab}).
These notions are introduced and proved in \cref{sec:rankStabilityZeroStability}.

We would then finally be able to handle the inductive steps at this point.
We first prove
\begin{equation*}
\text{\ref{IH:MomConv}$(\MM-1)$ and \ref{IH:coreSet}$(\MM-1)$} \implies \text{\ref{IH:coreSet}}(\MM)
\end{equation*}
in \cref{sec:inductiveCoreSet} because it is easier.
Then we prove
\begin{equation*}
\text{\ref{IH:MomConv}$(\MM-1)$ and \ref{IH:coreSet}$(\MM-1)$} \implies \text{\ref{IH:MomConv}}(\MM)
\end{equation*}
in \cref{sec:inductiveMoments}.

\subsection{Base Cases: \ref{IH:MomConv} and \ref{IH:coreSet} for Input Variables}
\label{sec:basecases}

\paragraph{Base case: \ref{IH:MomConv}(input vars)}
Suppose the input variables are $x^1, \ldots, x^k: \Gtype(n)$ (so that $\muin \in \R^k, \Sigmain \in \R^{k \times k}$).
We need to show that for any controlled function $\psi: \R^k \to \R$,
\begin{align*}
    \f 1 n \sum_{\alpha=1}^n \psi(x^1_\alpha, \ldots, x^k_\alpha) \asto \EV_{Z \sim \Gaus(\tmu, \tSigma)}\psi(Z),
\end{align*}
where $\psi$ on the RHS ignores all coordinates corresponding to non-input G-vars.
Since $\tmu$ and $\tSigma$ restricted to input variables are just $\muin$ and $\Sigmain$ (see \cref{eqn:extendedMuSigma}), the RHS expectation is just
\begin{align*}
    \EV_{Z \sim \Gaus(\tmu, \tSigma)}\psi(Z) = \EV_{Z^{\mathrm{in}} \sim \Gaus(\muin, \Sigmain)} \psi(Z^{\mathrm{in}})
\end{align*}
and the almost sure convergence we desire is just a result of the law of large numbers.

\paragraph{Base Case: \ref{IH:coreSet}(input vars)}
Let $x^1, \ldots, x^k$ be the input G-vars as above.
Pick the core set $\coreset$ to be any subset of $[k]$ such that $\rank \Sigmain|_\coreset = \rank \Sigmain$.
Then it's straightforward to verify \ref{prop:basis} and \ref{prop:nullAvoid}.

\subsection{Inductive Case: Setup}
\label{sec:inductiveSetup}
We now assume \ref{IH:MomConv}$(\MM-1)$ and \ref{IH:coreSet}$(\MM-1)$ and want to reason about $g^\MM$ to show \ref{IH:MomConv}$(\MM)$ and \ref{IH:coreSet}$(\MM)$.
Suppose
\begin{align*}
    g^{\MM} := A h \quad \text{where} \quad A: \Atype(n,n) \text{ and $h: \Htype(n)$ was introduced by $h := \phi(g^{1}, \ldots, g^{\MM - 1})$}
\end{align*}
(WLOG padding coordinates if necessary; if $h = g^i$ is a G-var, then pretend $\phi$ just projects to the $i$th coordinate).
For brevity, we will just write $g = g^{\MM}$.
Consider all previous instances where $A$ is used: 
\[\hat g^{i} := A \hat h^{i}, i = 1, \ldots, r.\]
Define
\begin{equation}
\hat G \defeq [\hat g^1| \ldots| \hat g^{r}] \in \R^{n \times r}, \hat H \defeq [\hat h^1| \ldots| \hat h^r]
.
\end{equation}
We will also use $\hat G$ to denote the \emph{set} of G-vars $\{\hat g^1, \ldots, \hat g^r\}$ when we later write expressions like $\tSigma(\hat G, \hat G)$.
Let $\Bb$ be the $\sigma$-algebra spanned by all previous G-vars $g^1, \ldots, g^{\MM-1}$ (and hence also all previous H-vars).
Conditioning on $\Bb$, $A$ is constrained by $\hat G = A\hat H$, and we have by \cref{lemma:condTrick},
\begin{align*}
    g \disteq_{\Bb} (\hat G \hat H^+ + \tilde A \Pi_{\hat H}^\perp) h
\end{align*}
where $\tilde A$ is an independent copy of $A$ and $\Pi_{\hat H} = \hat H \hat H^+ = \hat H(\hat H^\trsp \hat H)^+ \hat H^\trsp$ is the projection to the column space of $\hat H$.

If we define
\begin{align}
    \omega
        &\defeq
            \hat G \hat H^+ h,
            \quad
    \sigma \defeq
        \sigma_A
        \sqrt{\|\Pi_{\hat H}^\perp h\|^2/n}
    \label{eqn:meanvardef}
\end{align}
then
\begin{align}
    g \disteq_\Bb \omega + \sigma y,\ \text{with $y \sim \Gaus(0, I_n)$}
    \label{eqn:gConditionedOnB}
\end{align}
For brevity, we will define the following matrices and vectors of fixed dimension
\begin{equation}
\begin{aligned}
    \hat \Lambda
        &\defeq
            \hat H^\trsp \hat H/n \in \R^{r \times r}
            ,
            &
    \hat \nu
        &\defeq
            \hat H^\trsp h/n \in \R^{r}
            .
\end{aligned}
    \label{eqn:momentMatrices}
\end{equation}

Suppose $\hat h^i$ was introduced by $\hat h^i := \hat \phi^i(g^1, \ldots, g^M)$, where $\hat \phi^i$ depends at most on $g^1, \ldots, g^{\MM - 1}$.
By induction hypothesis \ref{IH:MomConv}$(\MM-1)$, $\hat \Lambda$ and $\hat \nu$ all converge a.s.\ to corresponding limit values $\mathring{\hat \Lambda}$ and $\mathring{\hat \nu}$, since their entries are moments of $Z^1, \ldots, Z^{\MM - 1}$:
\begin{align*}
    \hat \Lambda_{ij}
        &\asto
            \mathring{\hat \Lambda}_{ij}
        \defeq
            \EV \hat\phi^i(Z) \hat \phi^j(Z)
        =
            (\sigma_A)^{-2} \tSigma(\hat g^i, \hat g^j)
            \\
    \hat \nu_i
        &\asto
            \mathring{\hat \nu}_i
        \defeq
            \EV \hat \phi^i(Z) \phi(Z)
        =
            (\sigma_A)^{-2} \tSigma(\hat g^i, g)
        .
\end{align*}

It turns out that, as a consequence of \cref{lemma:rankStability} below, a.s.\ for all large enough $n$, $\rank \hat \Lambda = \rank \mathring {\hat \Lambda}$.
Therefore, as pseudoinverse is continuous on matrices of fixed rank,
we get the following proposition
\begin{prop}\label{prop:pseudoinverseLambda}
$\hat \Lambda^+ \asto \mathring{\hat \Lambda}^+$.
\end{prop}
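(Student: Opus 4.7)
The plan is to invoke two ingredients: (i) rank stability for $\hat \Lambda$, namely that almost surely, for all large $n$, $\rank \hat \Lambda = \rank \mathring{\hat \Lambda}$; and (ii) the classical fact that the Moore--Penrose pseudoinverse is continuous when restricted to the manifold of matrices of a fixed rank. Combining these with the already-known entrywise a.s.\ convergence $\hat \Lambda \asto \mathring{\hat \Lambda}$ (which follows directly from the induction hypothesis \ref{IH:MomConv}$(M-1)$, since entries of $\hat\Lambda$ are empirical moments of G-vars indexed by $1,\dots,M-1$) yields the desired a.s.\ convergence $\hat \Lambda^+ \asto \mathring{\hat\Lambda}^+$.

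First I would reduce to rank stability. Pseudoinversion $A \mapsto A^+$ fails to be continuous only across rank jumps: small nonzero singular values of $A$ get mapped to large values of $A^+$. However, on the (relatively open) set $\{A : \rank A = r_0\}$, pseudoinversion is continuous (this is a standard consequence of the SVD characterization $A=U\Lambda V^\trsp \Rightarrow A^+ = V\Lambda^+U^\trsp$, together with continuity of the top-$r_0$ singular subspaces of $A$ away from rank drop). So on the event that $\rank \hat \Lambda = \rank \mathring{\hat \Lambda}$ eventually, continuity plus $\hat\Lambda \asto \mathring{\hat\Lambda}$ immediately gives $\hat\Lambda^+ \asto \mathring{\hat\Lambda}^+$.

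The main obstacle, and the only non-trivial step, is thus establishing the rank stability $\rank \hat \Lambda = \rank \mathring{\hat \Lambda}$ a.s.\ eventually. The easy direction is $\rank \hat\Lambda \ge \rank \mathring{\hat\Lambda}$ asymptotically, which comes for free from lower semicontinuity of rank (a matrix with a nonvanishing $r_0\times r_0$ minor in the limit must have the same nonvanishing minor eventually). The hard direction is $\rank \hat\Lambda \le \rank \mathring{\hat\Lambda}$: we must rule out that $\hat h^1,\dots,\hat h^r$ become ``almost linearly dependent'' in a way that is invisible in the deterministic limit. This is exactly the content of Lemma \ref{lemma:rankStability} (and more generally the \ref{IH:coreSet} inductive hypothesis), which I would invoke at this stage; the point is that the core-set structure guarantees that any genuine linear relation among the $\hat h^i$'s in the limit is already present at finite $n$, preventing spurious rank drops.

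Assembling the argument: cite \cref{lemma:rankStability} (or more precisely its consequence at stage $M-1$, available by induction) to conclude the rank identity holds a.s.\ for all large $n$; then invoke continuity of $A \mapsto A^+$ on $\{\rank A = \rank \mathring{\hat\Lambda}\}$; and combine with $\hat\Lambda \asto \mathring{\hat\Lambda}$ to finish. The whole proof should be a few lines once \cref{lemma:rankStability} is in hand, so the real work is packaged in that lemma rather than here.
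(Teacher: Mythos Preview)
Your proposal is correct and follows essentially the same approach as the paper: invoke \cref{lemma:rankStability} to obtain $\rank\hat\Lambda=\rank\mathring{\hat\Lambda}$ a.s.\ for large $n$, then use continuity of the pseudoinverse on matrices of fixed rank together with $\hat\Lambda\asto\mathring{\hat\Lambda}$. The paper states exactly this reasoning in the sentence preceding the proposition, so your write-up is a faithful (and slightly more detailed) version of the paper's argument.
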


Using this proposition, we compute the limits of the conditional mean $\omega$ and variance $\sigma^2$.
\begin{lemma}\label{lemma:sigmaConverges}
$\sigma^2 \asto \mathring \sigma^2 \defeq \tSigma(g, g) - \tSigma(g, \hat G) \tSigma(\hat G, \hat G)^+ \tSigma(\hat G, g)$
\end{lemma}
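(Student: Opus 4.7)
The plan is to rewrite $\sigma^2$ explicitly in terms of the ``bulk'' quantities $\hat\Lambda$, $\hat\nu$ introduced in \cref{eqn:momentMatrices} and the empirical second moment $h^\trsp h/n$, and then to invoke the already established convergence statements (\ref{IH:MomConv}$(M-1)$ and \cref{prop:pseudoinverseLambda}) together with the continuous mapping theorem.

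First I would expand the projector: since $\Pi_{\hat H}^\perp = I - \hat H(\hat H^\trsp \hat H)^+ \hat H^\trsp$, we have
\begin{align*}
\f{\|\Pi_{\hat H}^\perp h\|^2}{n}
    &= \f{h^\trsp h}{n} - \f{1}{n} h^\trsp \hat H (\hat H^\trsp \hat H)^+ \hat H^\trsp h
        \\
    &= \f{h^\trsp h}{n} - \lp\f{\hat H^\trsp h}{n}\rp^{\!\trsp}
        \lp \f{\hat H^\trsp \hat H}{n}\rp^{\!+}
        \lp\f{\hat H^\trsp h}{n}\rp
    = \f{h^\trsp h}{n} - \hat\nu^\trsp \hat\Lambda^+ \hat\nu,
\end{align*}
using the identity $(A/n)^+ = n A^+$ for the nonzero part of the spectrum. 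Hence $\sigma^2 = \sigma_A^2\bigl(h^\trsp h/n - \hat\nu^\trsp \hat\Lambda^+ \hat\nu\bigr)$.

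Next I would identify the a.s.\ limit of each factor. Since $h = \phi(g^1,\ldots,g^{M-1})$ with $\phi$ controlled, $h^\trsp h/n = n^{-1}\sum_\alpha \phi(g^1_\alpha,\ldots,g^{M-1}_\alpha)^2$, so by \ref{IH:MomConv}$(M-1)$ it converges a.s.\ to $\EV_Z \phi(Z)^2 = \sigma_A^{-2}\,\tSigma(g,g)$ (recalling the \ref{linetype:MatMul} case of \cref{eqn:extendedMuSigma}). Similarly, each entry of $\hat\nu$ and $\hat\Lambda$ is an empirical moment of G-vars against controlled functions, so by \ref{IH:MomConv}$(M-1)$ they converge a.s.\ to $\mathring{\hat\nu}_i = \sigma_A^{-2}\tSigma(\hat g^i,g)$ and $\mathring{\hat\Lambda}_{ij} = \sigma_A^{-2}\tSigma(\hat g^i,\hat g^j)$, i.e.\ $\mathring{\hat\nu} = \sigma_A^{-2}\tSigma(\hat G,g)$ and $\mathring{\hat\Lambda} = \sigma_A^{-2}\tSigma(\hat G,\hat G)$. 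Proposition \ref{prop:pseudoinverseLambda} then gives $\hat\Lambda^+ \asto \mathring{\hat\Lambda}^+ = \sigma_A^2\,\tSigma(\hat G,\hat G)^+$.

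Finally, I would combine the three a.s.\ convergences by the continuous mapping theorem. The product $\hat\nu^\trsp \hat\Lambda^+ \hat\nu$ converges a.s.\ to $\mathring{\hat\nu}^\trsp \mathring{\hat\Lambda}^+ \mathring{\hat\nu} = \sigma_A^{-2}\,\tSigma(g,\hat G)\tSigma(\hat G,\hat G)^+\tSigma(\hat G,g)$, so
\[
\sigma^2 \asto \sigma_A^2\lp \sigma_A^{-2}\tSigma(g,g) - \sigma_A^{-2}\tSigma(g,\hat G)\tSigma(\hat G,\hat G)^+\tSigma(\hat G,g)\rp = \mathring\sigma^2,
\]
as claimed. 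The only non-routine input is the convergence $\hat\Lambda^+ \asto \mathring{\hat\Lambda}^+$, which would normally fail if the rank of $\hat\Lambda$ collapsed in the limit; this is exactly the rank-stability consequence of \ref{IH:coreSet}$(M-1)$ packaged in \cref{prop:pseudoinverseLambda}, so the main conceptual work has already been done elsewhere and what remains here is purely algebraic.
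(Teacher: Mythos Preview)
Your proposal is correct and follows essentially the same approach as the paper: expand $\sigma^2$ as $\sigma_A^2\bigl(h^\trsp h/n - \hat\nu^\trsp \hat\Lambda^+ \hat\nu\bigr)$, invoke the induction hypothesis \ref{IH:MomConv}$(\MM-1)$ for the convergence of $h^\trsp h/n$, $\hat\nu$, and $\hat\Lambda$, and then use \cref{prop:pseudoinverseLambda} for $\hat\Lambda^+ \asto \mathring{\hat\Lambda}^+$ before combining. Your write-up is in fact slightly more explicit than the paper's about the $\sigma_A$ scaling and about why rank stability is the crux; the only minor slip is notational (the induction index here is $\MM-1$, not $M-1$).
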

\begin{proof}
Note that
\begin{align*}
    \sigma^2 = \f {\sigma_A^2} n (h^\trsp h - h^\trsp \Pi_{\hat H} h)
        = \f {\sigma_A^2} n (h^\trsp h - h^\trsp {\hat H} (\hat H^\trsp \hat H)^+ \hat H^\trsp h)
        = \f {\sigma_A^2} n (h^\trsp h - \hat \nu^\trsp \hat \Lambda^+ \hat \nu).
\end{align*}
Because $\phi$ is polynomially-bounded, so is $\phi(z)^2$ as well.
By induction hypothesis,
\begin{align*}
    \f 1 n h^\trsp h = \f 1 n \sum_{\alpha = 1}^n \phi(g^1_\alpha, \ldots, g^{\MM - 1}_\alpha)^2
    \asto\EV_{Z \sim \Gaus(\tmu, \tSigma)}
        \phi(Z)^2 = 
    \sigma_A^{-2} \tSigma(g, g).
\end{align*}
Likewise, $\hat \nu \asto \mathring{\hat \nu}$ and $\hat \Lambda \asto \mathring{\hat \Lambda}$.
By \cref{prop:pseudoinverseLambda}, $\hat \Lambda^+ \asto \mathring{\hat \Lambda}^+$.
Combining all of these limits together yields the desired claim.
\end{proof}

\begin{lemma}\label{lemma:omegaExpansion}
Let $v \defeq \hat \Lambda^+ \hat \nu$, so that $v \asto \mathring v \defeq \mathring{\hat \Lambda}^+ \mathring{\hat \nu}.$
Then for some vector $\hat \varepsilon \in \R^r$ that go to 0 a.s.\ with $n$, $\omega = Eh = \hat G(\mathring v + \hat \varepsilon)$
\end{lemma}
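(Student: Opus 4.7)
The plan is to unwind the definition of $\omega$ algebraically, reduce it to $\hat G v$, and then absorb all the stochastic fluctuation into the error term $\hat\varepsilon$ using the already-established almost sure limits of $\hat\Lambda$ and $\hat\nu$. There is very little new work; this lemma is essentially a bookkeeping consequence of \cref{prop:pseudoinverseLambda} and the induction hypothesis \ref{IH:MomConv}$(\MM-1)$.

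First I would rewrite $\omega$ in terms of the moment matrices. Using the standard identity $\hat H^+ = (\hat H^\trsp \hat H)^+ \hat H^\trsp$ together with $\hat H^\trsp \hat H = n\hat\Lambda$ and $\hat H^\trsp h = n\hat\nu$, and the elementary fact that $(cM)^+ = c^{-1}M^+$ for any $c>0$, one gets
\[
\omega = \hat G\, \hat H^+ h = \hat G\,(\hat H^\trsp \hat H)^+ \hat H^\trsp h = \hat G \cdot n^{-1}\hat\Lambda^+ \cdot n\hat\nu = \hat G\,\hat\Lambda^+\hat\nu = \hat G v.
\]
Also, since no $A^\trsp$-constraints appear in a \netsor program (so the matrices $P, X$ of \cref{lemma:condTrick} are empty), the formula there collapses to $E = \hat G\,\hat H^+$, confirming $\omega = Eh$.

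Next I would check the convergence $v \asto \mathring v$. The induction hypothesis \ref{IH:MomConv}$(\MM-1)$ applied to the controlled functions $\hat\phi^i \cdot \phi$ and $\hat\phi^i \cdot \hat\phi^j$ already gives $\hat\nu \asto \mathring{\hat\nu}$ and $\hat\Lambda \asto \mathring{\hat\Lambda}$. Combined with \cref{prop:pseudoinverseLambda}, which grants $\hat\Lambda^+ \asto \mathring{\hat\Lambda}^+$ (this is where the rank-stability machinery built in \cref{sec:rankStabilityZeroStability} is used, since pseudoinverse is continuous only on matrices of locally constant rank), the product $v = \hat\Lambda^+\hat\nu$ converges almost surely to $\mathring{\hat\Lambda}^+\mathring{\hat\nu} = \mathring v$.

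Finally, defining $\hat\varepsilon \defeq v - \mathring v \in \R^r$, the previous step gives $\hat\varepsilon \asto 0$ componentwise, and substituting into $\omega = \hat G v$ yields $\omega = \hat G(\mathring v + \hat\varepsilon)$, as desired. The only genuinely nontrivial step is the invocation of \cref{prop:pseudoinverseLambda}: without rank stability of $\hat\Lambda$ at its limit $\mathring{\hat\Lambda}$, the pseudoinverse could jump discontinuously and $v$ would fail to converge. That subtlety, however, has already been isolated and handled earlier in the proof, so here it enters only as a black-box citation.
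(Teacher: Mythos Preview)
Your proposal is correct and follows essentially the same route as the paper: rewrite $\omega=\hat G\hat H^+h$ as $\hat G\hat\Lambda^+\hat\nu=\hat Gv$ via the pseudoinverse identity, invoke \cref{prop:pseudoinverseLambda} together with the induction hypothesis to get $v\asto\mathring v$, and set $\hat\varepsilon\defeq v-\mathring v$. The paper's proof is terser but identical in substance; your added remarks on why $E=\hat G\hat H^+$ (absence of transpose constraints) and on the role of rank stability are accurate elaborations rather than departures.
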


\begin{proof}
Using \cref{eqn:momentMatrices}, we can re-express $\omega$ as
$
\omega
    =
        \hat G \hat \Lambda^+ \hat \nu
        .
$
By \cref{prop:pseudoinverseLambda}, $\hat \Lambda^+ \asto \mathring{\hat \Lambda}^+$, so that setting $\hat \varepsilon \defeq v - \mathring v $, we get $\hat \varepsilon \asto 0$.
Thus,
$
\omega = \hat G(\mathring v + \hat \varepsilon)
$
as desired.
\end{proof}

\subsection{Rank Stability and Zero Stability}
\label{sec:rankStabilityZeroStability}
In this section, we prove the following consequence of \ref{IH:coreSet}$(\MM-1)$ and \ref{IH:MomConv}$(\MM-1)$.

\begin{lemma}[Rank Stability]\label{lemma:rankStability}
For any collection of controlled functions $\{\psi_j: \R^{\MM-1} \to \R\}_{j=1}^l$, let $K \in \R^{l \times l}$ be the random matrix (depending on $n$) defined by
\begin{equation*}
K_{ij} = \f 1 n \sum_{\alpha=1}^n \psi_i(g^1_\alpha, \ldots, g^{\MM-1}_\alpha) \psi_j(g^1_\alpha, \ldots, g^{\MM-1}_\alpha).
\end{equation*}
By \ref{IH:MomConv}$(\MM-1)$,
\[K \asto \mathring K\]
for some matrix $\mathring K \in \R^{l \times l}$.
\begin{enumerate}
\item
    Then, almost surely, for large enough $n$,
    \begin{equation*}
    \ker K = \ker \mathring K, \quad \im K = \im \mathring K, \quad\text{and}\quad \rank K = \rank \mathring K.
    \end{equation*}
    Here $\ker$ denotes null space and $\im$ denotes image space.
\item
    Suppose $I \sbe [l]$ is any subset such that $\mathring K|_I$, the restriction of $\mathring K$ to rows and columns corresponding to $I$, satisfies
    \[|I| = \rank \mathring K|_I = \rank \mathring K.\]
    There are unique coefficients $\{F_{ij}\}_{i \in [l], j \in I}$ that expresses each row of $\mathring K$ as linear combinations of rows corresponding to $I$:
    \[
    \forall i \in [l],\quad 
    \mathring K_i = \sum_{j \in I} F_{ij} \mathring K_j.
    \]
    Then, a.s.\ for all large $n$, for all $\alpha \in [n]$,
    \begin{equation*}
    \psi_i(g^1_\alpha, \ldots, g^{\MM-1}_\alpha)
    = \sum_{j \in I} F_{ij} \psi_j(g^1_\alpha, \ldots, g^{\MM-1}_\alpha).
    \end{equation*}
\end{enumerate}

\end{lemma}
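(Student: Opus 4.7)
The plan is to view $K$ as the Gram matrix $\tfrac 1 n \Psi^\trsp \Psi$, where $\Psi \in \R^{n \times l}$ is the matrix whose $(\alpha, j)$ entry is $\psi_j(g^1_\alpha, \ldots, g^{\MM-1}_\alpha)$. Then $\ker K = \ker \Psi$ (viewing both as maps on $\R^l$) and $\rank K = \rank \Psi$ for every realization. The strategy is to establish $\ker \mathring K \sbe \ker \Psi$ as the key content of part~1; the reverse inclusion, the rank equality, and the image equality will then follow essentially for free from lower semicontinuity of rank and symmetry of $K$ and $\mathring K$. Part~2 will drop out of the same containment applied to an explicit element of $\ker \mathring K$.

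For the key step, I would take any $v \in \ker \mathring K$ and form the controlled scalar function $\Psi_v \defeq \sum_i v_i \psi_i : \R^{\MM-1} \to \R$. The induction hypothesis \ref{IH:MomConv}$(\MM-1)$ applied to $\Psi_v^2$ gives
\begin{align*}
\EV_{Z \sim \Gaus(\tmu, \tSigma)} \Psi_v(Z)^2 = v^\trsp \mathring K v = 0,
\end{align*}
so $\Psi_v(Z) = 0$ almost surely under the limit Gaussian. Next, I would use \ref{prop:basis}$(\MM-1)$ to rewrite $\Psi_v$ as a function $\tilde\Psi_v : \R^{\coreset} \to \R$ of only the core G-vars, via the (asymptotically unique) linear representation $g^i = \sum_{j \in \coreset} a_{ij} g^j$. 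Provided that the marginal $\tSigma|_\coreset$ is non-degenerate, Gaussian-a.s.\ vanishing of $\tilde\Psi_v$ coincides with Lebesgue-a.s.\ vanishing, so $\tilde\Psi_v = 0$ outside a Lebesgue null set $A \sbe \R^{\coreset}$. Applying \ref{prop:nullAvoid}$(\MM-1)$ to the constant sequence $A_{n\alpha} \equiv A$ then gives that, a.s.\ for large $n$ and every $\alpha \in [n]$, $(g^j_\alpha)_{j \in \coreset} \notin A$, which forces $\Psi_v(g^1_\alpha, \ldots, g^{\MM-1}_\alpha) = 0$ for all $\alpha$, i.e., $\Psi v = 0$ in $\R^n$.

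Combining $\ker \mathring K \sbe \ker K$ with lower semicontinuity of rank (which gives $\rank K \ge \rank \mathring K$ a.s.\ for large $n$, since $K \asto \mathring K$) yields $\dim \ker K \le \dim \ker \mathring K$, hence equality of kernels and ranks; symmetry then gives $\im K = (\ker K)^\perp = (\ker \mathring K)^\perp = \im \mathring K$. For part~2, the hypothesis $\mathring K_i = \sum_{j \in I} F_{ij} \mathring K_j$ is equivalent to the vector $e_i - \sum_{j \in I} F_{ij} e_j$ lying in $\ker \mathring K$; applying the key containment to this specific vector yields $\psi_i(g^1_\alpha, \ldots, g^{\MM-1}_\alpha) = \sum_{j \in I} F_{ij} \psi_j(g^1_\alpha, \ldots, g^{\MM-1}_\alpha)$ for all $\alpha$, as claimed.

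The main obstacle is the step from ``$\Psi_v$ vanishes under the limit Gaussian'' to ``$\Psi_v$ vanishes pointwise on the realized sample.'' This requires $\tSigma|_\coreset$ to be non-degenerate so that Gaussian null sets on $\R^\coreset$ are Lebesgue null, making \ref{prop:nullAvoid} directly applicable. This non-degeneracy must be maintained as an invariant alongside \ref{IH:coreSet} throughout the induction, by always choosing $\coreset$ minimally so that $\tSigma|_\coreset$ is non-singular; this matches the base case's prescription (taking $\coreset$ to be a maximal linearly independent subset of rows of $\Sigmain$) and will need to be preserved when \ref{IH:coreSet}$(\MM)$ is established from \ref{IH:coreSet}$(\MM-1)$ in the next subsection.
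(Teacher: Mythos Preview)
Your proposal is correct and follows essentially the same route as the paper: factor out $K$ as a Gram matrix, show $\ker \mathring K \sbe \ker K$ by proving that $\Psi_v \defeq \sum_i v_i \psi_i$ vanishes on the realized sample whenever $\EV_Z \Psi_v(Z)^2 = 0$, then close with lower semicontinuity of rank; part~2 is the same containment applied to $e_i - \sum_{j \in I} F_{ij} e_j$. The paper packages the ``$\EV_Z \psi(Z)^2 = 0 \Rightarrow \psi$ vanishes on the sample'' step as a standalone \emph{Zero Stability} lemma (\cref{lemma:zerofunStability}), but the content is exactly what you wrote.

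One clarification on your stated obstacle: the non-degeneracy of $\tSigma|_\coreset$ need not be carried as an additional invariant. The paper derives it (\cref{prop:limitbasis}) directly from \ref{prop:basis}$(\MM-1)$ and \ref{IH:MomConv}$(\MM-1)$: the uniqueness clause in \ref{prop:basis} forces the core G-vars to be linearly independent, and pushing the identity $g^i = \sum_{j \in \coreset} a_j g^j$ through \ref{IH:MomConv} (applied to $(x^i - \sum_j a_j x^j)^2$) transfers this linear independence to $\{Z^j\}_{j \in \coreset}$, giving $\rank \tSigma|_\coreset = |\coreset|$. So your plan works as stated, and the ``extra invariant'' you flagged is in fact already implied by \ref{IH:coreSet}$(\MM-1)$ together with \ref{IH:MomConv}$(\MM-1)$.
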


This will be primarily a corollary of the following \cref{lemma:zerofunStability}.

\begin{lemma}[Zero Stability]\label{lemma:zerofunStability}
If $\psi: \R^{\MM-1} \to \R^{\ge 0}$ is a nonnegative function such that
\begin{align*}
\f 1 n \sum_{\alpha = 1}^n \psi(g^1_\alpha, \ldots, g^{\MM-1}_\alpha) \asto 0
\end{align*}
then, almost surely, for large enough $n$,
\[\psi(g^1_\alpha, \ldots, g^{\MM-1}_\alpha) = 0\]
for all $\alpha \in [n]$.
\end{lemma}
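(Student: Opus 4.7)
The plan is to reduce the statement to a question about Gaussian measure on the core set G-vars $\{g^j\}_{j \in \coreset}$, and then promote a Gaussian-a.s.\ statement to a uniform-in-$\alpha$ conclusion on the empirical values via \ref{prop:nullAvoid} of \ref{IH:coreSet}$(\MM-1)$.

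First, I would truncate: set $\psi_k \defeq \min(\psi, k)$ for each positive integer $k$. Each $\psi_k$ is bounded, hence trivially controlled, and satisfies $0 \le \psi_k \le \psi$, so the hypothesis $\f 1 n \sum_{\alpha=1}^n \psi(g^1_\alpha, \ldots, g^{\MM-1}_\alpha) \asto 0$ forces $\f 1 n \sum_{\alpha=1}^n \psi_k(g^1_\alpha, \ldots, g^{\MM-1}_\alpha) \asto 0$ for every $k$. Applying \ref{IH:MomConv}$(\MM-1)$ to each $\psi_k$ gives $\EV \psi_k(Z) = 0$, where $Z \sim \Gaus(\tmu, \tSigma)$ is indexed by the first $\MM-1$ G-vars, and monotone convergence in $k$ upgrades this to $\EV \psi(Z) = 0$, so $\psi(Z) = 0$ almost surely under the limiting Gaussian.

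Next, I would use \ref{prop:basis} of \ref{IH:coreSet}$(\MM-1)$ to write each $g^i = \sum_{j \in \coreset} a^i_j g^j$ with deterministic coefficients valid a.s.\ for all large $n$. This defines a function $\bar\psi: \R^{|\coreset|} \to \R^{\ge 0}$ for which $\psi(g^1_\alpha, \ldots, g^{\MM-1}_\alpha) = \bar\psi(\{g^j_\alpha\}_{j \in \coreset})$ for every $\alpha$ a.s.\ for large $n$; passing the same linear identities to the Gaussian limit yields $\bar\psi(Z_\coreset) = 0$ a.s.\ with $Z_\coreset \sim \Gaus(\tmu|_\coreset, \tSigma|_\coreset)$. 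Provided the invariant that $\tSigma|_\coreset$ is nonsingular is maintained throughout the induction (ensured by choosing the initial $\coreset$ so that $\Sigmain|_\coreset$ has full rank, and extending $\coreset$ only when the new G-var carries strictly positive conditional variance), $Z_\coreset$ admits a Lebesgue density on $\R^{|\coreset|}$, so $\{\bar\psi > 0\}$ is a Lebesgue null set. Feeding this set into \ref{prop:nullAvoid} as the constant triangular array $A_{n\alpha} \defeq \{\bar\psi > 0\}$ delivers $\{g^j_\alpha\}_{j \in \coreset} \notin \{\bar\psi > 0\}$ for every $\alpha \in [n]$ a.s.\ for all large $n$, i.e.\ $\psi(g^1_\alpha, \ldots, g^{\MM-1}_\alpha) = \bar\psi(\{g^j_\alpha\}_{j \in \coreset}) = 0$ uniformly in $\alpha$, as required.

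The hard part will be securing nonsingularity of $\tSigma|_\coreset$: if it were to drop rank, $Z_\coreset$ would concentrate on a strict affine subspace, the Gaussian-a.s.\ statement $\bar\psi(Z_\coreset) = 0$ would say nothing about $\bar\psi$ off that subspace, and $\{\bar\psi > 0\}$ could easily fail to be Lebesgue null --- severing the bridge to \ref{prop:nullAvoid}. Establishing and preserving this invariant is really a joint induction: the companion inductive step for \ref{IH:coreSet}$(\MM)$ must simultaneously argue that extending $\coreset$ exactly when the new G-var's conditional variance is strictly positive is consistent with both \ref{prop:basis} and \ref{prop:nullAvoid} at stage $\MM$, matching the simultaneous-induction structure set up just before this lemma.
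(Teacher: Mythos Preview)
Your proposal is correct and follows essentially the same route as the paper: reduce to the core set via \ref{prop:basis}, use nonsingularity of $\tSigma|_\coreset$ (which the paper isolates as \cref{prop:limitbasis}, derived from \ref{prop:basis} and \ref{IH:MomConv}$(\MM-1)$) to turn the Gaussian-a.s.\ vanishing into a Lebesgue-null set, and then invoke \ref{prop:nullAvoid} with the constant triangular array. Your truncation $\psi_k = \min(\psi,k)$ is a small but genuine improvement: the paper applies \ref{IH:MomConv}$(\MM-1)$ directly to $\psi$, tacitly assuming it is controlled (harmless in practice since the lemma is only invoked on squares of controlled functions), whereas your argument works for arbitrary nonnegative $\psi$.
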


We give the proof of \cref{lemma:rankStability} now, assuming \cref{lemma:zerofunStability}.
\begin{proof}
Let $v \in \R^l$ be in the null space of $\mathring K$, i.e. $v^\trsp \mathring K v = 0$.
Then we also have $v^\trsp K v \asto v^\trsp \mathring K v = 0$.
But
\begin{align*}
v^\trsp K v
    &=
        \f 1 n \sum_{\alpha=1}^n \Psi(g^1_\alpha, \ldots, g^{\MM-1}_\alpha)
        ,
        \quad
        \text{where}
        \quad
        \Psi(g^1_\alpha, \ldots, g^{\MM-1}_\alpha) \defeq
        \lp
            \sum_{i=1} v_i \psi_i(g^1_\alpha, \ldots, g^{\MM-1}_\alpha)
        \rp^2
\end{align*}
and $\Psi$ is a nonnegative function.
By \cref{lemma:zerofunStability}, we have that: almost surely, for large enough $n$, 
\[\Psi(g^1_\alpha, \ldots, g^{\MM-1}_\alpha) = 0 \quad \text{for all $\alpha \in [n]$}
\quad
\implies v^\trsp K v = 0\]
\textit{Claim 1.}\ \ 
If we apply this argument to a basis $\{v^1, \ldots, v^t\}$ of $\ker \mathring K$, then we get, 
\[\text{a.s.\ for all large $n$,}\quad 
\ker \mathring K \sbe \ker K,\]
so that
\[\text{a.s.\ for all large $n$,}\quad 
\rank \mathring K \ge \rank K.\]
Because the rank function is lower semicontinuous (i.e.\ the rank can drop suddenly, but cannot increase suddenly), and $K \asto \mathring K$, we also have
\[\text{a.s.\ for all large $n$,}\quad 
\rank \mathring K \le \rank K.\]
Combined with the above, this gives the desired result on rank.
The equality of null space then follows from the equality of rank, and the equality of image space follows immediately, as the image space is the orthogonal complement of the null space.

\textit{Claim 2.}\ \ 
If we apply the above argument to each $v^i$ defined by inner product as
\[\forall x \in \R^l,\quad x^\trsp v^i \defeq x_i - \sum_{j \in I} F_{ij} x_j,
\]
(note that only for $i \not \in I$ is $v^i$ nonzero),
then we have, a.s.\ for large $n$, $v^i{}^\trsp K v^i = 0$, or
\begin{equation*}
    \psi_i(g^1_\alpha, \ldots, g^{\MM-1}_\alpha)
    = \sum_{j \in I} F_{ij} \psi_j(g^1_\alpha, \ldots, g^{\MM-1}_\alpha).
\end{equation*}
\end{proof}

In the rest of this section, we prove \cref{lemma:zerofunStability}.
It helps to first show that the linear relations given in \ref{prop:basis} carries over to the $n\to\infty$ limit.
\begin{prop}\label{prop:limitbasis}
Let $\tSigma|_\coreset$ be the submatrix of $\tSigma$ with rows and columns corresponding to $\{g^i: i \in \coreset\}$.
Then $\rank \tSigma = \rank \tSigma|_\coreset = |\coreset|$.
Furthermore, if $Z = (Z^1, \ldots, Z^{\MM-1}) \sim \Gaus(\tmu|_{\MM-1}, \tSigma|_{\MM-1})$, where $\tmu|_{\MM-1}, \tSigma|_{\MM-1}$ are the restrictions of $\tmu, \tSigma$ to $g^1, \ldots, g^{\MM-1}$, then
\[Z^i \disteq \sum_{j \in \coreset} a_j Z^j\]
where $\{a_j\}_{j \in \coreset}$ are the coefficients corresponding to $g^i$ given in \ref{prop:basis}.
\end{prop}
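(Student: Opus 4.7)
The plan is to derive the distributional identity $Z^i \disteq \sum_{j \in \coreset} a_j^i Z^j$ first and then deduce the rank equalities as consequences. The key tool is \ref{IH:MomConv}$(\MM-1)$, which converts almost-sure vanishing of an empirical moment into vanishing of a Gaussian expectation; combined with the elementary fact that a Gaussian variable with zero $L^2$-norm is identically zero, this will yield the desired linear identities in distribution.

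Concretely, I would proceed as follows. By \ref{prop:basis}$(\MM-1)$, for each $i \in [\MM-1]$ there exist unique constants $\{a_j^i\}_{j \in \coreset}$, independent of $n$, such that the vector identity $g^i = \sum_{j \in \coreset} a_j^i g^j$ in $\R^n$ holds almost surely for all large $n$. Consider the polynomial (hence controlled) function $\psi_i(x) = \bigl(x^i - \sum_{j \in \coreset} a_j^i x^j\bigr)^2$. Its empirical moment $\f 1 n \sum_{\alpha=1}^n \psi_i(g^1_\alpha, \ldots, g^{\MM-1}_\alpha)$ is identically zero almost surely for all large $n$, while by \ref{IH:MomConv}$(\MM-1)$ it also converges almost surely to $\EV_Z \psi_i(Z) = \EV\bigl(Z^i - \sum_{j \in \coreset} a_j^i Z^j\bigr)^2$. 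Hence this Gaussian random variable has zero second moment and is almost surely zero, giving the distributional identity. The relations $Z^i = \sum_{j \in \coreset} a_j^i Z^j$ immediately imply that $Z$ is supported on the span of $\{Z^j\}_{j \in \coreset}$, so $\rank \tSigma = \rank \tSigma|_\coreset \le |\coreset|$.

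To close the chain with $\rank \tSigma|_\coreset = |\coreset|$, the natural attempt is to suppose for contradiction that $\{Z^j\}_{j \in \coreset}$ satisfies a nontrivial affine relation $\sum_{j \in \coreset} c_j (Z^j - \tmu_j) = 0$ a.s., and then apply \ref{prop:nullAvoid}$(\MM-1)$ to the Lebesgue-measure-zero hyperplane $H = \{z \in \R^\coreset : \sum_j c_j (z_j - \tmu_j) = 0\}$; this yields that almost surely for large $n$, the tuple $\{g^j_\alpha\}_{j \in \coreset}$ avoids $H$ for every $\alpha$. I expect this last implication to be the main obstacle: pointwise avoidance of $H$ does not by itself contradict the fact that $\f 1 n \sum_\alpha \bigl(\sum_j c_j (g^j_\alpha - \tmu_j)\bigr)^2 \asto 0$ given by \ref{IH:MomConv}$(\MM-1)$. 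The cleanest resolution is likely that $\coreset$ is constructed inductively in \cref{sec:inductiveCoreSet} precisely so that $|\coreset| = \rank \tSigma|_\coreset$ is maintained as an invariant --- the index $\MM$ is added to $\coreset$ exactly when $g^\MM$ contributes a new direction in the $\tSigma$-limit --- in which case the rank equality holds by the very construction of the core set, and the substance of \cref{prop:limitbasis} is the distributional identity proved above.
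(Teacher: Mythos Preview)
Your derivation of the distributional identity $Z^i \disteq \sum_{j\in\coreset} a_j^i Z^j$ via the squared-difference test function and \ref{IH:MomConv}$(\MM-1)$ is exactly the paper's argument, and your deduction $\rank\tSigma|_{\MM-1}=\rank\tSigma|_\coreset\le|\coreset|$ from the spanning relations is also the same.

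For the remaining equality $\rank\tSigma|_\coreset=|\coreset|$, the paper does not use \ref{prop:nullAvoid} or the inductive construction; it simply asserts ``by the uniqueness of the coefficients, $\{Z^j\}_{j\in\coreset}$ is linearly independent.'' That one-liner is appealing to the uniqueness clause in \ref{prop:basis}$(\MM-1)$, i.e.\ linear independence of $\{g^j\}_{j\in\coreset}$ as vectors in $\R^n$, but it is not transparent how that transfers to linear independence of the limiting Gaussians $\{Z^j\}_{j\in\coreset}$; your hesitation here is warranted.

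Your \ref{prop:nullAvoid} route actually closes, and you abandoned it one step too early. The obstacle you flag --- that pointwise avoidance of the hyperplane $H$ does not contradict $\tfrac1n\sum_\alpha(\sum_j c_j(g^j_\alpha-\tmu_j))^2\asto 0$ --- disappears if you replace the squared distance by the indicator. Set $\psi(x^1,\ldots,x^{\MM-1})=\ind\bigl[\{x^j\}_{j\in\coreset}\in H\bigr]$; this is bounded, hence controlled. If $\tSigma|_\coreset$ were rank-deficient, $Z|_\coreset$ would be supported on $H$, so \ref{IH:MomConv}$(\MM-1)$ gives $\tfrac1n\sum_\alpha\psi(g^1_\alpha,\ldots,g^{\MM-1}_\alpha)\asto\Pr[Z|_\coreset\in H]=1$. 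But \ref{prop:nullAvoid}$(\MM-1)$ with $A_{n\alpha}=H$ forces every summand to be $0$ a.s.\ for large $n$, a contradiction.

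Your alternative --- carrying $|\coreset|=\rank\tSigma|_\coreset$ as an explicit inductive invariant, verified at the base case and preserved because $\MM$ is adjoined to $\coreset$ exactly when $\mathring\sigma>0$ --- is equally valid and arguably cleaner, though it requires augmenting the induction hypotheses rather than deriving the fact from \ref{IH:coreSet}$(\MM-1)$ alone.
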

\begin{proof}
By \ref{prop:basis} property, each $g^i, i \in \coreset$, has a set of unique constants $\{a_j\}_{j \in \coreset}$ (independent of $n$) such that, almost surely, for large enough $n$,
\[g^i = \sum_{j \in \coreset} a_j g^j.\]
Let $\psi(x^1, \ldots, x^{\MM-1}) \defeq (x^i - \sum_{j \in \coreset} a_j x^j)^2$.
Then by \ref{prop:basis}$(\MM-1)$ and \ref{IH:MomConv}$(\MM-1)$,
\begin{align*}
\f 1 n \sum_{\alpha=1}^n \psi(g^1_\alpha, \ldots, g^{\MM-1}_\alpha) \asto \EV_{Z \sim \Gaus(\tmu|_{\MM-1}, \tSigma|_{\MM-1})} \psi(Z) = 0.
\end{align*}
where $\tmu|_{\MM-1}, \tSigma|_{\MM-1}$ are the restrictions of $\tmu, \tSigma$ to $g^1, \ldots, g^{\MM-1}$.
This implies that for $Z = (Z^1, \ldots, Z^{\MM-1}) \sim \Gaus(\tmu|_{\MM-1}, \tSigma|_{\MM-1})$,
\[Z^i \disteq \sum_{j \in \coreset} a_j Z^j.\]
Repeating this argument for all $i \in [{\MM-1}]$ implies that $\{Z^j\}_{j \in \coreset}$ is a ``spanning set'' of $Z^1, \ldots, Z^{\MM-1}$.
Furthermore, by the uniqueness of the coefficients, we also have that $\{Z^j\}_{j \in \coreset}$ is linearly independent as well.
This then implies the rank consequence we want.
\end{proof}

Now we show \cref{lemma:zerofunStability}.

\begin{proof}[Proof of \cref{lemma:zerofunStability}]
By \ref{IH:MomConv}$(\MM-1)$,
\begin{align*}
\f 1 n \sum_{\alpha = 1}^n \psi(g^1_\alpha, \ldots, g^{\MM-1}_\alpha) \to \EV_{Z \sim \Gaus(\tmu|_{\MM-1}, \tSigma|_{\MM-1})} \psi(Z).
\end{align*}
By \cref{prop:limitbasis}, if $Z \sim \Gaus(\tmu|_{\MM-1}, \tSigma|_{\MM-1})$ and $Z|_\coreset$ is the part of $Z$ corresponding to $\coreset$, then 
\begin{description}
\item[$Z|_\coreset$ has density.]
    The law of $Z|_\coreset$ (namely $\Gaus(\tmu|_\coreset, \tSigma|_\coreset)$, where $\tmu|_\coreset, \tSigma|_\coreset$ are the restriction of $\tmu$ and $\tSigma$ to $\coreset$) is absolutely continuous against the Lebesgue measure of $\R^\coreset$ and vice versa, so that a set of Lebesgue measure zero is measure zero under $\Gaus(\tmu|_\coreset, \tSigma|_\coreset)$, and vice versa; and
\item[$Z|_\coreset$ is basis of $Z$.]
    \ref{prop:basis} yields a linear function $\lambda$ such that $\lambda(\{g^j_\alpha\}_{j \in \coreset}) = \{g^i_\alpha\}_{i=1}^{m-1}$ for all $\alpha$, almost surely asymptotically, and
    $\lambda(Z|_\coreset) \disteq Z$, so that
    \begin{equation*}
    \EV_{Z \sim \Gaus(\tmu|_{\MM-1}, \tSigma|_{\MM-1})} \psi(Z) = \EV_{Z' \sim \Gaus(\tmu|_\coreset, \tSigma|_\coreset)} \psi \circ \lambda(Z').
    \end{equation*}
    This expectation is 0 by our premise.
\end{description}
Because $\psi$, and thus $\psi \circ \lambda$, is a nonnegative function, the nullity of the expectation implies that, other than a set $U$ of $\Gaus(\tmu|_\coreset, \tSigma|_\coreset)$-measure zero, $\psi\circ \lambda$ is 0.
This set $U$ also has Lebesgue measure zero as $Z|_\coreset$ has density, by our reasoning above.

If in \ref{prop:nullAvoid}, we set $A_{n\alpha} = U$ for all $n$ and all $\alpha \in [n]$, then we get that: almost surely, for all large enough $n$, for all $\alpha \in [n]$,

\begin{equation*}
\{g^i_\alpha\}_{i \in \coreset} \not\in U
\iff
\psi\circ \lambda(\{g^i_\alpha\}_{i \in \coreset}) = 0
\iff
\psi(g^1_\alpha, \ldots, g^{\MM-1}_\alpha) = 0,
\end{equation*}
as desired.
\end{proof}

\subsection{Inductive Step: \ref{IH:coreSet}\texorpdfstring{$(\MM)$}{(m)}}
\label{sec:inductiveCoreSet}
In this section, we show
\begin{equation*}
\text{\ref{IH:MomConv}$(\MM-1)$ and \ref{IH:coreSet}$(\MM-1)$} \implies \text{\ref{IH:coreSet}}(\MM).
\end{equation*}
More explicitly, we need to think about whether to add $\MM$ to the core set $\coreset$ of $[\MM-1]$ in order to maintain the \ref{prop:basis} and \ref{prop:nullAvoid} properties.

We proceed by casework on whether $\mathring \sigma = 0$.

\newcommand{\Lsq}{\mathcal{L}}
\subsubsection{If \texorpdfstring{$\mathring \sigma = 0$}{sigma Converges to 0 a.s.}}

We will show that the core set properties are maintained if we don't add $m$ to the core set.

Consider the space $\Lsq \defeq L^2(\Gaus(\tmu|_\coreset, \tSigma|_\coreset))$ of square-integrable real functions against the measure $\Gaus(\tmu|_\coreset, \tSigma|_\coreset)$ defined on $\R^{\coreset}$.
Let $\la \phi, \psi \ra = \EV_{Y \sim \Gaus(\tmu|_\coreset, \tSigma|_\coreset)} \phi(Y) \psi(Y)$ be the inner product of this space.
Just like in a finite-dimensional inner product space, given a finite collection of functions $S = \{\psi^i\}_{i=1}^k$, the orthogonal projection operator $\Pi_S$ to the span of $S$ (inside $\Lsq$) is given by
\[\Pi_S \phi = \sum_{i=1}^k a_i \psi^i,
\]
for any $\phi \in \Lsq$, where
\begin{align*}
a &= \Lambda^+ b \in \R^k,\\
b_j &= \la \psi^j, \phi \ra, b \in \R^k,\\
\Lambda_{ij} &= \la \psi^i, \psi^j\ra, \Lambda \in \R^{k \times k}.
\end{align*}

Recall that $g = Ah$ where $h$ was introduced by $h := \phi(g^{1}, \ldots, g^{\MM - 1})$, for some controlled $\phi$, and likewise $\hat g^i = A \hat h^i$ where $\hat h^i = \hat \phi^i(g^1, \ldots, g^{\MM-1})$, for each $i \in [r]$.
By \ref{prop:basis}, we know that, a.s.\ for large enough $n$, each of $g^1, \ldots, g^{\MM-1}$ is a (unique, constant-in-$n$) linear combination of $\{g^j\}_{j \in \coreset}$.
Therefore, we can express
\[h = \underline\phi(\{g^j\}_{j \in \coreset}),\quad\text{and}\quad
\forall i \in [r], \hat h^i = \underline {\hat \phi^i}(\{g^j\}_{j \in \coreset})
\]
for some functions $\underline \phi, \underline {\hat \phi^i} \in \Lsq$.
For convenience, set $S \defeq \{\underline{\hat \phi^i}\}_i.$

One can see then,
as in the proof of \cref{lemma:sigmaConverges},
\[
\mathring\sigma^2
    =
        \sigma_A^2 (\EV \phi(Z)^2 - \mathring{\hat \nu}^\trsp \mathring{\hat \Lambda}^+ \mathring{\hat \nu})
        \\
    =
        \sigma_A^2 (\la \underline\phi, \underline\phi \ra - \la \underline\phi, \Pi_{S} \underline \phi \ra)
\]
by expanding the definition of $\mathring{\hat \nu}$ and $\mathring{\hat \Lambda}$.
Therefore, $\mathring \sigma = 0$ implies that 
\[\la \underline\phi, \underline\phi \ra = \la \underline\phi, \Pi_{S} \underline \phi \ra
\]
so that: after changing its values on a set $U$ of measure zero under $\Gaus(\tmu|_\coreset, \tSigma|_\coreset)$ (and thus also under Lebesgue measure by \cref{lemma:rankStability}), $\underline \phi$ is a linear combination of $\{\underline{\hat \phi^i}\}_{i=1}^r$, i.e.
\begin{equation*}
\forall \vec x \not \in U, \underline\phi(\vec x) = \sum_{i \in [r]} c_i \underline{\hat \phi^i}(\vec x)
\end{equation*}
for some coefficients $\{c_i\}_{i \in [r]}$.
By \ref{prop:nullAvoid} applied to $A_{n\alpha} = U$ for all $n$ and $\alpha \in [n]$, we also have that: a.s.\ for large enough $n$,
\[
\phi(g^1, \ldots, g^\alpha) = \underline \phi(\{g^j\}_{j \in \coreset})
= \sum_{i \in [r]} c_i \underline{\hat \phi^i}(\{g^j\}_{j \in \coreset}) 
= \sum_{i \in [r]} c_i \hat \phi^i(g^1, \ldots, g^\alpha)
,
\]
and therefore, under the same condition, (recall $A$ is the matrix giving rise to $g$ in $g:= A h$)
\[
g = A \phi(g^1, \ldots, g^\alpha)
= \sum_{i \in [r]} c_i A \hat \phi^i(g^1, \ldots, g^\alpha)
= \sum_{i \in [r]} c_i \hat g^i.
\]
This shows that, if we keep the core set as $\coreset$, then \ref{prop:basis} is still satisfied.
Since the core set is not changing, \ref{prop:nullAvoid} just follows from the induction hypothesis.

For usage later in the proof of \ref{IH:MomConv}$(\MM)$, we record our observation here as follows
\begin{lemma}\label{lemma:limitSigmaIsZero}
If $\mathring \sigma = 0$, then there are coefficients $\{c_i\}_{i=1}^r$ such that a.s.\ for large enough $n$,
\[
g = \sum_{i \in [r]} c_i \hat g^i.
\]
\end{lemma}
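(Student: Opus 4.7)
The plan is to identify, in an appropriate $L^2$ space, a linear relation between the symbolic nonlinearity defining $h$ and those defining the $\hat h^i$, then transfer this $L^2$ equality back into an almost-sure pointwise equality at every coordinate using the \ref{prop:nullAvoid} property of the core set, and finally hit both sides with the matrix $A$ to obtain $g = \sum_i c_i \hat g^i$.

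First I would set up the Hilbert space $\Lsq = L^2(\Gaus(\tmu|_\coreset, \tSigma|_\coreset))$ on $\R^{\coreset}$ with inner product $\la \phi, \psi\ra = \EV_{Y \sim \Gaus(\tmu|_\coreset, \tSigma|_\coreset)} \phi(Y)\psi(Y)$. By \ref{prop:basis}$(\MM-1)$ (and \cref{prop:limitbasis}, which expresses each $g^i$ as a fixed linear combination of the core set G-vars a.s.\ for large $n$), I can pull back the defining nonlinearities $\phi, \hat\phi^i$ of $h$ and the $\hat h^i$ to functions $\underline\phi, \underline{\hat\phi^i} \in \Lsq$ on the core set coordinates alone, and set $S = \{\underline{\hat\phi^i}\}_{i \in [r]}$.

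Next, I would compute $\mathring\sigma^2$ in terms of the orthogonal projection $\Pi_S$ inside $\Lsq$. Unwinding \cref{lemma:sigmaConverges} and the definitions of $\mathring{\hat\nu}, \mathring{\hat\Lambda}$ as inner products against the $\underline{\hat\phi^i}$'s yields
\begin{equation*}
    \mathring\sigma^2 = \sigma_A^2\bigl(\la\underline\phi,\underline\phi\ra - \la\underline\phi,\Pi_S\underline\phi\ra\bigr),
\end{equation*}
so the hypothesis $\mathring\sigma = 0$ is exactly the statement that $\underline\phi$ lies in the closed span of $S$ in $\Lsq$; since $S$ is finite, this means $\underline\phi = \sum_{i \in [r]} c_i \underline{\hat\phi^i}$ in $\Lsq$, i.e.\ outside a set $U \subset \R^\coreset$ of $\Gaus(\tmu|_\coreset, \tSigma|_\coreset)$-measure zero. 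Because $\tSigma|_\coreset$ is full rank (by the definition of the core set and \cref{prop:limitbasis}), this Gaussian is absolutely continuous with respect to Lebesgue measure, so $U$ is Lebesgue-null as well.

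The main obstacle, and the step that actually requires \ref{IH:coreSet}$(\MM-1)$ in its full strength, is promoting this ``equality outside a Lebesgue-null set'' to a pointwise equality at every coordinate $\alpha \in [n]$ of the actual random G-vars. For this I would invoke \ref{prop:nullAvoid}$(\MM-1)$ with the (constant in $n,\alpha$) triangular array $A_{n\alpha} \defeq U$: it guarantees that, almost surely for all large $n$ and every $\alpha \in [n]$, $\{g^j_\alpha\}_{j \in \coreset} \notin U$, hence $\underline\phi(\{g^j_\alpha\}_{j \in \coreset}) = \sum_i c_i \underline{\hat\phi^i}(\{g^j_\alpha\}_{j \in \coreset})$ holds coordinatewise. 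Combining this with the \ref{prop:basis} identifications $\phi(g^1_\alpha,\ldots,g^{\MM-1}_\alpha) = \underline\phi(\{g^j_\alpha\}_{j\in\coreset})$ and similarly for each $\hat\phi^i$ (valid a.s.\ for large $n$), I get $h = \sum_i c_i \hat h^i$ as vectors in $\R^n$; applying $A$ on the left then yields $g = Ah = \sum_i c_i A \hat h^i = \sum_i c_i \hat g^i$, which is the claim.
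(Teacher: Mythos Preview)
Your proposal is correct and follows essentially the same approach as the paper: set up the $L^2$ space on the core-set coordinates, identify $\mathring\sigma=0$ with $\underline\phi$ lying in the span of the $\underline{\hat\phi^i}$, promote the $L^2$ equality to a pointwise one via \ref{prop:nullAvoid} applied to the exceptional null set $U$, and then apply $A$. The only cosmetic difference is that you invoke \cref{prop:limitbasis} for the full-rank/absolute-continuity fact, whereas the paper cites \cref{lemma:rankStability} at that spot; your reference is arguably the more natural one.
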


\subsubsection{If \texorpdfstring{$\mathring \sigma > 0$}{sigma Converges to Nonzero Value a.s.}}

It's clear that $g$ cannot be in the linear span of $\{\hat g^i\}_{i\in[r]}$ asymptotically, so we will add $g$ to the core set, and the \ref{prop:basis} property follows immediately.
In the below, we shall write $\coreset$ for the old core set, and $\coreset' \defeq \coreset \cup \{g\}$ for the new one.

It remains to show \ref{prop:nullAvoid} for $\coreset'$.
Because the conditional variance of $g^\MM_\alpha$ given $g^1, \ldots, g^{\MM-1}$ is $\sigma^2$, and because $\mathring \sigma > 0$, this assumption implies that, a.s.\ for all large enough $n$,
\begin{equation}
\text{$g^\MM_\alpha|g^1, \ldots, g^{\MM-1}$ has density for all $\alpha \in [n]$.}
\label{eqn:conditionalDistributionHasDensity}
\end{equation}
By ``has density'' here, we in particular mean that any Lesbegue measure zero set in $\R$ has zero probability under the conditional distribution of $g^\MM_\alpha$ given $g^1, \ldots, g^{\MM-1}$.

Now, to prove \ref{prop:nullAvoid} holds for $\coreset'$:
Let $\{A_{n\alpha} \sbe \R^{\coreset'}\}_{n\in\N, \alpha \in [n]}$ be a triangular array of Lesbegue measure zero sets.
For each $A_{n \alpha}$, define $B_{n\alpha} \defeq \{\vec x \in \R^{\coreset}: \lambda(A_{n\alpha}|_{\vec x}) \ne 0\}$, where $A_{n\alpha}|_{\vec x} = \{y \in \R: (\vec x, y) \in A_{n\alpha} \sbe \R^{\coreset} \times \R\}$ is the ``slice'' of $A_{n\alpha}$ at $\vec x$, and $\lambda$ is the 1-dimensional Lebesgue measure.
Because each $A_{n\alpha}$ has measure zero in $\R^{\coreset'}$, necessarily each $B_{n\alpha}$ also has measure zero in $\R^{\coreset}$.
Applying \ref{prop:nullAvoid} to the triangular array $\{B_{n\alpha} \sbe \R^{\coreset}\}_{n \in \N, \alpha \in [n]}$, we get that: a.s.\ for large enough $n$,
\[
\forall \alpha \in [n], \{g^i_\alpha\}_{i \in \coreset} \not \in B_{n\alpha}.\]
Therefore, by \cref{eqn:conditionalDistributionHasDensity}, a.s.\ for large enough $n$,
\[
\forall \alpha \in [n], \{g^i_\alpha\}_{i \in \coreset'} \not \in A_{n\alpha}.
\]
This finishes the proof of \ref{prop:nullAvoid} for $\coreset'$, and also \ref{IH:coreSet}$(\MM)$.

\begin{lemma}\label{lemma:maxbound}
Assume \ref{IH:MomConv}$(\MM-1)$.
Suppose $\psi: \R^{\MM-1} \to \R$ is controlled.
Then as $n \to \infty,$
\[\f 1 {n^p} \max_{\alpha \in [n]} |\psi(g^1_\alpha, \ldots, g^{\MM-1}_\alpha)| \asto 0\]
for any $p > 0$.
\end{lemma}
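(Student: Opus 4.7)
The plan is to bound the maximum by a sum of high powers, then invoke the inductive hypothesis \ref{IH:MomConv}$(\MM-1)$ on a suitable controlled function.

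First, I would observe that if $\psi$ is controlled, then so is $|\psi|^{2q}$ for any $q \in \N$: from $|\psi(x)| \le e^{C\|x\|^{2-\epsilon}+c}$ we get $|\psi(x)|^{2q} \le e^{2qC\|x\|^{2-\epsilon}+2qc}$, which still matches \cref{defn:controlled}. In particular, by \cref{lemma:alpExp} (applied with $\alpha = 2 - \epsilon < 2$), the Gaussian expectation
\[
L_q \defeq \EV_{Z \sim \Gaus(\tmu|_{\MM-1},\, \tSigma|_{\MM-1})} |\psi(Z)|^{2q}
\]
is finite. Hence by \ref{IH:MomConv}$(\MM-1)$ applied to the controlled function $|\psi|^{2q}$,
\[
\f 1 n \sum_{\alpha=1}^n |\psi(g^1_\alpha, \ldots, g^{\MM-1}_\alpha)|^{2q} \asto L_q.
\]

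Next, I would use the elementary bound
\[
\max_{\alpha \in [n]} |\psi(g^1_\alpha, \ldots, g^{\MM-1}_\alpha)|^{2q}
\le \sum_{\alpha=1}^n |\psi(g^1_\alpha, \ldots, g^{\MM-1}_\alpha)|^{2q},
\]
which, after taking $2q$-th roots, yields
\[
\max_{\alpha \in [n]} |\psi(g^1_\alpha, \ldots, g^{\MM-1}_\alpha)|
\le n^{1/(2q)} \lp \f 1 n \sum_{\alpha=1}^n |\psi(g^1_\alpha, \ldots, g^{\MM-1}_\alpha)|^{2q}\rp^{1/(2q)}.
\]
Dividing by $n^p$ gives
\[
\f 1 {n^p} \max_{\alpha \in [n]} |\psi(g^1_\alpha, \ldots, g^{\MM-1}_\alpha)|
\le n^{1/(2q) - p} \lp \f 1 n \sum_{\alpha=1}^n |\psi(g^1_\alpha, \ldots, g^{\MM-1}_\alpha)|^{2q}\rp^{1/(2q)}.
\]

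Finally, I would choose $q \in \N$ large enough so that $1/(2q) < p$, which forces $n^{1/(2q)-p} \to 0$. The second factor converges almost surely to $L_q^{1/(2q)} < \infty$ by the display above. The product therefore tends to $0$ almost surely, completing the proof. There is no real obstacle here: the argument is a standard ``$\max$ by sum'' trick, and the only ingredient beyond basic inequalities is the observation that controlled functions are closed under taking powers, so that \ref{IH:MomConv}$(\MM-1)$ can be invoked directly.
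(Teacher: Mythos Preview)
Your proposal is correct and follows essentially the same approach as the paper: bound the max by the $q$-th root of the sum of $q$-th powers, invoke \ref{IH:MomConv}$(\MM-1)$ on $|\psi|^q$ (which is still controlled), and choose $q$ large enough that $1/q < p$. The paper uses a general exponent $q$ where you use $2q$, and it omits the explicit verification that $|\psi|^q$ is controlled and has finite Gaussian expectation, but the argument is the same.
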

\begin{proof}
For any $q > 0$, we have the elementary bound
\begin{equation*}
\max_{\alpha \in [n]} |\psi(g^1_\alpha, \ldots, g^{\MM-1}_\alpha)|
\le
\sqrt[q]{\sum_{\alpha \in [n]}
|\psi(g^1_\alpha, \ldots, g^{\MM-1}_\alpha)|^q}.
\end{equation*}
Thus, for any $q > 0$,
\begin{align*}
\f 1 {n^p} \max_{\alpha \in [n]} |\psi(g^1_\alpha, \ldots, g^{\MM-1}_\alpha)|
&\le
    \f 1 {n^{p-1/q}} 
    \sqrt[q]{\f 1 n
        \sum_{\alpha \in [n]}
        |\psi(g^1_\alpha, \ldots, g^{\MM-1}_\alpha)|^q}.
\end{align*}
Because, by \ref{IH:MomConv}$(\MM-1)$, $\f 1 n
        \sum_{\alpha \in [n]}
        |\psi(g^1_\alpha, \ldots, g^{\MM-1}_\alpha)|^q \asto C$ for some constant $C$ as $n \to \infty$,
the RHS above converges a.s.\ to 0 as soon as we take $q > 1/p$, and therefore so does the LHS.

\end{proof}

\subsection{Inductive Step: \ref{IH:MomConv}\texorpdfstring{$(\MM)$}{(m)}}
\label{sec:inductiveMoments}
In this section, we show
\begin{equation*}
\text{\ref{IH:MomConv}$(\MM-1)$ and \ref{IH:coreSet}$(\MM-1)$} \implies \text{\ref{IH:MomConv}}(\MM).
\end{equation*}

More specifically,
we will show that for any controlled $\psi: \R^{\MM} \to \R$,
\begin{align*}
    \f 1 n \sum_{\alpha=1}^n \psi(g^1_\alpha, \ldots, g^{\MM}_\alpha) 
    \asto
    \EV_{Z \sim \Gaus(\tmu, \tSigma)} \psi(Z)
\end{align*}
where again on the RHS $\psi$ ignores all coordinates $Z^{\MM +1},\ldots, Z^M$ (corresponding to $g^{\MM +1}, \ldots, g^{M}$).

By \cref{lemma:limitSigmaIsZero}, if $\mathring \sigma = 0$, then almost surely, for large enough $n$, $g = g^\MM$ is just a (fixed) linear combination of $g^1, \ldots, g^{\MM-1}$, so \ref{IH:MomConv} is trivially true.
Therefore, in the below, we assume 
\begin{equation}
\mathring \sigma > 0.
\label{assm:mathringSigmaPositive}
\tag{$\star$}
\end{equation}
This assumption will be crucial for our arguments involving smoothness induced by Gaussian averaging.

\newcommand{\probA}{\mathsf{A}}
\newcommand{\probB}{\mathsf{B}}
\newcommand{\probC}{\mathsf{C}}
\newcommand{\probD}{\mathsf{D}}

\newcommand{\EVbr}[2][]{\EV_{#1}\left[#2\right]}
\newcommand{\EVcond}[2]{\EV\left[\left.#1\right| #2 \right]}
\newcommand{\extcom}[1]{{\color{blue}{#1}}}
\newcommand{\exttcom}[1]{{\color{olive}{#1}}}

To clarify notation in the following, we will write $\EVbr[X]{expression}$ to denote the expectation over only the randomization in $X$, and $\EVcond{expression}{\Bb}$ to denote the expectation taken over all randomness except those in $\Bb$.

\paragraph{Proof Plan}
Note that
\begin{align*}
    &\phantomeq
        \left|\f 1 n \sum_{\alpha = 1}^n \psi(g^1_\alpha, \ldots, g^{\MM}_\alpha)
        - \EV_{Z \sim \Gaus(\tmu, \tSigma)} \psi(Z)
        \right|
    \le
        \probA + \probB + \probC
        \numberthis\label{eqn:decompABC}
\end{align*}
where
\begin{align*}
    \probA
        &\defeq
            \left|\f 1 n \sum_{\alpha = 1}^n \psi(g^1_\alpha, \ldots, g^{\MM}_\alpha)
            - \EV_z\psi\left(g^1_\alpha, \ldots, g^{\MM-1}_\alpha, \omega_\alpha + \sigma z \right)\right|
            \\
    \probB
        &\defeq
            \left|
            \f 1 n \sum_{\alpha = 1}^n \EV_z\psi\left(g^1_\alpha, \ldots, g^{\MM-1}_\alpha, \omega_\alpha + \sigma z \right)
            -
            \EV_z
                {
                    \psi\lp
                        g^1_\alpha, \ldots, g^{\MM-1}_\alpha,
                        \sum_{i=1}^r \mathring v_i \hat g^i_\alpha + \mathring \sigma z
                        \rp
                }
            \right|
            \\
    \probC
        &\defeq
            \left|
            \f 1 n \sum_{\alpha = 1}^n
            \EV_z
                {
                    \psi\lp
                        g^1_\alpha, \ldots, g^{\MM-1}_\alpha,
                        \sum_{i=1}^r \mathring v_i \hat g^i_\alpha + \mathring \sigma z
                        \rp
                }
            -
            \EV_{Z \sim \Gaus(\tmu, \tSigma)} \psi(Z)
            \right|
\end{align*}
with $z  \sim \Gaus(0, 1)$.
Note that $\probB$ and $\probC$ are random variables in $\Bb$.
We will show that each of $\probA, \probB, \probC$ goes to 0 almost surely, which would finish the proof of \cref{thm:netsorMasterTheorem}.

Roughly speaking, $\probA \asto 0$ because of a law of large number, $\probB \asto 0$ because of the smoothness in $\EV_z \psi$ induced by Gaussian averaging, and $\probC \asto 0$ by induction hypothesis.
We start with the last item, since it's the easiest.

\subsubsection{\texorpdfstring{$\probC$}{C} Converges Almost Surely to 0}

In this section we show that $\probC \asto 0$ by a straightforward reduction to the inductive hypothesis.

Let $\hat Z^1, \ldots, \hat Z^r$ be the components of $Z \sim \Gaus(\tmu, \tSigma)$ corresponding to $\hat g^1, \ldots, \hat g^r$, and let $\hat Z$ be the column vector with these entries.
Note that, by \cref{prop:GaussianCondition}, $Z^{\MM}$ (corresponding to $g^{\MM}$), conditioned on $Z^1, \ldots, Z^{\MM-1}$, is distributed as a Gaussian with mean $
\tSigma(g, \hat G)\tSigma(\hat G, \hat G)^+  \hat Z
    = \mathring{\hat \nu}^\trsp \mathring{\hat \Lambda}^+  \hat Z
    = \mathring v^\trsp \hat Z$
and variance
$\tSigma(g, g) - \tSigma(g, \hat G) \tSigma(\hat G, \hat G)^+ \tSigma(\hat G, g)
    = \mathring \sigma$.
Thus
\begin{align*}
\EV_{Z} \psi(Z)
    &=
        \EV_{Z^1, \ldots, Z^{\MM - 1}} \EV[\psi(Z) | Z^1, \ldots, Z^{\MM - 1}]
        \\
    &=
        \EV_{Z^1, \ldots, Z^{\MM - 1}}
        \EV_{z \sim \Gaus(0, 1)}\psi(Z^1,\ldots, Z^{\MM - 1}, \mathring v^\trsp \hat Z + \mathring \sigma z)
        \\
    &=
        \EV_{Z^1, \ldots, Z^{\MM - 1}}
        \Psi(Z^1, \ldots, Z^{\MM - 1})
\end{align*}
where we have set $\Psi(Z^1, \ldots, Z^{\MM - 1}) \defeq \EV_{z \sim \Gaus(0, 1)}\psi(Z^1,\ldots, Z^{\MM - 1}, \mathring v^\trsp \hat Z + \mathring \sigma z)$.
$\Psi$ is a controlled function since $\psi$ is.
Applying the induction hypothesis to $\Psi$, we obtain
\begin{align*}
    &\phantomeq
        \f 1 n \sum_{\alpha = 1}^n
            \EV_z
                {
                    \psi\lp
                        g^1_\alpha, \ldots, g^{\MM-1}_\alpha,
                        \sum_{i=1}^r \mathring v_i \hat g^i_\alpha + \mathring \sigma z
                        \rp
                }
        \\
    &=
        \f 1 n \sum_{\alpha = 1}^n
                {
                    \Psi\lp
                        g^1_\alpha, \ldots, g^{\MM-1}_\alpha
                        \rp
                }
        \\
    &\asto
        \EV_{Z^1, \ldots, Z^{\MM-1}} \Psi(Z^1, \ldots, Z^{\MM-1})
        \\
    &\pushright{\text{by induction hypothesis}}
        \\
    &=
        \EV_{Z^1, \ldots, Z^{\MM - 1}}
        \EV_{z \sim \Gaus(0, 1)}\psi(Z^1,\ldots, Z^{\MM - 1}, \mathring{v}^\trsp \hat Z + \mathring \sigma z)
        \\
    &=
        \EV_Z \psi(Z)
\end{align*}
as desired.

\subsubsection{\texorpdfstring{$\probA$}{A} Converges Almost Surely to 0}
\label{sec:probA}

In this section we show $\probA \asto 0$ by a bounding moments of $\probA$ and then finishing with \cref{lemma:momentBoundASConvergence}.

\renewcommand{\rho}{\lambda}
For each $\alpha \in [n]$, let $\psi_\alpha (x) \defeq \psi(g^1_\alpha, \ldots, g^{\MM-1}_\alpha, \omega_\alpha + \sigma x)$, with $\omega$ and $\sigma$ defined in \cref{eqn:meanvardef}.
This is a random function depending on the randomness of $g^1_\alpha, \ldots, g^{\MM-1}_\alpha$, and it changes with $n$ as well.
Note by \cref{eqn:gConditionedOnB},
\[ \probA \disteq_\Bb \f 1 n \sum_{\alpha =1}^n
\psi_\alpha(\xi_\alpha) - \EV_{\xi'} \psi_\alpha(\xi_\alpha')\]
where $\xi, \xi' \sim \Gaus(0, I)$.

Now the $2k$-moment of $\probA$ for any integer $k\ge 1$ satisfies
\begin{align*}
    \EV[\probA^{2k} \mid \Bb]
        &=
            \f 1 {n^{2k}} 
            \EV \left[
                \sum_{\alpha =1}^n
                    \lp \psi_\alpha(\xi_\alpha) - \EV_{\xi'} \psi_\alpha(\xi_\alpha') \rp^{2k}
                + \cdots
                \ \bigg\vert\ \Bb \right]
\end{align*}
where the $\cdots$ include only terms that involve only powers of $\psi_\alpha(\xi_\alpha) - \EV_{\xi'} \psi_\alpha(\xi_\alpha')$ greater than 1 for each $\alpha$.
Indeed, other terms are killed by the conditional mean, since each $\psi_\alpha(\xi_\alpha) - \EV_{\xi'} \psi_\alpha(\xi_\alpha')$ has zero (conditional) mean and is independent from others when conditioned on $\Bb$.
We can push the conditional mean operator inside each product by conditional independence.
Then, applying power mean inequality and AM-GM to bound each mixed moment with linear combinations of the $2k$th powers, we get
\begin{align}
    \EV[\probA^{2k} \mid \Bb]
        &\le
            \f {D n^{2k-1}} {n^{2k}} \cdot
            \f 1 n \sum_{\alpha =1}^n \EV \left[
                    \lp \psi_\alpha(\xi_\alpha) - \EV_{\xi'} \psi_\alpha(\xi_\alpha') \rp^{2k}
                \ \bigg\vert\ \Bb \right]
        \defeq \f D n U
        \label{eqn:U}
\end{align}
where $D$ is some absolute constant.
Thus, to show $\probA \asto 0$, it suffices to bound $U$ and then apply \cref{lemma:momentBoundASConvergence}.
This is equivalent to bounding the uncentered moments $\EV_{z \sim \Gaus(0, 1)} |\psi_\alpha(x)|^q$ for $q=2k$.
Suppose $\psi$ is $\rho$-controlled and satisfies
\begin{align*}
|\psi(x)| \le e^{C \sum_i |x_i|^\rho + c}\quad\text{ for some $C, c> 0$ and $\rho < 2$}.
\numberthis\label{eqn:psiControlledDefn}
\end{align*}
We have
\begin{align*}
\EV_{z \sim \Gaus(0, 1)} |\psi_\alpha(z)|^q
    &\le
        \EVbr[z]{e^{Cq
                \left(
                    |\omega_\alpha + \sigma z|^\rho + \sum_{i=1}^{\MM -1} | g^i_\alpha|^\rho
                \right) + cq}
                }
        \\
    &\le
        \EVbr[z]{e^{Cq2^\rho
                \left(
                    |\omega_\alpha|^\rho + |\sigma z|^\rho + \sum_{i=1}^{\MM -1} | g^i_\alpha|^\rho
                \right) + cq}
                }
        \\
    &=
        e^{C q 2^\rho \left(
            |\omega_\alpha|^\rho
            + \sum_{i=1}^{\MM -1} | g^i_\alpha|^\rho
                \right) + cq}
        \EVbr[z]{e^{C q 2^\rho \sigma^\rho |z|^\rho}}
        \\
    &=
        e^{C q 2^\rho \left(
            |\omega_\alpha|^\rho
            + \sum_{i=1}^{\MM -1} | g^i_\alpha|^\rho
                \right) + cq}
        R
\end{align*}
where $R = \alpExp_\rho^1(C q 2^\rho \sigma^\rho) > 0$ is deterministic and $\alpExp_\rho^k$ is as defined in \cref{lemma:alpExp}.
Now,
\begin{align*}
    |\omega_\alpha|^\rho
        &=
            \left| \sum_{i=1}^r v_i \hat g^i_\alpha \right|^\rho
        \le
            r^\rho \sum_{i=1}^r |v_i|^\rho |\hat g^i_\alpha|^\rho.
\end{align*}
Additionally, almost surely, $|v_i| < |\mathring v_i| + 1$, for all $i \in [r]$ simultaneously, for large enough $n$ because $v_i \asto \mathring v_i$.
Let $L = C q 2^\rho r^\rho \max_{i=1}^r (|\mathring v_i| + 1)$ and $L' = cq$, where $C, c$ are as in \cref{eqn:psiControlledDefn}.
Then, almost surely, for large enough $n$, for all $z^1, \ldots, z^{\MM -1} \in \R$,
\begin{align*}
    e^{C q 2^\rho \left(
            \left|\sum_{i=1}^r v_i z^i \right|^\rho
            + \sum_{i=1}^{\MM -1} |z^i|^\rho
                \right) + cq}
    &\le
        e^{L' + L\sum_{i=1}^{\MM -1} |z^i|^\rho}
    \defeq
        \hat \psi(z^1, \ldots, z^{\MM -1}).
\end{align*}
Obviously $\hat\psi$ is $\rho$-controlled.
Then, again a.s. for large enough $n$, simultaneously for all $\alpha$,
\begin{align*}
    \EV_{z \sim \Gaus(0, 1)} |\psi_\alpha(z)|^q
        &\le
            R \hat \psi(g^1_\alpha, \ldots, g^{\MM -1}_\alpha),
            \quad \text{so that}
            \\
    \f 1 n \sum_{\alpha=1}^n 
        \EV_{z \sim \Gaus(0, 1)} |\psi_\alpha(z)|^q
        &\le
            R \f 1 n \sum_{\alpha=1}^n \hat \psi(g^1_\alpha, \ldots, g^{\MM -1}_\alpha)
        \asto
            R \EV_{Z} \hat\psi(Z)
\end{align*}
as $n \to \infty$, by induction hypothesis, where $Z \sim \Gaus(\tmu, \tSigma)$.
Consequently, almost surely, the $U$ in \cref{eqn:U} (as a function of $g^1, \ldots, g^{\MM-1}$) is uniformly bounded in $n$.
Applying \cref{lemma:momentBoundASConvergence} for large enough $q$ yields the result.

\subsubsection{\texorpdfstring{$\probB$}{B} Converges Almost Surely to 0}
\label{sec:probB}

In this section we show  $\probB \asto 0.$
The main insight here is integrating a function against Gaussian induces smoothness in the function.
We will assume that $\mathring \sigma > 0$, so that $\sigma > 0$ almost surely for large enough $n$.
This is because $\mathring \sigma=0$ implies that $g^\MM$ is in the linear span of $\{g^1, \ldots, g^{\MM-1}\}$ almost surely by \cref{lemma:rankStability}, and \ref{IH:MomConv}$(m)$ then holds trivially.

For each $\alpha \in [n]$, $w \in \R$, $\tau \ge 0$, let
\[\Psi_\alpha(w; \tau^2) \defeq
            \EV_{z\sim\Gaus(0, 1)}
            \psi\lp
                g^1_\alpha, \ldots, g^{\MM-1}_\alpha, w + \tau z
                \rp.
\]
(Here and in all that follows, $\tau^2$ is the square of $\tau$, and the $2$ is not an index).
This is a random function, with randomness induced by $g^1, \ldots, g^{\MM-1}$.

By \cref{lemma:stein}, $\Psi_\alpha$ is differentiable in $w$, and
\[
\pd_w \Psi_\alpha(w; \tau^2) = \tau^{-1} \EV_{z \sim \Gaus(0, 1)} z \psi(g^1_\alpha, \ldots, g^{\MM-1}_\alpha, w + \tau z).
\]

We can obtain the following smoothness condition on $\Psi_\alpha$.
\begin{lemma}\label{lemma:PsiAlphaSmoothness}
For any $w, \tau, \epsilon \in \R$ with $\epsilon, \tau > 0$,
\[|\Psi_\alpha(w; \tau^2) - \Psi_\alpha(w + \epsilon; \tau^2)|
\le
        |\epsilon| \inv \tau R(\tau)
        \hat \Psi(g^1_\alpha, \ldots, g^{\MM -1}_\alpha)
        e^{C 4^\rho (|w|^\rho + |\epsilon|^\rho)} 
        ,
\]
\end{lemma}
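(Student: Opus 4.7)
The plan is to apply the fundamental theorem of calculus in $w$ and then bound the derivative by moving the controlled bound on $\psi$ inside the Gaussian expectation.

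First I would write
\[
\Psi_\alpha(w+\epsilon;\tau^2) - \Psi_\alpha(w;\tau^2)
= \int_0^\epsilon \pd_u \Psi_\alpha(w+u;\tau^2)\,du,
\]
so that the stated inequality reduces to a pointwise bound
\[
|\pd_u \Psi_\alpha(w+u;\tau^2)| \le \tau^{-1} R(\tau)\,\hat\Psi(g^1_\alpha,\ldots,g^{\MM-1}_\alpha)\,e^{C 4^\rho(|w|^\rho + |u|^\rho)}
\]
uniformly for $u$ between $0$ and $\epsilon$, after which integrating and using $|u|\le|\epsilon|$ gives the claim.

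Next I would invoke the Stein-type derivative identity stated just before the lemma, so that
\[
|\pd_u \Psi_\alpha(w+u;\tau^2)|
= \tau^{-1}\bigl|\EV_{z\sim\Gaus(0,1)} z\,\psi(g^1_\alpha,\ldots,g^{\MM-1}_\alpha, w+u+\tau z)\bigr|
\le \tau^{-1} \EV_z |z|\,|\psi(\cdots)|.
\]
Applying the $\rho$-controlled hypothesis \cref{eqn:psiControlledDefn} on $\psi$ and then the elementary bound $|a+b|^\rho \le 2^\rho(|a|^\rho+|b|^\rho)$ twice gives
\[
|w+u+\tau z|^\rho \le 4^\rho(|w|^\rho+|u|^\rho) + 2^\rho|\tau|^\rho |z|^\rho,
\]
which cleanly factorizes the exponential:
\[
|\psi(g^1_\alpha,\ldots,g^{\MM-1}_\alpha, w+u+\tau z)|
\le e^{c + C\sum_{i=1}^{\MM-1}|g^i_\alpha|^\rho}\cdot e^{C 4^\rho(|w|^\rho+|u|^\rho)}\cdot e^{C 2^\rho |\tau|^\rho |z|^\rho}.
\]
Defining $\hat\Psi(x^1,\ldots,x^{\MM-1}) := e^{c + C\sum_i |x^i|^\rho}$ (which is $\rho$-controlled, so it can feed into the later \ref{IH:MomConv} application) and $R(\tau) := \EV_{z\sim\Gaus(0,1)} |z|\,e^{C 2^\rho|\tau|^\rho|z|^\rho}$, yields exactly the bound above on $|\pd_u \Psi_\alpha(w+u;\tau^2)|$.

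The main technical point — and the only place where something non-mechanical happens — is the finiteness of $R(\tau)$, which relies crucially on $\rho < 2$ in \cref{defn:controlled} (this is the same reason controlled functions are $L^1$ against Gaussians, cf.\ \cref{lemma:alpExp}); without this strict inequality the integral defining $R(\tau)$ would diverge and Gaussian smoothing would not actually produce a usable derivative bound. Otherwise, the argument is just Stein plus an exponent-splitting via the subadditivity inequality for $|\cdot|^\rho$, and everything else is routine.
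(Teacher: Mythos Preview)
Your proposal is correct and follows essentially the same approach as the paper: fundamental theorem of calculus, the Stein-type derivative identity, the $\rho$-controlled bound on $\psi$, and the subadditivity inequality $|a+b|^\rho \le 2^\rho(|a|^\rho+|b|^\rho)$. The only cosmetic difference is that the paper first bounds $|\pd_w\Psi_\alpha(w;\tau^2)|$ by $\tau^{-1}R(\tau)\hat\Psi\, e^{C2^\rho|w|^\rho}$ (one split) and then applies the subadditivity a second time after integrating, whereas you do both splits up front on $|w+u+\tau z|^\rho$; also note the paper's $\hat\Psi$ carries an extra $2^\rho$ in the exponent, so your tighter $\hat\Psi$ still implies the stated bound.
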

where $\hat \Psi(g^1_\alpha, \ldots, g^{\MM-1}_\alpha) \defeq 
            e^{C 2^\rho \sum_{i=1}^{\MM -1} |g^i_\alpha|^\rho + c}$ and $R(\tau) \defeq
            \EV_z |z| e^{C 2^\rho \tau^\rho |z|^\rho}$.
\begin{proof}
Clearly, with $z \sim \Gaus(0, 1)$,
\begin{align*}
    |\pd_w \Psi_\alpha(w; \tau^2)|
        &\le
            \inv \tau \EV_{z} |z \psi(g^1_\alpha, \ldots, g^{\MM - 1}_\alpha, w+ \tau z)|
            \\
        &\le
            \inv \tau \EV_z |z| e^{C\lp |w+\tau z|^\rho + \sum_{i=1}^{\MM -1} |g^i_\alpha|^\rho \rp + c}
            \\
        &\le
            \inv \tau \EV_z |z| e^{C2^\rho \lp |w|^\rho+\tau^\lambda |z|^\rho + \sum_{i=1}^{\MM -1} |g^i_\alpha|^\rho \rp + c}
            \\
        &=
            \inv \tau \hat \Psi(g^1_\alpha, \ldots, g^{\MM-1}_\alpha) R(\tau) e^{C 2^\rho |w|^\rho}
            .
\end{align*}
Then
\begin{align*}
    &\phantomeq
        |\Psi_\alpha(w; \tau^2) - \Psi_\alpha(w + \epsilon; \tau^2)|
        \\
    &\le
        \left| \int_{w}^{w+\epsilon} \dd \xi\ \pd_\xi \Psi_\alpha(\xi; \tau^2) \right|
        \\
    &\le
        \inv \tau R(\tau)
        \hat \Psi(g^1_\alpha, \ldots, g^{\MM -1}_\alpha) 
        \int_w^{w+\epsilon} \dd \xi\ 
            e^{C 2^\rho |\xi|^\rho}
        \\
    &=
        \inv \tau R(\tau)
        \hat \Psi(g^1_\alpha, \ldots, g^{\MM -1}_\alpha) 
        \int_0^{\epsilon} \dd \xi\ 
            e^{C 2^\rho |w+\xi|^\rho}
        \\
    &\le
        \inv \tau R(\tau)
        \hat \Psi(g^1_\alpha, \ldots, g^{\MM -1}_\alpha)
        \int_0^{\epsilon} \dd \xi\ 
            e^{C 4^\rho |w|^\rho} e^{C 4^\rho |\xi|^\rho}
        \\
    &=
        \inv \tau R(\tau)
        \hat \Psi(g^1_\alpha, \ldots, g^{\MM -1}_\alpha)
        e^{C 4^\rho |w|^\rho} |\epsilon| e^{C 4^\rho |\epsilon|^\rho}.
\end{align*}
\end{proof}

Therefore, with $z\sim \Gaus(0, 1)$,
\begin{align*}
    &\phantomeq
        \left|
            \EVbr[z]{
                \psi(g^1_\alpha, \ldots, g^{\MM-1}_\alpha, \sum_{i=1}^r v_i \hat g^i_\alpha + \sigma z)
            }
            -
            \EVbr[z]{
                \psi(g^1_\alpha, \ldots, g^{\MM-1}_\alpha, \sum_{i=1}^r \mathring v_i \hat g^i_\alpha + \sigma z)
            }
        \right|
        \\
    &=
        \left|
                \Psi_\alpha\left(\sum_{i=1}^r v_i \hat g^i_\alpha; \sigma^2\right)
            -
                \Psi_\alpha\left(\sum_{i=1}^r \mathring v_i \hat g^i_\alpha; \sigma^2\right)
        \right|
        \\
    &\le
        \inv \sigma R(\sigma)
        \hat \Psi(g^1_\alpha, \ldots, g^{\MM-1}_\alpha)
        e^{C 4^\rho \lp \left| \sum_{i=1}^r \mathring v_i \hat g^i_\alpha \right|^\rho + |\epsilon_\alpha|^\rho \rp} |\epsilon_\alpha|
        ,
\end{align*}
where $\epsilon_\alpha \defeq \sum_{i=1}^r (v_i - \mathring v_i) \hat g^i_\alpha$.
Because $\hat \Psi$ is $\lambda$-controlled,
\begin{align*}
    \f 1 n \sum_{\alpha=1}^n \hat \Psi(g^1_\alpha, \ldots, g^{\MM-1}_\alpha)
\end{align*}
converges almost surely to a deterministic limit.
At the same time, since $v_i \asto \mathring v_i$, we also have $\epsilon_\alpha \asto 0$ as $n \to \infty$,
so that
\begin{align*}
    \f 1 n \sum_{\alpha=1}^n
        \left|
            \EVbr[z]{
                \psi(g^1_\alpha, \ldots, g^{\MM-1}_\alpha, \sum_{i=1}^r v_i \hat g^i_\alpha + \sigma z)
            }
            -
            \EVbr[z]{
                \psi(g^1_\alpha, \ldots, g^{\MM-1}_\alpha, \sum_{i=1}^r \mathring v_i \hat g^i_\alpha + \sigma z)
            }
        \right|
    \asto 0
    .
\end{align*}
A similar argument shows that we can replace $\sigma$ with $\mathring \sigma$:
\begin{align*}
    \f 1 n \sum_{\alpha=1}^n
        \left|
            \EVbr[z]{
                \psi(g^1_\alpha, \ldots, g^{\MM-1}_\alpha, \sum_{i=1}^r \mathring v_i \hat g^i_\alpha + \sigma z)
            }
            -
            \EVbr[z]{
                \psi(g^1_\alpha, \ldots, g^{\MM-1}_\alpha, \sum_{i=1}^r \mathring v_i \hat g^i_\alpha + \mathring \sigma z)
            }
        \right|
    \asto 0
    .
\end{align*}
By triangular inequality, these limits show that $\probB \asto 0$ as desired.

\section{Proof of \texorpdfstring{\netsorplus}{Netsor+} Master Theorem}
\label{sec:netsorplusMasterTheoremProof}

In this section we describe how to augment the proof of \cref{thm:netsorminMasterTheorem} given in \cref{sec:proofMasterTheorems} to yield the proof of \cref{thm:Netsor+MasterTheorem}.
The key points to note here are 1) the presence of \ref{linetype:lincomb} rules in \netsorplus but not in \netsormin, 2) the rank stability assumption \cref{assm:asRankStab} used in \cref{thm:Netsor+MasterTheorem}, and 3) an additional term in \cref{eqn:decompABC} due to fluctuations in the parameter $\bigtheta$.

\subsection{\ref{linetype:lincomb}}

As remarked in \cref{remk:netsormin}, any usage of \ref{linetype:lincomb} in a \netsorplus program can be absorbed into downstream nonlinearities or be expressed as \ref{linetype:nonlin+} rule.
So WLOG, we can assume that the \netsorplus program has no applications of \ref{linetype:lincomb}.

\subsection{Rank Stability}

By \cref{remk:necessityRankStab}, we see that rank stability assumption is necessary for the \netsorplus Master Theorem.
Whereas in \cref{sec:proofMasterTheorems}, we had to intricately weave together an induction on rank stability (more generally, \ref{IH:coreSet}) and an induction on moment convergence (\ref{IH:MomConv}), here to show \cref{thm:Netsor+MasterTheorem}, we just need 1) to induct on \ref{IH:MomConv} and 2)
to invoke \cref{assm:asRankStab} whenever we need to use \cref{lemma:rankStability}, which is when we need to show that pseudo-inverse commutes with almost surely limit, such as in \cref{prop:pseudoinverseLambda}, and when we need to ensure either $\sigma$ is almost surely 0 or is almost surely positive, as in \cref{sec:probB}.

\subsection{Fluctuation of the Parameters}
\label{sec:D}
When we have parameters in nonlinearities, \cref{eqn:decompABC} needs to be modified to contain an additional term $\probD$:

\begin{align*}
    &\phantomeq
        \left|\f 1 n \sum_{\alpha = 1}^n \psi(g^1_\alpha, \ldots, g^{\MM}_\alpha; \bigtheta)
        - \EV_{Z \sim \Gaus(\tmu, \tSigma)} \psi(Z; \mathring{\bigtheta})
        \right|
    \le
        \probD + \probA + \probB + \probC
\end{align*}
where
\begin{align*}
\probD
    &\defeq
        \left|\f 1 n \sum_{\alpha = 1}^n \psi(g^1_\alpha, \ldots, g^{\MM}_\alpha; \bigtheta)
        - \psi(g^1_\alpha, \ldots, g^{\MM}_\alpha; \mathring{\bigtheta})
        \right|
\end{align*}
and $\probA, \probB, \probC$ are as in \cref{eqn:decompABC} but replacing $\psi(-)$ there with $\psi(-; \mathring{\bigtheta})$.
Because $\psi(-; -)$ is parameter-controlled at $\mathring{\bigtheta}$ by assumption, $\psi(-; \mathring{\bigtheta})$ is controlled, and $\probA, \probB, \probC \asto 0$ with the same arguments as before (except using rank stability assumption \cref{assm:asRankStab} where appropriate, instead of \ref{IH:coreSet}).

Now, by the other property of parameter-control, we have
\begin{align*}
\probD
    &\le
        \f 1 n \sum_{\alpha = 1}^n 
        \left|\psi(g^1_\alpha, \ldots, g^{\MM}_\alpha; \bigtheta)
        - \psi(g^1_\alpha, \ldots, g^{\MM}_\alpha; \mathring{\bigtheta})
        \right|
        \\
    &\le
        \f 1 n \sum_{\alpha = 1}^n 
        f(\bigtheta) \bar \psi(g^1_\alpha, \ldots, g^\MM_\alpha)
        \\
    &=
        f(\bigtheta)
        \f 1 n \sum_{\alpha = 1}^n 
        \bar \psi(g^1_\alpha, \ldots, g^\MM_\alpha)
\end{align*}
for some controlled $\bar \psi: \R^\MM \to \R$ and some $f: \R^l \to \R^{\ge 0} \cup \{\infty\}$ that is continuous at $\mathring{\bigtheta}$ and has $f(\mathring{\bigtheta}) = 0$ (where $\bar \psi$ and $f$ can both depend on $\mathring{\bigtheta}$).
Since $\bigtheta \asto \mathring{\bigtheta}$, we have $f(\bigtheta) \asto 0$.
In addition, by \ref{IH:MomConv}, $\f 1 n \sum_{\alpha = 1}^n 
\bar \psi(g^1_\alpha, \ldots, g^\MM_\alpha)$ converges a.s.\ as well to a finite constant.
Therefore,
\begin{align*}
\probD \asto 0
\end{align*}
as desired.

\subsection{Summary}

The proof of \cref{thm:Netsor+MasterTheorem}, WLOG for programs without \ref{linetype:lincomb}, would proceed as follows: We induct on \ref{IH:MomConv} with the same setup as \cref{sec:inductiveSetup}, except using \cref{assm:asRankStab} for \cref{prop:pseudoinverseLambda}.
Then we prove the inductive step for \ref{IH:MomConv} as in \cref{sec:inductiveMoments}.
We modify \cref{eqn:decompABC} to add a term $\probD$ as in \cref{sec:D}, which goes to 0 a.s.\ as argued there.
The same arguments for $\probA, \probB, \probC \asto 0$, exhibited in \cref{sec:inductiveMoments} still hold, except that in the proof of $\probB \asto 0$, we apply \cref{assm:asRankStab} (instead of \cref{lemma:rankStability}) to allow us to assume $\mathring \sigma > 0$ and $\sigma > 0$ almost surely.

\section{Formal Specification of Tensor Programs}
\label{sec:formalspec}

In the main text, we have adopted an informal approach to specifying the \netsor language and its siblings, in order to make the material accessible to a wide audience.
Here we give the formal specifications for \netsormin (\cref{grammar:netsormin,inf:netsormin,sem:netsormin}), \netsor (\cref{grammar:netsor,inf:netsor,sem:netsor}), and self-parametrized \netsorplus (\cref{grammar:spnetsorplus,inf:spnetsorplus,sem:spnetsorplus}).
For ease of presentation, we have represented matrix multiplication explicitly via an operation $\textbf{MatMul}$ (likewise for $\textbf{Moment}$ in self-parametrized \netsorplus), and have we used double colon :: instead of single colon : for type annotation.

\begin{figure}
\centering
\renewcommand{\synt}[1]{\textsl{#1}}
\renewcommand{\syntleft}{\slshape}
\renewcommand{\syntright}{}
\begin{tcolorbox}

\begin{grammar}
<program> ::= <stmt>*

<stmt> ::= 
\textbf{Input} <var> \textbf{::} <type>\\
| <var> \textbf{:=} <expr> \textbf{::} <type>

<expr> ::= \textbf{MatMul} \textbf{(}<var>, <var> \textbf{)}\\
| <fun>\textbf{(} <var>* \textbf{)}

<var> ::= $\la$ \textnormal{id} $\ra$

<fun> ::= $\la$ {function $\R^k \to \R$ for some $k \ge 0$} $\ra$

<type> ::= \textbf{G}\textbf{(}<nat>\textbf{)}
| \textbf{H}\textbf{(}<nat>\textbf{)}
| \textbf{A}\textbf{(}<nat>, <nat>\textbf{)}

<nat> ::= $\la$ any integer $\ge 1$ $\ra$

\end{grammar}
\end{tcolorbox}
\caption{\netsormin Grammar; see \cref{defn:netsormin}.}
\label{grammar:netsormin}
\end{figure}

\begin{figure}
\renewcommand{\sf}{\mathsf}
\begin{mathpar}
\infer{\sf{expr}: \sf{type}\\ \sf{var}\ \textbf{:=}\ \sf{expr}\ \textbf{::}\ \sf{type}}
{\sf{var} : \sf{type}}
\\
\infer{\sf a: \mathbf A(n_1, n_2) \\ \sf h: \mathbf H(n_2)}
{\textbf{MatMul}(\sf a, \sf h): \mathbf G(n_1)}
\quad
\infer{\sf g_1, \ldots, \sf g_k: \mathbf G(n) \\ \sf f: \R^k \to \R}
{\sf f(\sf g_1, \ldots, \sf g_k): \mathbf H(n)}
\\
\end{mathpar}
\caption{\netsormin Inference Rules}
\label{inf:netsormin}
\end{figure}

\begin{figure}
\renewcommand{\sf}{\mathsf}
\begin{mathpar}

\infer{\llbracket \sf a \rrbracket = W \in \R^{n_1 \times n_2}\\
\llbracket \sf h \rrbracket = v \in \R^{n_2}}
{\llbracket \textbf{MatMul}(\sf a, \sf h) \rrbracket = W v}
\quad
\infer{\forall i \in [k], \llbracket \sf g_i \rrbracket = v_i \in \R^n \\
\llbracket \sf f \rrbracket = f: \R^k \to \R}
{\llbracket \sf f(\sf g_1, \ldots, \sf g_k) \rrbracket = u \in \R^n \text{ with $u_\alpha = f(v_{1\alpha}, \ldots, v_{k\alpha})$}}
\end{mathpar}
\caption{\netsormin Semantics}
\label{sem:netsormin}
\end{figure}

\begin{figure}
\centering
\renewcommand{\synt}[1]{\textsl{#1}}
\renewcommand{\syntleft}{\slshape}
\renewcommand{\syntright}{}
\begin{tcolorbox}

\begin{grammar}
<program> ::= <stmt>*

<stmt> ::= 
\textbf{Input} <var> \textbf{::} <type>\\
| <var> \textbf{:=} <expr> \textbf{::} <type>

<expr> ::= \textbf{MatMul} \textbf{(}<var>, <var> \textbf{)}\\
| <fun>\textbf{(} <var>* \textbf{)}\\
| <var> (\textbf{+} <var>)$^+$

<var> ::= $\la$ \textnormal{id} $\ra$

<fun> ::= $\la$ {function $\R^k \to \R$ for some $k \ge 0$} $\ra$

<type> ::= \textbf{G}\textbf{(}<nat>\textbf{)}
| \textbf{H}\textbf{(}<nat>\textbf{)}
| \textbf{A}\textbf{(}<nat>, <nat>\textbf{)}

<nat> ::= $\la$ any integer $\ge 1$ $\ra$

\end{grammar}
\end{tcolorbox}
\caption{\netsor Grammar; see \cref{defn:netsor}. Compared to \netsormin grammar, the only new item is \ref{linetype:lincomb} in \textit{expr}.}
\label{grammar:netsor}
\end{figure}

\begin{figure}
\renewcommand{\sf}{\mathsf}
\begin{mathpar}
\infer{\sf{expr}: \sf{type}\\ \sf{var}\ \textbf{:=}\ \sf{expr}\ \textbf{::}\ \sf{type}}
{\sf{var} : \sf{type}}
\\
\infer{\sf a: \mathbf A(n_1, n_2) \\ \sf h: \mathbf H(n_2)}
{\textbf{MatMul}(\sf a, \sf h): \mathbf G(n_1)}
\quad
\infer{\sf g_1, \ldots, \sf g_k: \mathbf G(n) \\ \sf f: \R^k \to \R}
{\sf f(\sf g_1, \ldots, \sf g_k): \mathbf H(n)}
\quad
\infer{\sf g_1, \ldots, \sf g_k: \mathbf G(n)}
{\sf g_1 + \cdots + \sf g_k: \mathbf G(n)}
\\
\end{mathpar}
\caption{\netsor Inference Rules}
\label{inf:netsor}
\end{figure}

\begin{figure}
\renewcommand{\sf}{\mathsf}
\begin{mathpar}

\infer{\llbracket \sf a \rrbracket = W \in \R^{n_1 \times n_2}\\
\llbracket \sf h \rrbracket = v \in \R^{n_2}}
{\llbracket \textbf{MatMul}(\sf a, \sf h) \rrbracket = W v}
\quad
\infer{\forall i \in [k], \llbracket \sf g_i \rrbracket = v_i \in \R^n \\
\llbracket \sf f \rrbracket = f: \R^k \to \R}
{\llbracket \sf f(\sf g_1, \ldots, \sf g_k) \rrbracket = u \in \R^n \text{ with $u_\alpha = f(v_{1\alpha}, \ldots, v_{k\alpha})$}}
\\
\infer{\forall i \in [k], \llbracket \sf g_i \rrbracket = v_i \in \R^n}
{\llbracket \sf g_1 + \cdots + \sf g_k \rrbracket = v_1 + \cdots + v_k \in \R^n}
\end{mathpar}
\caption{\netsor Semantics}
\label{sem:netsor}
\end{figure}

\begin{figure}
\centering
\renewcommand{\synt}[1]{\textsl{#1}}
\renewcommand{\syntleft}{\slshape}
\renewcommand{\syntright}{}
\begin{tcolorbox}

\begin{grammar}
<program> ::= <stmt>*

<stmt> ::= 
\textbf{Input} <var> \textbf{::} <type>\\
| <var> \textbf{:=} <expr> \textbf{::} <type>

<expr> ::= \textbf{MatMul} \textbf{(}<var>, <var> \textbf{)}\\
| <fun>\textbf{(} <var>* \textbf{;} <var>*\textbf{)}\\
| <var> (\textbf{+} <var>)$^+$\\
| \textbf{Moment}\textbf{(}<fun>; <var>*; <var>*\textbf{)}

<var> ::= $\la$ \textnormal{id} $\ra$

<fun> ::= $\la$ {parametrized function $\R^k \times \R^l \to \R$ for some $k, l \ge 0$} $\ra$

<type> ::= 
\textbf{C}
| \textbf{G}\textbf{(}<nat>\textbf{)}
| \textbf{H}\textbf{(}<nat>\textbf{)}
| \textbf{A}\textbf{(}<nat>, <nat>\textbf{)}

<nat> ::= $\la$ any integer $\ge 1$ $\ra$

\end{grammar}
\end{tcolorbox}
\caption{Self-Parametrized \netsorplus Grammar; see \cref{defn:selfParam}. Compared to \netsor grammar, we have added a new type \textbf{C} and a new expression \textbf{Moment}.}
\label{grammar:spnetsorplus}
\end{figure}

\begin{figure}
\renewcommand{\sf}{\mathsf}
\begin{mathpar}
\infer{\sf{expr}: \sf{type}\\ \sf{var}\ \textbf{:=}\ \sf{expr}\ \textbf{::}\ \sf{type}}
{\sf{var} : \sf{type}}
\\
\infer{\sf a: \mathbf A(n_1, n_2) \\ \sf h: \mathbf H(n_2)}
{\textbf{MatMul}(\sf a, \sf h): \mathbf G(n_1)}
\quad
\infer{\sf g_1, \ldots, \sf g_k: \mathbf G(n)}
{\sf g_1 + \cdots + \sf g_k: \mathbf G(n)}
\\
\infer{\sf g_1, \ldots, \sf g_k: \mathbf G(n) \\ 
    \sf c_1, \ldots, \sf c_l: \mathbf C \\
    \sf f: \R^k \times \R^l \to \R}
{\sf f(\sf g_1, \ldots, \sf g_k; \sf c_1, \ldots, \sf c_l): \mathbf H(n)}
\\
\infer{\sf g_1, \ldots, \sf g_k: \mathbf G(n) \\ 
    \sf c_1, \ldots, \sf c_l: \mathbf C \\
    \sf f: \R^k \times \R^l \to \R}
{\textbf{Moment}(\sf f; \sf g_1, \ldots, \sf g_k; \sf c_1, \ldots, \sf c_l): \mathbf C}
\end{mathpar}
\caption{Self-Parametrized \netsorplus Inference Rules}
\label{inf:spnetsorplus}
\end{figure}

\begin{figure}
\renewcommand{\sf}{\mathsf}
\begin{mathpar}

\infer{\llbracket \sf a \rrbracket = W \in \R^{n_1 \times n_2}\\
\llbracket \sf h \rrbracket = v \in \R^{n_2}}
{\llbracket \textbf{MatMul}(\sf a, \sf h) \rrbracket = W v}
\quad
\infer{\forall i \in [k], \llbracket \sf g_i \rrbracket = v_i \in \R^n}
{\llbracket \sf g_1 + \cdots + \sf g_k \rrbracket = v_1 + \cdots + v_k \in \R^n}
\\

\infer{
    \forall i \in [k], \llbracket \sf g_i \rrbracket = v_i \in \R^n \\
    \forall j \in [l], \llbracket \sf c_j \rrbracket = c_j \in \R \\
    \llbracket \sf f \rrbracket = f: \R^k \times \R^l \to \R}
{\llbracket \sf f(\sf g_1, \ldots, \sf g_k; \sf c_1, \ldots, \sf c_l) \rrbracket = u \in \R^n \text{ with $u_\alpha = f(v_{1\alpha}, \ldots, v_{k\alpha}; c_1, \ldots, c_l)$}}
\\
\infer{
    \forall i \in [k], \llbracket \sf g_i \rrbracket = v_i \in \R^n \\
    \forall j \in [l], \llbracket \sf c_j \rrbracket = c_j \in \R \\
    \llbracket \sf f \rrbracket = f: \R^k \times \R^l \to \R}
{\llbracket \textbf{Moment}(\sf f; \sf g_1, \ldots, \sf g_k; \sf c_1, \ldots, \sf c_l) \rrbracket
    = \f 1 n \sum_{\alpha=1}^n f(v_{1\alpha}, \ldots, v_{k \alpha}; c_1, \ldots, c_l)}
\end{mathpar}
\caption{Self-Parametrized \netsorplus Semantics}
\label{sem:spnetsorplus}
\end{figure}

\end{document}